\icmltitlerunning{Conformal Prediction for Federated Uncertainty Quantification Under Label Shift}
\theoremstyle{plain}
\newtheorem{theorem}{Theorem}[section]
\newtheorem{lemma}[theorem]{Lemma}
\newtheorem{corollary}[theorem]{Corollary}
\theoremstyle{definition}
\newtheorem{definition}[theorem]{Definition}
\newtheorem{assumption}{\textbf{H}\hspace{-3pt}}
\theoremstyle{remark}
\newtheorem{remark}[theorem]{Remark}
\definecolor{aurometalsaurus}{rgb}{0.43, 0.5, 0.5}
\definecolor{britishracinggreen}{rgb}{0.0, 0.26, 0.15}
\definecolor{burntumber}{rgb}{0.54, 0.2, 0.14}
\definecolor{cobalt}{rgb}{0.0, 0.28, 0.67}
\definecolor{bulgarianrose}{rgb}{0.28, 0.02, 0.03}
\definecolor{ceruleanblue}{rgb}{0.16, 0.32, 0.75}
\newcommand{\group}[1]{\bgroup\small\sffamily\noindent\color{green}{#1}\egroup}  
\newcommand{\new}[1]{\bgroup\small\sffamily\noindent\color{cobalt}{#1}\egroup}
\newcommand{\old}[1]{\bgroup\small\color{gray}{#1}\egroup}  
\definecolor{darkgreen}{RGB}{0,128,0}
\newcommand{\N}{\mathbb{N}}
\newcommand{\Z}{\mathbb{Z}}
\newcommand{\Q}{\mathbb{Q}}
\newcommand{\R}{\mathbb{R}}
\newcommand{\E}{\mathbb{E}}
\newcommand{\argmin}{\operatornamewithlimits{\arg\min}}
\newcommand{\Oh}{\operatorname{\mathrm{O}}}
\newcommand{\var}{\operatorname{Var}}
\newcommand{\prob}{\mathbb{P}}
\newcommand{\rme}{\mathrm{e}}
\newcommand{\rmd}{\mathrm{d}}
\newcommand{\1}{\mathds{1}}
\newcommand{\pr}[1]{\left({#1}\right)}
\newcommand{\prt}[1]{({\textstyle{#1}})}
\newcommand{\prn}[1]{({#1})}
\newcommand{\prBig}[1]{\Big({#1}\Big)}
\newcommand{\prbigg}[1]{\bigg({#1}\bigg)}
\newcommand{\br}[1]{\left[{#1}\right]}
\newcommand{\brn}[1]{[{#1}]}
\newcommand{\brbig}[1]{\big[{#1}\big]}
\newcommand{\brBigg}[1]{\Bigg[{#1}\Bigg]}
\newcommand{\ac}[1]{\left\{{#1}\right\}}
\newcommand{\act}[1]{\{{\textstyle{#1}}\}}
\newcommand{\acn}[1]{\{{#1}\}}
\newcommand{\acbig}[1]{\big\{{#1}\big\}}
\newcommand{\norm}[1]{\left\|{#1}\right\|}
\newcommand{\normn}[1]{\|{#1}\|}
\newcommand{\normbig}[1]{\big\|{#1}\big\|}
\newcommand{\abs}[1]{\left\lvert{#1}\right\rvert}
\newcommand{\absn}[1]{|{#1}|}
\newcommand{\ps}[2]{\left\langle{#1},{#2}\right\rangle}
\newcommand{\psn}[2]{\langle{{#1},{#2}}\rangle}
\newcommand{\nofrac}[2]{{#1}/{#2}}
\newcommand{\gauss}{\mathcal{N}}
\newcommand{\q}[1]{Q_{#1}} 
\newcommandx{\qmoreau}[2][1=1 - \alpha,2=\gamma]{Q_{#1}^{#2}}
\newcommandx{\qhatmoreau}[3][1=1 - \alpha,2=\gamma,3=T]{\widehat{Q}_{#1,#3}^{#2}(\qdistr{\yquery})}
\newcommandx{\qbetagammay}[2][1=\beta,2=\gamma]{\qmoreau[#1][#2]\prn{\widehat{\mu}_{\yquery}}}
\newcommandx{\qalgo}[1][1=t]{\bar{q}_{#1}}
\newcommandx{\qalgoi}[2][1=t,2=i]{\bar{q}_{#1}^{#2}}
\newcommand{\rmexp}{\mathrm{e}}
\newcommand{\tcount}{N}
\newcommand{\lcount}[1]{\tcount_{#1}}
\newcommand{\ccount}[1]{\tcount^{#1}}
\newcommand{\clcount}[2]{\tcount_{#1}^{#2}}
\newcommand{\Mtcount}{\mathsf{M}}
\newcommand{\Mlcount}[1]{\Mtcount_{#1}}
\newcommand{\Mccount}[1]{\Mtcount^{#1}}
\newcommand{\Mclcount}[2]{\Mtcount_{#1}^{#2}}
\newcommand{\privtcount}{\widehat{\Mtcount}}
\newcommand{\privlcount}[1]{\privtcount_{#1}}
\newcommand{\privclcount}[2]{\privtcount_{#1}^{#2}}
\newcommand{\privnoise}[2]{z_{#1}^{#2}}
\newcommand{\Xtarget}{X_{\ccount{\target}+1}^{\target}}
\newcommand{\Ytarget}{Y_{\ccount{\target}+1}^{\target}}
\newcommand{\nclients}{n}
\newcommand{\cset}{[\nclients]}
\newcommand{\target}{\star}
\newcommand{\iweight}[1]{\widehat{w}_{#1}^{\target}}
\newcommand{\iweightc}[2]{w_{#1}^{#2}}
\newcommand{\iweightp}[1]{w_{#1}^{\target}}
\newcommand{\iweightN}[1]{\widehat{w}_{#1}}
\newcommand{\iweighthatN}[1]{\widehat{w}_{#1}^{\target}}
\newcommand{\iweightpN}[1]{w_{#1}^{\target}} 
\newcommand{\qweight}[2]{\widehat{p}_{#1, #2}^{\target}}
\newcommand{\qweightp}[2]{p_{#1, #2}^{\target}}
\newcommand{\qweightb}[2]{\Bar{p}_{#1, #2}^{\target}}
\newcommand{\qweighthatN}[2]{\widehat{p}_{#1}^{\widehat{D}_{#2}}} 
\newcommand{\qweightwhatN}[2]{\Bar{p}_{#1}^{\Bar{D}_{#2}}}
\newcommandx{\qweighthat}[1]{\widehat{p}_{#1}}
\newcommandx{\qweightz}[2][1=k,2=z_{1:\tcount+1}]{p_{#1}^{#2}}
\newcommand{\qdistr}[1]{\widehat{\mu}_{#1}}
\newcommand{\qdistrp}[1]{\mu_{#1}^{\target}}
\newcommand{\cpmf}[2]{P_{Y}^{#1}\prn{#2}}  
\newcommand{\tpmf}[1]{P_{Y}^{\target}\prn{#1}}  
\newcommand{\Pcal}[1]{P^{#1}}
\newcommand{\Pycal}[1]{P_{Y}^{#1}}
\newcommand{\xquery}{\mathbf{x}}
\newcommand{\yquery}{\mathbf{y}}
\newcommand{\thetav}{\bm{\theta}}
\newcommand{\XC}{\mathcal{X}}
\newcommand{\YC}{\mathcal{Y}}
\newcommandx{\predset}[1]{\mathcal{C}_{#1,\mu^{\target}}}
\newcommandx{\predsetg}[2]{\mathcal{C}_{#1,#2}}
\newcommandx{\predsetghat}[2]{\mathcal{C}_{#1,#2}^{\scalebox{.65}{$\mathrm{MLE}$}}}
\newcommand{\predsetmle}{\mathcal{C}_{\alpha,\widehat{\mu}^{\scalebox{.35}{$\mathrm{MLE}$}}}}
\newcommandx{\hatpredset}[1]{\widehat{\mathcal{C}}_{#1,\widehat{\mu}}}
\newcommandx{\predsetmoreau}[1]{C^{\gamma}_{#1,\widehat{\mu}}}
\newcommandx{\predsethatmoreau}[1]{\widehat{\mathcal{C}}^{\gamma}_{#1,\widehat{\mu}}}
\newcommandx{\unweightlocpredset}[1]{\mathcal{C}_{#1,\bar{\mu}^{\operatorname{loc},\target}}}
\newcommandx{\unweightglopredset}[1]{\mathcal{C}_{#1,\bar{\mu}}}
\newcommandx{\unweightlocdist}{\bar{\mu}^{\operatorname{loc},\target}_{\yquery}}
\newcommandx{\unweightglodist}{\bar{\mu}^{\operatorname{glo}}_{\yquery}}
\newcommandx{\indi}[2][1=]{\1^{#1}_{#2}}
\newcommand{\indiacc}[1]{\1_{\{#1\}}}
\newcommand{\varnoise}{\xi}
\newcommand{\fcv}[1]{F^{#1}}
\newcommand{\fcvtot}{F}
\newcommand{\tv}{\mathrm{d}_{\mathrm{TV}}}
\newcommand{\algo}{\texttt{DP-FedAvgQE}}
\newcommand{\algopred}{\texttt{DP-FedCP}}
\def\unwghtloc{\texttt{Unweighted Local}}
\def\unwghtglo{\texttt{Unweighted Global}}
\def\estwght{\texttt{Estimated Weights}}
\def\qthmhat{\widehat{\rho}_{k_{\mathrm{c}}}}
\def\iid{i.i.d.}
\newcommand{\temp}{T}
\newcommand{\format}[1]{\paragraph{#1}}
\newcommandx{\dinit}{\Delta}
\newcommandx{\pinball}[2][1=\alpha,2=v]{S_{#1,#2}}
\newcommandx{\pinballmoreau}[4][1=\alpha,2=\gamma]{S_{#1,#3}^{#2}(#4)}
\newcommandx{\pinballloss}[3][1=\beta]{S_{#1,#2}(#3)}
\newcommandx{\lossglobal}[2][1=\alpha,2=\gamma]{\operatorname{S}_{#1}^{#2}}
\newcommandx{\lossi}[2][1=\alpha,2=\gamma]{\operatorname{S}_{#1}^{i,#2}}
\newcommand{\noiseparamone}{\sigma_g}
\newcommand{\noiseparamtewo}{\bar{\sigma}}
\newcommand{\multi}{\mathcal{M}}
\newcommand{\card}[1]{\mathrm{Card}(#1)}
\newcommand{\Flip}{M}
\DeclareMathOperator{\ratiotv}{R}
\begin{document}


\twocolumn[
\icmltitle{Conformal Prediction for Federated \\ Uncertainty Quantification Under Label Shift}



\icmlsetsymbol{equal}{*}

\begin{icmlauthorlist}
\icmlauthor{Vincent Plassier}{equal,lagrange,ep}
\icmlauthor{Mehdi Makni}{equal,lagrange}
\icmlauthor{Aleksandr Rubashevskii}{skt,mbzuai}
\icmlauthor{Eric Moulines}{ep,mbzuai}
\icmlauthor{Maxim Panov}{tii}
\end{icmlauthorlist}

\icmlaffiliation{ep}{CMAP, Ecole Polytechnique, Paris, France}
\icmlaffiliation{tii}{Technology Innovation Institute, Abu Dhabi, UAE}
\icmlaffiliation{skt}{Skolkovo Institute of Science and Technology, Moscow, Russia}
\icmlaffiliation{lagrange}{Lagrange Mathematics and Computing Research Center, Paris, France}
\icmlaffiliation{mbzuai}{Mohamed bin Zayed University of Artificial Intelligence, Masdar City, UAE}

\icmlcorrespondingauthor{Vincent Plassier}{vincent.plassier@polytechnique.edu}
\icmlcorrespondingauthor{Maxim Panov}{maxim.panov@mbzuai.ac.ae}

\icmlkeywords{Machine Learning, ICML}

\vskip 0.3in
]



\printAffiliationsAndNotice{\icmlEqualContribution} 

\begin{abstract}
  Federated Learning (FL) is a machine learning framework where many clients collaboratively train models while keeping the training data decentralized. Despite recent advances in FL, the uncertainty quantification topic (UQ) remains partially addressed.
  Among UQ methods, conformal prediction (CP) approaches provides distribution-free guarantees under minimal assumptions.
  We develop a new federated conformal prediction method based on quantile regression and take into account privacy constraints. This method takes advantage of importance weighting to effectively address the label shift between agents and provides theoretical guarantees for both valid coverage of the prediction sets and differential privacy.
  Extensive experimental studies demonstrate that this method outperforms current competitors.
\end{abstract}

\addtocontents{toc}{\protect\setcounter{tocdepth}{0}}

\section{Introduction}
\label{sec:introduction}

\textbf{Federated learning} is an increasingly important framework for large-scale learning. FL allows many agents to train a model together under the coordination of a central server without ever transmitting the agents' data over the network, in an attempt to preserve privacy. There has been a considerable amount of FL work over the past 5 years, see e.g. \cite{bonawitz2019towards,yang2019federated,kairouz2021advances,li2020federated}.
Compared to classical machine learning techniques, FL has two unique features. First, the networked agents are massively distributed, communication bandwidth is limited, and agents are not always available (\emph{system heterogeneity}). Second, the data distribution at different agents can vary greatly (\emph{statistical heterogeneity}); see~\cite{huang2022heterogeneous,yoon2022heterogeneous}. These features lead to serious challenges for both training and inference in federated systems. The focus of this work is on federated inference procedures that allow to build prediction sets for each agent with a confidence level that can be guaranteed.

\textbf{Conformal Prediction}, originally introduced in~\cite{vovk1999machine,shafer2008tutorial,balasubramanian2014conformal}, has recently gained popularity. It generates prediction sets with guaranteed error rates. Conformal algorithms are shown to be always valid: the actual confidence level is the nominal one, without requiring any specific assumption about the distribution of the data beyond exchangeability; see~\cite{lei2013distribution,fontana2023conformal} and references therein. With few exceptions, CP methods were developed for centralized environments.

We consider below a supervised learning problem with features $x$ taking values in $\XC$ and labels $y$ taking values in $\YC$. Let $(X_k, Y_k)_{k = 1}^{N_{\mathrm{train}}+\tcount}$ be an independent and identically distributed (i.i.d.) dataset.
We divide the data into a \emph{training} and a \emph{calibration} dataset. Formally, let $\{\mathcal{K}_{\text{train}}, \mathcal{K}_{\text{cal}}\}$ be a partition of $\{1, \ldots, N_{\mathrm{train}}+\tcount\}$, and let $\tcount = |\mathcal{K}_{\text {cal }}|$. Without loss of generality, we take $\mathcal{K}_{\text{cal}} = \{1, \ldots, \tcount\}$.
We learn a predictor $\hat{f}\colon \XC \to \Delta_{\absn{\YC}}$ on the training set $\mathcal{K}_{\text {train}}$, where $\absn{\YC}$ is the number of classes and $\Delta_{\absn{\YC}}$ is the $\absn{\YC}$-dimensional probability simplex.
For any covariate $x \in \XC$ associated with a label $y \in \YC$, consider a classification score function $\mathcal{S}\colon \YC \times \Delta_{\absn{\YC}} \to [0, 1]$, independent of other covariates and labels, which yields a non-conformity score given by $V(x, y) = \mathcal{S}\prn{y, \hat{f}(x)}$. This non-conformity measure estimates how unusual an example looks.
Based on these non-conformity scores, standard CP procedure constructs, for each \emph{significance level} $\alpha \in [0, 1]$, a (measurable) \emph{set-valued} predictor $\mathcal{C}_{\alpha}(x)$ using $\{(X_k,Y_k)\}_{k = 1}^{\tcount}$ that satisfies the following conditions
\begin{equation}\label{eq:conformal-guarantee}
  \prob\pr{Y_{\tcount+1} \in \mathcal{C}_{\alpha}(X_{\tcount+1})} \ge 1 - \alpha,
\end{equation}
where $(X_{\tcount+1}, Y_{\tcount+1})$ is a \emph{test point} that is independent of $\mathcal{K}_{\text {train}}$ and $\mathcal{K}_{\text {cal}}$.
The quantity $1 - \alpha$ is called the \emph{confidence level}.
The guarantee~\eqref{eq:conformal-guarantee} is set up in a centralized environment -- all data are available at a central node and usually assuming that the distributions of calibration and test data satisfy $P^{\mathrm{cal}} = P^{\target}$.
If there is a mismatch between the distributions $P^{\mathrm{cal}}$ and $P^{\target}$, then corrections should be made to ensure an appropriate confidence level; see~\cite{tibshirani2019conformal,podkopaev2021distribution,barber2022conformal} and references therein.

\format{Setup.}
  In this work, we consider a federated learning system with $\nclients$ agents. We assume that, instead of storing the entire dataset on a centralized node, each agent $i \in [\nclients]$ owns a local calibration set $\mathcal{D}_{i} = \acn{\prn{X_k^{i}, Y_k^{i}}}_{k = 1}^{\ccount{i}}$, where $\ccount{i}$ is the number of calibration samples for the agent $i$.
  We further assume that the calibration data are \iid\ and that the statistical heterogeneity is due to \emph{label shifts}:
  \begin{equation*}
    \prn{X_k^{i}, Y_k^{i}} \sim \Pcal{i} = P_{X \mid Y} \times \Pycal{i},
  \end{equation*}
  where $P_{X \mid Y}$, the conditional distribution of the feature given the label, is assumed identical among agents but $\Pycal{i}$, the prior label distribution, may differ across agents. In federated learning, statistical heterogeneity is the rule rather than the exception, and it is essential to take into account the presence of label shift at the agent level. We assume that a predictive model $\hat{f}$ has been learned by federated learning. The results we present are agnostic to the learning procedure.

  For an agent $\target \in [\nclients]$, and each $\alpha \in (0, 1)$, we are willing to compute a set-valued predictor, $\mathcal{C}_{\alpha}$ with confidence level $1-\alpha$, which depends on the calibration data of \textit{all the agents}. The  goal is to construct  informative conformal prediction sets for each agent, even when its calibration set is limited in size, by using the calibration data of all the agents participating in the FL; we stress that  the calibration data must always remain local to the networked agents.
  Most importantly, the resulting algorithm should attain both conformal and theoretical privacy guarantees -- matched to the privacy guarantees that can be obtained in the FL training procedure.

  Our main \textbf{contributions} to solving this challenging problem can be summarized as follows.
  \begin{itemize}[noitemsep,topsep=0pt,parsep=0pt,partopsep=0pt,leftmargin=*]
    \item We introduce a new method, \algopred, to construct conformal prediction sets in a federated learning context that addresses label shift between agents; see~\Cref{sec:approach}. \algopred\ is a federated learning algorithm based on federated computation of weighted quantiles of agent's non-conformity scores, where the weights reflect the label shift of each client with respect to the population. The quantiles are obtained by regularizing the pinball loss using Moreau-Yosida inf-convolution and a version of federated averaging procedure; see~\Cref{sec:privacy}.

    \item We establish conformal prediction guarantees, ensuring the validity of the resulting prediction sets. Additionally, we provide differential private guarantees for \algopred; see~\Cref{sec:theory}.

    \item We show that \algopred\ provides valid confidence sets and outperforms standard approaches in a series of experiments on simulated data and image classification datasets; see \Cref{sec:experiments}.
  \end{itemize}

\format{Related Works.}
  The construction of predictions sets with confidence guarantees has been the subject of much work, mostly in a centralized framework. The conformal framework, introduced in the pioneering works of~\cite{vovk1999machine} is appealing in its simplicity/flexibility; see e.g. \citep{angelopoulos2020uncertainty,fontana2023conformal} and the references therein.
  For exchangeable data, this framework provides a model-free methodology for constructing prediction sets that satisfy the desired coverage~\citep{shafer2008tutorial,papadopoulos2002inductive,fannjiang2022conformal,angelopoulos2022recommendation}.

  These results can also be extended to non-exchangeable data. A method has been developed for dealing with covariate shift~\citep{tibshirani2019conformal}. This method is based on evaluating the discrepancy between the distribution of the calibration data set $P^{\mathrm{cal}}$ and the test point distributed according to $P^{\target}$. Using an estimate of the Radon-Nikodym derivative $\rmd P^{\target}/ \rmd P^{\mathrm{cal}}$, a valid prediction set can be obtained by weighting the non-conformity scores. The seminal work of~\citep{tibshirani2019conformal} led to several improvements, either to form valid prediction sets as long as the $f$-divergence of the discrepancy remains small~\citep{cauchois2020robust}, or to formulate hypothesis tests under covariate shifts~\citep{hu2020distribution}.
  In addition,~\citet{gibbs2021adaptive} examine the shift in an online environment; and~\citet{lei2021conformal} show the validity of the prediction sets even when the distributional shift is only approximated. Since many real-world data sets do not satisfy exchangeability, valid prediction sets are developed in~\citep{barber2022conformal} that put more mass around the point of interest.

  Conformal methods adapted to label shift are considered in~\citep{podkopaev2021distribution,podkopaev2021tracking} and have similar guarantees to those in~\citep[Corollary 1]{tibshirani2019conformal}.
  Methods for detecting and quantifying label shift have been proposed in~\citep{lipton2018detecting,garg2020unified}.   
  
  Differentially private quantiles can be derived based either on the exponential or Gaussian mechanisms~\citep{gillenwater2021differentially,pillutla2022differentially}. Using the exponential mechanism, valid prediction sets are generated in~\citep{angelopoulos2021private}. However, quantile computation in a federated learning environment remains a challenge. A first federated approach based on quantile averaging was proposed in~\citep{lu2021distribution}. However, this work does not provide theoretical guarantees, and the proposed method is vulnerable to distribution shifts. For federated deep learning, the differentially private versions are based on various techniques combination like gradient clipping and the addition of random noise~\cite{triastcyn2019federated,wei2020federated}.

\format{Notation.}
  Denote by $[\nclients]$ the set $\acn{1,\ldots,\nclients}$ and consider a finite number of labels, i.e., $\absn{\YC} < \infty$.
  Each agent $i \in [\nclients]$ has $\clcount{y}{i}$ calibration samples of label $y \in \YC$, and denote $\lcount{y} = \sum_{i = 1}^{\nclients} \clcount{y}{i}$ their total number over all the calibration examples.
  Recall that $\ccount{i}$ the total number of calibration samples on agent $i$, i.e., $\ccount{i} = \sum_{y \in \YC} \clcount{y}{i}$.
  Define the total number of calibration data points $\tcount := \sum_{i = 1}^{\nclients}\sum_{y \in \YC} \clcount{y}{i}$.
  For a cumulative distribution function $F$ and $\beta\in[0,1]$, define by $\q{\beta}(F) :=\inf \{z\colon F(z) \ge \beta\}$ the $\beta$-quantile.
  Finally, for $v\in\R$ denote by $\delta_v$ the point-mass distribution.

\section{Conformal Prediction for Federated Systems under Label Shift}
\label{sec:approach}

\format{Non-exchangeable data.}
  In this section, we explain how to take advantage of calibration data to obtain a valid $(1 - \alpha)$-prediction set.
  Consider the calibration dataset $\acn{(X_k^i,Y_k^i)\colon k\in[\ccount{i}]}_{i\in[\nclients]}$ with data distributed according to $\acn{\Pcal{i}}_{i\in[\nclients]}$.
  For $\acn{\pi_i}_{i\in[\nclients]}\in\Delta_{\nclients}$ we define the mixture distribution of labels given for $y\in\YC$ by
  \begin{equation*}\label{eq:mixture-weights}
    \textstyle P_{Y}^{\mathrm{cal}}(y) = \sum_{i=1}^{\nclients} \pi_i P_{Y}^{i}(y).
  \end{equation*}
  Our goal is to determine a set of likely outputs for a new data point $(\Xtarget,\Ytarget)$ drawn on agent $\target\in \cset$ from the distribution $P^{\target}$.
  The conformal approach relies on non-conformity scores $V_{k}^{i} = V(X_k^{i}, Y_k^{i})\in[0,1]$, $i\in[\nclients]$, $k\in[\ccount{i}]$ to determine the prediction set -- see~\citep{shafer2008tutorial}. These non-conformity scores are uniformly weighted to generate the conventional prediction set
  \begin{equation}\label{eq:def:predset:global}
    \begin{aligned}
      &
      \textstyle
      \predsetg{\alpha}{\Bar{\mu}}(\xquery) = \ac{\yquery\in\YC\colon V(\xquery, \yquery) \le \q{1 - \alpha}\pr{\Bar{\mu}}},\\
      &
      \textstyle
      \Bar{\mu} = \prn{\tcount+1}^{-1}\prn{\sum_{i=1}^{\nclients}\sum_{k=1}^{\ccount{i}}\delta_{V_{k}^{i}} + \delta_{1}}.
    \end{aligned}
  \end{equation}
  However, this method can lead to significant under-coverage in the presence of label shift~\citep{podkopaev2021distribution}. In fact, since the data $\{(X_k^i, Y_k^i)\colon k\in\ccount{i}\}_{i\in[\nclients]}$ are often not exchangeable, it is required to correct the quantile to account for label shift
  to obtain valid prediction sets~\citep{tibshirani2019conformal}.
  As proposed by~\citet{podkopaev2021distribution}, we begin by assuming that, for all $i\in[\nclients]$ and $y \in \YC$, we have access to the likelihood ratios:
  \begin{align*}
    \iweightc{y}{i} = \Pycal{i}(y)/P_{Y}^{\mathrm{cal}}(y).
  \end{align*}
  Denote by $\mathcal{I}=\{(i,k)\colon i \in [\nclients], k\in[\ccount{i}]\} \cup \acn{(\target, \ccount{\target}+1)}$.
  Using the weights $\{W_k^i\colon (i,k)\in\mathcal{I}\}$ provided in~\eqref{eq:def:Wki-tibshirani-FL}, the non-exchangeability correction of~\citet{tibshirani2019conformal} is given for any $\yquery\in\YC$ by
  \begin{align}\label{eq:def:ptarget-exact:1}
    &\qweightp{Y_k^i}{\yquery} = \frac{W_{k}^{i}}{W_{\ccount{\target} + 1}^{\target} + \sum_{j=1}^{\nclients} \sum_{l=1}^{\ccount{j}} W_{l}^{j}},
    \\
    \nonumber
    &\qdistrp{\yquery} = \qweightp{\yquery}{\yquery} \delta_{1} + \sum_{i=1}^{\nclients} \sum_{k = 1}^{\ccount{i}} \qweightp{Y_k^i}{\yquery} \delta_{V_{k}^i}.
  \end{align}
  For any covariate $\xquery\in\XC$, define the $(1 - \alpha)$-prediction set with \emph{oracle weights}
  \begin{equation*}\label{eq:oracle-confidence-sets:general}
    \predsetg{\alpha}{\qdistrp{}}(\xquery) = \ac{\yquery \in \YC\colon V(\xquery, \yquery) \le \q{1 - \alpha}\pr{\qdistrp{\yquery}}}.
  \end{equation*}
  In contrast to the exchangeable setting, the quantile is calculated based on a weighted empirical distribution depending on $\yquery$.
  The validity of the prediction set is based on the concept of weighted exchangeability, which was introduced in~\citep[Definition~1]{tibshirani2019conformal}; see also~\citep[Theorem~2]{podkopaev2021distribution}.
  In the following, we will suppose that the next assumption holds.
  \begin{assumption}\label{ass:iid-noties}
    The calibration data points $\{(X_k^i, Y_k^i)\colon (i, k) \in \mathcal{I}\}$ are pairwise independent, and there are no ties between $\{V_k^i\colon (i,k)\in\mathcal{I}\}$ almost surely.
  \end{assumption}
  \begin{theorem}\label{thm:hatYinhatC}
    If \Cref{ass:iid-noties} holds, then for any $\alpha \in [0, 1)$, we have
    \begin{multline}\label{eq:bound:coverage:general2}
        1 - \alpha
        \le \prob\pr{\Ytarget \in \predsetg{\alpha}{\qdistrp{}}(\Xtarget)}
        \\
        \le 1 - \alpha + \E\br{\max_{(i, k) \in \mathcal{I}} \Bigl\{\qweightp{Y_k^i}{\Ytarget}\Bigr\}},
    \end{multline}
    where $\qweightp{Y_k^i}{\Ytarget}$ is defined in~\eqref{eq:def:ptarget-exact:1}.
  \end{theorem}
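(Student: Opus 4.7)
The plan is to apply the weighted exchangeability machinery of~\citet{tibshirani2019conformal}, adapted to the federated label-shift setting. The label-shift assumption $\Pcal{i} = P_{X\mid Y} \times \Pycal{i}$ implies that the joint law of $\{(X_k^i, Y_k^i)\colon (i,k)\in\mathcal{I}\}$ factorizes with Radon-Nikodym factors depending only on labels; the weights $W_k^i$ are designed precisely to absorb these factors, so that the conditional distribution of the index of $(\Xtarget,\Ytarget)$ given the unordered multiset of observations is proportional to the $W_k^i$'s.

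More concretely, let $E$ denote the unordered multiset $\{(X_k^i, Y_k^i)\colon (i,k) \in \mathcal{I}\}$. By the label-shift factorization and \Cref{ass:iid-noties}, a change-of-variables computation shows that, conditionally on $E$, the index producing the test point is supported on $\mathcal{I}$ with probabilities proportional to $W_k^i$. Transferring this to the non-conformity scores, the conditional law of $V(\Xtarget, \Ytarget)$ given $E$ is stochastically dominated by $\qdistrp{\Ytarget}$, since the $\delta_1$ atom in~\eqref{eq:def:ptarget-exact:1} is a conservative placeholder for $\delta_{V(\Xtarget,\Ytarget)}$ (as $V \le 1$). Consequently,
\[
\prob\pr{V(\Xtarget,\Ytarget) \le \q{1-\alpha}\prn{\qdistrp{\Ytarget}} \mid E} \ge 1 - \alpha,
\]
and the lower bound in~\eqref{eq:bound:coverage:general2} follows by taking expectations and rewriting the event $\{\Ytarget \in \predsetg{\alpha}{\qdistrp{}}(\Xtarget)\}$ as $\{V(\Xtarget,\Ytarget) \le \q{1-\alpha}(\qdistrp{\Ytarget})\}$.

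For the upper bound, the no-ties part of \Cref{ass:iid-noties} ensures that the weighted empirical CDF underlying $\qdistrp{\Ytarget}$ has strictly distinct jumps. The overshoot of $\q{1-\alpha}(\qdistrp{\Ytarget})$ above the nominal level $1-\alpha$ is then bounded atom-by-atom by the largest single mass $\max_{(i,k)\in\mathcal{I}} \qweightp{Y_k^i}{\Ytarget}$, which yields
\[
\prob\pr{V(\Xtarget,\Ytarget) \le \q{1-\alpha}(\qdistrp{\Ytarget}) \mid E} \le 1 - \alpha + \max_{(i,k)\in\mathcal{I}} \qweightp{Y_k^i}{\Ytarget}.
\]
Integrating over $E$ yields the second inequality in~\eqref{eq:bound:coverage:general2}.

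The main technical subtlety will be handling the $\yquery$-dependence of the weights $\qweightp{Y_k^i}{\yquery}$ when the coverage event is evaluated at the random label $\Ytarget$. The denominator in~\eqref{eq:def:ptarget-exact:1} depends on $\yquery$ through $W_{\ccount{\target}+1}^{\target}$, so the conditioning argument must explicitly carry $\Ytarget$ along and invoke the label-shift factorization to reduce the conditional law of $\Xtarget$ given $\Ytarget$ to the weighted-exchangeable framework; once that is done, the quantile step can be applied pointwise in $\Ytarget$ and then averaged out.
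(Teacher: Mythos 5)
Your proposal follows essentially the same route as the paper: invoke the weighted-exchangeability machinery of Tibshirani et al., use the label-shift factorization to write the joint density in product form, condition on the unordered multiset, and finish with the discrete quantile inequality. The paper's Appendix~\ref{subsec:predsec:Ywidehat} re-derives the symmetrization from scratch (via \Cref{lem:int-exactweights}) rather than citing it, but that is presentation, not substance.

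One step in your sketch does not quite close, however. You justify the swap of the placeholder atom $\delta_{1}$ for $\delta_{V(\Xtarget,\Ytarget)}$ by stochastic domination: since $V\le 1$, moving mass from $V(\Xtarget,\Ytarget)$ up to $1$ only raises the quantile, so $\q{1-\alpha}$ of the conditional law is $\le \q{1-\alpha}\prn{\qdistrp{\Ytarget}}$ and the lower bound follows. But for the upper bound you then apply the atom-wise quantile bound to $\qdistrp{\Ytarget}$ and read off $\prob\prn{V(\Xtarget,\Ytarget)\le \q{1-\alpha}\prn{\qdistrp{\Ytarget}}\mid E}\le 1-\alpha+\max p$. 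That is exactly the direction stochastic domination does not give: the raised quantile could in principle let in extra mass, inflating the conditional coverage above the atom-wise bound. What is actually needed — and what the paper proves in the $(\star)$ step of \Cref{lem:YinCN} — is the stronger fact that the event $\{V(\Xtarget,\Ytarget)\le \q{1-\alpha}\prn{\qdistrp{\Ytarget}}\}$ is \emph{identical} (not merely contained one way) to the event $\{V(\Xtarget,\Ytarget)\le \q{1-\alpha}\prn{\tilde\mu_{\Ytarget}}\}$ with the honest atom $\delta_{V(\Xtarget,\Ytarget)}$; the argument is that the two CDFs coincide on $(-\infty,V(\Xtarget,\Ytarget))$ and that the mass sitting at or above $V(\Xtarget,\Ytarget)$ determines whether the quantile exceeds it, irrespective of whether that atom sits at $V(\Xtarget,\Ytarget)$ or at $1$. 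Only then can \Cref{lem:quantile-inequality:basis}, applied to the true conditional law $\tilde\mu_{\Ytarget}$, be transferred to the $\delta_{1}$-padded $\qdistrp{\Ytarget}$ and integrated over the multiset to yield both sides of~\eqref{eq:bound:coverage:general2}. The rest of your proposal, including flagging the $\yquery$-dependence of the denominator, is on target.
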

  This theorem is directly adapted from~\citep[Corollary~1]{tibshirani2019conformal}. For completeness, a formal proof is postponed to \Cref{subsec:predsec:Ywidehat}.
  It is important to note that the lower bound in~\eqref{eq:bound:coverage:general2} holds even in the presence of ties between non-conformity scores. Although \Cref{thm:hatYinhatC} guarantees the validity of $\predsetg{\alpha}{\qdistrp{}}(\Xtarget)$, this prediction set requires the challenging computation of the weights $\qweightp{y}{\yquery}$. Indeed, the calculation of $W_{y,\yquery}$ requires the summation over $\tcount!$ elements. The first key contribution of our work is given in \Cref{thm:YinCN-approx}, where we show that alternative weights, which are easier to compute, can lead to valid prediction sets. Specifically, the new weights $\qweightb{y}{\yquery}$ are computed on a smaller number of data points $\Bar{\tcount} \le \tcount$, which are randomly selected based on a multinomial random variable with parameter $(\Bar{\tcount}, \acn{\pi_{i}}_{i \in [\nclients]})$.
  Actually, we denote by $\Bar{\tcount}^{i}$ the multinomial count associated with agent $i$. We take $\Bar{\tcount}^{i}\wedge \ccount{i}$ calibration data from agent $i$ and denote $V_k^i = V(X_k^i,Y_k^i)$.
  The core idea is to generate a smaller calibration dataset $\{ (X_k^i,Y_k^i)\colon k\le \ccount{i}\wedge\Bar{\tcount}^{i} , i\in[\nclients] \}$ of i.i.d. variables distributed according to $P_{X \mid Y} \times P_{Y}^{\mathrm{cal}}$.
  For any label $y\in\YC$, the weight $\qweightb{y}{\yquery}$ is given by:
  \begin{equation}\label{eq:def:qweightb}
      \qweightb{y}{\yquery}
      = \frac{\iweightp{y}}{\iweightp{\yquery} + \sum_{i=1}^{\nclients}\sum_{k=1}^{\ccount{i}\wedge \Bar{\tcount}^{i}}\iweightp{Y_{k}^{i}}}.
  \end{equation}
  In addition, consider the following prediction set
  \begin{equation}\label{eq:def:muBar-CBar}
    \begin{aligned}
      &\textstyle\Bar{\mu}_{\yquery}^{\target} = \qweightb{\yquery}{\yquery} \delta_{1} + \sum_{i=1}^{\nclients} \sum_{k=1}^{\ccount{i}\wedge \Bar{\tcount}^i} \qweightb{Y_k^i}{\yquery} \delta_{V_k^i},
      \\
      &\textstyle\mathcal{C}_{\alpha,\Bar{\mu}^{\target}}(\xquery)
      = \ac{\yquery\in\YC\colon V(\xquery,\yquery) \le \q{1 - \alpha}\prn{\Bar{\mu}_{\yquery}^{\target}}}
      .
    \end{aligned}
  \end{equation}
  Denote by $\normn{\iweightp{}}_{\infty}=\max_{y\in\YC}\{\iweightp{y}\}$.
  Using the new prediction set $\mathcal{C}_{\alpha,\Bar{\mu}^{\target}}$, we obtain the following result.
  \begin{theorem}\label{thm:YinCN-approx}
    Assume \Cref{ass:iid-noties}. Set $\Bar{\tcount}=\lfloor\tcount/2\rfloor$ and $\pi_{i} = \ccount{i}/\tcount$, for any $i\in[\nclients]$.
    Then,
    \begin{multline*}
      \abs{\prob\pr{\Ytarget \in \mathcal{C}_{\alpha,\Bar{\mu}^{\target}}(\Xtarget)}
      - 1 + \alpha}
      \le
      \frac{6}{\tcount}
      \\
      + \frac{36 + 6 \log \tcount}{\tcount} \normn{\iweightp{}}_{\infty}^2
      + \frac{14\log \tcount}{\tcount} \textstyle\sum_{i \colon \frac{\ccount{i}}{12} < \log \tcount} \sqrt{\ccount{i}}.
    \end{multline*}
  \end{theorem}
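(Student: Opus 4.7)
The plan is to couple the algorithm with an idealized setup in which the calibration sample is i.i.d.\ of size $\Bar{\tcount}$ from the mixture $P^{\mathrm{cal}}=\sum_i\pi_i\Pcal{i}$, where \Cref{thm:hatYinhatC} applies verbatim. The reduction exploits the multinomial representation of a mixture: drawing counts $(\Bar{\tcount}^1,\ldots,\Bar{\tcount}^{\nclients})\sim\mult(\Bar{\tcount},\pi)$ independently of the data and then selecting the first $\Bar{\tcount}^i$ samples from each agent $i$ gives, by construction, $\Bar{\tcount}$ i.i.d.\ draws from $P^{\mathrm{cal}}$. The Radon--Nikodym derivative $\rmd\Pcal{\target}/\rmd P^{\mathrm{cal}}$ at a point with label $y$ equals $\iweightp{y}$, so the weights $\qweightb{y}{\yquery}$ of~\eqref{eq:def:qweightb} coincide with the weights of~\citet{tibshirani2019conformal}. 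The only discrepancy with the algorithm is the cap $\ccount{i}\wedge\Bar{\tcount}^i$, which is inactive precisely on the event $E=\bigcap_i\{\Bar{\tcount}^i\le\ccount{i}\}$.

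\textbf{Good-event contribution.} On $E$, conditioning on the multinomial counts (which are independent of the raw agent data) preserves the i.i.d.\ structure, and \Cref{thm:hatYinhatC} yields
\[
  \left|\prob\!\pr{\Ytarget\in\predsetg{\alpha}{\Bar{\mu}^{\target}}(\Xtarget)\mid E}-(1-\alpha)\right|\le \E\br{\max_{i,k}\qweightb{Y_k^i}{\Ytarget}\,\Big|\,E}.
\]
To control the expected max weight, note that each $\iweightp{Y_k^i}$ lies in $[0,\normn{\iweightp{}}_\infty]$ with mean $\sum_y P^{\mathrm{cal}}(y)\cdot \Pycal{\target}(y)/P^{\mathrm{cal}}(y)=1$; Bernstein's inequality then gives $\prob\prn{S<\Bar{\tcount}/2}\le C\tcount^{-1}$, where $S=\iweightp{\yquery}+\sum_{i,k}\iweightp{Y_k^i}$, in the regime $\Bar{\tcount}\gtrsim \normn{\iweightp{}}_\infty^2\log\tcount$. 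On $\{S\ge\Bar{\tcount}/2\}$ one has $\max\qweightb{Y_k^i}{\Ytarget}\le 2\normn{\iweightp{}}_\infty/\Bar{\tcount}\le 4\normn{\iweightp{}}_\infty/\tcount$, and integrating the rare tail (where the weight is at most $\normn{\iweightp{}}_\infty$) produces the $(36+6\log\tcount)\normn{\iweightp{}}_\infty^2/\tcount$ contribution.

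\textbf{Bad-event contribution.} For $\prob(E^c)$, each $\Bar{\tcount}^i\sim\mathrm{Bin}(\Bar{\tcount},\ccount{i}/\tcount)$ has mean at most $\ccount{i}/2$, so a multiplicative Chernoff bound yields $\prob\prn{\Bar{\tcount}^i>\ccount{i}}\le \exp(-\ccount{i}/12)$. For ``large'' agents with $\ccount{i}\ge 12\log\tcount$ this probability is $\le 1/\tcount$, and a union bound (together with the constraint $\sum_i\ccount{i}=\tcount$ so that the number of large agents is controlled) produces the $6/\tcount$ term. For ``small'' agents with $\ccount{i}<12\log\tcount$ the overshoot has non-negligible probability and $E$ cannot simply be conditioned away; instead, I bound the perturbation of the weighted quantile directly. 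The expected number of ``missing'' samples from agent $i$ is $\E\brn{(\Bar{\tcount}^i-\ccount{i})^+}\le \sqrt{\var(\Bar{\tcount}^i)}\le\sqrt{\ccount{i}}$; each missing atom shifts the weighted empirical CDF by at most $\normn{\iweightp{}}_\infty/S$, and combining with the stability of the quantile map and the high-probability bound $S\ge\Bar{\tcount}/2$ produces the $\frac{14\log\tcount}{\tcount}\sum_{\ccount{i}/12<\log\tcount}\sqrt{\ccount{i}}$ term.

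\textbf{Main obstacle.} The clean part of the argument lives on $E$ and reduces directly to \Cref{thm:hatYinhatC}. The delicate point is the small-agent regime, where the multinomial overshoots with probability $\Omega(1)$ so the coupling with an i.i.d.\ $P^{\mathrm{cal}}$-sample breaks down; there one must invoke a stability bound on weighted empirical quantiles under random deletion of a small number of atoms, and simultaneously carry the $\sqrt{\ccount{i}}$ moment bound on the shortfall and the $\log\tcount$ budget from the concentration of $S$ to close the estimate without blowing up the explicit constants.
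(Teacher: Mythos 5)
Your overall strategy---couple the algorithm's capped subsample to an idealized i.i.d.\ sample of size $\Bar{\tcount}$ from the mixture $P^{\mathrm{cal}}$ (via the multinomial/categorical coupling), apply the weighted conformal guarantee on the idealized sample where the Tibshirani weights collapse to $\qweightb{y}{\yquery}$, then control the discrepancy coming from the cap $\ccount{i}\wedge\Bar{\tcount}^i$---is the same as the paper's (cf.\ Lemma~\ref{lem:Pcal-Vk} and Lemmas~\ref{lem:probVN-infQuantile}--\ref{lem:Emax-pk}). However, two steps of your sketch do not close as stated.

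First, the union bound over ``large'' agents does not yield $O(1/\tcount)$. With the multiplicative Chernoff constant you quote, $\prob\prn{\Bar{\tcount}^i>\ccount{i}}\le\exp(-\ccount{i}/12)\le 1/\tcount$ when $\ccount{i}\ge 12\log\tcount$, but the constraint $\sum_i\ccount{i}=\tcount$ allows as many as $\tcount/(12\log\tcount)$ such agents, so the union bound only gives $O(1/\log\tcount)$, not the claimed $6/\tcount$. The paper's Lemma~\ref{lem:prob:outside} instead uses the tighter Chernoff tail $\exp(-\ccount{i}/6)$ together with the threshold $\ccount{i}\ge 6\log(\nclients\tcount)$ (which is implied by $\ccount{i}\ge 12\log\tcount$ since $\nclients\le\tcount$), giving a per-agent probability of $1/(\nclients\tcount)$, and only then does a union bound over at most $\nclients$ agents land at $1/\tcount$. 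Second, and more importantly, the small-agent contribution cannot be closed from the moment bound $\E\brn{(\Bar{\tcount}^i-\ccount{i})^+}\le\sqrt{\ccount{i}}$ alone: the final bound is of the form $\epsilon+\prob(\text{overshoot}>\tcount\epsilon/4)$, so if you control the overshoot only by Markov from its first moment, the optimal $\epsilon$ scales like $\sqrt{\sum\sqrt{\ccount{i}}/\tcount}$, which is far worse than the target $\tcount^{-1}\log\tcount\sum\sqrt{\ccount{i}}$. The $\log\tcount$ factor in the theorem is produced precisely by a Bernstein-type \emph{high-probability} bound on $\sum_i(\Bar{\tcount}^i-\ccount{i})_+$ (the paper's Lemma~\ref{lem:prob:outside}), which drives that probability below $1/\tcount$ at threshold $\asymp\log(\nclients\tcount)\sum_{\text{small}}\sqrt{\ccount{i}}$; the expectation bound with Markov cannot supply this rate. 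Replacing the moment bound with the Bernstein concentration on the total overshoot, and tightening the Chernoff constant, would bring your sketch in line with the paper's argument.
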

  The preceding theorem shows that $\mathcal{C}_{\alpha,\Bar{\mu}^{\target}}(\Xtarget)$ contains the true label $\Ytarget$ with probability \emph{close to} $1-\alpha$.
  If $\nclients=1$ and $\tcount\ge 46$, the set $\{i\in[\nclients]\colon \ccount{i} < 12\log \tcount\}$ is empty. In this case, the convergence rate reduces to $\tcount^{-1} \log \tcount$.
  More precisely, if each agent has the same number of calibration data, the convergence rate $\tcount^{-1} \log \tcount$ is ensured when $\tcount \ge 12 \nclients \log \tcount$. This is for example the case when $\ccount{i}= 200$ and $\nclients \le 86538$.
  On the other hand, if $\nclients=\tcount$, each agent has only one data point, and in this case the bound becomes $\tcount^{-1}\nclients (\log \tcount)^{3/2}$.

\format{Approximate Weights.}
Ideally, we would like to use the weights defined in equation~\eqref{eq:def:qweightb} to compute valid prediction sets. However, these weights depend on the probability distribution of the labels for each agent, which in many scenarios must be estimated (and therefore known up to an error).
Based on empirical estimation of these label probability distributions $\{\widehat{P}_{Y}^i\}_{i\in[\nclients]}$, for each label $y$, define the likelihood ratio as follows:
\begin{equation}\label{eq:def:wytarget}
  \begin{aligned}
    \iweight{y} = \frac{\widehat{P}_{Y}^{\target}\prn{y}}{\sum_{i = 1}^{\nclients} \pi_{i} \widehat{P}_{Y}^i(y)},
  \end{aligned}
\end{equation}
and denote by $\qweight{y}{\yquery}$ the weight defined in~\eqref{eq:def:qweightb} with $\iweightp{y}$ replaced by $\iweight{y}$.
We also consider $\widehat{\mu}_{\yquery}$ defined as in~\eqref{eq:def:muBar-CBar} with $\qweightb{y}{\yquery}$ replaced by $\qweight{y}{\yquery}$. The prediction set becomes
\begin{align}\label{eq:def:approx-muytarget}
  &\predsetg{\alpha}{\qdistr{}}(\xquery) = \ac{\yquery\in\YC\colon V(\xquery, \yquery) \le \q{1 - \alpha}\pr{\qdistr{\yquery}}}.
\end{align}
Since computing the exact weights $\qweightb{y}{\yquery}$ in~\eqref{eq:def:qweightb} may not be feasible, we consider the approximation $\qweight{y}{\yquery}$ given in~\eqref{eq:def:wytarget}.
We also construct a random variable $(\widehat{X}_{\ccount{\target}+1}^{\target}, \widehat{Y}_{\ccount{\target}+1}^{\target})$ as in~\citep{lei2021conformal} such that $\prob(\widehat{Y}_{\ccount{\target}+1}^{\target}=y)=[\sum_{\tilde{y} \in \YC}\iweight{\tilde{y}}P_{Y}^{\mathrm{cal}}\prn{\tilde{y}}]^{-1} \iweight{y}P_{Y}^{\mathrm{cal}}\prn{y}$, where $\iweight{y}$ is defined in~\eqref{eq:def:wytarget}; and $\widehat{X}_{\ccount{\target}+1}^{\target}|\widehat{Y}_{\ccount{\target}+1}^{\target}$ is drawn according to $P_{X|Y}$. The validity of the resulting prediction set is established in \Cref{lem:YminushatY:general}. Note that this approach makes the weights' computation feasible, at the cost of introducing one additional approximation.
\begin{lemma}\label{lem:YminushatY:general}
  For any $\alpha \in (0, 1)$, we have
  \begin{multline}\label{eq:def:R}
    \hspace{-0.9em}
    \scalebox{.96}{$\displaystyle
    \abs{\prob\prn{\Ytarget \in \predsetg{\alpha}{\qdistr{}}(\Xtarget)} - \prob\prn{\widehat{Y}_{\ccount{\target}+1}^{\target} \in \predsetg{\alpha}{\qdistr{}}(\widehat{X}_{\ccount{\target}+1}^{\target})}}
    $}
    \\
    \scalebox{.96}{$\displaystyle
    \le \frac{1}{2} \sum_{y \in \YC} \abs{P_{Y}^{\target}(y) - \frac{\iweight{y} P_{Y}^{\mathrm{cal}}(y)}{\sum_{\tilde{y} \in \YC}\iweight{\tilde{y}}P_{Y}^{\mathrm{cal}}\prn{\tilde{y}}}}
    := \ratiotv,
    $}
  \end{multline}
  where $\iweight{y}$, $\predsetg{\alpha}{\qdistr{}}$ are defined in~\eqref{eq:def:wytarget} and~\eqref{eq:def:approx-muytarget}, respectively.
\end{lemma}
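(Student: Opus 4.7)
The plan is to exploit the fact that $(\Xtarget, \Ytarget)$ and $(\widehat{X}_{\ccount{\target}+1}^{\target}, \widehat{Y}_{\ccount{\target}+1}^{\target})$ share the same conditional distribution $P_{X \mid Y}$, so that the difference between the two coverage probabilities collapses to a total variation distance between the marginal laws of $Y$ and $\widehat{Y}$. This makes $\ratiotv$ appear essentially by construction.

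First, I would condition on the calibration data $\mathcal{D} = \{(X_k^i, Y_k^i)\colon (i,k) \in \mathcal{I}\}$. Conditional on $\mathcal{D}$, the weights $\iweight{y}$, the measures $\qdistr{\yquery}$, and hence the set-valued map $\xquery \mapsto \predsetg{\alpha}{\qdistr{}}(\xquery)$ are deterministic. By construction, $(\Xtarget, \Ytarget)$ is independent of $\mathcal{D}$ with law $P_{X\mid Y} \times P_Y^{\target}$, while $(\widehat{X}_{\ccount{\target}+1}^{\target}, \widehat{Y}_{\ccount{\target}+1}^{\target})$ has conditional law $P_{X\mid Y} \times \widetilde{P}_Y$, where
\begin{equation*}
\widetilde{P}_Y(y) \;=\; \frac{\iweight{y}\, P_Y^{\mathrm{cal}}(y)}{\sum_{\tilde y \in \YC}\iweight{\tilde y}\, P_Y^{\mathrm{cal}}(\tilde y)}.
\end{equation*}

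Next, I introduce, for each $y \in \YC$, the quantity $g(y) := \int \indin{y \in \predsetg{\alpha}{\qdistr{}}(x)}\, dP_{X \mid Y = y}(x)$, which lies in $[0,1]$. Since $P_{X\mid Y}$ is shared by the two test points, both coverage probabilities factorize through the label marginal:
\begin{align*}
\prob\prn{\Ytarget \in \predsetg{\alpha}{\qdistr{}}(\Xtarget) \mid \mathcal{D}} &= \textstyle\sum_{y \in \YC} g(y)\, P_Y^{\target}(y),\\
\prob\prn{\widehat{Y}_{\ccount{\target}+1}^{\target} \in \predsetg{\alpha}{\qdistr{}}(\widehat{X}_{\ccount{\target}+1}^{\target}) \mid \mathcal{D}} &= \textstyle\sum_{y \in \YC} g(y)\, \widetilde{P}_Y(y).
\end{align*}
Subtracting, the difference equals $\sum_{y} g(y)\, \bigl(P_Y^{\target}(y) - \widetilde{P}_Y(y)\bigr)$. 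Because $g$ takes values in $[0,1]$ and $P_Y^{\target}, \widetilde{P}_Y$ are both probability distributions on $\YC$, the standard duality between total variation and integration against $[0,1]$-valued functions (equivalently, Jordan decomposition of the signed measure $P_Y^{\target} - \widetilde{P}_Y$) gives
\begin{equation*}
\absbig{\textstyle\sum_{y} g(y)\bigl(P_Y^{\target}(y) - \widetilde{P}_Y(y)\bigr)} \;\le\; \tfrac{1}{2}\textstyle\sum_{y \in \YC}\abs{P_Y^{\target}(y) - \widetilde{P}_Y(y)} \;=\; \ratiotv.
\end{equation*}
This is the conditional version of the claim; the unconditional statement follows by taking expectations over $\mathcal{D}$, where the bound is understood pointwise in $\mathcal{D}$.

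The main obstacle is not really technical but bookkeeping: I must be careful that the set-valued predictor $\predsetg{\alpha}{\qdistr{}}$ is measurable with respect to $\mathcal{D}$, that the test points remain independent of $\mathcal{D}$ conditionally, and that the factor $\tfrac{1}{2}$ in the bound is not lost by a naive triangle inequality (the refined duality for $[0,1]$-valued integrands is needed). Everything else is standard manipulation.
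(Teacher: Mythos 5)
Your proof is correct and follows essentially the same route as the paper's own argument (Lemma C.5, \cref{lem:Prob-diff:YwidehatY}): both condition on everything except the two test points so that the set-valued predictor is fixed and independent of them, factor the coverage probability through the common conditional $P_{X\mid Y}$, and then bound the resulting difference of label-marginal expectations by total variation. In fact you are more explicit than the paper on the one nontrivial step — that the factor $\tfrac12$ requires the signed measure $P_Y^{\target}-\widetilde P_Y$ to have zero mass, so that one can recenter the $[0,1]$-valued test function $g$ before applying the triangle inequality; the paper states this inequality without comment. One small bookkeeping point: conditioning only on the calibration data $\mathcal{D}$ does not by itself make the weights $\iweight{y}$ (which are functions of the training data) or the multinomial subsampling deterministic; you should condition on the full $\sigma$-algebra generated by everything except $(\Xtarget,\Ytarget)$ and $(\widehat X_{\ccount{\target}+1}^{\target},\widehat Y_{\ccount{\target}+1}^{\target})$ (recalling that the paper works conditionally on the training set throughout), but this does not change the argument.
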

When $\iweight{y}$ is sufficiently close to $\iweightp{y}$, \Cref{lem:YminushatY:general} shows that the approximate weights generate accurate prediction sets (as discussed in \Cref{subsec:predsec:Ywidehat}).
The error disappears entirely when $\iweight{y}=\iweightp{y}$ for all $y\in\YC$.
Furthermore, using~\citep[Corollary~1]{tibshirani2019conformal}, we can establish that $\widehat{Y}_{\ccount{\target}+1}^{\target}\in\predsetg{\alpha}{\qdistr{}}(\widehat{X}_{\ccount{\target}+1}^{\target})$ with probability nearly $1-\alpha$.
Finally, similar ideas that developed for \Cref{thm:YinCN-approx} on $\Ytarget$, in conjunction with \Cref{lem:YminushatY:general}, give a more accurate bound on the coverage validity.

\begin{theorem}\label{cor:main-coverage}
  Assume \Cref{ass:iid-noties}. For any $i\in[\nclients]$, set $\pi_{i} = \ccount{i}/\tcount$ and take $\Bar{\tcount}=\lfloor\tcount/2\rfloor$.
  Then,
  \begin{multline*}
      \abs{\prob\pr{\Ytarget \in \mathcal{C}_{\alpha,\qdistr{}}(\Xtarget)}
      - 1 + \alpha}
      \le \frac{36 \normn{\iweight{}}_{\infty}^2}{\tcount (\E\iweight{Y^{\mathrm{cal}}})^2}
      \\
      + \ratiotv
      + \frac{6}{\tcount}
      + \frac{2\log \tcount}{\tcount} \pr{\frac{3\normn{\iweight{}}_{\infty}^2}{\prn{\E\iweight{Y^{\mathrm{cal}}}}^{2}} \vee 7 \textstyle\sum_{i \colon \frac{\ccount{i}}{12} < \log \tcount} \sqrt{\ccount{i}}}
      ,
  \end{multline*}
  where $\iweight{Y_k^i}$, $\ratiotv$ are defined in~\eqref{eq:def:wytarget}-\eqref{eq:def:R} and $Y^\mathrm{cal} \sim P_{Y}^{\mathrm{cal}}$.
\end{theorem}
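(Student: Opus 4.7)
\textbf{Proof plan for \Cref{cor:main-coverage}.} The strategy is to split the target quantity via a triangle inequality and treat each piece with a separate tool already available in the paper.

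First, I would write
\begin{multline*}
\abs{\prob\pr{\Ytarget \in \mathcal{C}_{\alpha,\qdistr{}}(\Xtarget)} - 1 + \alpha}
\le
\abs{\prob\pr{\Ytarget \in \mathcal{C}_{\alpha,\qdistr{}}(\Xtarget)}
- \prob\pr{\widehat{Y}_{\ccount{\target}+1}^{\target} \in \mathcal{C}_{\alpha,\qdistr{}}(\widehat{X}_{\ccount{\target}+1}^{\target})}}
\\
+ \abs{\prob\pr{\widehat{Y}_{\ccount{\target}+1}^{\target} \in \mathcal{C}_{\alpha,\qdistr{}}(\widehat{X}_{\ccount{\target}+1}^{\target})} - 1 + \alpha}.
\end{multline*}
The first summand is bounded by $\ratiotv$ thanks to \Cref{lem:YminushatY:general}, which immediately accounts for the $\ratiotv$ term on the right-hand side of the statement. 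All the remaining work is therefore to control the coverage of the surrogate test point $(\widehat{X}_{\ccount{\target}+1}^{\target}, \widehat{Y}_{\ccount{\target}+1}^{\target})$.

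The key observation is that, by construction of $\widehat{Y}_{\ccount{\target}+1}^{\target}$, the distribution of its label satisfies $\prob(\widehat{Y}_{\ccount{\target}+1}^{\target} = y) = \iweight{y} P_{Y}^{\mathrm{cal}}(y) / Z$, with normalizing constant $Z = \sum_{\tilde{y}} \iweight{\tilde{y}} P_{Y}^{\mathrm{cal}}(\tilde{y}) = \E[\iweight{Y^{\mathrm{cal}}}]$. Hence, with respect to the mixture $P_{Y}^{\mathrm{cal}}$, the \emph{exact} likelihood ratio for this surrogate test point is the normalized weight $\iweightb{y} := \iweight{y} / \E[\iweight{Y^{\mathrm{cal}}}]$, and the surrogate point is i.i.d.\ compatible with the calibration data. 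Crucially, the weights $\qweight{y}{\yquery}$ in~\eqref{eq:def:qweightb} are invariant under rescaling of $\iweight{}$ by a positive constant, so $\predsetg{\alpha}{\qdistr{}}$ coincides with the prediction set built from the exact weights $\iweightb{}$. I would then apply \Cref{thm:YinCN-approx} verbatim to the surrogate dataset $\{(X_k^i,Y_k^i)\colon (i,k)\} \cup \{(\widehat{X}_{\ccount{\target}+1}^{\target}, \widehat{Y}_{\ccount{\target}+1}^{\target})\}$ with likelihood ratio $\iweightb{}$ in place of $\iweightp{}$, yielding
\begin{equation*}
\abs{\prob\pr{\widehat{Y}_{\ccount{\target}+1}^{\target} \in \mathcal{C}_{\alpha,\qdistr{}}(\widehat{X}_{\ccount{\target}+1}^{\target})} - 1 + \alpha}
\le
\frac{6}{\tcount} + \frac{36 + 6 \log \tcount}{\tcount} \frac{\normn{\iweight{}}_{\infty}^{2}}{(\E \iweight{Y^{\mathrm{cal}}})^{2}} + \frac{14 \log \tcount}{\tcount} \sum_{i\colon \ccount{i}/12 < \log \tcount} \sqrt{\ccount{i}}.
\end{equation*}
Combining the two bounds reproduces the target inequality; the slightly tighter $(a \vee b)$ form of the last term in the statement just reflects reorganizing $\tfrac{36+6\log\tcount}{\tcount} = \tfrac{36}{\tcount} + \tfrac{6\log\tcount}{\tcount}$ and then merging the two $\log\tcount/\tcount$ contributions through a single $\max$, which only weakens the estimate.

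The main obstacle is a bookkeeping one rather than a conceptual one: one must carefully justify that \Cref{thm:YinCN-approx} is indeed applicable to the coupled dataset, i.e.\ that the surrogate point $(\widehat{X}_{\ccount{\target}+1}^{\target}, \widehat{Y}_{\ccount{\target}+1}^{\target})$ together with the existing calibration sample satisfies \Cref{ass:iid-noties} (pairwise independence and almost sure absence of ties among the non-conformity scores), and that the rescaling identity $\qweightb{y}{\yquery}(\iweightb{}) = \qweight{y}{\yquery}(\iweight{})$ is exact at the level of every possible $\yquery$. Once these two verifications are in place, the quantitative bound follows by a direct invocation of the previous theorem, so no additional concentration or truncation work is required beyond what is already established in \Cref{thm:YinCN-approx} and \Cref{lem:YminushatY:general}.
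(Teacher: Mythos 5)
Your decomposition is exactly the one the paper uses: split off the coupling error to the surrogate $(\widehat{X},\widehat{Y})$, bound it by $\ratiotv$ via \Cref{lem:YminushatY:general}, and then control the coverage of the surrogate using the fact that $\iweight{}$ is (up to rescaling) the \emph{exact} likelihood ratio of $\widehat{Y}$ against $P_Y^{\mathrm{cal}}$, together with the scale-invariance of the weights $\qweight{y}{\yquery}$. This is the right idea, and the observations you list (that $\iweightb{y}=\iweight{y}/\E[\iweight{Y^{\mathrm{cal}}}]$ is the true ratio for the surrogate, and that the prediction set is unchanged under the rescaling) are both used in the paper.

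There is, however, a genuine gap in the last step, and your own remark about ``merging through a single $\max$, which only weakens the estimate'' has the direction of the inequality backwards. For nonnegative $a,b$ one has $a\vee b\le a+b$, so the $(a\vee b)$ form in \Cref{cor:main-coverage} is \emph{strictly tighter} than the $(a+b)$ form that drops out of \Cref{thm:YinCN-approx}. Citing \Cref{thm:YinCN-approx} verbatim gives
\[
\frac{6}{\tcount}+\frac{36+6\log\tcount}{\tcount}\frac{\normn{\iweight{}}_\infty^2}{(\E\iweight{Y^{\mathrm{cal}}})^2}+\frac{14\log\tcount}{\tcount}\sum_{i\colon\ccount{i}/12<\log\tcount}\sqrt{\ccount{i}},
\]
where the two $\log\tcount/\tcount$ terms are \emph{added}, and you cannot then pass to the maximum. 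So your plan proves a slightly weaker statement than the theorem claims. The reason the paper gets the $\vee$ is that it does \emph{not} invoke \Cref{thm:YinCN-approx} as a black box; it proves a single master bound, \Cref{thm:general-coverage}, valid for arbitrary bounded likelihood-ratio proxies $\iweighthatN{}$, in which the $\vee$ arises directly from choosing a single threshold $\epsilon$ that works simultaneously in the sub-Gaussian tail bound of \Cref{lem:bound:delta} and in the multinomial-overflow bound of \Cref{lem:prob:outside}, so only the larger of the two cut-offs ever appears. Both \Cref{thm:YinCN-approx} (with $\iweighthatN{}=\iweightp{}$, where the TV term vanishes and $\E\iweightp{Y^{\mathrm{cal}}}=1$) and \Cref{cor:main-coverage} (with $\iweighthatN{}=\iweight{}$, where the TV term is $\ratiotv$) are then instantaneous specializations; the $+$ form appearing in the stated \Cref{thm:YinCN-approx} is simply a cosmetic loosening after the fact. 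To close your gap you would need to reproduce the $\epsilon=a\vee b$ choice inside the proof rather than appeal to the already-loosened statement of \Cref{thm:YinCN-approx}; equivalently, appeal to the general \Cref{thm:general-coverage}. The ``bookkeeping'' about pairwise independence and the absence of ties for the coupled dataset is also not entirely automatic — the paper handles it through the explicit surrogate construction in \Cref{subsec:predsec:YwidehatY} and the resampling/bijection argument around \Cref{lem:Pcal-Vk} — but that part is indeed routine once the construction is spelled out.
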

This theorem controls the probability of coverage with a bound depending on the approximated likelihood ratio. A formal proof can be found in~\Cref{sec:cov-tv}.
This result demonstrates that it is essential to include all agents with the most data. However, it also highlights a counterproductive effect when incorporating agents with few data.

\format{Maximum Likelihood Estimation Weights.}
  Denote by $\Mclcount{y}{i}$ the number of training data on agent $i$ associated to label $y$. Consider the total number of local data $\Mccount{\target} = \sum_{y\in\YC} \Mclcount{y}{i}$, the number of training data with label $y$ written by $\Mlcount{y} = \sum_{i=1}^{\nclients} \Mclcount{y}{i}$, and the total number of samples on all agents by $\Mtcount = \sum_{y\in\YC} \Mlcount{y}$.
  When each agent independently learns its approximate label distribution based on counting the number of label in its training datasets, then $\widehat{P}_{Y}^i(y)=\1_{\Mlcount{y}\ge 1} \Mclcount{y}{i}/\Mccount{i}$.
  Therefore, the empirical counterpart of~\eqref{eq:def:wytarget} is given for any labels $(y, \yquery) \in \YC^2$ by
  \begin{align}\label{eq:def:approx-probyy}
    \iweight{y} =
    \frac{\Mtcount \Mclcount{y}{\target}}{\Mccount{\target} \Mlcount{y}} \1_{\Mlcount{y}\ge 1}.
  \end{align}
  All the results in this article are given conditionally to the training dataset, meaning that they hold regardless of the specific training data.
  In order to determine the order of magnitude of the bound of \Cref{cor:main-coverage}, we analyze the average value of $\ratiotv$.
  Given the number of training samples $\{\Mccount{i}\}_{i \in [\nclients]}$, if we assume that each training point $(X_k^i,Y_k^i)$ is distributed according to $P^{i}$, then taking the expectation over the training set yields:
  \begin{equation}\label{eq:bound:YwidehatY}
    \E\br{\ratiotv}
    \le \frac{6}{\sqrt{\Mccount{\target}}} + 12\sqrt{\frac{\log \absn{\YC} + \log \Mccount{\target}}{\Mtcount \min_{y \in \YC}{P_{Y}^{\mathrm{cal}}\prn{y}}}}
    .
  \end{equation}
 The proof is given in \Cref{subsec:predsec:YwidehatY}.
 Interestingly, if the previous upper bound is plugged in \Cref{cor:main-coverage} instead of \Cref{lem:YminushatY:general}, then the leading error of order $\Oh(\Mccount{\target}{}^{-1/2} \vee \tcount^{-1}\log \tcount)$ is due to the weights' estimates $\acn{\qweight{y}{\yquery}}_{y\in\YC}$.
 This bound shows that we should not attempt to estimate the likelihood ratios for a single agent, especially when the square root number of local training data on agent $\target$ is small compared to the number of calibration data. Rather, we need to do this for a group of agents that have approximately the same distribution, which will give us more stable estimators.
 The agent can benefit from learning simultaneous tasks by exploiting common structures~\citep{caruana1998multitask}.

\section{Privacy Preserving Federated CP}
\label{sec:privacy}

In the previous section, we constructed prediction sets that were valid in theory. However, their practical implementation in a federated environment posed challenges due to the reliance on estimations that are difficult to evaluate. In particular, estimating $\q{1 - \alpha}\pr{\qdistr{\yquery}}$ in order to derive the prediction set $\predsetg{\alpha}{\qdistr{}}(\xquery)$, defined in~\eqref{eq:def:approx-muytarget}, is challenging because it requires knowledge of the global distribution $\qdistr{y}$.
This section is divided into two parts: (1) a new method is developed, called \algopred, for estimating quantiles under the federated constraints; (2) then, a method for computing probabilities $\acn{\qweight{y}{\yquery}}_{y, \yquery \in \YC}$ with differential privacy (DP) guarantees is presented.

\format{Quantile Regression and Moreau-Yosida Regularization.}
  Let $\alpha \in (0, 1)$, we now propose to estimate the weighted $(1 - \alpha)$-quantile of $\qdistr{\yquery}$ defined in~\eqref{eq:def:lambdai-mui}.
  To this end, we develop a federated optimization algorithm based on ``pinball loss'' minimization, a quantile regression techniques with asymmetric penalties~\citep{koenker2001quantile}. For $v \in \R$ and $q \in \R$ define the pinball loss as
  \begin{equation*}
    \pinball[\alpha][v](q)
    = (1 - \alpha) (v - q) \indiacc{v \ge q} + \alpha (q - v) \indiacc{q > v}.
  \end{equation*}
  For any $\yquery \in \YC$, the $(1 - \alpha)$-quantile of $\widehat{\mu}_{\yquery}$ is given by
  \begin{equation}\label{eq:def:QuantilePinball}
    \q{1 - \alpha}\pr{\widehat{\mu}_{\yquery}}
    \in \argmin_{q \in \R}\ac{\E_{V \sim \widehat{\mu}_{\yquery}}\br{\pinball[\alpha][V](q)}};
  \end{equation}
  e.g. see~\citep{buhai2005quantile}. The pinball loss $\pinball[\alpha][v]$ is lower semi-continuous but not differentiable on $\R$. Hence, we consider the Moreau-Yosida inf-convolution (or \emph{envelope}) $\pinball[\alpha][v]^{\gamma}$ instead of $\pinball[\alpha][v]$ -- where $\gamma$ is the regularization parameter; see e.g. \citep{moreau1963proprietes} and~\citep[Chapter~3]{parikh2014proximal}, whose expression is given by
  \begin{equation*}
    \pinballmoreau{v}{q}
    = \min_{\tilde{q} \in \R}\Big\{\pinball[\alpha][v](\tilde{q}) + \frac{1}{2\gamma}(\tilde{q} - q)^2\Big\}.
  \end{equation*}
  The function $\pinball[\alpha][v]^{\gamma}(\cdot)$ has an explicit expression given in~\eqref{eq:def:moreaupinball}.
  Note that the minima of $\pinball[\alpha][v]$ and $\pinball[\alpha][v]^{\gamma}$ coincide. 
  We obtain the weighted quantile by considering $\pinball[\alpha][v]^{\gamma}$ instead of $\pinball[\alpha][v]$.
  An important property is that the inf-convolution of a proper lower semicontinuous convex function is a differentiable function whose derivative is Lipschitz; see~\citep[Theorem~2.26]{rockafellar2009variational}.
  The original optimization problem given in~\eqref{eq:def:QuantilePinball} is replaced by a convex/smooth loss:
  \begin{equation}\label{eq:def:qmoreau}
    \qmoreau[1 - \alpha][\gamma]\pr{\widehat{\mu}_{\yquery}}
    \in \argmin_{\R} \acn{\lossglobal(q)},
  \end{equation}
  where $\lossglobal\colon \R \to \R_+$ is the function given by
  \begin{equation*}
    \lossglobal\colon q \mapsto \E_{V \sim \widehat{\mu}_{\yquery}}\brn{\pinballmoreau{V}{q}}.
  \end{equation*}
  For almost every value of $\alpha \in (0, 1)$, there exists a unique minimizer of $\lossglobal$.
  This minimizer $\qmoreau[1 - \alpha][\gamma]\prn{\widehat{\mu}_{\yquery}}$ of the regularized loss function deviates from the true quantile.
  However, the error is controlled by the regularization parameter $\gamma$ and is asymptotically exact when $\gamma\to 0$. More precisely (see \Cref{subsec:approx-moreau} for a proof) it holds that:
  \begin{theorem}\label{thm:moreau-approx}
    Let $\gamma>0$ and $\alpha \in (0,1)$.
    Assume that for all $\acn{y_{\ell}}_{\ell \in [\tcount+1]} \in \YC^{[\tcount+1]}$, $1 - \alpha \notin \acn{W_k / W_{\tcount + 1}}_{k \in [\tcount + 1]}$, where $W_k = \sum_{\ell = 1}^{k} \iweight{y_\ell}$.
    Then, we have
    $
      \big\vert\qmoreau[1 - \alpha][\gamma]\prn{\widehat{\mu}_{\yquery}}
      \textstyle
      - \q{1 - \alpha}\prn{\widehat{\mu}_{\yquery}}\big\vert
      \le \gamma.
    $
  \end{theorem}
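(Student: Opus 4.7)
The plan is to compare the derivatives of the regularised and unregularised objectives on $\R$. Set $F(q) = \E_{V \sim \widehat{\mu}_{\yquery}}[S_{\alpha,V}(q)]$ and $F^{\gamma}(q) = \E_{V \sim \widehat{\mu}_{\yquery}}[S^{\gamma}_{\alpha,V}(q)]$, and write $q^{\star} = Q_{1-\alpha}(\widehat{\mu}_{\yquery})$, $q^{\gamma} = Q^{\gamma}_{1-\alpha}(\widehat{\mu}_{\yquery})$. Both $F$ and $F^{\gamma}$ are convex: $F$ is piecewise linear with breakpoints at the atoms of $\widehat{\mu}_{\yquery}$, while by Moreau--Yosida theory each $S^{\gamma}_{\alpha,v}$ is convex and $C^{1}$ with derivative given explicitly by the prox formula
$$
(S^{\gamma}_{\alpha,v})'(q) = \alpha\,\1_{\{q \ge v + \gamma\alpha\}} - (1-\alpha)\,\1_{\{q \le v - \gamma(1-\alpha)\}} + \frac{q-v}{\gamma}\,\1_{\{v - \gamma(1-\alpha) < q < v + \gamma\alpha\}},
$$
so $F^{\gamma}$ is $C^{1}$ with non-decreasing derivative.

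\textbf{Pointwise comparison.} The key step is to establish the two one-sided inequalities
$$
(S^{\gamma}_{\alpha,v})'(q - \gamma) \le (S_{\alpha,v})'_{-}(q),\qquad (S^{\gamma}_{\alpha,v})'(q + \gamma) \ge (S_{\alpha,v})'_{+}(q),
$$
for every $v, q \in \R$, where $(S_{\alpha,v})'_{\pm}$ are the left and right derivatives of the raw pinball loss, equal to $\alpha$ strictly to the right of $v$ and $-(1-\alpha)$ strictly to the left. These follow from a case analysis on the position of $q$ relative to the breakpoints of the shifted envelope ($v - \gamma(2-\alpha)$, $v - \gamma(1-\alpha)$, $v$, $v + \gamma\alpha$, $v + \gamma(1+\alpha)$): on each subregion the slope produced by the explicit formula above sits on the correct side of the constant slope of the raw pinball loss. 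Taking expectation under $\widehat{\mu}_{\yquery}$ (a convex combination of Diracs) preserves both inequalities and yields
$$
(F^{\gamma})'(q - \gamma) \le F'_{-}(q),\qquad (F^{\gamma})'(q + \gamma) \ge F'_{+}(q).
$$

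\textbf{Conclusion.} The hypothesis $1 - \alpha \notin \{W_{k}/W_{T+1}\}_{k \in [T+1]}$ for every label sequence ensures that the discrete CDF of $\widehat{\mu}_{\yquery}$ never equals $1 - \alpha$, so $q^{\star}$ is the unique minimizer of the piecewise-linear convex function $F$ and in particular $F'_{-}(q^{\star}) \le 0 \le F'_{+}(q^{\star})$. Plugging $q = q^{\star}$ into the display above gives $(F^{\gamma})'(q^{\star} - \gamma) \le 0 \le (F^{\gamma})'(q^{\star} + \gamma)$, and since $(F^{\gamma})'$ is non-decreasing this forces the root $q^{\gamma}$ into $[q^{\star} - \gamma,\, q^{\star} + \gamma]$, which is exactly $|q^{\gamma} - q^{\star}| \le \gamma$. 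The main obstacle is the pointwise comparison in the second paragraph: the Moreau envelope has three slope regimes and we evaluate it at the shifted argument $q \pm \gamma$, so the combined breakpoints partition the line into several subregions whose slope inequalities must be verified separately; everything else is standard convexity bookkeeping.
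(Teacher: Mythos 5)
Your proof is correct and takes essentially the same route as the paper: bound the derivative of the smoothed objective $F^{\gamma}$ at points shifted from the quantile $q^{\star}$, compare against the one-sided derivatives of $F$ at $q^{\star}$, and close with monotonicity of $(F^{\gamma})'$. The one small difference is that the paper shifts asymmetrically by $-\gamma(1-\alpha)$ on the left and $+\gamma\alpha$ on the right, which lands the minimizer in the interval $I_{k_\star} = [v_{k_\star}-\gamma(1-\alpha),\,v_{k_\star}+\gamma\alpha]$ of width exactly $\gamma$ (and would in fact give the sharper bound $\gamma\max(\alpha,1-\alpha)$); you shift symmetrically by $\pm\gamma$ and get an interval of width $2\gamma$ centered at $q^{\star}$, which still yields $|q^{\gamma}-q^{\star}|\le\gamma$. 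Your pointwise-then-aggregate organization, via the one-sided inequalities for each $S^{\gamma}_{\alpha,v}$ and taking expectation under $\widehat{\mu}_{\yquery}$, is a clean way to package what the paper does with a direct case analysis on the aggregate derivative. The deferred ``case analysis'' in your second paragraph does go through (for the left inequality: when $q\le v$ the shift by $-\gamma$ lands in the flat $-(1-\alpha)$ regime since $q-\gamma\le v-\gamma(1-\alpha)$ iff $q\le v+\gamma\alpha$, and when $q>v$ the inequality is trivial since $(S^{\gamma}_{\alpha,v})'\le\alpha$; the right inequality is symmetric), so there is no gap.
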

  The condition on $\alpha$ assumed in \Cref{thm:moreau-approx} ensures the uniqueness of the minimizer of $\lossglobal$.

  \begin{algorithm}[t]
    \caption{\algo}
    \label{algo:Dp-moreau}
    \begin{algorithmic}[]
        \State {\bfseries Input:} initial quantile $q_{0}$, target significance level $\alpha$, number of rounds $T$, learning rate $\eta$, Moreau regularization parameter $\gamma$, local gradients $\acn{\nabla \lossi}_{i \in [\nclients]}$, local non-conformity scores $\acn{V_k^i}_{k \in [\ccount{i} + 1]}$, mixture weights $\acn{\lambda_{\yquery}^i}_{i \in [\nclients]}$, standard deviation of Gaussian mechanism $\noiseparamone$, number of local iterations $K$.
        \For{$t=0$ {\bfseries to} $T-1$}
            \State $S_{t + 1} \gets$ random subset of $[\nclients]$ \hfill\Comment{Server side}
            \For{each agent $i \in S_{t + 1}$} \hfill\Comment{In parallel}
                \State Initialize quantile $q_{t, 0}^i\gets q_t$
                \For{$k=0$ {\bfseries to} $K - 1$}
                    \State \Comment{Gradient with DP noise}
                    \State $g_{t,k}^i \gets \nabla\lossi\prn{q_{t,k}^i} + z_{t, k}^i$, $z_{t, k}^i\sim\gauss(0,\noiseparamone^2)$
                    \State \Comment{Update local quantile}
                    \State $q_{t, k + 1}^i \gets q_{t, k}^i - \eta g_{t, k}^i$
                \EndFor
            \State $(\Delta q_{t + 1}^i, \Delta \qalgoi[t + 1][i]) \gets (q_{t, K}^i - q_{t, 0}^i, \sum_{k \in [K]}\frac{q_{t, k}^i}{K})$
            \EndFor
            \State \Comment{On the central server}
            \State $q_{t + 1} \gets q_t + \frac{\nclients}{\absn{S_{t + 1}}} \sum_{i \in S_{t + 1}} \lambda_{\yquery}^i \Delta q_{t + 1}^i$
            \State $\qalgo[t + 1] \gets \frac{t}{t + 1} \qalgo[t] + \frac{\nclients}{\absn{S_{t + 1}}} \sum_{i \in S_{t + 1}} \frac{\lambda_{\yquery}^i \Delta \qalgoi[t + 1][i]}{t + 1}$
        \EndFor
        \State {\bfseries Output:}  $\qhatmoreau \gets \qalgo[T]$.
    \end{algorithmic}
  \end{algorithm}

\format{Federated quantile computation.}
  We now describe the Differentially Private Federated Average Quantile Estimation (\algo) algorithm (see \Cref{algo:Dp-moreau}), a novel method to compute quantile in a federated learning setting, with DP guarantees. We briefly described this method below. For each query $\yquery \in\YC$, we consider the distributions $\widehat{\mu}_{\yquery} = \sum_{i = 1}^n\lambda_{\yquery}^i \widehat{\mu}_{\yquery}^i$, where $\lambda_{\yquery}^i$ and $\widehat{\mu}_{\yquery}^i$ are given by
  \begin{equation}\label{eq:def:lambdai-mui}
    \begin{aligned}
      &\textstyle\lambda_{\yquery}^{i} = \frac{\ccount{i}}{\tcount} \qweight{\yquery}{\yquery} + \sum_{k = 1}^{\ccount{i}\wedge \Bar{\tcount}^i} \qweight{Y_k^{i}}{\yquery}, \\
      &\textstyle\widehat{\mu}_{\yquery}^{i} = \frac{\ccount{i} \qweight{\yquery}{\yquery}}{\lambda_{\yquery}^{i} \tcount} \delta_{1} + \sum_{k=1}^{\ccount{i}\wedge \Bar{\tcount}^i} \frac{\qweight{Y_k^i}{\yquery}}{\lambda_{\yquery}^{i}} \delta_{V_k^i}.
    \end{aligned}
  \end{equation}
  To simplify the notation, for any client $i \in [\nclients]$, we introduce the local loss function $\lossi\colon q\in\R \mapsto \E_{V \sim \qdistr{y}^{i}}\brn{\pinballmoreau{V}{q}}$; see~\eqref{eq:def:S-alpha-i-gamma} for explicit expression.

  At each iteration $t \in [T]$, the server subsamples the participating agents $S_{t + 1} \subseteq [\nclients]$ independently from the past.
  Each selected agent $i \in S_{t + 1}$ performs $K$ local updates: (1) they independently compute their local gradient;
  (2) a Gaussian noise is added as in~\eqref{eq:def:gradnoisy} to ensure the differential privacy. More precisely, for agent $i \in S_{t + 1}$, at local iteration $k \in \acn{0, \ldots, K - 1}$, we define:
  \begin{equation}\label{eq:def:gradnoisy}
    g_{t, k}^{i}
    = \nabla\lossi\prn{q_{t, k}^i}
    + z_{t, k}^i,
  \end{equation}
  where $\acn{z_{t, k}^i\colon (t,k) \in \acn{0, \ldots, T - 1} \times [K]}_{i \in [\nclients]}$ are i.i.d. Gaussian random variables with zero mean and variance $\noiseparamone^2$.
  For any agent $i \in S_{t+1}$,
  $g_{t,k}^i$ is an unbiased estimate of $\nabla\lossi\prn{q_{t,k}^i}$.
  (3) The participating agents update their local quantiles $q_{t,k+1}^i\gets q_{t,k}^i - \eta g_{t,k}^i$, where $\eta$ is a positive stepsize;
  (4) then transmit $(\Delta q_{t+1}^i, \Delta \qalgoi[t+1][i]) = \prn{q_{t,K}^i - q_{t,0}^i, \sum_{k\in[K]} q_{t,k}^i / K}$ to the central server.
  The parameter $\Delta q_{t+1}^i$ is used to update the common parameter $q_{t}$, while $\Delta \qalgoi[t+1][i]$ is necessary to keep track of the average of the sampled parameters denoted $\bar{q}_{t}$; see~\cite{nemirovski2009robust,bubeck2015convex}.
  (5) Finally, the server performs an online average to update $\bar{q}_{t}$ and computes the new parameter following
  \begin{equation*}
    \textstyle q_{t+1} = q_{t} + (\nofrac{\nclients}{\absn{S_{t + 1}}}) \sum_{i\in S_{t + 1}} \Delta q_{t+1}^i .
  \end{equation*}
  At the final stage, the central server output the quantile estimate is given by
  \begin{equation}\label{eq:def:qhatmoreau}
    \textstyle\qhatmoreau = \sum_{t = 1}^{T} (\nofrac{n}{\absn{S_{t}}}) \sum_{i\in S_{t}} \lambda_{\yquery}^i \Delta \qalgoi[t][i] / T.
  \end{equation}
  \Cref{algo:Dp-moreau} is a Federated Averaging procedure~\citep{mcmahan2017communication} applied to the Moreau envelope of the pinball loss. As we will see in \Cref{sec:theory}, the addition of an independent Gaussian noise on the parameter at each update round provides  differential privacy guarantees; see \Cref{thm:privacy:fedavg} for more details.

  \begin{remark}
    Privacy is also at risk when computing probabilities $\acn{\qweight{y}{\yquery}}_{y, \yquery \in \YC}$.
    To compute the probabilities $\acn{\qweight{y}{\yquery}}_{y, \yquery \in \YC}$ while preserving privacy, we need specific mechanisms to transmit the number of training labels $\prn{\Mclcount{y}{i}}_{y \in \YC}$ from each agent $i$ to the server. For this purpose, we use the method proposed in~\citep{canonne2020discrete}. The idea is to add a discrete noise to the counts $\acn{\Mclcount{y}{i}\colon i \in [\nclients]}_{y \in \YC}$ and then transmit these noisy proxies.
    The resulting algorithm that combines the differentially-private count queries and federated quantile computation is given in \Cref{algo:conf-fed}.
  \end{remark}

  \begin{remark}
    \Cref{algo:conf-fed} is designed to build a confidence set for the single agent $\target$. By vectorizing all computations, the algorithm can be scaled to compute a confidence set for each agent. This would result in an algorithm that remains linear in the number of clients but would be more efficient than computing several independent runs. From a practical perspective, complexity can be further improved by clustering clients into groups based on their label distributions and performing conformal prediction on a group level.
  \end{remark}

  \begin{remark}
    The local loss functions $\lossi$ are expressed as the expectation of pinball loss functions. Since the sensitivity of these pinball loss functions is 1, there is no need to clip the gradient. It is sufficient adding Gaussian noise $\gauss(0,\sigma_g^2)$ to guarantee differential privacy. The value of $\sigma_g$ is chosen to provide a suitable trade-off between privacy and utility, balancing the need for strong privacy protection with useful outputs.
    For an explicit setting of $\sigma_g$, refer to \Cref{thm:privacy:fedavg}.
  \end{remark}

  \begin{algorithm}[t]
    \caption{\algopred}
    \label{algo:conf-fed}
    \begin{algorithmic}[]
    \State {\bfseries Input:}
    calibration dataset $\acn{(X_k^i, Y_{k}^i)\colon k\in[\ccount{i}]}_{i \in [\nclients]}$,
    covariate $\xquery$, communication round number $T$, subsampling number $\Bar{\tcount}$, Gaussian noise parameters $\noiseparamtewo\ge 0$.
    \For{each agent $i\in [\nclients]\cup\acn{\target}$} \hfill\Comment{In parallel}
      \State Set $\forall y \in\YC, \Mclcount{y}{i}\gets\text{number train data with label $y$}$
      \State Generate $\acn{\privnoise{y}{i}}_{y \in \YC}$ i.i.d. according to $\gauss_{\Z}\pr{0, \noiseparamtewo^2}$
      \State Send $\forall y \in \YC, \privclcount{y}{i} \gets \max(1, \Mclcount{y}{i} + \privnoise{y}{i}$)
      \State Compute \& Send $\acn{V(X_{k}^i, Y_{k}^i)\colon k \in [\ccount{i}]}_{i\in[\nclients]}$
    \EndFor
      \State \Comment{On the central server}
      \State Aggregate $\privlcount{y} \gets \sum_{i\in[\nclients]} \privclcount{y}{i}, \forall y\in\YC$
      \State Aggregate $\privtcount \gets \sum_{y \in \YC} \privlcount{y}$
      \For{each query $\yquery \in \YC$}
        \State Sample $\acn{\Bar{\tcount}^{i}}_{i\in[\nclients]} \sim {\mathcal{M}ulti} (\Bar{\tcount},\acn{\ccount{i}/\tcount}_{i\in[\nclients]})$
        \State Compute
        $\qweight{y}{\yquery}$ as in~\eqref{eq:def:qweightb} with ${\iweight{y}}$ given in~\eqref{eq:def:approx-probyy}
        \State Compute $\qhatmoreau\gets$\algo\
      \EndFor
    \State{\bfseries Output:} $\predsethatmoreau{\alpha}(\xquery) \gets \acbig{\yquery\colon V(\xquery, \yquery) \le \qhatmoreau}$.
    \end{algorithmic}
  \end{algorithm}

\section{Theoretical Guarantees}
\label{sec:theory}

\format{Convergence guarantee.}
We provide a convergence guarantee for \algo. Details of the proofs can be found in the supplementary paper. We show the convergence of $\acn{\qhatmoreau[1-\alpha][\gamma][t]}_{t\in\N}$ to a minimizer which is unique under the assumptions discussed in \Cref{subsec:approx-moreau}.
We briefly sketch key steps from the theoretical derivations, since the local loss functions $\acn{\lossi}_{i \in [\nclients]}$ have different minimizers, this client drift/heterogeneity may slow down the convergence~\citep{li2019convergence}. This dissimilarity is evaluated by the parameter $\zeta\ge 0$, which is given by
\begin{equation*}
  \textstyle \zeta = \max_{i \in [\nclients]}{\normn{\nabla \lossi - \nabla \lossglobal}_{\infty}^{1/2}}.
\end{equation*}
The convergence analysis is performed for the estimate parameter $\qhatmoreau[1 - \alpha][\gamma]$ given in~\eqref{eq:def:qhatmoreau}.
We provide below the statements without subsampling, i.e. $S_t = [\nclients]$, given in \Cref{sec:convergence-FedMoreau}.
Recall that $\qmoreau[1 - \alpha][\gamma]\prn{\widehat{\mu}_{\yquery}}$ is provided in~\eqref{eq:def:qmoreau} and denote $\dinit = \E_{q_0}\normn{q_{0} - \qbetagammay[1 - \alpha][\gamma]}^2$.
The following results hold with fixed train/calibration datasets ($\mathcal{D}_{\mathrm{train}},\mathcal{D}_{\mathrm{cal}}$), and define their union by $\mathcal{D}=\mathcal{D}_{\mathrm{train}}\cup\mathcal{D}_{\mathrm{cal}}$.
\begin{theorem}\label{thm:cv-fedmoreau}
  Let $\gamma\in(0,1]$, $S_t = [\nclients]$ and consider the stepsize $\eta\in (0, \gamma / 10]$.
  Then, for $t\in\acn{0, \ldots, T - 1}$, $k \in \acn{0, \ldots, K - 1}$, we have
  \begin{multline*}
    \E\brbig{\lossglobal(\qhatmoreau[1 - \alpha][\gamma]) \,\vert\, \mathcal{D}} - \lossglobal\prn{\qbetagammay[1 - \alpha][\gamma]}
    \\
    \le \prn{\eta K T}^{-1} {\dinit}
    + 14 \gamma^{-1} \eta^2 K \prn{\noiseparamone^2 + K \zeta^2}.
  \end{multline*}
\end{theorem}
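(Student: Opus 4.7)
The goal is to analyze \algo{} as a noisy local SGD applied to the smooth convex objective $\lossglobal = \sum_{i \in [\nclients]} \lambda_{\yquery}^i \lossi$. The backbone of the argument is the standard local-SGD/FedAvg descent analysis, adapted to the fact that (i) smoothness is governed by the Moreau parameter $\gamma$, and (ii) the per-client stochasticity comes only from the injected Gaussian noise $z_{t,k}^i$ of variance $\noiseparamone^2$.

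\textbf{Step 1 (Regularity of the objective).} Since each pinball loss $\pinball[\alpha][v]$ is proper, convex, and lower semi-continuous, its Moreau envelope $\pinball[\alpha][v]^{\gamma}$ is convex, finite-valued, and has $\gamma^{-1}$-Lipschitz gradient by \citep[Thm.~2.26]{rockafellar2009variational}. Taking expectations and mixing over clients preserves these properties, so both $\lossi$ and $\lossglobal$ are convex and $\gamma^{-1}$-smooth on $\R$. This gives the factor $\gamma^{-1}$ in front of the noise/heterogeneity terms of the final bound.

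\textbf{Step 2 (Virtual averaged sequence).} Introduce the virtual iterate $\bar q_{t,k} := \sum_{i \in [\nclients]} \lambda_{\yquery}^i q_{t,k}^i$. Since all local runs start at $\bar q_{t,0} = q_t$, the aggregation rule and the noise model imply
\begin{equation*}
  \bar q_{t,k+1} = \bar q_{t,k} - \eta \sum_{i \in [\nclients]} \lambda_{\yquery}^i \bigl(\nabla \lossi(q_{t,k}^i) + z_{t,k}^i\bigr),
\end{equation*}
so that $\bar q_{t,k}$ obeys an inexact noisy gradient descent on $\lossglobal$ with bias $\sum_i \lambda_{\yquery}^i [\nabla \lossi(q_{t,k}^i) - \nabla \lossi(\bar q_{t,k})]$ and conditional noise variance $\noiseparamone^2 \sum_i (\lambda_{\yquery}^i)^2 \le \noiseparamone^2$.

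\textbf{Step 3 (Client-drift recursion).} The main obstacle, and the delicate part of the calculation, is to control the drift $E_{t,k}^i := \E[\|q_{t,k}^i - \bar q_{t,k}\|^2 \mid \mathcal{D}]$. Using the local update and $\gamma^{-1}$-Lipschitz gradients, one expands $q_{t,k+1}^i - \bar q_{t,k+1}$ and obtains a recursion of the form
\begin{equation*}
  E_{t,k+1}^i \le (1 + a)\, E_{t,k}^i + 2\eta^2 \noiseparamone^2 + c\, \eta^2 K \zeta^2 + \gamma^{-2} \eta^2 \, r(E_{t,\cdot}),
\end{equation*}
with $a = O(\eta/\gamma)$. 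Because $E_{t,0}^i = 0$ and we iterate at most $K$ times, the stepsize restriction $\eta \le \gamma/10$ keeps the amplification factor bounded by a constant and yields, after summation,
\begin{equation*}
  \sum_{k=0}^{K-1} \sum_{i \in [\nclients]} \lambda_{\yquery}^i E_{t,k}^i \le C\, \eta^2 K \bigl(\noiseparamone^2 + K \zeta^2\bigr),
\end{equation*}
which is exactly the quantity appearing (up to constants) in the conclusion.

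\textbf{Step 4 (Descent inequality and averaging).} Convexity and $\gamma^{-1}$-smoothness of $\lossglobal$ give the standard one-step bound
\begin{equation*}
  \E\bigl[\|\bar q_{t,k+1} - q^{\star}\|^2\bigr] \le \E\bigl[\|\bar q_{t,k} - q^{\star}\|^2\bigr] - 2\eta\,\bigl(\lossglobal(\bar q_{t,k}) - \lossglobal(q^{\star})\bigr) + \eta^2 \noiseparamone^2 + \tfrac{2\eta}{\gamma}\sum_{i} \lambda_{\yquery}^i E_{t,k}^i,
\end{equation*}
where $q^{\star} = \qbetagammay[1 - \alpha][\gamma]$. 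Summing over $k \in \{0,\dots,K-1\}$ and $t \in \{0,\dots,T-1\}$ telescopes the left-hand side down to $\dinit$. Dividing by $\eta K T$ and applying Jensen's inequality to the convex $\lossglobal$ — noting that $\qhatmoreau$ defined in~\eqref{eq:def:qhatmoreau} is precisely the $\lambda_{\yquery}^i$-weighted average of all local iterates, i.e.\ of $\bar q_{t,k}$, when $S_t = [\nclients]$ — turns the sum on the left into $\lossglobal(\qhatmoreau) - \lossglobal(q^{\star})$. Plugging in the drift bound from Step 3 produces the stated right-hand side $(\eta K T)^{-1}\dinit + 14\gamma^{-1}\eta^2 K(\noiseparamone^2 + K\zeta^2)$.

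\textbf{Expected difficulty.} Step 1 and the descent inequality in Step 4 are routine once smoothness is in hand. The genuinely delicate piece is Step 3: obtaining the scaling $\eta^2 K(\noiseparamone^2 + K\zeta^2)$ for the drift requires both the uniform bound $\|\nabla \lossi - \nabla \lossglobal\|_\infty \le \zeta^2$ and a careful use of $\eta \le \gamma/10$ so that the geometric factor $(1 + a)^K$ accumulated over $K$ local steps remains $O(1)$; it is this balance that fixes the admissible stepsize regime appearing in the theorem.
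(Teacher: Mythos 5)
Your plan captures the structure of the paper's proof: the Moreau envelope is convex with $\gamma^{-1}$-Lipschitz gradient, the analysis runs on the $\lambda_{\yquery}^i$-weighted virtual average $\bar q_{t,k}$, one combines a one-step descent inequality with a bound on the client drift, and under full participation $\qhatmoreau$ is exactly the $(t,k)$-average of $\bar q_{t,k}$, so Jensen closes Step~4. You correctly flag Step~3 as the delicate part, but the mechanism you propose there does not work, and this is a genuine gap.

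You write a drift recursion of the form $E^i_{t,k+1} \le (1+a)E^i_{t,k} + 2\eta^2\noiseparamone^2 + c\,\eta^2 K\zeta^2 + \gamma^{-2}\eta^2 r(E_{t,\cdot})$ with $a = O(\eta/\gamma)$, and claim that $\eta \le \gamma/10$ keeps the accumulated factor $(1+a)^K$ bounded by a constant. This fails for arbitrary $K$: once $a$ is bounded away from $0$ — which is exactly what $\eta \le \gamma/10$ permits, since $a$ can be of order $1/10$ — the factor $(1+a)^K$ grows geometrically in $K$. Closing your recursion would require $\eta \lesssim \gamma/K$, which the theorem does not assume. The paper's \Cref{lem:E-biasFL} avoids this issue entirely. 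It tracks pairwise squared distances $\E\normn{q^i_{t,k}-q^{i'}_{t,k}}^2$ and splits the gradient difference into the \emph{common} piece $\nabla\lossglobal(q^i_{t,k}) - \nabla\lossglobal(q^{i'}_{t,k})$ plus a heterogeneity remainder. The common piece is absorbed by co-coercivity: for $\eta \le 2\gamma$ the map $q \mapsto q - \eta\nabla\lossglobal(q)$ is non-expansive, so it contributes no $\bigl(1+O(\eta/\gamma)\bigr)$ inflation at all. The remaining Young-inequality cross-term is tuned with parameter $\epsilon = K-1$, yielding a per-step factor $\bigl(1+\tfrac{1}{K-1}\bigr)$ whose $K$-step accumulation $\bigl(1+\tfrac{1}{K-1}\bigr)^{K}$ is $O(1)$ uniformly in $K$, $\eta$, $\gamma$. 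The $\gamma/10$ stepsize cap is imposed by the one-step descent inequality (\Cref{lem:E-cvFL}), not by the drift recursion, which only needs $\eta \le 2\gamma$. A secondary bookkeeping discrepancy: the paper's drift bound $\sum_{i}\lambda_{\yquery}^i \E\normn{q^i_{t,k}-\bar q_{t,k}}^2 \le 6K\eta^2(\noiseparamone^2 + K\zeta^2)$ holds uniformly over $k$, so summing over $k=0,\ldots,K-1$ gives $O(K^2\eta^2)$, not the $O(K\eta^2)$ you write at the end of Step~3; the extra factor of $K$ is cancelled by the $1/(KT)$ averaging in Step~4.
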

The presence of heterogeneity among local datasets significantly influences convergence dynamics, particularly when the number of targets $K$, is significantly larger than 1. In such cases, the term $K^2 \zeta^2$ poses challenges by potentially hindering the effectiveness of numerous local steps.
Consider the stepsize $\eta_\star$ defined by
  \begin{equation*}
    \eta_\star=\min \ac{\frac{\gamma}{10}, \prbigg{\frac{\gamma \dinit}{13 K^2 T (\noiseparamone^2 + \zeta^2 K)}}^{1/3}}.
  \end{equation*}
Setting $\eta=\eta_{\star}$, we obtain the following result.
\begin{corollary}\label{cor:main:fedmoreau-cv}
  Let $\gamma\in(0,1]$, $S_t=[\nclients]$ and consider the stepsize $\eta_\star$.
  Then, for any $t\in\acn{0,\ldots,T-1}$, $k\in\acn{0,\ldots,K-1}$, we have
  \begin{multline}\label{eq:def:epsilon-FLerror}
      \epsilon_{\mathrm{optim}}^{(\gamma)}
      = \E\brbig{\lossglobal \prn{\qhatmoreau[1 - \alpha][\gamma]} \,\vert\, \mathcal{D}} - \lossglobal\prn{\qbetagammay[1 - \alpha][\gamma]}
      \\
      \le \frac{10 \dinit}{\gamma K T}
      + \frac{5 \pr{\noiseparamone^2 + \zeta^2 K}^{1/3} \dinit^{2/3}}{(\gamma K T^{2})^{1/3}}.
  \end{multline}
\end{corollary}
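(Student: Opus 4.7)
The plan is to instantiate \Cref{thm:cv-fedmoreau} at the stepsize $\eta = \eta_\star$ and then control each term of the resulting bound by a case split on which argument of the minimum is active. Writing $a := \gamma/10$ and $b := \bigl(\gamma\dinit/[13 K^2 T(\noiseparamone^2 + \zeta^2 K)]\bigr)^{1/3}$, we have $\eta_\star = \min\{a,b\} \in (0,\gamma/10]$, so \Cref{thm:cv-fedmoreau} applies and yields
$$
\epsilon_{\mathrm{optim}}^{(\gamma)} \le \frac{\dinit}{\eta_\star K T} + \frac{14\, \eta_\star^2 K(\noiseparamone^2 + \zeta^2 K)}{\gamma}.
$$

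In the first case $\eta_\star = a \le b$, the first summand reduces to $10\dinit/(\gamma K T)$; for the second, monotonicity in $\eta^2$ gives $\eta_\star^2 \le b^2$, and substituting the closed form of $b$ collapses the second term into the template quantity $\dinit^{2/3}(\noiseparamone^2+\zeta^2 K)^{1/3}/(\gamma K T^2)^{1/3}$ with constant $14\cdot 13^{-2/3} < 5$. In the second case $\eta_\star = b \le a$, I would plug $b$ directly into both terms: routine algebra shows the first summand equals $13^{1/3}$ times that same template and the second equals $14\cdot 13^{-2/3}$ times it, so the total is at most $(13+14)\cdot 13^{-2/3} = 27\cdot 13^{-2/3} < 5$ times the template. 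Since the bound claimed in the corollary contains the nonnegative summand $10\dinit/(\gamma K T)$, this case also fits inside the stated inequality.

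The argument is the standard balancing trick for a bias--variance-type upper bound, so no individual step is genuinely difficult; the main obstacle is bookkeeping the constants and the powers of $\gamma$, $K$, $T$, $\dinit$, and $\noiseparamone^2 + \zeta^2 K$ through the substitutions. The constant $13$ in the definition of $b$ is calibrated precisely so that both $14 \cdot 13^{-2/3} \le 5$ and $27 \cdot 13^{-2/3} \le 5$, which is exactly what unifies the two cases into the single clean estimate of the corollary.
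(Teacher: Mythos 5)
Your proof is correct and follows the same strategy the paper uses: substitute the balanced stepsize $\eta_\star$ into \Cref{thm:cv-fedmoreau} and bound each of the two resulting terms. The paper itself leaves this step implicit (its appendix proof of the more general \Cref{thm:cvFL} merely asserts ``the choice of $\eta$ ... ensures'' the final inequality), so your explicit two-case analysis is just the detailed version of that argument, with the constant arithmetic ($27\cdot 13^{-2/3} < 5$ and $14\cdot 13^{-2/3} < 5$) checked carefully.
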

As shown in \Cref{cor:main:fedmoreau-cv}, $\epsilon_{\mathrm{optim}}^{(\gamma)}$ increases inversely proportional to $\gamma$.  The smaller the regularization parameter $\gamma$, the smaller the stepsize $\eta$ must be, and the more iterations are required to achieve the same accuracy. However, the error caused by the Moreau envelope vanishes for $\gamma \downarrow 0^+$, i.e. $\qbetagammay[1 - \alpha][\gamma]$ approaches $\q{1 - \alpha}\prn{\widehat{\mu}_{\yquery}}$.
Thus, there is a tradeoff between the accuracy of the quantile approximation $\qhatmoreau$ and the computational cost.

\vspace{0em}
\format{Conformal guarantees for \algopred{}.}
We show that the confidence set $\predsethatmoreau{\alpha}(\Xtarget)$ provided by~\algopred\ constitutes valid coverage of $\Ytarget$.
The theoretical derivations and complete statements are given in \Cref{sec:coverage-fedmoreau}.  
For all $i\in[\nclients]$, denote by $P_V^i$ the distribution of $V(X^i,Y^i)$ where $(X_i,Y_i)\sim P^i$, and consider $Y^{\mathrm{cal}}\sim P_Y^{\mathrm{cal}}$.
\begin{theorem}\label{thm:right-coverage}
  Assume there exist $m,M>0$ such that for any $i\in[\nclients]$, $P_{V}^i$ admits a density $f_V^i$ with respect to the Lebesgue measure that satisfies $m \le f_{V}^i \le M$.
  For any $\alpha \in [0, 1] \setminus \Q$, it holds
  \begin{multline*}
      \hspace{-1.3em}
      \abs{\prob\prn{\Ytarget \in \predsethatmoreau{\alpha}(\Xtarget)} - \prob\prn{\Ytarget \in \mathcal{C}_{\alpha,\widehat{\mu}}(\Xtarget)}}
      \\
      \le 
      6 \Flip \sqrt{\frac{\log(\tcount) \sum_{y\in\YC} P_Y^{\mathrm{cal}}(y) \iweight{y}}{m \min_{y\in\YC}\iweight{y}} \pr{\E\brbig{\epsilon_{\mathrm{optim}}^{(\gamma)} | \mathcal{D}_{\mathrm{train}}} + \gamma}}
      \\
      + \frac{2 \Flip \log \tcount}{m \tcount}
      + \frac{4 \var(\iweight{Y^{\mathrm{cal}}})}{\tcount (\E\iweight{Y^{\mathrm{cal}}})^2} + \frac{2 \E \iweight{\Ytarget}}{\tcount \E\iweight{Y^{\mathrm{cal}}}}
      + \frac{m}{2\tcount \log \tcount} + \frac{1}{\tcount^{2}}
      ,
  \end{multline*}
  where $\epsilon_{\mathrm{optim}}^{(\gamma)}$ is defined in~\eqref{eq:def:epsilon-FLerror}.
\end{theorem}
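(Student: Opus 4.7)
The plan is to decompose the gap between the two probabilities into the Moreau regularization error, controlled by \Cref{thm:moreau-approx}, and the federated optimization error from \algo, controlled by \Cref{cor:main:fedmoreau-cv}. Writing $F^{\yquery}(t) = \prob(V(\Xtarget, \yquery) \le t \mid \Ytarget = \yquery)$ for the CDF of the non-conformity score, the event $\Ytarget \in \predsethatmoreau{\alpha}(\Xtarget)$ reduces to $V(\Xtarget, \Ytarget) \le \qhatmoreau$, and similarly for $\predsetg{\alpha}{\qdistr{}}$, so that
\begin{multline*}
\absbig{\prob(\Ytarget \in \predsethatmoreau{\alpha}(\Xtarget)) - \prob(\Ytarget \in \predsetg{\alpha}{\qdistr{}}(\Xtarget))}
\\
\le \E\brbig{\absbig{F^{\Ytarget}(\qhatmoreau) - F^{\Ytarget}\prn{\q{1-\alpha}(\qdistr{\Ytarget})}}}.
\end{multline*}
Since $f_V^i \le M$, each $F^{\yquery}$ is $M$-Lipschitz, which reduces the task to controlling $M \cdot \E[|\qhatmoreau - \q{1-\alpha}(\qdistr{\Ytarget})|]$. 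I would then split this by the triangle inequality into the deviation to the Moreau minimizer $|\qhatmoreau - \qbetagammay[1-\alpha][\gamma]|$ and the regularization gap $|\qbetagammay[1-\alpha][\gamma] - \q{1-\alpha}(\qdistr{\Ytarget})| \le \gamma$ supplied by \Cref{thm:moreau-approx}.

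To convert the loss-value error $\epsilon_{\mathrm{optim}}^{(\gamma)}$ into a parameter error, the plan is to exploit quadratic growth of $\lossglobal$ around its minimizer. The Moreau envelope of the pinball loss is differentiable with derivative equal to the CDF of $\qdistr{\yquery}$ smoothed on a window of width $\gamma$; under the density lower bound $f_V^i \ge m$, this smoothed CDF has derivative bounded below in a neighborhood of the quantile. After a Dvoretzky--Kiefer--Wolfowitz-type concentration for the weighted empirical measure $\qdistr{\yquery}$ (which is what brings in the $\log \tcount$ factor), the effective strong-convexity constant scales like $m \min_y \iweight{y}/\sum_y P_Y^{\mathrm{cal}}(y)\iweight{y}$, the denominator coming from the normalization of $\qdistr{\yquery}$ by a weighted sum of importance weights. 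Quadratic growth combined with the Moreau gap then yields
\begin{equation*}
\E\brbig{\absbig{\qhatmoreau - \q{1-\alpha}(\qdistr{\Ytarget})}} \le \sqrt{\frac{\log \tcount \sum_{y} P_Y^{\mathrm{cal}}(y) \iweight{y}}{m \min_y \iweight{y}}\prn{\E\brbig{\epsilon_{\mathrm{optim}}^{(\gamma)} \mid \mathcal{D}_{\mathrm{train}}} + \gamma}},
\end{equation*}
which, multiplied by $M$, produces the leading $6M\sqrt{\cdots}$ term.

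The remaining additive pieces---$\tfrac{2M\log \tcount}{m\tcount}$, $\tfrac{4\var(\iweight{Y^{\mathrm{cal}}})}{\tcount(\E\iweight{Y^{\mathrm{cal}}})^{2}}$, $\tfrac{2\E\iweight{\Ytarget}}{\tcount \E\iweight{Y^{\mathrm{cal}}}}$, and the lower-order $\tfrac{m}{2\tcount\log \tcount} + \tcount^{-2}$---should arise from passing from $P_Y^\target$ to $P_Y^{\mathrm{cal}}$ via the importance-reweighting identity $\E_{P_Y^\target}[g(Y)] = \E_{P_Y^{\mathrm{cal}}}[\iweight{Y} g(Y)]/\E[\iweight{Y^{\mathrm{cal}}}]$ (producing the variance and mean-ratio terms through concentration of the empirical normalizer $\sum_i \lambda_{\yquery}^i$), from the atom $\qweight{\yquery}{\yquery}\delta_{1}$ that perturbs $\qdistr{\yquery}$ near the quantile, and from finite-sample deviation of the empirical CDF on a window of size $\Oh(\log \tcount/(m\tcount))$. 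The main obstacle is the quadratic-growth step: $\qdistr{\yquery}$ is discrete with an atom at $1$, so $\lossglobal$ enjoys no pointwise strong convexity, and the growth constant must be obtained by coupling with a smoothed populational CDF whose density is bounded below only after averaging through the importance weights---which is precisely what forces both the $m \min_y \iweight{y}$ denominator and the $\log \tcount$ concentration overhead. The remainder follows by the same reverse-triangle and reweighting manipulations used in the proofs of \Cref{thm:YinCN-approx,cor:main-coverage}.
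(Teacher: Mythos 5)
Your decomposition and your heuristic for the leading term are on point: the triangle inequality splits the quantile error into the Moreau gap $\gamma$ (\Cref{thm:moreau-approx}) plus the optimization error, the upper density bound $M=\Flip$ gives a Lipschitz transfer from parameter error to coverage error, and the effective growth constant $m\min_y\iweight{y}/\sum_y P_Y^{\mathrm{cal}}(y)\iweight{y}$ is exactly what controls the $\sqrt{\cdots}$ factor. What you flag as the main obstacle — converting a loss gap into a parameter gap for the atomic empirical loss — is, however, resolved in the paper by a different and more elementary device than the DKW-type coupling with a smoothed population CDF you propose. After re-indexing the scores in increasing order, the paper introduces an auxiliary index $k_{\mathrm c}$ at distance $|k_{\mathrm c}-k_{\mathrm{opt}}|\approx\sqrt{m\tcount C_T^\gamma/(2\log\tcount)}$ from the minimizer index $k_{\mathrm{opt}}$ (see \eqref{eq:def:kc}), and bounds $|V_{k_{\mathrm c}}-V_{k_{\mathrm{opt}}}|$ and $|\qhatmoreau-V_{k_{\mathrm c}}|$ separately. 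The first is controlled on a high-probability event that every interval of length $2\log\tcount/(m\tcount)$ contains at least one score (\Cref{lem:bound:Vkopt-Vkc}); this is a max-gap bound for order statistics under a density lower bound, which is where the $\log\tcount$ comes from — not DKW. The second follows from convexity of the pinball loss alone: \Cref{lem:moreau-approx-error} gives $|\qhatmoreau - V_{k_{\mathrm c}}|\le\qthmhat^{-1}(\epsilon_{\mathrm{optim}}^{(\gamma)}+\gamma)$, with $\qthmhat$ the partial subgradient sum, lower-bounded by $|k_{\mathrm c}-k_{\mathrm{opt}}|$ times the minimum weight; \Cref{lem:bound:prob-minqweight} supplies the high-probability lower bound on that minimum weight and is the source of the $4\var(\iweight{Y^{\mathrm{cal}}})/[\tcount(\E\iweight{Y^{\mathrm{cal}}})^2]$ and $2\E\iweight{\Ytarget}/[\tcount\E\iweight{Y^{\mathrm{cal}}}]$ terms. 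The quadratic dependence you seek then emerges from balancing $|k_{\mathrm c}-k_{\mathrm{opt}}|$, not from a quadratic-growth lemma. Your route would in principle reach the same destination, but the uniform concentration of a weighted empirical CDF you implicitly need is not proved and, with importance weights of heterogeneous size, would not obviously give the $\log\tcount$ rate without extra care; the paper's combinatorial route avoids all of this. One further correction to your accounting: the small residuals $m/(2\tcount\log\tcount)$ and $\tcount^{-2}$ are the probability of the complementary event (some bucket empty), not a CDF deviation on a small window.
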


This result illustrates an interesting tradeoff introduced by the regularization parameter $\gamma$.
As shown in \Cref{cor:main:fedmoreau-cv}, $\epsilon_{\mathrm{optim}}^{(\gamma)}$ increases inversely proportional to $\gamma$.
Therefore, setting $\gamma\approx T^{-1/2}$ ensures a convergence rate of order $T^{-1/4}$ for the optimization procedure.
In this case, the error term of order $\Oh(\tcount^{-1} \log\tcount)$ is guaranteed by choosing the number of iterations $T \approx \tcount^{4}$.
The condition $\alpha \in [0,1]\setminus \Q$ is a strong but unnecessary assumption. However, it provides a simple way to ensure that $\widehat{\mu}_{\yquery}$ has no jump at level $1-\alpha$.
Interestingly, the same condition on $\alpha$ is used in~\citep[Corollary~1]{podkopaev2021distribution}, where the authors explain why this condition cannot be avoided to ensure the consistency of the empirical quantile estimator.

\vspace{0em}
\format{Differential privacy guarantees.}  
  The $(\epsilon,\delta)$-differentially private nature of \algo\ relies on two components: the additional Gaussian noise, combined with the bounded gradient which avoids extreme values/outliers.
  The parameter $\epsilon$ controls the level of privacy protection provided by a differentially private algorithm, by limiting the probability of inferring any information about an individual in a given dataset. However, there is a small chance that the algorithm may leak some information, even though this probability is kept under control by the parameter $\delta$.
  Based on the R\'enyi differential privacy~\citep{mironov2017renyi}, joined to agent subsampling mechanism~\citep{balle2018privacy}, we establish the $(\epsilon, \delta)$-DP property following similar ideas to those of~\citep[Theorem~4.1]{noble2022differentially}.
  Detailed proof and definitions are provided in \Cref{sec:differential-privacy}.
  \begin{theorem}\label{thm:privacy:fedavg}
    If there is a constant number $S\in[\nclients]$ of sampled agents, i.e., $S_t=S$, for all $t \in [T]$.
    Then, for all $\epsilon > 0$ and $\delta \in (0, 1-(1+\sqrt{\epsilon})(1-S/\nclients)^{T})$, the \Cref{algo:Dp-moreau} is $(\epsilon, \delta)$-DP towards a third party when
    \begin{align*}
      &\noiseparamone \ge 2 \sqrt{\frac{K \max_{i\in[\nclients]} \lambda_{\yquery}^i}{\epsilon} \pr{1 + \frac{24 S \sqrt{T} \log(1/\Bar{\delta})}{\epsilon \nclients}}},
      \\
      &\text{where} \quad \Bar{\delta} = \frac{\nclients}{S} \br{1 - \pr{\frac{1-\delta}{1+\sqrt{\epsilon}}}^{1/T}}.
    \end{align*}
  \end{theorem}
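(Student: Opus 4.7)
The plan is to follow the RDP-based strategy of~\citep[Theorem~4.1]{noble2022differentially}, adapted to our weighted objective. I would view each local update in \Cref{algo:Dp-moreau} as a Gaussian mechanism applied to a bounded-sensitivity gradient, compose the resulting R\'enyi DP bound across the $K$ local steps and the $T$ communication rounds, amplify it via the client subsampling, and convert the aggregate back to $(\epsilon,\delta)$-DP. The weights $\lambda_{\yquery}^i$ enter only through the sensitivity of the server aggregation, which is precisely how they appear in the stated noise threshold.

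For the sensitivity, each local loss $\lossi$ is an expectation of Moreau-Yosida envelopes of the pinball loss; since $\pinball[\alpha][v]$ is $1$-Lipschitz in its second argument and inf-convolution preserves Lipschitzness, $\nabla \lossi$ is uniformly bounded by $1$ (which is exactly why \Cref{algo:Dp-moreau} needs no explicit gradient clipping, as highlighted in the remark above). The server update reads $(\nclients / |S_{t+1}|)\sum_{i \in S_{t+1}} \lambda_{\yquery}^i \Delta q_{t+1}^i$, so changing a single calibration point of agent $i$ perturbs the aggregate by an amount proportional to $\lambda_{\yquery}^i$, and $\max_i \lambda_{\yquery}^i$ serves as a uniform sensitivity bound.

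Next, the Gaussian mechanism of variance $\noiseparamone^2$ yields, for every R\'enyi order $\alpha>1$, a per-step RDP bound proportional to $\alpha/\noiseparamone^2$ times the squared sensitivity; composing over the $K$ inner steps multiplies this by $K$. Uniform subsampling of $S$ agents among $\nclients$ is then handled by the R\'enyi amplification-by-subsampling bound of~\citep{balle2018privacy}, and a $T$-fold round composition multiplies the amplified per-round bound by $T$. Converting back to $(\epsilon,\delta)$-DP through the standard inequality $\varepsilon \le \tilde\varepsilon + \log(1/\delta)/(\alpha-1)$ and optimising the R\'enyi order (which scales like $\log(1/\bar{\delta})$) produces the two-term noise lower bound in the statement: the leading factor $\sqrt{K \max_i \lambda_{\yquery}^i/\epsilon}$ comes from the Gaussian mechanism together with local composition, while the corrective $(1 + 24 S\sqrt{T}\log(1/\bar{\delta})/(\epsilon\nclients))^{1/2}$ arises from combining subsampling amplification with the optimal choice of $\alpha$. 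The auxiliary parameter $\bar{\delta} = (\nclients/S)\brn{1 - ((1-\delta)/(1+\sqrt{\epsilon}))^{1/T}}$ is obtained by inverting the $T$-fold per-round failure-probability composition; demanding $\bar{\delta}\in(0,1)$ then recovers the admissibility window $\delta \in (0, 1 - (1+\sqrt{\epsilon})(1-S/\nclients)^{T})$ stated in the theorem.

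The main obstacle is the bookkeeping around the interaction between subsampling amplification and composition: R\'enyi subsampling amplification is cleanest at integer orders, whereas the tightest RDP$\to$DP conversion prefers non-integer $\alpha\sim\log(1/\bar\delta)$, and balancing these two preferences to recover the precise constants ($2$, $24$, $\sqrt{T}$, $(1-S/\nclients)^{T}$) requires tracking~\citep{noble2022differentially} carefully. Once this balancing is fixed, the remainder is an algebraic rearrangement of the composed RDP bound into the stated lower bound on $\noiseparamone$.
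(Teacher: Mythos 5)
Your high-level framing (bounded sensitivity so no clipping, RDP for the Gaussian mechanism, composition, subsampling amplification, conversion to $(\epsilon,\delta)$-DP, reliance on \citep{noble2022differentially}) matches the paper's spirit, but the actual architecture the paper uses is different from what you sketch, and the difference matters for recovering the stated constants and the form of $\bar{\delta}$.

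You propose to stay in the R\'enyi world through all $T$ communication rounds -- compose RDP over $K$ inner steps, amplify RDP by subsampling via \citep{balle2018privacy}, compose the amplified per-round RDP over $T$ rounds, and only then perform a single RDP$\to(\epsilon,\delta)$-DP conversion. The paper instead converts R\'enyi to $(\epsilon,\delta)$-DP \emph{per round}: after $K$-fold RDP composition and accounting for the server aggregation, it applies the Mironov RDP$\to$DP conversion to obtain a per-round pair $(\epsilon_t,\bar\delta)$, amplifies \emph{that} $(\epsilon,\delta)$ guarantee by agent subsampling using the DP version of \citep[Theorem~9]{balle2018privacy} (getting $(\tilde\epsilon_t,\tilde\delta_t)$ with $\tilde\delta_t=S\bar\delta/\nclients$), and finally composes the $T$ per-round DP guarantees using the advanced composition theorem of \citep[Theorem~3.5]{kairouz2015composition}. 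The quantity $\bar{\delta}=(\nclients/S)[1-((1-\delta)/(1+\sqrt{\epsilon}))^{1/T}]$ is exactly what falls out of inverting the Kairouz failure-probability bound $1-(1-\tilde\delta)\prod_{t=1}^T(1-\tilde\delta_t)\le\delta$ with $\tilde\delta=\sqrt{\epsilon}$; you cannot arrive at this expression from a single terminal RDP$\to$DP conversion, which produces only one global $\delta$-slack, not a per-round failure probability. This is why your paragraph is internally inconsistent: you claim the conversion happens once at the end via $\epsilon\le\tilde\epsilon+\log(1/\delta)/(\alpha-1)$ but also that $\bar\delta$ comes from a $T$-fold per-round failure decomposition -- those two statements describe mutually exclusive accounting schemes.

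You also omit a step that is load-bearing for the bound. The server's online average $\qalgo[t+1]=\frac{t}{t+1}\qalgo[t]+\frac{\nclients}{|S_{t+1}|}\sum_{i\in S_{t+1}}\lambda_\yquery^i\Delta\qalgoi[t+1][i]/(t+1)$ introduces a $(t+1)^{-1}$ attenuation of the round-$t$ contribution, which is what makes the first part of $\sum_{t=1}^T\tilde\epsilon_t^2$ a \emph{convergent} series ($\sum_{t\ge 2}t^{-2}\le 1$) rather than growing linearly in $T$. Without this, your noise threshold would pick up an extra factor of $T$ rather than producing the leading $K\max_i\lambda_\yquery^i/\epsilon$ term plus a $\sqrt{T}$-scaled correction. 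Finally, the specific choice $\tilde\alpha-1=\noiseparamone\sqrt{2\sqrt{T}S\log(1/\bar\delta)/(\nclients K\max_i\lambda_\yquery^i)}$ -- not the generic $\alpha\sim\log(1/\bar\delta)$ you mention -- is what balances the two parts of $\tilde\epsilon_t$ and turns the raw $T$ from the $\log(1/\bar\delta)/(\tilde\alpha-1)$ term into the $\sqrt{T}$ appearing inside the square root in the theorem. In short: plausible high-level plan, but the per-round DP conversion, DP-level amplification, Kairouz advanced composition, and the $(t+1)^{-1}$ online-averaging factor are the essential ingredients you would need to add to actually land on the stated $\noiseparamone$ threshold and $\bar\delta$.
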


\section{Numerical experiments}
\label{sec:experiments}

We conducted the experimental study of \algopred\ using both synthetic toy examples and real datasets. To perform a comprehensive evaluation, we compared our method with relevant baselines, namely \texttt{Unweighted Local} and \texttt{Unweighted Global} (see \Cref{sec:algorithms} for details).
The \texttt{Unweighted Local} method computes the quantile based on the local validation data of the agent $\target$ and derives the local unweighted prediction set with $(1 - \alpha)$ confidence level, given by
\begin{equation*}
  \unweightlocpredset{\alpha}(\xquery) = \ac{\yquery\in\YC\colon V(\xquery,\yquery)\le \q{1 - \alpha}\pr{\unweightlocdist}},
\end{equation*}
where $\unweightlocdist = \frac{1}{N^{\target} + 1}\sum_{k=1}^{N^{\target}}\delta_{V^{\target}_k} + \frac{1}{N^{\target} + 1} \delta_{1}$.
This method is the adaptive classification technique with split-conformal calibration applied to agent $\target$, as introduced in~\cite{romano2020classification} and also described in~\cite{angelopoulos2020uncertainty}.
On the other hand, the \texttt{Unweighted Global} method estimates the quantile based on aggregated non-conformity scores from all the agents, without taking into account the shift between calibration and target distributions.
This method computes the $(1 - \alpha)$-quantile in an analogous way to the ``classical'' conformal method recalled in \eqref{eq:def:predset:global}.

For our experiments, we apply split-conformal calibration on the entire dataset, which requires all agents to report their non-conformity scores to a central server.
We use the same non-conformity score $V(x, y)$ as considered in~\cite{romano2020classification,angelopoulos2020uncertainty}. Given the covariate $x$, the predictor $\hat{f}\colon \XC \to \Delta_{\absn{\YC}}$ estimates the probability of each class, and orders them from the most to the least likely label. The non-conformity score is then computed as the sum of all the probabilities greater than the true label $y$. 
Formally, the non-conformity scores are given by 
\begin{align*}\label{eq:con-conformity-score}
    \nonumber 
    &\textstyle\rho(X_k^i,Y_k^i) = \sum_{y\in\YC} \hat{f}(X_k^i)[y] \, \indiacc{\hat{f}(X_k^i)[y] > \hat{f}(X_k^i)[Y_k^i]}, \\
    &\textstyle V(X_k^i, Y_k^i) = \rho(X_k^i,Y_k^i) + U_k^i \times \hat{f}(X_k^i)[Y_k^i],
\end{align*}
where $U_k^i \in [0,1]$ is a uniform random variable.

\begin{figure}[t!]
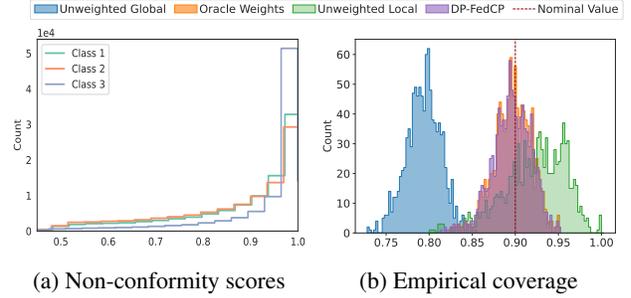

  \centering
  \hspace*{.7em}
  \includegraphics[width=0.46\textwidth]{four_methods.pdf}
  \begin{subfigure}{0.235\textwidth}
    \includegraphics[width=\textwidth, height=0.765\textwidth]{gaussians_scores_fixed4.pdf}
    \caption{Non-conformity scores}
    \label{fig:toy-scores}
  \end{subfigure}
  \begin{subfigure}{0.235\textwidth}
    \includegraphics[width=\textwidth]{gaussians_empcov.pdf}
    \caption{Empirical coverage}
    \label{fig:toy-empcov}
  \end{subfigure}
  \caption{Simulated data experiment with 2D data. Target confidence level $(1 - \alpha) = 0.9$.}
  \label{fig:toy-cov}
\end{figure}

\begin{figure}[t!]
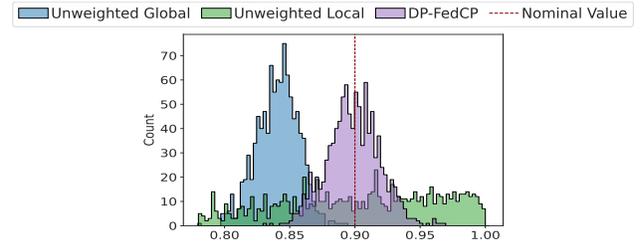

  \centering
      \quad \includegraphics[width=1.\linewidth, height=0.04\linewidth]{legend_methods.pdf}
      \centering
      \includegraphics[width=0.6\linewidth, height=0.35\linewidth]
      {cifar10_importance_shift.pdf}
  \caption{Empirical coverage on the CIFAR-$10$ data.}
  \label{fig:cifar10_cov}
\end{figure}

\begin{figure*}[t!]
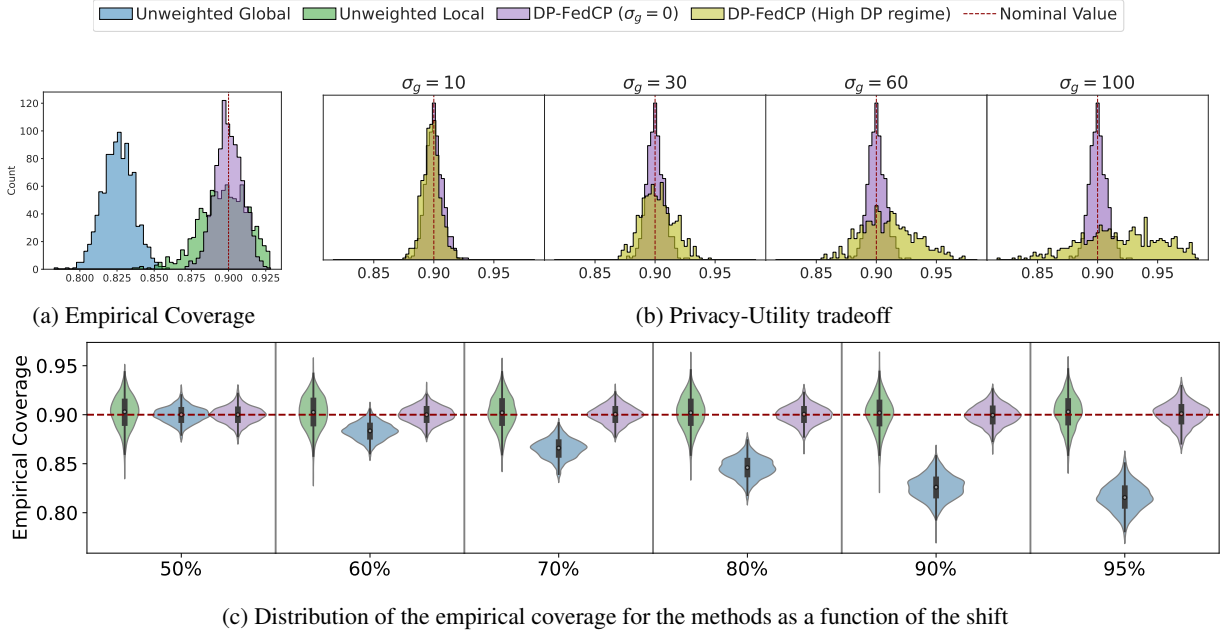

  \centering
  \begin{subfigure}{0.8\textwidth}
    \includegraphics[width=\linewidth]{all_methods.pdf}
    \label{fig:methods_legend}
  \end{subfigure}
  \centering
  \begin{subfigure}{0.22\textwidth}
    \includegraphics[width=\linewidth]{imagenet_emp_cov.pdf}
    \caption{Empirical Coverage} \label{fig:imagenet_cov}
  \end{subfigure}
  ~~
  \begin{subfigure}{.7\textwidth}
    \includegraphics[width=\linewidth]{DP_histos_tradeoff.pdf}
    \caption{Privacy-Utility tradeoff} \label{fig:imagenet_DP_histos}
  \end{subfigure}
  \centering
  \hspace*{.26em}
  \begin{subfigure}{.95\textwidth}
    \includegraphics[width=\linewidth]{violin_imagenet.pdf}
    \caption{Distribution of the empirical coverage for the methods as a function of the shift} \label{fig:imagenet_violin_distributions}
  \end{subfigure}
  \caption{ImageNet experimental results: (a) Empirical coverage comparison of \algopred\ with unweighted baselines (b) Empirical coverage comparison of \algopred\ with non-DP version at different privacy parameter values (c) Effect of distribution shifts on empirical coverage for  \algopred\ and unweighted baselines.}
\end{figure*}

\format{Simulated Data Experiment.}
  In the first experiment, we demonstrate that it is necessary to consider label shifts between agents to obtain valid coverage of prediction sets.
  We consider a simple classification problem with 3 labels. The conditional distributions of the features given the class label are $3$ two-dimensional Gaussian distributions with means $\thetav_1 = [-1, 0], \thetav_2 = [1, 0], \thetav_3 = [1, 3]$ and with identity covariance matrices. We consider $\nclients = 2$ agents with the distribution of labels $\{\cpmf{1}{y}\}_{y \in [3]} = \{0.8, 0.1, 0.1\}$ and $\{\cpmf{2}{y}\}_{y \in [3]} = \{0.1, 0.1, 0.8\}$. We use the Bayes classifier and consider calibration data with $(\ccount{1}, \ccount{2}) = (1000, 50)$. The inference is performed for agent 2.

  We run independently $1000$ experiments with different splits and record the obtained empirical coverage each time.
  Figure~\ref{fig:toy-scores} shows the distribution of non-conformity scores for the different labels, and Figure~\ref{fig:toy-empcov} shows the empirical coverage of $(1 - \alpha)$ prediction sets with $\alpha = 0.1$ using the \algopred\ method (\Cref{algo:conf-fed}) compared to \unwghtloc\ and \unwghtglo. We also included results obtained with \texttt{oracle-weights}, in which the conformal prediction sets are obtained using \eqref{eq:def:oracle-weights}, i.e., assuming that the exact ratios $\{\iweightp{y}\}_{y\in\YC}$ are known.

  The quantiles calculated via the \unwghtglo\ method are mostly due to the non-conformity scores from agent $1$. This is due to the larger local dataset of agent $1$, whose label distribution is very different from that of the target; see \Cref{fig:toy-scores}.
  The \unwghtloc{} method computes the quantiles based on the local data of agent 2, which has too little data to produce robust prediction sets.
  Therefore, \algopred\ yields much better conformal prediction sets (see \Cref{fig:toy-empcov}), which are little different from those obtained using the adaptive prediction set methods with oracle weights of~\cite{podkopaev2021tracking}.

\format{CIFAR-10 Experiments.}
  We investigate the performance of \algopred\ on the CIFAR-$10$ dataset.
  We use a ResNet-$56$~\citep{he2016resnet} pre-trained on the CIFAR-$10$ training dataset as the underlying classifier with temperature scaling $\temp=1.6$. We also randomly split the CIFAR-$10$ test dataset into a calibration dataset and a test dataset, each containing $5000$ points, and repeat the experiment $1000$ times. The number of agents is $\nclients = 10$, and the prediction set is learned for the agent $\target = 4$ that has the smallest number of data points. The distribution of labels for agent $i$ is $\cpmf{i}{i} = 0.55$ and $\cpmf{i}{y} = 0.05$ for all $y \in [10] \backslash \{i\}$.
  We set the validation size for agent $\target$ to $\ccount{\target} = 50$, and for agent $2$ the validation size is $\ccount{2} = 2150$. The remaining agents have the same validation size of $\ccount{i} = 350$ for all $i \in [10]\backslash\{2, 4\}$.
  The significance level $\alpha$ is set to $0.1$.
  In this configuration, both \unwghtloc\ and \unwghtglo\ methods perform significantly worse than \algopred; see \Cref{fig:cifar10_cov}.

\format{ImageNet Experiments.}
  We use a pre-trained ResNet-152~\citep{he2016resnet} as a base model with temperature scaling $\temp=10$. We perform $1000$ runs with different splits of the $50$K ImageNet test dataset into calibration and test datasets of size $40$K and $10$K samples, respectively. The calibration data is split into $11$ agents. For agent $i \in [10]$, the size of the calibration dataset is $\ccount{i} = 3950$, while we $\ccount{11} = 500$.
  For ImageNet, the distribution of  non-conformity scores $V\prn{\hat{f}\prn{X}, Y}$ varies significantly as a function of the given label $Y=y$.
  In this experiment, we distribute the data between agents to ensure distinct non-conformity score distributions across agents, illustrated in \Cref{fig:imagenet_lowscore,fig:imagenet_highscore}.
  For this, we compute the mean of the non-conformity scores in function of the given label.
  We call $G_1$ the set of the $500$ labels with the lowest means and $G_2$ the set of the remaining $500$ labels. Agents $i \in [10]$ (\texttt{low-score} group) take $90 \%$ of their data from $G_1$ and the remaining $10\%$ from $G_2$. Agent $1$1 takes $90\%$ of its calibration data from $G_2$ and the remaining $10\%$ from $G_1$.

  We construct a prediction set with significance level $\alpha = 0.1$ for the distribution of the $11$-th agent.
  \Cref{fig:imagenet_cov} shows the empirical coverage of the prediction sets.
  In contrast to unweighted alternatives, \algopred\ achieves valid coverage.
  In \Cref{fig:imagenet_violin_distributions}, we evaluate the sensitivity of the different methods to the shift between $G_1$ and $G_2$.
  We repeat the previous experiment varying the shift parameter ($90\%$ in the first experiment) with $100$ runs for each coefficient and show the Violin plot of the obtained empirical coverage.
  The experimental results show that \algopred\ overcomes the challenge of obtaining valid conformal predictions in the presence of label shifts at a federated level compared to alternative methods.

\vspace{0em}
\format{Differential Privacy Experiments.}
  We explore the tradeoff between privacy and coverage quality.
  We conducted the ImageNet experiment with different values of $\sigma_g$ in the set $\{10, 30, 60, 100\}$. The results of the experiment are shown in \Cref{fig:imagenet_DP_histos}, which illustrates the tradeoff between the differential privacy parameter $\sigma_g$ and the robustness of the method. In particular, we observe that as $\sigma_g$ increases, the robustness of the method decreases.

\section{Conclusion}
\label{main:sec-conclusion}

We present a novel method called \algopred, which is designed to construct personalized conformal prediction sets in a federated learning scenario.
Unlike existing algorithms, the proposed method takes into account the label shifts between different agents, and computes prediction sets with a prescribed confidence level.
The resulting sets are theoretically guaranteed to provide valid coverage, while ensuring differential privacy.
Finally, we illustrate the strong performance of \algopred\ in a series of benchmarks.

\section*{Acknowledgements}
The authors gratefully thank Aymeric Dieuleveut for his valuable feedback and Nikita Kotelevskii for his insightful remarks.
This work received support from an artificial intelligence research grant (agreement identifier 000000D730321P5Q0002 dated November 2, 2021 No. 70-2021-00142 with ISP RAS).
Part of this work has been carried out under the auspice of the Lagrange Mathematics and Computing Research Center.

\bibliography{conformal}

\clearpage
\newpage
\appendix
\onecolumn

\addtocontents{toc}{\protect\setcounter{tocdepth}{2}}
\allowdisplaybreaks
{
  \hypersetup{linkcolor=burntumber}
  \tableofcontents
}

\section{Moreau Envelope for Quantile Computation}\label{sec:moreau}

\subsection{Federated quantile using the Moreau envelope}

  \begin{lemma}
    Let $\alpha \in [0, 1]$ and $(v, q) \in \R^2$, the Moreau envelope of the pinball loss with regularization parameter $\gamma > 0$ 
    is given by
    \begin{equation}\label{eq:def:moreaupinball}
       \pinballmoreau{v}{q} =
       \begin{cases}
        (1 - \alpha) (v - q) - \frac{\gamma (1 - \alpha)^2}{2}; \hfill \frac{v-q}{\gamma} > 1 - \alpha, \\
        \frac{(q - v)^2}{2\gamma}; \hfill 0\le \frac{q - v}{\gamma} + 1 - \alpha \le 1, \\
        \alpha (q - v) - \frac{\gamma \alpha^2}{2}; \hfill \quad\frac{q - v}{\gamma} > \alpha.
      \end{cases}
    \end{equation}
    Moreover, its gradient is given by
    \begin{equation*}\label{eq:def:grad-pinball}
      \nabla \pinballmoreau{v}{q}
      = -(1 - \alpha) \indiacc{q < v - \gamma (1 - \alpha)} + \alpha \indiacc{q > v + \gamma \alpha} + \frac{1}{\gamma}(q - v)\indiacc{v - \gamma (1 - \alpha) < q < v + \gamma \alpha}.
    \end{equation*}
  \end{lemma}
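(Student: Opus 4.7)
The pinball loss $\pinball[\alpha][v]$ is piecewise affine: it equals $(1-\alpha)(v-\tilde q)$ on $\{\tilde q \le v\}$ and $\alpha(\tilde q - v)$ on $\{\tilde q \ge v\}$. Hence the objective $\tilde q \mapsto \pinball[\alpha][v](\tilde q) + (2\gamma)^{-1}(\tilde q - q)^2$ is a strictly convex, continuous, piecewise-quadratic function with a unique minimizer $\tilde q^\star(q)$. My plan is to compute $\tilde q^\star(q)$ explicitly by case analysis on which piece contains it, substitute back to obtain the closed form for $\pinballmoreau{v}{q}$, and then read off the gradient from the standard Moreau identity $\nabla \pinballmoreau{v}{q} = \gamma^{-1}(q - \tilde q^\star(q))$.

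\textbf{Case analysis for the proximal point.} On $\{\tilde q < v\}$ the first-order condition $-(1-\alpha) + \gamma^{-1}(\tilde q - q) = 0$ yields the candidate $\tilde q^\star = q + \gamma(1-\alpha)$, which lies in this half-line iff $q < v - \gamma(1-\alpha)$, i.e.\ $(v-q)/\gamma > 1-\alpha$. Symmetrically, on $\{\tilde q > v\}$ the first-order condition $\alpha + \gamma^{-1}(\tilde q - q) = 0$ gives $\tilde q^\star = q - \gamma \alpha$, valid iff $q > v + \gamma \alpha$, i.e.\ $(q-v)/\gamma > \alpha$. In the remaining regime $v - \gamma(1-\alpha) \le q \le v + \gamma \alpha$, neither unconstrained stationary point lies in its corresponding piece; by convexity the minimizer must then be at the kink $\tilde q^\star = v$. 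The three conditions partition $\R$ (up to measure-zero boundaries), which is the only bookkeeping to check.

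\textbf{Substituting back.} Plugging each $\tilde q^\star$ into $\pinball[\alpha][v](\tilde q^\star) + (2\gamma)^{-1}(\tilde q^\star - q)^2$ gives
$(1-\alpha)(v-q) - \tfrac{\gamma}{2}(1-\alpha)^2$ in the first regime,
$(q-v)^2/(2\gamma)$ in the middle regime,
and $\alpha(q-v) - \tfrac{\gamma}{2}\alpha^2$ in the third regime, matching~\eqref{eq:def:moreaupinball}. Continuity at the two interfaces is a quick sanity check: at $q = v - \gamma(1-\alpha)$ both expressions equal $\gamma(1-\alpha)^2/2$, and at $q = v + \gamma\alpha$ both equal $\gamma\alpha^2/2$.

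\textbf{Gradient.} Since $\pinball[\alpha][v]$ is proper, convex, and lower semicontinuous, $\pinballmoreau{v}{\cdot}$ is continuously differentiable with $\nabla \pinballmoreau{v}{q} = \gamma^{-1}(q - \tilde q^\star(q))$ (see Rockafellar--Wets, Theorem~2.26). Substituting the three expressions for $\tilde q^\star(q)$ obtained above yields $-(1-\alpha)$, $\gamma^{-1}(q-v)$, and $\alpha$ respectively, which is exactly the stated gradient formula. Alternatively, one can differentiate the piecewise expression directly and verify $C^1$ continuity at the two kinks. The only real ``obstacle'' is keeping the strict/non-strict boundary conventions consistent; but since the function is $C^1$, the value on any measure-zero interface is irrelevant.
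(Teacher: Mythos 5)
Your proof is correct and follows essentially the same route as the paper: compute the proximal point $q_\star$ by solving the stationarity condition in three regimes, then substitute into $\pinballmoreau{v}{q} = \pinball[\alpha][v](q_\star) + (2\gamma)^{-1}(q_\star - q)^2$. The only cosmetic difference is that the paper phrases the optimality condition via the subdifferential inclusion $0 \in \partial \pinball[\alpha][v](q_\star) + \gamma^{-1}(q_\star - q)$, treating the kink and smooth pieces uniformly, whereas you minimize piecewise and handle the kink as a separate case; you also make the Moreau gradient identity $\nabla \pinballmoreau{v}{q} = \gamma^{-1}(q - q_\star)$ explicit, which the paper states but does not spell out.
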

  \begin{proof}
    For all $\alpha \in [0, 1]$, $(v, q) \in \R^2$, recall that the pinball loss and its subgradient are given by
    \begin{align*}
      \pinball[\alpha][v](q)
      = (1 - \alpha) (v - q) \indiacc{v \ge q} + \alpha (q - v) \indiacc{q > v}, && \partial \pinball[\alpha][v](q) =
      \begin{cases}
        -(1 - \alpha), &q < v \\
        [-(1 - \alpha), \alpha], &q = v\\
        \alpha, &q > v
      \end{cases}.
    \end{align*}
      Note that, by construction
    \begin{equation*}
      \pinballmoreau{v}{q}
      = \min_{\tilde{q} \in \R}\Big\{\pinball[\alpha][v](\tilde{q}) + \frac{1}{2\gamma}(\tilde{q} - q)^2\Big\}
      = \min_{\tilde{q} \in \R}\ac{(1 - \alpha) (v - \tilde{q})\indiacc{v \ge \tilde{q}} + \alpha (\tilde{q} - v)\indiacc{v<\tilde{q}} + \frac{1}{2\gamma}(\tilde{q} - q)^2}.
    \end{equation*}
    Denote $q_\star=\argmin_{\tilde{q} \in \R}\{\pinball[\alpha][v](\tilde{q}) + \frac{1}{2\gamma}(\tilde{q} - q)^2\}$ which exists and is unique (the function to be minimized is coercive and strongly convex).
    The stationary condition for the Moreau envelope is given by:
    \begin{align*}
      0 \in \partial \pinball[\alpha][v](q_\star) + \frac{1}{\gamma}(q_\star - q), \text{ with } \quad \partial \pinball[\alpha][v](q) =
      \begin{cases}
          -(1 - \alpha), &q < v \\
          [-(1 - \alpha), \alpha], &q = v\\
          \alpha, &q > v
      \end{cases}.
    \end{align*}
    Considering the 3 different cases, we find that:
    \begin{align*}
      q_{\star} &=
      \begin{cases}
          q + \gamma (1 - \alpha), &q < v - \gamma (1 - \alpha) \\
          v, &q \in [v - \gamma (1 - \alpha), v + \gamma \alpha] \\
          q - \gamma \alpha, & q > v + \gamma \alpha
      \end{cases}.
    \end{align*}
    We conclude the derivation by  using the identity from Moreau envelope: $\pinballmoreau{v}{q}= \pinball[\alpha][v]\pr{q_{\star}} + \frac{1}{2\gamma}\pr{q_{\star} - q}^2$ and plugging in $q_{\star}$.
  \end{proof}

  To simplify the manuscript presentation, we now provide the definition of the local loss function $\lossi\colon q\in\R \mapsto\E_{V \sim \qdistr{y}^{i}}\brn{\pinballmoreau{V}{q}}\in\R_+$.
  Recall the weights $\qweight{y}{\yquery}$ are given in \eqref{eq:def:wytarget}, and also that
  \[
    \lambda_{\yquery}^{i} = \frac{\ccount{i}}{\tcount} \qweight{\yquery}{\yquery} + \sum_{k = 1}^{\ccount{i}} \qweight{Y_k^{i}}{\yquery}.
  \]
  Therefore, for $q\in\R$, we have
  \begin{equation}\label{eq:def:S-alpha-i-gamma}
    \lossi\prn{q}
    = \frac{\ccount{i} \qweight{\yquery}{\yquery}}{\lambda_{\yquery}^{i} \tcount} \pinballmoreau{1}{q} + \sum_{k=1}^{\ccount{i}} \frac{\qweight{Y_k^i}{\yquery}}{\lambda_{\yquery}^{i}} \pinballmoreau{V_k^i}{q}.
  \end{equation}

\subsection{Moreau's approximation error}\label{subsec:approx-moreau}

  In this section, we consider fixed parameters $\alpha \in (0, 1)$, $\gamma > 0$, $\acn{v_k}_{k \in [\tcount]}$ and $\acn{p_k}_{k \in [\tcount]} \in [0, 1]^\tcount$ satisfying $\sum_k p_k = 1$.
  We define $F := \sum_{k=1}^\tcount p_k S_{\alpha, v_k}$ and $F_{\gamma}:= \sum_{k = 1}^\tcount p_k S_{\alpha, v_k}^{\gamma}$, where $S_{\alpha, v_k}$ and $S_{\alpha, v_k}^{\gamma}$ are the pinball loss and its Moreau envelope defined for $v, q \in \R$ by~\eqref{eq:def:moreaupinball}.
  Without loss of generality, it is assumed that $\acn{v_k}_{k\in[\tcount]}$ is increasing since we can re-index $\acn{v_k}_{k \in [\tcount]}$ and if there exist $(j,j') \in [\tcount]^2$ such that $j \neq j'$ and $v_j = v_{j'}$, we have $p_{j} S_{\alpha, v_j} + p_{j'} S_{\alpha, v_{j'}} = (p_j + p_j') S_{\alpha, v_j}$.
  Finally, for $k\in[\tcount]$, denote
  \begin{equation}\label{eq:def:Ik}
    I_k = \left[v_k - \gamma (1 - \alpha), v_k + \gamma \alpha\right].
  \end{equation}

  \begin{lemma}\label{lem:moreau-approx:1}
    If $(1 - \alpha) \notin \acn{\sum_{l = 1}^k p_l}_{k \in [\nclients]}$, then $F$ admits a unique minimizer. Moreover, this minimizer belongs to $\acn{v_k}_{k\in[\nclients]}$ and we denote $k_\star \in [\nclients]$ its index, i.e., $v_{k_\star} = \argmin F$.
    In addition, $F$ is decreasing on $(-\infty, v_{k_\star}]$ and increasing on $[v_{k_\star}, \infty)$.
    The function $F_\gamma$ also admits a unique minimizer denoted $\qmoreau[1 - \alpha] \in \R$, and $F_{\gamma}$ is decreasing on $(-\infty, \qmoreau[1 - \alpha]]$ and increasing on $[\qmoreau[1 - \alpha], \infty)$.
  \end{lemma}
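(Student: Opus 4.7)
The plan is to exploit convexity of both $F$ and $F_\gamma$: once uniqueness of the minimizer is established, the claimed strict monotonicity on each side is automatic. Indeed, if $g$ is convex and $g(x)=g(y)$ with $x<y$, convexity forces $g$ to be constant (and equal to $\min g$) on $[x,y]$, so if the minimizer $q_\star$ is unique, then $x<y\le q_\star$ implies $g(x)>g(y)$ (and symmetrically on the right). Thus the real task is to prove uniqueness.

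For $F$, the function is piecewise affine with breakpoints at $v_1<\dots<v_\tcount$. A direct computation of the right-derivative on each interval $(v_k,v_{k+1})$ (with the convention $v_0=-\infty$, $v_{\tcount+1}=+\infty$) gives
\begin{equation*}
F'(q)\;=\;\alpha\sum_{l\le k}p_l-(1-\alpha)\sum_{l>k}p_l\;=\;\sigma_k-(1-\alpha),\qquad \sigma_k:=\textstyle\sum_{l=1}^{k}p_l.
\end{equation*}
This step sequence is non-decreasing, starts at $\sigma_0-(1-\alpha)=-(1-\alpha)<0$ and ends at $\sigma_\tcount-(1-\alpha)=\alpha>0$. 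The assumption $(1-\alpha)\notin\{\sigma_k\}_{k\in[\tcount]}$ rules out the value $0$ on any open interval, so there is a unique index $k_\star$ with $\sigma_{k_\star-1}<1-\alpha<\sigma_{k_\star}$. Then $F$ is strictly decreasing on $(-\infty,v_{k_\star}]$ and strictly increasing on $[v_{k_\star},\infty)$, giving the unique minimizer $v_{k_\star}\in\{v_k\}_{k\in[\tcount]}$.

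For $F_\gamma$, observe that $F_\gamma$ is convex and $C^1$ because each $\pinball[\alpha][v_k]^\gamma$ is a convex function with $\gamma^{-1}$-Lipschitz derivative (by \citep[Theorem 2.26]{rockafellar2009variational} or directly from the explicit formula~\eqref{eq:def:moreaupinball}). Suppose for contradiction that $F_\gamma$ admits two minimizers $a<b$. By convexity, $F_\gamma\equiv \min F_\gamma$ on $[a,b]$, hence $F_\gamma'\equiv 0$ on $(a,b)$. Since $F_\gamma'$ is Lipschitz, we also get $F_\gamma''=0$ a.e.\ on $(a,b)$. But from~\eqref{eq:def:moreaupinball} one reads $\nabla\pinball[\alpha][v_k]^\gamma$ is affine with slope $1/\gamma$ on the interior of $I_k$ and constant elsewhere, so
\begin{equation*}
F_\gamma''(q)\;=\;\frac{1}{\gamma}\sum_{k:\,q\in\mathrm{int}(I_k)}p_k\qquad\text{for a.e.\ }q.
\end{equation*}
Vanishing a.e.\ on $(a,b)$ forces $(a,b)\cap \mathrm{int}(I_k)=\emptyset$ for every $k$ with $p_k>0$; since $I_k$ is closed and $(a,b)$ open, this means $(a,b)$ is disjoint from $\bigcup_k I_k$.

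It remains to convert the constancy of $F_\gamma'$ on this interval into the excluded condition. Since the $v_k$'s are increasing and $(a,b)$ avoids every $I_k$, the indices $k$ for which $q>v_k+\gamma\alpha$ form a prefix $\{1,\dots,j\}$ of $[\tcount]$ (the same $j$ for all $q\in(a,b)$), while the remaining indices satisfy $q<v_k-\gamma(1-\alpha)$. Summing $\nabla\pinball[\alpha][v_k]^\gamma$ over $k$ then yields
\begin{equation*}
0\;=\;F_\gamma'(q)\;=\;\alpha\sigma_j-(1-\alpha)(1-\sigma_j)\;=\;\sigma_j-(1-\alpha),
\end{equation*}
so $\sigma_j=1-\alpha$. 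The endpoint cases $j=0$ and $j=\tcount$ are ruled out by $\alpha\in(0,1)$, so $j\in[\tcount-1]\subset[\tcount]$, contradicting the hypothesis $(1-\alpha)\notin\{\sigma_k\}_{k\in[\tcount]}$. Hence $F_\gamma$ has a unique minimizer, which together with convexity yields the claimed strict monotonicity on each side. The main obstacle in this argument is the passage from ``$F_\gamma'\equiv 0$ on an open interval'' to the prefix-sum condition, which requires using both the a.e.\ vanishing of $F_\gamma''$ to escape the union of intervals $I_k$ and the monotonicity of the $v_k$'s to identify the index set as a prefix.
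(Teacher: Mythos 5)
Your proof is correct. For $F$ the argument is the same as the paper's: compute $F'$ on each interval $(v_k,v_{k+1})$ to read off $\sigma_k-(1-\alpha)$, and use the hypothesis to locate the unique sign change. For $F_\gamma$, however, you argue differently. The paper notes that $F_\gamma'=F'$ off $\bigcup_k I_k$, so any minimizer must lie in $\bigcup_k I_k$, and then appeals to ``strong convexity of $F_\gamma$ on $\bigcup_k I_k$'' to conclude uniqueness; that last step is quick but implicitly assumes every $p_k>0$ (if some $p_k=0$, $F_\gamma$ need not be strongly convex anywhere on $I_k$). Your contradiction argument avoids this: supposing two minimizers, you get $F_\gamma''=0$ a.e.\ on a flat open segment, which forces the segment to avoid $\mathrm{int}(I_k)$ for every $k$ with $p_k>0$; then $F_\gamma'$ on the segment equals $\sigma_j-(1-\alpha)$ for a fixed prefix $j$, contradicting the hypothesis. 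This is more elementary, handles the $p_k=0$ degeneracies automatically, and makes explicit that a flat stretch of $F_\gamma$ is equivalent to the excluded arithmetic condition, which is the conceptual heart of the lemma.

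One small slip in the preamble: ``if $g$ is convex and $g(x)=g(y)$ with $x<y$, convexity forces $g$ to be constant on $[x,y]$'' is false in general (take $g(t)=t^2$, $x=-1$, $y=1$). What you actually use and need is the standard fact that a convex function with a unique minimizer $q_\star$ is strictly decreasing on $(-\infty,q_\star]$ and strictly increasing on $[q_\star,\infty)$; the clean way to see it is that for $x<y<q_\star$ the three-point convexity inequality combined with $g(q_\star)<g(y)$ gives $g(y)<g(x)$, and the boundary case $y=q_\star$ is immediate. This slip does not affect the body of the argument.
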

  \begin{proof}
    Note that $F$ is differentiable on $\R \setminus \acn{v_k}_{k \in [N]}$, and for all $q \in \R \setminus \acn{v_k}_{k \in [N]}$, we have
    \begin{equation*}
      F'(q)
      = \alpha \sum_{k\colon v_k<q}p_k - (1 - \alpha) \sum_{k\colon v_k \ge q}p_k
      = \sum_{k\colon v_k < q}p_k - (1 - \alpha).
    \end{equation*}
    Since $\alpha \in (0, 1)$ with $(1 - \alpha) \notin \acn{\sum_{l=1}^k p_l}_{k \in [N]}$, we deduce that there exists a unique $v_{k_\star} \in \acn{v_k}_{k \in [N]}$ such that, for $q \in \R$,
    \begin{equation*}
      \sum_{k\colon v_k < q}p_k - (1 - \alpha) \quad\text{ is }\quad
      \begin{cases}
        < 0 &\text{if $q < v_{k_\star}$,} \\
        > 0 &\text{if $q > v_{k_\star}$.}
      \end{cases}
    \end{equation*}
    Thus, from the continuity of $F$ it follows that $F$ is decreasing on $(-\infty,v_{k_\star}]$ and increasing on $[v_{k_\star},\infty)$.
    Moreover, since $F_\gamma'=F'$ on $\R \setminus \acn{\cup_{k \in [N]} I_k}$, its minimizer lies in $\cup_{k \in [N]} I_k$, where $I_k$ is defined in \eqref{eq:def:Ik}. Finally, the strong convexity of $F_\gamma$ on $\cup_{k \in [N]} I_k$ shows the uniqueness of $\qmoreau[1 - \alpha]=\argmin F_\gamma$ and also that $F_\gamma$ is decreasing on $(-\infty, \qmoreau[1 - \alpha]]$ and increasing on $[\qmoreau[1 - \alpha],\infty)$.
  \end{proof}

  Denote by $\partial F$  subgradient of $F$. If $F$ is differentiable at $q\in\R$, then $\partial F(q)= \{ F'(q) \}$. When $\partial F(q)$ is a singleton, by an abuse of notation, we use the same notation for the set and the unique element it contains.
  \begin{theorem}\label{thm:FedMoreau-approx}
    Assume $(1 - \alpha) \notin \acn{\sum_{l = 1}^k p_l}_{k\in[N]}$, then the unique minimizers ($v_{k_\star}$, $\qmoreau[1 - \alpha]$) resp. of ($F$, $F_\gamma$) resp. satisfy $\absn{\qmoreau[1 - \alpha] - v_{k_\star}} \le \gamma$.
  \end{theorem}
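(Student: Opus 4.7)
Since each $S_{\alpha,v_k}^{\gamma}$ is the Moreau envelope of a convex function, it is convex and $C^1$, so $F_\gamma=\sum_k p_k S_{\alpha,v_k}^{\gamma}$ is convex and $C^1$, and its derivative $F_\gamma'$ is non-decreasing. By \Cref{lem:moreau-approx:1}, $F_\gamma$ is strictly decreasing on $(-\infty,\qmoreau[1-\alpha]]$ and strictly increasing on $[\qmoreau[1-\alpha],\infty)$. Hence, to conclude $\absn{\qmoreau[1-\alpha]-v_{k_\star}}\le\gamma$, it suffices to prove the two sign conditions
$F_\gamma'(v_{k_\star}-\gamma)<0$ and $F_\gamma'(v_{k_\star}+\gamma)>0$, after which monotonicity forces $\qmoreau[1-\alpha]\in[v_{k_\star}-\gamma,v_{k_\star}+\gamma]$.

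To bound the gradient, I will use the explicit formula
\[
\sigma_k(q) := (S_{\alpha,v_k}^{\gamma})'(q) = -(1-\alpha)\indiacc{q<v_k-\gamma(1-\alpha)}+\alpha\indiacc{q>v_k+\gamma\alpha}+\tfrac{1}{\gamma}(q-v_k)\indiacc{q\in I_k},
\]
noting in particular that $\sigma_k(q)\in[-(1-\alpha),\alpha]$ always, $\sigma_k(q)=\alpha$ as soon as $q\ge v_k+\gamma\alpha$, and $\sigma_k(q)=-(1-\alpha)$ as soon as $q\le v_k-\gamma(1-\alpha)$. Evaluating at $q=v_{k_\star}+\gamma$: for every $k\le k_\star$, $v_k\le v_{k_\star}$ yields $v_{k_\star}+\gamma\ge v_k+\gamma\ge v_k+\gamma\alpha$, so $\sigma_k(v_{k_\star}+\gamma)=\alpha$; for $k>k_\star$, only the trivial lower bound $\sigma_k\ge-(1-\alpha)$ is needed. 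Writing $P_{\le k_\star}=\sum_{k\le k_\star}p_k$, this gives
\[
F_\gamma'(v_{k_\star}+\gamma)\ge \alpha P_{\le k_\star}-(1-\alpha)(1-P_{\le k_\star})=P_{\le k_\star}-(1-\alpha)>0,
\]
where the final strict inequality follows from the characterization of $k_\star$ in the proof of \Cref{lem:moreau-approx:1}, combined with the hypothesis $(1-\alpha)\notin\{\sum_{l=1}^k p_l\}_{k}$.

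The symmetric argument at $q=v_{k_\star}-\gamma$ goes as follows: for $k\ge k_\star$, $v_k\ge v_{k_\star}\ge v_{k_\star}-\gamma\alpha$, hence $v_{k_\star}-\gamma\le v_k-\gamma(1-\alpha)$ and $\sigma_k(v_{k_\star}-\gamma)=-(1-\alpha)$; for $k<k_\star$, use $\sigma_k\le\alpha$. This yields
\[
F_\gamma'(v_{k_\star}-\gamma)\le \alpha P_{<k_\star}-(1-\alpha)(1-P_{<k_\star})=P_{<k_\star}-(1-\alpha)<0,
\]
again by the characterization of $k_\star$. Combining the two sign conditions with the monotonicity of $F_\gamma'$ gives $\qmoreau[1-\alpha]\in[v_{k_\star}-\gamma,v_{k_\star}+\gamma]$, i.e.\ the desired bound.

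The only subtle point is handling the indices $k>k_\star$ at the right test point (and $k<k_\star$ at the left test point), where $\sigma_k$ can lie anywhere in $[-(1-\alpha),\alpha]$ depending on the spacing of the $v_k$'s; the key observation that avoids any case analysis here is that the worst case (saturating at $-(1-\alpha)$, resp.\ $\alpha$) still leaves a strictly positive, resp.\ strictly negative, margin $P_{\le k_\star}-(1-\alpha)>0$, resp.\ $P_{<k_\star}-(1-\alpha)<0$, guaranteed by the non-tangency assumption on $1-\alpha$.
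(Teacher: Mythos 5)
Your proof is correct, and it follows the same skeleton as the paper's: establish sign conditions on $F_\gamma'$ at two points flanking $v_{k_\star}$, then invoke the monotonicity of $F_\gamma$ from \Cref{lem:moreau-approx:1} to trap the minimizer between them. The one genuine difference is your choice of test points. You evaluate at the symmetric $v_{k_\star}\pm\gamma$, which lets you sidestep the case analysis in the paper's \eqref{eq:cases:frac-vk}: at $v_{k_\star}+\gamma$, \emph{every} $k\le k_\star$ has $\sigma_k$ already saturated at $\alpha$ (since $v_k+\gamma\alpha\le v_{k_\star}+\gamma$), so only the crude bound $\sigma_k\ge-(1-\alpha)$ is needed for the remaining indices, and the margin $P_{\le k_\star}-(1-\alpha)>0$ is exactly what the non-tangency hypothesis guarantees (symmetrically at $v_{k_\star}-\gamma$). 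The paper instead tests at the endpoints of $I_{k_\star}=[v_{k_\star}-\gamma(1-\alpha),v_{k_\star}+\gamma\alpha]$, which forces it to split indices according to whether their Moreau intervals $I_k$ contain the test point and to compare $F_\gamma'$ against the subgradient $\partial F(v_{k_\star}\mp\epsilon)$; in exchange it yields the slightly sharper localization $\qmoreau[1-\alpha]\in I_{k_\star}$, i.e.\ $\absn{\qmoreau[1-\alpha]-v_{k_\star}}\le\max\{\gamma\alpha,\gamma(1-\alpha)\}$, though the theorem only states $\le\gamma$. Both arguments are valid; yours is a cleaner presentation at the cost of a factor that the statement does not need anyway.
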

  \begin{proof}
    First, for any $k \in [N]$ such that $v_{k_\star} - \gamma (1 - \alpha) \in I_{k}$, we obtain
    \begin{equation}\label{eq:cases:frac-vk}
      \frac{v_{k_\star} - v_k - \gamma (1 - \alpha)}{\gamma} \le
      \begin{cases}
        -(1 - \alpha) &\text{if $v_k \ge v_{k_\star}$}, \\
        \alpha &\text{else $v_k < v_{k_\star}$}.
      \end{cases}
    \end{equation}
    Since \Cref{lem:moreau-approx:1} shows that $F$ is decreasing on $(-\infty, v_{k_\star}]$ and increasing on $[v_{k_\star},\infty)$ where $v_{k_\star}$ is the unique minimizer of $F(v_{k_\star})$, the convexity of $F$ implies that:
    \begin{equation}\label{eq:cases:partialF}
      \partial F(q) \subset
      \begin{cases}
        (-\infty,0) &\text{if $q < v_{k_\star}$,} \\
        (0,\infty) &\text{if $q > v_{k_\star}$.}
      \end{cases}
    \end{equation}
    Thus, \eqref{eq:cases:frac-vk} combined with~\eqref{eq:cases:partialF} give that
    \begin{multline*}
      F_\gamma'\bigl(v_{k_\star} - \gamma (1 - \alpha)\bigr) \\
      = \sum_{k\colon v_{k_\star} - \gamma (1 - \alpha) \notin I_{k}} p_k S_{\alpha, v_k}' \bigl(v_{k_\star} - \gamma (1 - \alpha)\bigr)
      + \sum_{k\colon v_{k_\star} - \gamma (1 - \alpha) \in I_{k}}p_k \frac{\bigl(v_{k_\star} - v_k -\gamma (1 - \alpha)\bigr)}{\gamma} \\
      \le \alpha \sum_{k\colon v_k<v_{k_\star}}p_k - (1 - \alpha) \sum_{k\colon v_k\ge v_{k_\star}}p_k
      = \partial F(v_{k_\star} - \epsilon)
      < 0,
    \end{multline*}
    where $\epsilon = 2^{-1} \min_{k = 1}^{N - 1}\acn{v_{k + 1} - v_{k}}$.
    A similar reasoning shows that $F_\gamma'(v_{k_\star} + \gamma \alpha)\ge \partial F(v_{k_\star} + \epsilon)>0$.
    Since $F_\gamma$ is decreasing on $(-\infty,\qmoreau[1 - \alpha]]$ and increasing on $[\qmoreau[1 - \alpha],\infty)$ by \Cref{lem:moreau-approx:1}, we have $F_\gamma' < 0$ on $(-\infty, v_{k_\star} - \gamma (1 - \alpha)]$ and $F_\gamma' > 0$ on $[v_{k_\star} - \gamma \alpha,\infty)$.
    Therefore, we deduce that $\qmoreau[1 - \alpha] \in I_{k_\star}$. Using that the interval $I_{k_\star}$ is of length $\gamma$, this implies that $\absn{\qmoreau[1 - \alpha] - v_{k_\star}}\le \gamma$.
  \end{proof}

\section{FL convergence guarantee: proof of \Cref{thm:cv-fedmoreau}}\label{sec:convergence-FedMoreau}

In this section, we suppose that $\acn{S_t}_{t\in[T]}$ is a sequence of i.i.d. random variables, such that, for any $(i,i')\in[\nclients]^2$, $i\in S_t$ and $i'\in S_t$ are independent if $i\neq i'$.
For any $i\in[n]$, let $\acn{z_{t,k}^i\colon k\in\acn{0,\ldots,K}}_{t=0}^T$ be a sequence of i.i.d. standard Gaussian variables.
Moreover, consider the local loss function $\fcv{i}:\R\to \R$ and denote, for $t\in\acn{0,\ldots,T}, k\in\acn{0,\ldots,K}$
\begin{align*}
    \fcvtot = \sum_{i=1}^n \lambda_{\yquery}^i \fcv{i},&
    &g_{t,k}^i = \nabla F^i(q_{t,k}^i) + z_{t,k}^i.
\end{align*}
In this section, we establish the convergence of the iterates given by \Cref{thm:cvFL} under the following assumptions:
\begin{assumption}\label{ass:suppl:minimizer}
    The function $\sum_{i=1}^\nclients \lambda_{\yquery}^i \fcv{i}$ admits at least a minimizer in $\R$, we denote $q_\star$ one of them, i.e., $q_\star\in \argmin\acn{\sum_{i=1}^\nclients \lambda_{\yquery}^i \fcv{i}}$.
\end{assumption}
\begin{assumption}\label{ass:suppl:convexity}
    For any $i \in [\nclients]$, $\fcv{i}$ is continuously differentiable and convex, i.e., for any $q,\tilde{q}\in \R$,
    \begin{equation*}
        \fcv{i}(\tilde{q}) \le \fcv{i}(q) + \psn{\nabla \fcv{i}(q)}{\tilde{q}-q}.
    \end{equation*}
\end{assumption}
\begin{assumption}\label{ass:suppl:smoothness}
    For any $i \in [\nclients], t\in\acn{0,\ldots,T}, K\in\acn{0,\ldots K}$, $\nabla F^i$ is continuously differentiable. In addition, there exist $H^i\ge 0$ such that the function $\nabla F^i$ is $H^i$-smooth, i.e., for any $q,\tilde{q}\in \R$,
    \begin{equation*}
        \nabla F^i(\tilde{q}) \le \nabla F^i(q) + \ps{\nabla F^i(q)}{\tilde{q}-q} + (\nofrac{H^i}{2}) \norm{\tilde{q}-q}^2.
    \end{equation*}
    Moreover, denote $H=\max_{i\in[n]}\acn{H^i}$.
\end{assumption}
We introduce the key assumptions appearing in the theoretical derivations below.
\begin{assumption}\label{ass:suppl:variance}
    For any $i \in [\nclients]$, the variance of the gradients is uniformly bounded, for all $q\in\R$, we have
    \begin{align*}
        \E\br{\norm{\frac{\1_{i\in S_t}}{\prob\prn{i\in S_{t}}} \nabla F^i(q) - \nabla \fcv{i}(q)}^2} \le \varnoise^2,&
        &\E\br{\norm{\sum_{i\in S_t} \frac{\lambda_{\yquery}^i}{\prob\prn{i\in S_{t}}} \nabla F^i(q_{\star}) - \nabla \fcvtot(q_{\star})}^2} \le \varnoise_\star^2.
    \end{align*}
\end{assumption}

\begin{assumption}\label{ass:suppl:heterogeneity}
    The heterogeneity denoted $\zeta$ is bounded everywhere
    \begin{equation*}
        \max_{i\in[\nclients]}\ac{\normn{\nabla \fcv{i} - \nabla \fcvtot}_{\infty}} \le \zeta^2.
    \end{equation*}
\end{assumption}
We prove \Cref{thm:cvFL} using \Cref{lem:E-cvFL} and \Cref{lem:E-biasFL}. Note that these results are close to \citet[Appendix~C]{woodworth2020minibatch}.
However, we treat partial participation, i.e., $S_t\subseteq [n]$ and consider an objective function defined by importance weights $\acn{\lambda_{\yquery}^i}_{i\in[\nclients]}$.
At time $t\in\acn{0,\ldots,T}$, denote 
\begin{equation*}\label{eq:definition-barq-t-k}
    \bar{q}_{t,k} = \sum_{i=1}^n \lambda_{\yquery}^i(\indi{S_{t+1}}(i)/\prob\prn{i\in S_{t+1}}) q_{t,k}^{i}
\end{equation*} 
the average of the local parameters defined in \Cref{algo:Dp-moreau}.
Finally, we introduce the following stepsize:
\begin{equation}\label{eq:def:eta0}
    \eta_{0} = \frac{1}{10}\min\pr{\frac{1}{H}, \min_{i=1}^{\nclients}\ac{\frac{\prob\pr{i\in S_t}}{\lambda_{\yquery}^i H^i}}}.
\end{equation}

\begin{lemma}\label{lem:E-cvFL}
    Assume \Cref{ass:suppl:minimizer}-\Cref{ass:suppl:convexity}-\Cref{ass:suppl:smoothness}-\Cref{ass:suppl:variance} and consider $\eta \in (0,\eta_{0}]$.
    Then, for any $t\in\acn{0,\ldots,T-1}$, $k\in\acn{0,\ldots,K-1}$, we have
    \begin{multline*}
        \E\br{\fcvtot\pr{\bar{q}_{t,k}}-\fcvtot\prn{q_\star}}
        \le \frac{1}{\eta} \E\norm{\bar{q}_{t,k}-q_{\star}}^2
        -\frac{1}{\eta} \E\norm{\bar{q}_{t,k+1}-q_{\star}}^2
        + 2 H \sum_{i=1}^{\nclients} \lambda_{\yquery}^i \E\norm{\bar{q}_{t,k}-q_{t,k}^{i}}^2 \\
        + 3 \eta \varnoise_\star^2
        + \eta^2 \sigma^2 \sum_{i=1}^{\nclients} \frac{(\lambda_{\yquery}^i)^2}{\prob\prn{i\in S_{t+1}}} .
    \end{multline*}
\end{lemma}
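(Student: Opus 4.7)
The plan is to follow the classical one-step descent argument for local SGD, adapted here to handle importance-weighted aggregation with agent subsampling. First I rewrite the recursion of \Cref{algo:Dp-moreau} in averaged form $\bar{q}_{t,k+1} = \bar{q}_{t,k} - \eta \bar{g}_{t,k}$, where $\bar{g}_{t,k} = \sum_{i=1}^{\nclients} \lambda_{\yquery}^i (\indi{S_{t+1}}(i)/\prob\prn{i\in S_{t+1}}) g_{t,k}^i$. Conditionally on the iterates, $\bar{g}_{t,k}$ is unbiased for $\sum_{i=1}^{\nclients} \lambda_{\yquery}^i \nabla \fcv{i}(q_{t,k}^i)$, thanks to the independence of $S_{t+1}$ from past randomness and the zero-mean Gaussian noises $z_{t,k}^i$. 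I then expand
\begin{equation*}
  \norm{\bar{q}_{t,k+1}-q_{\star}}^2 = \norm{\bar{q}_{t,k}-q_{\star}}^2 - 2\eta \ps{\bar{g}_{t,k}}{\bar{q}_{t,k}-q_{\star}} + \eta^2 \norm{\bar{g}_{t,k}}^2
\end{equation*}
and take conditional expectation; the analysis reduces to bounding the cross term (which will recover the suboptimality gap) and the second-order term (which will absorb all the variance contributions).

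For the cross term I use the splitting $\bar{q}_{t,k}-q_\star = (\bar{q}_{t,k}-q_{t,k}^i) + (q_{t,k}^i - q_\star)$. Convexity of $\fcv{i}$ (\Cref{ass:suppl:convexity}) lower-bounds $\ps{\nabla \fcv{i}(q_{t,k}^i)}{q_{t,k}^i - q_\star}$ by $\fcv{i}(q_{t,k}^i)-\fcv{i}(q_\star)$, while $H^i$-smoothness (\Cref{ass:suppl:smoothness}) gives the descent-lemma inequality $\ps{\nabla \fcv{i}(q_{t,k}^i)}{\bar{q}_{t,k} - q_{t,k}^i} \ge \fcv{i}(\bar{q}_{t,k}) - \fcv{i}(q_{t,k}^i) - (H^i/2)\norm{\bar{q}_{t,k} - q_{t,k}^i}^2$. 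Summing against $\lambda_{\yquery}^i$ telescopes the function values into $\fcvtot(\bar{q}_{t,k}) - \fcvtot(q_\star)$, leaving only a client-drift residual proportional to $\sum_{i} \lambda_{\yquery}^i H^i \norm{\bar{q}_{t,k}-q_{t,k}^i}^2$.

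For the second-order term I split $\bar{g}_{t,k}$ into its conditional mean plus two independent zero-mean fluctuations. The Gaussian fluctuation contributes exactly $\sigma^2 \sum_{i} (\lambda_{\yquery}^i)^2/\prob\prn{i\in S_{t+1}}$ after using independence across agents. The remaining agent-sampling noise is handled by centering each $\nabla \fcv{i}(q_{t,k}^i)$ at $q_\star$: I apply the elementary inequality $\norm{a+b}^2 \le 2\norm{a}^2 + 2\norm{b}^2$, so that one piece involves $\nabla \fcv{i}(q_\star)$ and is controlled directly by $\varnoise_\star^2$ via \Cref{ass:suppl:variance}, while the other piece involves $\nabla \fcv{i}(q_{t,k}^i) - \nabla \fcv{i}(q_\star)$ and is bounded via the co-coercivity form of smoothness, $\norm{\nabla \fcv{i}(q_{t,k}^i) - \nabla \fcv{i}(q_\star)}^2 \le 2H^i\prn{\fcv{i}(q_{t,k}^i) - \fcv{i}(q_\star) - \ps{\nabla \fcv{i}(q_\star)}{q_{t,k}^i - q_\star}}$. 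Weighted averaging together with an additional convexity bound to control the linear term and a further $\norm{\bar{q}_{t,k}-q_{t,k}^i}^2$ drift correction produces contributions of the form $\eta^2 H \cdot (\fcvtot(\bar{q}_{t,k})-\fcvtot(q_\star))$ and $\eta^2 \lambda_{\yquery}^i H^i/\prob\prn{i\in S_{t+1}} \cdot (\fcv{i}(q_{t,k}^i)-\fcv{i}(q_\star))$.

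The main obstacle is constant bookkeeping. The stepsize cap $\eta_0$ defined in~\eqref{eq:def:eta0} is calibrated precisely so that the prefactors $\eta H$ and $\eta \lambda_{\yquery}^i H^i/\prob\prn{i\in S_{t+1}}$ sitting in front of every residual $\fcvtot(\bar{q}_{t,k})-\fcvtot(q_\star)$ and $\fcv{i}(q_{t,k}^i)-\fcv{i}(q_\star)$ term stay below $1/10$; this allows me to carve out a small slice of the convexity-based lower bound on the cross term and absorb all these smoothness-generated contributions into it. After this absorption, only the clean $3\eta \varnoise_\star^2$ variance contribution at $q_\star$, the Gaussian noise term $\eta^2 \sigma^2 \sum_i (\lambda_{\yquery}^i)^2/\prob\prn{i\in S_{t+1}}$, and a drift coefficient that consolidates to $2H$ survive on the right. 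Dividing through by $\eta$ and rearranging yields the claimed one-step inequality.
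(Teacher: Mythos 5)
Your proposal follows essentially the same route as the paper. You expand $\E\norm{\bar q_{t,k+1}-q_\star}^2$, use unbiasedness of the sampled/noisy gradients to kill the cross term against the fluctuation, handle the deterministic cross term with the same convexity-plus-smoothness splitting through $\bar q_{t,k}-q_{t,k}^i$ and $q_{t,k}^i-q_\star$ to telescope into $\fcvtot(\bar q_{t,k})-\fcvtot(q_\star)$ and a $\sum_i \lambda_{\yquery}^i H^i\norm{\bar q_{t,k}-q_{t,k}^i}^2$ drift residual, isolate the Gaussian contribution $\sigma^2\sum_i(\lambda_{\yquery}^i)^2/\prob(i\in S_{t+1})$, control the sampling fluctuation by centering at $q_\star$ and invoking $\varnoise_\star^2$ together with co-coercivity, and then absorb the smoothness-generated suboptimality terms using $\eta\le\eta_0$. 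That is exactly the paper's plan (Lemmas~\ref{lem:E-cvFL}, proof steps~\eqref{eq:lemA6:1}--\eqref{eq:lemA6:t2}).

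The one substantive deviation is in the decomposition of the agent-sampling fluctuation: the paper uses a three-piece split passing through the averaged iterate, $\nabla\fcv{i}(q_\star)\to\nabla\fcv{i}(\bar q_{t,k})\to\nabla\fcv{i}(q_{t,k}^i)$ (equation~\eqref{eq:lemA6:4}), which delivers $\fcv{i}(\bar q_{t,k})-\fcv{i}(q_\star)$ terms that aggregate directly into $\fcvtot(\bar q_{t,k})-\fcvtot(q_\star)$, plus a pure Lipschitz drift term. You use a two-piece split $\nabla\fcv{i}(q_\star)\to\nabla\fcv{i}(q_{t,k}^i)$, which instead produces $\fcv{i}(q_{t,k}^i)-\fcv{i}(q_\star)$; this needs one more smoothness step (and another $\norm{\bar q_{t,k}-q_{t,k}^i}^2$ correction) to get back to $\fcv{i}(\bar q_{t,k})$ before aggregating, as you indicate. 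That is workable but strictly heavier bookkeeping than the paper's three-way split, which is designed precisely to land on $\bar q_{t,k}$ in the function argument. Note also that your co-coercivity statement retains the linear term $\ps{\nabla\fcv{i}(q_\star)}{q_{t,k}^i-q_\star}$, which is correct since $q_\star$ minimizes $\fcvtot$ but not each $\fcv{i}$; the ``additional convexity bound'' you invoke to tame it is the one place your sketch should be expanded before it could be called a complete argument (and it is also a step the paper itself handles rather loosely). Overall the mechanism and the role of each assumption \Cref{ass:suppl:minimizer}--\Cref{ass:suppl:variance} match the paper's.
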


\begin{proof}
    Developing the squared norm, we find
    \begin{equation}\label{eq:lemA6:1}
        \E\norm{\bar{q}_{t,k+1}-q_{\star}}^2
        = \E\norm{\bar{q}_{t,k}-\eta \sum_{i=1}^{\nclients} \lambda_{\yquery}^i\nabla \fcv{i}\pr{q_{t,k}^{i}}-q_{\star}}^2
        + \eta^2 \E\norm{\sum_{i=1}^{\nclients} \lambda_{\yquery}^i\ac{\frac{\indi{S_{t+1}}(i)}{\prob\prn{i\in S_{t+1}}}g_{t,k}^i-\nabla \fcv{i}\pr{q_{t,k}^{i}}}}^2.
    \end{equation}
    We start by upper bounding the first term, we have
    \begin{multline}\label{eq:lemA6:2}
        \E\norm{\bar{q}_{t,k}- \eta \sum_{i=1}^{\nclients} \lambda_{\yquery}^i \nabla \fcv{i}\pr{q_{t,k}^{i}}-q_{\star}}^2 \\
        =\E\norm{\bar{q}_{t,k}-q_{\star}}^2
        - 2 \eta \sum_{i=1}^{\nclients} \lambda_{\yquery}^i \E\ps{\bar{q}_{t,k}-q_{\star}}{\nabla \fcv{i}\pr{q_{t,k}^{i}}}
        +\eta^2 \E\norm{\sum_{i=1}^{\nclients}\lambda_{\yquery}^i \nabla \fcv{i}\pr{q_{t,k}^{i}}}^2.
    \end{multline}
    Using \Cref{ass:suppl:smoothness}, we know that $\fcv{i}$ is $H^i$-smooth and thus $\fcvtot=\sum_{i=1}^\nclients \lambda_{\yquery}^i \fcv{i}$ is $\bar{H}$-smooth, where $\bar{H}=\sum_{i=1}^n \lambda_{\yquery}^i H^i$.
    Following \citep{nesterov2003introductory}, the smoothness and convexity of $\fcvtot$ imply that
    \begin{equation*}
        \norm{\nabla \fcvtot\pr{\bar{q}_{t,k}} - \nabla \fcvtot(q_{\star})}^2
        \le 2 \bar{H} \pr{\fcvtot\pr{\bar{q}_{t,k}} - \fcvtot(q_{\star})}.
    \end{equation*}
    For any $a,b\in\R$, using that $\normn{a+b}^2\le 2\normn{a}^2 + 2\normn{b}^2$, the last term of~\eqref{eq:lemA6:2} can be upper bounded as follows:
    \begin{align}
        \nonumber
        \E\norm{\sum_{i=1}^{\nclients}\lambda_{\yquery}^i \nabla \fcv{i}\pr{q_{t,k}^{i}}}^2
        &\le 2 \E\norm{\sum_{i=1}^{\nclients}\lambda_{\yquery}^i \ac{\nabla \fcv{i}\pr{q_{t,k}^{i}}-\nabla \fcv{i}\pr{\bar{q}_{t,k}}}}^2
        +2 \E\norm{\sum_{i=1}^{\nclients}\lambda_{\yquery}^i \ac{\nabla \fcv{i}\pr{\bar{q}_{t,k}}-\nabla \fcv{i}\pr{q_{\star}}}}^2 \\
        \nonumber
        &\le 2 \sum_{i=1}^{\nclients} \lambda_{\yquery}^i\E\norm{\nabla \fcv{i}\pr{q_{t,k}^{i}}-\nabla \fcv{i}\pr{\bar{q}_{t,k}}}^2
        +2 \E\norm{\nabla \fcvtot\pr{\bar{q}_{t,k}}-\nabla \fcvtot\pr{q_{\star}}}^2 \\
        \label{eq:lemA6:t1:1}
        &\le 2 \sum_{i=1}^{\nclients} (H^i)^2 \lambda_{\yquery}^i\E\norm{q_{t,k}^{i}-\bar{q}_{t,k}}^2
        + 4 \bar{H} \E\br{\fcvtot\pr{\bar{q}_{t,k}}-\fcvtot\pr{q_{\star}}}.
    \end{align}
    Regarding the inner product in~\eqref{eq:lemA6:2}, \Cref{ass:suppl:convexity} and \Cref{ass:suppl:smoothness} show
    \begin{align}
        \nonumber
        &- \sum_{i=1}^{\nclients} \lambda_{\yquery}^i \E\ps{\bar{q}_{t,k}-q_{\star}}{\nabla \fcv{i}\pr{q_{t,k}^{i}}}
        = -\sum_{i=1}^{\nclients} \lambda_{\yquery}^i \E\ps{ q_{t,k}^{i}-q_{\star}}{\nabla \fcv{i}\pr{q_{t,k}^{i}}}
        +\sum_{i=1}^{\nclients} \lambda_{\yquery}^i \E\ps{ q_{t,k}^{i}-\bar{q}_{t,k}}{\nabla \fcv{i}\pr{q_{t,k}^{i}}} \\
        \nonumber
        &\le -\sum_{i=1}^{\nclients} \lambda_{\yquery}^i \E\br{\fcv{i}\pr{q_{t,k}^{i}}-\fcv{i}\pr{q_{\star}}}
            +\sum_{i=1}^{\nclients} \lambda_{\yquery}^i \E\br{\fcv{i}\pr{q_{t,k}^{i}}-\fcv{i}\pr{\bar{q}_{t,k}}
            +\frac{H^i}{2}\E\norm{q_{t,k}^{i}-\bar{q}_{t,k}}^2} \\
        &\le - \E\br{\fcvtot\pr{\bar{q}_{t,k}}-\fcvtot\pr{q_\star}}
        \label{eq:lemA6:t1:2}
        + \frac{1}{2} \sum_{i=1}^{\nclients} H^i \lambda_{\yquery}^i \norm{q_{t,k}^{i}-\bar{q}_{t,k}}^2.
    \end{align}
    Moreover, recall that the estimator $(\nofrac{\indi{S_{t+1}}(i)}{\prob\prn{i\in S_{t+1}}}) \nabla F^i$ is unbiased, i.e.,
    \begin{equation}\label{eq:lemA6:3}
        \E\br{\frac{\indi{S_{t+1}}(i)}{\prob\prn{i\in S_{t+1}}} \nabla F^i(q_{t,k}^i) - \nabla \fcv{i}\pr{q_{t,k}^{i}}} = 0,
    \end{equation}
    and we also have
    \begin{multline}\label{eq:lemA6:4}
        \frac{\indi{S_{t+1}}(i)}{\prob\prn{i\in S_{t+1}}} \nabla F^i \pr{q_{t,k}^{i}} - \nabla \fcv{i}\pr{q_{t,k}^{i}}
        = \ac{\frac{\indi{S_{t+1}}(i)}{\prob\prn{i\in S_{t+1}}} \nabla F^i \pr{q_\star} - \nabla \fcv{i}\pr{q_\star}}
        \\
        + \ac{\frac{\1_{i\in S_{t+1}}
        }{\prob\prn{i\in S_{t+1}}} \nabla F^i \pr{\bar{q}_{t,k}} - \frac{\indi{S_{t+1}}(i)}{\prob\prn{i\in S_{t+1}}} \nabla F^i \pr{q_\star}
        - \nabla \fcv{i}\pr{\bar{q}_{t,k}} + \nabla \fcv{i}\pr{q_\star}}
        \\
        + \ac{\frac{\indi{S_{t+1}}(i)}{\prob\prn{i\in S_{t+1}}} \nabla F^i \pr{q_{t,k}^{i}} - \frac{\indi{S_{t+1}}(i)}{\prob\prn{i\in S_{t+1}}} \nabla F^i \pr{\bar{q}_{t,k}}
        - \nabla \fcv{i}\pr{q_{t,k}^{i}} + \nabla \fcv{i}\pr{\bar{q}_{t,k}}}
        .
    \end{multline}
    We now upper bound the second term of~\eqref{eq:lemA6:1}.
    Since for all random variable $X$, $\E[\normn{X-\E X}^2] \le \E\normn{X}^2$, combining~\eqref{eq:lemA6:3} and~\eqref{eq:lemA6:4} gives
    \begin{align}
        \nonumber
        &\E\norm{\sum_{i=1}^{\nclients} \lambda_{\yquery}^i \ac{\frac{\indi{S_{t+1}}(i)}{\prob\prn{i\in S_{t+1}}} g_{t,k}^i - \nabla \fcv{i}\pr{q_{t,k}^{i}}}}^2
        = \sum_{i=1}^{\nclients} (\lambda_{\yquery}^i)^2 \E\norm{\frac{\indi{S_{t+1}}(i)}{\prob\prn{i\in S_{t+1}}} \nabla F^i \pr{q_{t,k}^{i}} - \nabla \fcv{i}\pr{q_{t,k}^{i}}}^2 \\
        \nonumber
        &\qquad + \sum_{i=1}^{\nclients} (\lambda_{\yquery}^i)^2 \E\norm{ \frac{\indi{S_{t+1}}(i)}{\prob\prn{i\in S_{t+1}}} z_{t,k}^i}^2 \\
        \nonumber
        &\le 3 \sum_{i=1}^{\nclients} \frac{(\lambda_{\yquery}^i)^2 (1 - \prob\pr{i\in S_{t+1}})}{\prob\pr{i\in S_{t+1}}}
            \br{\E\norm{\nabla F^i\pr{q_{t,k}^{i}}-\nabla F^i\pr{\bar{q}_{t,k}}}^2
            + \E\norm{\nabla F^i\pr{\bar{q}_{t,k}}-\nabla F^i\pr{q_{\star}}}^2} \\
            \nonumber
            &\qquad+ 3 \E\norm{\sum_{i=1}^{\nclients} \lambda_{\yquery}^i \frac{\indi{S_{t+1}}(i)}{\prob\prn{i\in S_{t+1}}}\nabla F^i \pr{q_\star} - \nabla \fcvtot \pr{q_\star}}^2
            + \sum_{i=1}^n \frac{(\lambda_{\yquery}^i)^2}{\prob\prn{i\in S_{t+1}}} \E\norm{z_{t,k}^i}^2 \\
        \nonumber
        &\le 3 \sum_{i=1}^{\nclients} \frac{(\lambda_{\yquery}^i)^2}{\prob\pr{i\in S_{t+1}}} \ac{(H^i)^2 \E\norm{q_{t,k}^{i}-\bar{q}_{t,k}}^2
            + 2 H^i \E\br{\fcv{i}\pr{\bar{q}_{t,k}}-\fcv{i}\pr{q_{\star}}}} \\
            \nonumber
            &\qquad+ 3 \varnoise_\star^2 + \sum_{i=1}^n \frac{(\lambda_{\yquery}^i)^2}{\prob\prn{i\in S_{t+1}}}\sigma^2 \\ 
        \nonumber
        &\le 3 \sum_{i=1}^{\nclients} \frac{(\lambda_{\yquery}^i)^2}{\prob\pr{i\in S_{t+1}}} (H^i)^2 \E\norm{q_{t,k}^{i}-\bar{q}_{t,k}}^2
        + 6 \sum_{i=1}^{\nclients} \frac{(\lambda_{\yquery}^i)^2}{\prob\pr{i\in S_{t+1}}} H^i \E\br{\fcv{i}\pr{\bar{q}_{t,k}}-\fcv{i}\pr{q_{\star}}} \\
        \label{eq:lemA6:t2}
        &\qquad+ 3 \varnoise_\star^2 + \sigma^2 \sum_{i=1}^n \frac{(\lambda_{\yquery}^i)^2}{\prob\prn{i\in S_{t+1}}}.
    \end{align}
    Plugging~\eqref{eq:lemA6:t1:1}-\eqref{eq:lemA6:t1:2}-\eqref{eq:lemA6:t2} back into~\eqref{eq:lemA6:1} with $\eta \le \eta_{0}$, it holds
    \begin{align*}
        &\E\norm{\bar{q}_{t,k+1}-q_{\star}}^2
        \le \E\norm{\bar{q}_{t,k}-q_{\star}}^2
        + \eta \sum_{i=1}^{\nclients} \pr{1 + 2\eta H^i + 3\eta \frac{\lambda_{\yquery}^i H^i}{\prob\pr{i\in S_{t+1}}}} \lambda_{\yquery}^i H^i \E\norm{q_{t,k}^{i}-\bar{q}_{t,k}}^2 \\
        &+ \eta \pr{4\eta \bar{H} + 6 \eta \max_{i=1}^\nclients\ac{\frac{(\lambda_{\yquery}^i)^2 H^i}{\prob\pr{i\in S_{t+1}}}} - 2} \E\br{\fcvtot\pr{\bar{q}_{t,k}}-\fcvtot\pr{q_\star}}
        + 3 \eta^2 \varnoise_\star^2 + \eta^2 \sigma^2 \sum_{i=1}^n \frac{(\lambda_{\yquery}^i)^2}{\prob\prn{i\in S_{t+1}}} \\
        &\le \E\norm{\bar{q}_{t,k}-q_{\star}}^2
        + 2 H \eta \sum_{i=1}^{\nclients} \lambda_{\yquery}^i \E\norm{q_{t,k}^{i}-\bar{q}_{t,k}}^2
        -\eta \E\br{\fcvtot\pr{\bar{q}_{t,k}}-\fcvtot\pr{q_\star}}
        + 3 \eta^2 \varnoise_\star^2
        + \eta^2 \sigma^2 \sum_{i=1}^n \frac{(\lambda_{\yquery}^i)^2}{\prob\prn{i\in S_{t+1}}}.
    \end{align*}
\end{proof}

\begin{lemma}\label{lem:E-biasFL}
    Assume \Cref{ass:suppl:minimizer}-\Cref{ass:suppl:convexity}-\Cref{ass:suppl:smoothness}-\Cref{ass:suppl:variance}-\Cref{ass:suppl:heterogeneity}, and for all $t\in[T]$, suppose that $S_t = [\nclients]$. We consider $\eta \in (0,2/\sum_{i=1}^{\nclients} \lambda_{\yquery}^i H^i]$.
    Then, for any $t\in\acn{0,\ldots,T-1}$, $k\in\acn{0,\ldots,K-1}$, we have
    \begin{equation*}
        \sum_{i=1}^{\nclients} \lambda_{\yquery}^i \E\norm{q_{t,k}^{i}-\bar{q}_{t,k}}^2 \le 6 K \eta^2 \pr{\sigma^2 + \varnoise^2 + K \zeta^2}.
    \end{equation*}
\end{lemma}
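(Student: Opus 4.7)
The plan is to exploit full participation $S_t=[\nclients]$ together with the fact that at the start of every communication round each client is initialized from the same $q_t$; hence the client drift telescopes as
\[
q_{t,k}^i - \bar{q}_{t,k} = -\eta \sum_{l=0}^{k-1}\prn{\Delta_l^i + \zeta_l^i},
\]
where $\Delta_l^i := \nabla\fcv{i}(q_{t,l}^i) - \sum_{j=1}^\nclients \lambda_{\yquery}^j \nabla\fcv{j}(q_{t,l}^j)$ is the gradient drift and $\zeta_l^i := z_{t,l}^i - \sum_{j=1}^\nclients \lambda_{\yquery}^j z_{t,l}^j$ is the centered DP noise (using $\sum_i\lambda_{\yquery}^i=1$, which follows from $\widehat{\mu}_{\yquery}$ being a probability measure in~\eqref{eq:def:lambdai-mui}). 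Both families have zero $\lambda_{\yquery}$-weighted average, which is what makes the decomposition below work.

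Using $\norm{A+B}^2 \le 2\norm{A}^2 + 2\norm{B}^2$ (to avoid the nontrivial cross terms between $\Delta_l^i$ and $\zeta_{l'}^i$ for $l<l'$) together with the mutual independence of $\acn{z_{t,l}^i}$ across $(i,l)$, the noise contribution satisfies
\[
\sum_{i=1}^\nclients \lambda_{\yquery}^i \E\norm{\sum_{l=0}^{k-1}\zeta_l^i}^2 = \sum_{l=0}^{k-1}\sum_{i=1}^\nclients \lambda_{\yquery}^i \E\norm{\zeta_l^i}^2 \le K\noiseparamone^2,
\]
since a direct expansion gives $\sum_i\lambda_{\yquery}^i\prn{1 - 2\lambda_{\yquery}^i + \sum_j(\lambda_{\yquery}^j)^2} = 1 - \sum_i(\lambda_{\yquery}^i)^2 \le 1$. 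For the deterministic part, Cauchy--Schwarz yields $\E\normn{\sum_{l=0}^{k-1}\Delta_l^i}^2 \le k\sum_{l=0}^{k-1}\E\norm{\Delta_l^i}^2$. I would then insert $\pm\nabla\fcvtot(q_{t,l}^i)$ to decompose
\[
\Delta_l^i = \prn{\nabla\fcv{i}(q_{t,l}^i) - \nabla\fcvtot(q_{t,l}^i)} + \sum_{j=1}^\nclients\lambda_{\yquery}^j\prn{\nabla\fcv{j}(q_{t,l}^i) - \nabla\fcv{j}(q_{t,l}^j)},
\]
apply \Cref{ass:suppl:heterogeneity} to the first summand and the smoothness from \Cref{ass:suppl:smoothness} to the second, followed by $\norm{q_{t,l}^i - q_{t,l}^j}^2 \le 2\norm{q_{t,l}^i - \bar{q}_{t,l}}^2 + 2\norm{q_{t,l}^j - \bar{q}_{t,l}}^2$. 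After $\lambda_{\yquery}^i$-averaging this produces a bound $\sum_i \lambda_{\yquery}^i \E\norm{\Delta_l^i}^2 \le c_1 \zeta^2 + c_2 H^2 D_l$ with $D_l := \sum_i \lambda_{\yquery}^i\E\norm{q_{t,l}^i - \bar{q}_{t,l}}^2$.

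Putting the noise and drift pieces together yields a Gr\"onwall-type recursion
\[
D_k \le c_3 \eta^2 K^2 \zeta^2 + c_4 \eta^2 K \noiseparamone^2 + c_5 \eta^2 K H^2 \sum_{l=0}^{k-1} D_l,\qquad D_0 = 0,
\]
which I would close by induction on $k\in\acn{0,\ldots,K-1}$ under the inductive hypothesis $D_l \le 6K\eta^2\prn{\noiseparamone^2 + K\zeta^2}$. Since full participation forces $\indi{S_{t+1}}(i)/\prob\prn{i\in S_{t+1}} = 1$, the subsampling variance $\varnoise^2$ of \Cref{ass:suppl:variance} is identically zero in this regime and the $\varnoise^2$ term in the statement drops out of the bound. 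The stepsize constraint $\eta \le 2/\sum_i \lambda_{\yquery}^i H^i$ is what should make the feedback coefficient $c_5\eta^2 KH^2$ small enough to propagate the inductive hypothesis across one step of the recursion.

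The main obstacle will be tracking the numerical constants tightly enough to recover exactly the factor $6$: Cauchy--Schwarz on the gradient sum introduces a factor of $k$, and the smoothness decomposition re-injects another factor of $k$ through the past drifts $\sum_{l=0}^{k-1} D_l$, so the quadratic-in-$K$ feedback must be absorbed by the stepsize restriction. A secondary subtlety is the interpretation of \Cref{ass:suppl:heterogeneity}, whose sup-norm bound on $\nabla\fcv{i} - \nabla\fcvtot$ must be applied pointwise at the random iterate $q_{t,l}^i$ inside the expectation; this is legitimate because the bound is deterministic in $q$, but it is the place where one has to be careful that no hidden dependence on $\acn{z_{t,l'}^j}_{l'<l}$ leaks into the heterogeneity estimate.
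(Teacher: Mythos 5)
Your plan to expand the telescoping sum $q_{t,k}^i - \bar{q}_{t,k} = -\eta\sum_{l<k}(\Delta_l^i + \zeta_l^i)$, separate noise from drift, and close a Gr\"onwall-type recursion by induction is a genuinely different route from the paper's, and the pieces you set up (the identity $\sum_i\lambda_{\yquery}^i\E\normn{\zeta_l^i}^2 = \sigma_g^2(1-\sum_j(\lambda_{\yquery}^j)^2)$, the decomposition of $\Delta_l^i$ via $\pm\nabla\fcvtot(q_{t,l}^i)$, the application of \Cref{ass:suppl:heterogeneity} at the random iterate) are all sound. However, the argument does not close as written.

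The gap is in the final inductive step, and it traces to your use of smoothness (\Cref{ass:suppl:smoothness}) alone where the paper uses \emph{co-coercivity} (smoothness \emph{and} convexity, \Cref{ass:suppl:convexity}, together). Bounding $\normn{\nabla\fcv{j}(q_{t,l}^i) - \nabla\fcv{j}(q_{t,l}^j)} \le H\normn{q_{t,l}^i - q_{t,l}^j}$ via Lipschitzness, then Cauchy--Schwarz over $l$, produces exactly the recursion you write, $D_k \le c_3\eta^2K^2\zeta^2 + c_4\eta^2 K\noiseparamone^2 + c_5\eta^2 K H^2\sum_{l<k}D_l$, whose feedback accumulates to a coefficient of order $c_5\eta^2 K^2 H^2$ on $\sup_l D_l$. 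Under the stated stepsize constraint $\eta \le 2/\sum_i\lambda_{\yquery}^i H^i$ one only obtains $\eta H = \Oh(1)$, so $\eta^2 K^2 H^2 = \Oh(K^2)$, which is not small; the induction cannot propagate, and the conclusion that "the stepsize constraint is what should make the feedback coefficient small enough" is wrong. Equivalently, Lipschitzness alone says the one-step map $q \mapsto q - \eta\nabla\fcvtot(q)$ is $(1+\eta\bar H)$-Lipschitz, so the drift can inflate by $(1+\eta\bar H)^K = \rme^{\Oh(K)}$ over a communication round.

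The paper sidesteps this by working with the pairwise differences $q_{t,k}^i - q_{t,k}^{i'}$ rather than deviations from the weighted average, decomposing $\nabla\fcv{i}(q^i) - \nabla\fcv{i'}(q^{i'}) = (\nabla\fcvtot(q^i) - \nabla\fcvtot(q^{i'})) + (\nabla\fcv{i} - \nabla\fcvtot)(q^i) - (\nabla\fcv{i'} - \nabla\fcvtot)(q^{i'})$, and then using co-coercivity of $\nabla\fcvtot$ to show $\normn{q^i - q^{i'} - \eta(\nabla\fcvtot(q^i) - \nabla\fcvtot(q^{i'}))} \le \normn{q^i - q^{i'}}$ for $\eta \le 2/\sum_i\lambda_{\yquery}^i H^i$. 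Because the shared-gradient step is then \emph{non-expansive}, the recursion is $A_k \le (1+\tfrac{1}{\epsilon})A_{k-1} + 2\eta^2(\sigma^2+\varnoise^2+(1+\epsilon)\zeta^2)$ with $A_0 = 0$; choosing $\epsilon = K-1$ the geometric factor is capped by $\rme$ over $K$ steps, yielding the stated $6K\eta^2(\cdots)$ without needing a stronger stepsize. If you want to keep your distance-to-average framework, you would need to reintroduce co-coercivity before expanding the sum over $l$ (or impose a stepsize scaling like $\eta H K \lesssim 1$, which is not what the lemma asserts).
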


\begin{proof}
    Let $\epsilon >0$, for any $i, i' \in [n]$ and any $k\in[K]$,
    \begin{align}
        \nonumber
        &\E\normbig{q_{t,k}^{i}-q_{t,k}^{i'}}^2 - 2\eta^2 (\sigma^2 + \varnoise^2)
        = \E\norm{q_{t,k-1}^{i}-q_{t,k-1}^{i'} - \eta \pr{g_{t,k-1}^i - g_{t,k-1}^{i'}}}^2 - 2\eta^2 (\sigma^2 + \varnoise^2) \\
        \nonumber
        &= \E\norm{q_{t,k-1}^{i}-q_{t,k-1}^{i'} -\eta \pr{\nabla \fcv{i}(q_{t,k-1}^i) - \nabla \fcv{i'}(q_{t,k-1}^{i'})}}^2 \\
        \nonumber
        &\qquad+ \eta^2 \E\norm{\pr{\nabla \fcv{i}(q_{t,k-1}^i) - \nabla \fcv{i'}(q_{t,k-1}^{i'})} - \pr{g_{t,k-1}^i - g_{t,k-1}^{i'}}}^2 - 2\eta^2 (\sigma^2 + \varnoise^2) \\
        \nonumber
        &\le \E \Big\|q_{t-1}^{i}-q_{t-1}^{i'}-\eta\pr{\nabla \fcvtot\prn{q_{t-1}^{i}}-\nabla \fcvtot\prn{q_{t-1}^{i'}}}
        + \eta\pr{\nabla \fcvtot\prn{q_{t-1}^{i}}-\nabla \fcv{i}\prn{q_{t-1}^{i}}-\nabla \fcvtot\prn{q_{t-1}^{i'}}+\nabla \fcv{i'} \prn{q_{t-1}^{i'}}}\Big\|^2 \\
        \nonumber
        &\le \pr{1+\frac{1}{\epsilon}} \E\norm{q_{t-1}^{i}-q_{t-1}^{i'}-\eta\pr{\nabla \fcvtot\prn{q_{t-1}^{i}}-\nabla \fcvtot\prn{q_{t-1}^{i'}}}}^2 \\
        \nonumber
        &\qquad+(1+\epsilon) \eta^2 \E\norm{\nabla \fcvtot\prn{q_{t-1}^{i}}-\nabla \fcv{i}\prn{q_{t-1}^{i}}-\nabla \fcvtot\prn{q_{t-1}^{i'}}+ \nabla \fcv{i'} \prn{q_{t-1}^{i'}}}^2 \\
        \nonumber
        &\le \pr{1+\frac{1}{\epsilon}} \E\norm{q_{t-1}^{i}-q_{t-1}^{i'}}^2
        + (1+\epsilon) \eta^2 \E\norm{\nabla \fcvtot\prn{q_{t-1}^{i}}-\nabla \fcv{i}\prn{q_{t-1}^{i}}}^2 \\
        \nonumber
        &\qquad + (1+\epsilon) \eta^2 \E\norm{\nabla \fcvtot\prn{q_{t-1}^{i'}}-\nabla \fcv{i'} \prn{q_{t-1}^{i'}}}^2 \\
        \label{eq:lemA6:5}
        &\qquad -2(1+\epsilon) \eta^2 \E\ps{\nabla \fcvtot\prn{q_{t-1}^{i}}-\nabla \fcv{i}\prn{q_{t-1}^{i}}}{\nabla \fcvtot\prn{q_{t-1}^{i'}}-\nabla \fcv{i'} \prn{q_{t-1}^{i'}}}.
    \end{align}
    The third inequality is implied by the co-coercivity: $\eta\in(0,2/\sum_{i=1}^{\nclients} \lambda_{\yquery}^i H^i]$, $\forall (q,\tilde{q})\in \R^2$,
    \begin{multline*}
        \norm{q - \tilde{q} - \eta \pr{\nabla \fcvtot(q) - \nabla \fcvtot(\tilde{q})}}^2 \\
        = \norm{\tilde{q}-q}^2 - \eta \br{2 \ps{q - \tilde{q}}{\nabla \fcvtot(q) - \nabla \fcvtot(\tilde{q})} + \eta \norm{\nabla \fcvtot(q) - \nabla \fcvtot(\tilde{q})}^2}
        \le \norm{\tilde{q}-q}^2,
    \end{multline*}
    and we also have
    \begin{multline*}
        \sum_{i=1}^{\nclients} \sum_{i'=1}^{\nclients} \lambda_{\yquery}^i \lambda_{\yquery}^{i'} \E\ps{\nabla \fcvtot\prn{q_{t-1}^{i}}-\nabla \fcv{i}\prn{q_{t-1}^{i}}}{\nabla \fcvtot\prn{q_{t-1}^{i'}}-\nabla \fcv{i'} \prn{q_{t-1}^{i'}}} \\
        = \E\norm{\sum_{i=1}^{\nclients} \lambda_{\yquery}^i \pr{\nabla \fcvtot\prn{q_{t-1}^{i}}-\nabla \fcv{i}\prn{q_{t-1}^{i}}}}^2
        \ge 0.
    \end{multline*}
    Therefore, summing~\eqref{eq:lemA6:5} gives that
    \begin{equation*}
        \sum_{i=1}^{\nclients} \sum_{i'=1}^{\nclients} \lambda_{\yquery}^i \lambda_{\yquery}^{i'} \E\normbig{q_{t,k}^{i}-q_{t,k}^{i'}}^2
        \le \pr{1+\frac{1}{\epsilon}} \sum_{i=1}^{\nclients} \sum_{i'=1}^{\nclients} \lambda_{\yquery}^i \lambda_{\yquery}^{i'} \E\norm{q_{t-1}^{i}-q_{t-1}^n}^2
        + 2 \eta^2 \pr{\sigma^2 + \varnoise^2 + (1+\epsilon) \zeta^2}.
    \end{equation*}
    Set $\epsilon=K-1$, since for any $i,i'\in[\nclients]$, $q_{t,0}^{i}=q_{t,0}^{i'}$, we get
    \begin{align*}
        \sum_{i=1}^{\nclients} \sum_{i'=1}^{\nclients} \lambda_{\yquery}^i \lambda_{\yquery}^{i'} \E\normbig{q_{t,k}^{i} - q_{t,k}^{i'}}^2
        &\le 2\eta^2 \pr{\sigma^2 + \varnoise^2 + (1 + \epsilon) \zeta^2} \sum_{k'=0}^{K-1} \pr{1+\frac{1}{\epsilon}}^{k'} \\
        &\le 6K \eta^2 \pr{\sigma^2 + \varnoise^2 + (1 + \epsilon) \zeta^2}.
    \end{align*}
    Since $\sum_{i=1}^{\nclients} \lambda_{\yquery}^{i}=1$, the Jensen's inequality yields that $ \sum_{i=1}^{\nclients} \lambda_{\yquery}^i \E\normbig{q_{t,k}^{i} - \bar{q}_{t,k}}^2 \le \sum_{i=1}^{\nclients} \sum_{i'=1}^{\nclients} \lambda_{\yquery}^i \lambda_{\yquery}^{i'} \E\normbig{q_{t,k}^{i} - q_{t,k}^{i'}}^2$, which concludes the proof.
\end{proof}

In addition, with the previous notations consider the stepsize
\begin{equation}\label{eq:def:eta-cv}
    \eta_\star=\min \ac{\eta_{0}, \prbigg{\frac{\E\norm{\bar{q}_{0} - q_\star}^2}{14 H K^2 T \br{\sigma^2 + K \zeta^2}}}^{1 / 3}},
\end{equation}
and define the average parameter
\begin{equation}\label{eq:def:qhatT}
    \hat{q}_T
    = \frac{1}{T}\sum_{t=0}^{T-1}\ac{\sum_{i=1}^n \lambda_{\yquery}^{i} \br{\frac{1}{K}\sum_{k=0}^{K-1} q_{t,k}^i}}.
\end{equation}

\begin{theorem}\label{thm:cvFL}
    Assume \Cref{ass:suppl:minimizer}-\Cref{ass:suppl:convexity}-\Cref{ass:suppl:smoothness}-\Cref{ass:suppl:variance}-\Cref{ass:suppl:heterogeneity}.
    We consider $\eta\in(0,\eta_{0}]$ with $S_t = [\nclients]$, for all $t\in[T]$.
    Then, for any $t\in\acn{0,\ldots,T-1}$, $k\in\acn{0,\ldots,K-1}$, we have
    \begin{equation*}
        \E \fcvtot\pr{\hat{q}_T}-\fcvtot\prn{q_\star}
        \le \frac{\E\norm{\bar{q}_{0} - q_\star}^2}{\eta K T}
        + 2 \eta^2 \br{6 H (K \zeta)^2 + \sigma^2 \pr{6 H K + \max_{i\in[\nclients]}{\lambda_{\yquery}^i}}},
    \end{equation*}
    where $\eta_{0}, \hat{q}_T$ are given in~\eqref{eq:def:eta0} and~\eqref{eq:def:qhatT}.
    Moreover, for $\eta=\eta_\star$ defined in~\eqref{eq:def:eta-cv} and $H\ge K^{-1}\max_{i\in[\nclients]} \lambda_{\yquery}^i$, it follows
    \begin{equation*}
        \E \fcvtot\pr{\hat{q}_T}-\fcvtot\prn{q_\star}
        \le \frac{\E\norm{\bar{q}_{0} - q_\star}^2}{\eta_{0} K T}
        + \frac{5 (\E\norm{\bar{q}_{0} - q_\star}^2)^{2/3} \br{H \pr{\sigma^2 + K \zeta^2}}^{1/3}}{(K T^2)^{1/3}}.
    \end{equation*}
\end{theorem}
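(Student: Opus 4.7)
The plan is to combine Lemma~\ref{lem:E-cvFL} and Lemma~\ref{lem:E-biasFL} through a telescoping argument, then apply Jensen's inequality, and finally optimize in $\eta$. First, I would specialize Lemma~\ref{lem:E-cvFL} to the full-participation regime $S_{t}=[\nclients]$, which sets $\prob(i\in S_{t+1})=1$. This instantly yields $\varnoise_\star^2=0$ (the aggregated gradient exactly recovers $\nabla \fcvtot(q_\star)$) and reduces $\sum_i (\lambda_{\yquery}^i)^2/\prob(i\in S_{t+1})$ to $\sum_i(\lambda_{\yquery}^i)^2 \le \max_{i}\lambda_{\yquery}^i$. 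Thus, the one-step inequality takes the clean form
\begin{equation*}
  \E\br{\fcvtot(\bar{q}_{t,k})-\fcvtot(q_\star)}
  \le \frac{1}{\eta}\br{\E\norm{\bar{q}_{t,k}-q_\star}^2 - \E\norm{\bar{q}_{t,k+1}-q_\star}^2}
  + 2H\sum_{i=1}^{\nclients}\lambda_{\yquery}^i\E\norm{\bar{q}_{t,k}-q_{t,k}^{i}}^2
  + \eta^2\sigma^2\max_{i}\lambda_{\yquery}^i.
\end{equation*}

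Next, I would control the client-drift term by Lemma~\ref{lem:E-biasFL} with $\varnoise=0$, giving $\sum_i\lambda_{\yquery}^i\E\|\bar{q}_{t,k}-q_{t,k}^{i}\|^2\le 6K\eta^2(\sigma^2+K\zeta^2)$. Telescoping the inequality over $k\in\{0,\dots,K-1\}$ within one round and then over $t\in\{0,\dots,T-1\}$ across rounds is straightforward once one observes that the server update in \Cref{algo:Dp-moreau} yields $\bar{q}_{t,K}=\bar{q}_{t+1,0}$ under full participation, so the squared-distance terms collapse into the single telescoping pair $\E\|\bar{q}_0-q_\star\|^2 - \E\|\bar{q}_{T,0}-q_\star\|^2 \le \dinit$. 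Then Jensen's inequality applied to the convex function $\fcvtot$ with $\hat{q}_T=(KT)^{-1}\sum_{t,k}\bar{q}_{t,k}$ gives $\fcvtot(\hat{q}_T)\le(KT)^{-1}\sum_{t,k}\fcvtot(\bar{q}_{t,k})$, and dividing through by $KT$ produces the first display of the theorem.

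For the second display, the bound from the first part has the form $A\eta^{-1}+B\eta^2$ with $A=\dinit/(KT)$ and $B=2[6H(K\zeta)^2+\sigma^2(6HK+\max_i\lambda_{\yquery}^i)]$. Under the assumption $H\ge K^{-1}\max_i\lambda_{\yquery}^i$, the $\sigma^2\max_i\lambda_{\yquery}^i$ term is absorbed into $6HK\sigma^2$ up to a constant factor, so $B$ is of order $HK(\sigma^2+K\zeta^2)$. The unconstrained minimizer in $\eta$ is then of order $(A/B)^{1/3}\asymp [\dinit/(HK^2T(\sigma^2+K\zeta^2))]^{1/3}$, matching the definition of $\eta_\star$ in~\eqref{eq:def:eta-cv}. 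Substituting this back produces the $(KT^2)^{-1/3}$ rate. When the unconstrained optimum exceeds $\eta_0$, one uses $\eta=\eta_0$ and the bound is dominated by $\dinit/(\eta_0 KT)$, which is precisely the first summand of the second display.

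The main obstacle is bookkeeping: checking that $\eta\le\eta_0$ (required by Lemma~\ref{lem:E-cvFL}) is consistent with the stepsize constraint $\eta\le 2/\sum_i\lambda_{\yquery}^iH^i$ of Lemma~\ref{lem:E-biasFL}. This follows since $\sum_i\lambda_{\yquery}^iH^i\le H$ and $\eta_0\le 1/(10H)$, so the tighter constraint from Lemma~\ref{lem:E-cvFL} subsumes the other. A secondary care-point is handling the $\min$ in $\eta_\star$: in the regime where $\eta_\star=\eta_0$, the interpolation term is automatically dominated by the leading $\dinit/(\eta_0KT)$ bound, so the stated two-term bound remains valid.
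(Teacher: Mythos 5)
Your proposal matches the paper's proof essentially line by line: specialize Lemma~\ref{lem:E-cvFL} to full participation (which kills $\varnoise_\star^2$ and collapses $\sum_i(\lambda_{\yquery}^i)^2$ to $\le\max_i\lambda_{\yquery}^i$ since $\sum_i\lambda_{\yquery}^i=1$), plug in the drift bound from Lemma~\ref{lem:E-biasFL} with $\varnoise=0$, telescope using $\bar{q}_{t,K}=\bar{q}_{t+1,0}$, apply convexity/Jensen to pass from the Cesàro average to $\hat{q}_T$, and then optimize $\eta$ in the $A\eta^{-1}+B\eta^{2}$ form. Your two care-points (stepsize compatibility between the lemmas, and the regime where $\eta_\star=\eta_0$) are exactly the bookkeeping the paper silently uses, so the argument is correct and the route is the same.
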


\begin{proof}
    For any $\eta \le \eta_{0}$, using \Cref{lem:E-cvFL} we have
    \begin{equation}\label{eq:thm:cv:1}
        \E\br{\fcvtot\pr{\bar{q}_{t,k}}} - \fcvtot\prn{q_\star} \le \frac{1}{\eta} \E\norm{\bar{q}_{t,k}-q_{\star}}^2-\frac{1}{\eta} \E\norm{\bar{q}_{t,k+1}-q_{\star}}^2
        + 2 H \sum_{i=1}^{\nclients} \lambda_{\yquery}^i \E\norm{\bar{q}_{t,k}-q_{t,k}^{i}}^2
        + 2 \eta^2 \sigma^2 \max_{i\in[\nclients]}\acn{\lambda_{\yquery}^i}.
    \end{equation}
    Moreover, by \Cref{lem:E-biasFL} it follows that
    \begin{equation}\label{eq:thm:cv:2}
        \sum_{i=1}^{\nclients}\lambda_{\yquery}^i \E\norm{q_{t,k}^{i}-\bar{q}_{t,k}}^2 \le 6 K \eta^2 (\sigma^2 +  K \zeta^2).
    \end{equation}
    Combining~\eqref{eq:thm:cv:1} and~\eqref{eq:thm:cv:2}, we obtain
    \begin{align*}
        \E{\br{\fcvtot\pr{\bar{q}_{t,k}}-\fcvtot\prn{q_\star}}}
        &\le \frac{1}{\eta} \E\norm{\bar{q}_{t,k}-q_{\star}}^2-\frac{1}{\eta} \E\norm{\bar{q}_{t,k+1}-q_{\star}}^2
        + 12 H (K \eta \zeta)^2 + 2 (\eta \sigma)^2 \pr{6 H K + \max_{i\in[\nclients]}\acn{\lambda_{\yquery}^i}}.
   \end{align*}
    Moreover, telescoping proves that
    \begin{equation*}
        \sum_{t=0}^{T-1}\sum_{k=0}^{K-1}\br{\E\norm{\bar{q}_{t,k}-q_{\star}}^2 - \E\norm{\bar{q}_{t,k+1}-q_{\star}}^2}
        \le \E\norm{\bar{q}_{0} - q_\star}^2.
    \end{equation*}
    Therefore, the convexity \Cref{ass:suppl:convexity} gives that
    \begin{align*}
        \E\br{\fcvtot \pr{\frac{1}{K T} \sum_{t=1}^{K T} \bar{q}_{t,k}}-\fcvtot\prn{q_\star}}
        &\le \frac{1}{K T} \sum_{t=0}^{T-1} \sum_{k=0}^{K-1} \E\br{\fcvtot\pr{\bar{q}_{t,k}}-\fcvtot\prn{q_\star}} \\
        &\le \frac{\E\norm{\bar{q}_{0} - q_\star}^2}{\eta K T}
        + 2 \eta^2 \br{6 H (K \zeta)^2 + \sigma^2 \pr{6 H K + \max_{i\in[\nclients]}\acn{\lambda_{\yquery}^i}}}.
    \end{align*}
    Finally, the choice of $\eta$ provided in~\eqref{eq:def:eta-cv} ensures that
    \begin{equation*}
        \E\br{\fcvtot\pr{\hat{q}_T}} - \fcvtot\prn{q_\star}
        \le \frac{\E\norm{\bar{q}_{0} - q_\star}^2}{\eta_{0} K T}
        + \frac{5 (\E\norm{\bar{q}_{0} - q_\star}^2)^{2/3} \br{H \pr{\sigma^2 + K \zeta^2}}^{1/3}}{(K T^2)^{1/3}}.
    \end{equation*}
\end{proof}

Now, we denote $\alpha \in (0,1)$ the confidence level, and consider the functions defined for $t\in\acn{0,\ldots,T}, k\in\acn{0,\ldots,K}$ by
\begin{align*}
    \fcvtot = S_{\alpha, \widehat{\mu}_{\yquery}}^{\gamma},&
    &\fcv{i} = S_{\alpha, \widehat{\mu}_{\yquery}^i}^{\gamma}.
\end{align*}
Denote by $q_\star$ the minimizer of $S_{\alpha, \widehat{\mu}_{\yquery}}^{\gamma}=\sum_{i=1}^\nclients \lambda_{\yquery}^i S_{\alpha, \widehat{\mu}_{\yquery}^i}^{\gamma}$, which always exists.
In addition, note that \Cref{ass:suppl:smoothness} and \Cref{ass:suppl:heterogeneity} are satisfied with:
\begin{align}\label{eq:def:cv-varS}
    H^i = \frac{1}{\gamma},&
    &\zeta = \max_{i \in [\nclients]}{\normn{\nabla \lossi - \nabla \lossglobal}_{\infty}^{1/2}}.
\end{align}
\begin{corollary}\label{cor:fedmoreau-cv}
    Let $\gamma\in(0, (\max_{i\in[\nclients]} \lambda_{\yquery}^i)^{-1} K]$ and consider the stepsize $\eta_0=\gamma / 10$.
    Then, for any $t\in\acn{0,\ldots,T-1}$, $k\in\acn{0,\ldots,K-1}$, we have
    \begin{equation*}
        \E S_{\alpha, \widehat{\mu}_{\yquery}}^{\gamma}\pr{\hat{q}_T}-S_{\alpha, \widehat{\mu}_{\yquery}}^{\gamma}\prn{q_\star}
        \le \frac{\E\norm{\bar{q}_{0} - q_\star}^2}{\eta_{0} K T}
        + \frac{5 (\sigma^2 +  K \zeta^2)^{1/3} (\E\norm{\bar{q}_{0} - q_\star}^2)^{2/3}}{(\gamma K T^{2})^{1/3}},
    \end{equation*}
    where $\hat{q}_T$ is provided in~\eqref{eq:def:qhatT}.
\end{corollary}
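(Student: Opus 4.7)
The plan is to recognize \Cref{cor:fedmoreau-cv} as a direct specialization of \Cref{thm:cvFL} to the choice $\fcvtot = \lossglobal$, $\fcv{i} = \lossi$, under full client participation $S_t = [\nclients]$. I would carry out three tasks: (i) verify the structural assumptions of \Cref{thm:cvFL}; (ii) identify the smoothness constant; (iii) check that the proposed stepsize $\eta_0 = \gamma/10$ coincides with the generic $\eta_0$ of~\eqref{eq:def:eta0}.

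For (i), \Cref{ass:suppl:minimizer} holds because the pinball loss is coercive, so is its Moreau envelope, and so is any convex combination. \Cref{ass:suppl:convexity} (convexity/differentiability of each $\lossi$) is classical for Moreau envelopes of proper convex lower-semicontinuous functions. \Cref{ass:suppl:variance} is immediate with $S_t = [\nclients]$: the subsampling factor $\indi{S_{t+1}}(i)/\prob(i \in S_{t+1})$ equals $1$ identically, so the induced estimator of $\nabla \fcv{i}$ is exact and $\varnoise = \varnoise_\star = 0$. The Gaussian DP noise $z_{t,k}^i \sim \gauss(0,\noiseparamone^2)$ contributes the separate $\sigma^2$ term in the bound of \Cref{thm:cvFL}. \Cref{ass:suppl:heterogeneity} is the very definition of $\zeta$ in~\eqref{eq:def:cv-varS}.

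For (ii), I would invoke the standard fact that the Moreau envelope of a proper lower-semicontinuous convex function with parameter $\gamma$ has a $1/\gamma$-Lipschitz gradient~\citep[Thm.~2.26]{rockafellar2009variational}. Since $\lossi(q) = \E_{V \sim \widehat{\mu}_{\yquery}^i}[\pinballmoreau{V}{q}]$ is a convex mixture of such envelopes, $\nabla \lossi$ is also $1/\gamma$-Lipschitz, so $H = H^i = 1/\gamma$.

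For (iii), plugging $H^i = 1/\gamma$ and $\prob(i \in S_{t+1}) = 1$ into~\eqref{eq:def:eta0} yields $\eta_0 = \tfrac{\gamma}{10}\min\bigl\{1,\, 1/\max_i \lambda_{\yquery}^i\bigr\}$. Since $\widehat{\mu}_{\yquery} = \sum_i \lambda_{\yquery}^i \widehat{\mu}_{\yquery}^i$ is a probability measure with each $\widehat{\mu}_{\yquery}^i$ a probability measure, we have $\sum_i \lambda_{\yquery}^i = 1$ and thus $\max_i \lambda_{\yquery}^i \le 1$, giving $\eta_0 = \gamma/10$ as claimed. The side condition $H \ge K^{-1}\max_i \lambda_{\yquery}^i$ in the second part of \Cref{thm:cvFL} becomes $\gamma \le K/\max_i \lambda_{\yquery}^i$, precisely the upper bound on $\gamma$ in the corollary's hypothesis. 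Substituting $H = 1/\gamma$ into the second conclusion of \Cref{thm:cvFL} turns $(H(\sigma^2 + K\zeta^2))^{1/3}$ into $((\sigma^2 + K\zeta^2)/\gamma)^{1/3}$, producing the stated rate. The only genuinely non-routine ingredient is the identification $H^i = 1/\gamma$; everything else is direct bookkeeping on top of \Cref{thm:cvFL}.
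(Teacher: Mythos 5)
Your proposal is correct and mirrors the paper's proof exactly: the paper's proof of \Cref{cor:fedmoreau-cv} is precisely the one-line observation that \Cref{ass:suppl:minimizer}--\Cref{ass:suppl:heterogeneity} hold with $H^i = 1/\gamma$ and $\zeta$ as in~\eqref{eq:def:cv-varS}, after which \Cref{thm:cvFL} applies. Your unpacking of the bookkeeping (the identification $H^i=1/\gamma$ from the Moreau-envelope smoothness, the simplification of $\eta_0$ using $\sum_i \lambda_{\yquery}^i = 1$, and the translation of $H \ge K^{-1}\max_i\lambda_{\yquery}^i$ into the stated bound on $\gamma$) is exactly what the paper leaves implicit.
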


\begin{proof}
    Since \Cref{ass:suppl:minimizer}-\Cref{ass:suppl:convexity}-\Cref{ass:suppl:smoothness}-\Cref{ass:suppl:variance}-\Cref{ass:suppl:heterogeneity} are satisfied with $\acn{H^i}_{i \in [\nclients]},\zeta$ provided in~\eqref{eq:def:cv-varS}, applying \Cref{thm:cvFL} concludes the proof.
\end{proof}

\section{Theoretical Coverage Guarantee}\label{sec:coverage-fedmoreau}

\subsection{General coverage guarantee}\label{subsec:predsec:Ywidehat}

Consider an increasing sequence $\acn{v_k}_{k \in [\tcount + 1]}\in(\R \cup \acn{+\infty})^{N + 1}$ and $\acn{p_k}_{k \in [\tcount + 1]} \in \Delta_{\tcount+1}$.
For any $\alpha \in [0, 1]$, recall that 
\begin{equation*}
    \textstyle
    \q{1 - \alpha}\pr{\sum_{k = 1}^{N + 1} p_k \delta_{v_k}}
    = \inf\ac{t \in [-\infty, \infty]\colon \prob\pr{V\le t} \ge 1 - \alpha, \text{ where } V \sim \sum_{k = 1}^{N + 1} p_k \delta_{v_k}}.
\end{equation*}

\begin{lemma}\label{lem:quantile-inequality:basis}
    Let $\acn{v_\ell}_{\ell \in [\tcount + 1]}$ be an increasing sequence and $\acn{p_\ell}_{\ell\in [\tcount + 1]} \in \Delta_{\tcount+1}$.
    If $V \sim \sum_{l = 1}^{N + 1} p_l \delta_{v_l}$, then, for all $\alpha \in [0, 1)$, we have
    \begin{equation*}
        \textstyle 1 - \alpha \le \prob\pr{V\le\q{1 - \alpha}\pr{\sum_{k = 1}^{N + 1} p_k \delta_{v_k}}} < 1 - \alpha + \max_{k = 1}^{N + 1} \ac{p_k}.
    \end{equation*}
\end{lemma}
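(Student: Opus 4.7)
Let $F$ denote the CDF of $V$, namely $F(t) = \sum_{\ell \colon v_\ell \le t} p_\ell$, so that $\q{1 - \alpha}\prn{\sum_{k} p_k \delta_{v_k}} = \inf\acn{t \in [-\infty,\infty]\colon F(t) \ge 1 - \alpha}$. My plan is to locate this infimum explicitly within the discrete support and then read off both bounds from a one-step comparison between $F$ evaluated at $v_{k_\star}$ and at $v_{k_\star - 1}$, where $k_\star$ is the minimal index whose cumulative mass reaches $1 - \alpha$.

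First, I would observe that because $\sum_{\ell = 1}^{N+1} p_\ell = 1 \ge 1 - \alpha$, the set $\{k\in[N+1]\colon \sum_{\ell = 1}^{k} p_\ell \ge 1 - \alpha\}$ is nonempty, so we can define $k_\star$ as its minimum. Since $F$ is right-continuous, piecewise constant, and jumps only at the $v_\ell$'s, the infimum defining the quantile is attained at $v_{k_\star}$; hence $\q{1 - \alpha}\prn{\sum_k p_k \delta_{v_k}} = v_{k_\star}$ (with the natural convention if $v_{k_\star} = +\infty$, in which case $F(v_{k_\star}) = 1$).

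The lower bound is then immediate: by definition of $k_\star$,
\begin{equation*}
\prob\prn{V \le v_{k_\star}} = \sum_{\ell = 1}^{k_\star} p_\ell \ge 1 - \alpha.
\end{equation*}
For the upper bound, I would split on $k_\star = 1$ versus $k_\star \ge 2$. If $k_\star = 1$, then $F(v_1) = p_1 \le \max_k p_k < 1 - \alpha + \max_k p_k$, using $\alpha < 1$. If $k_\star \ge 2$, minimality of $k_\star$ gives $\sum_{\ell < k_\star} p_\ell < 1 - \alpha$, so
\begin{equation*}
\prob\prn{V \le v_{k_\star}} = \sum_{\ell = 1}^{k_\star - 1} p_\ell + p_{k_\star} < 1 - \alpha + p_{k_\star} \le 1 - \alpha + \max_{k} p_k.
\end{equation*}

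There is no genuine difficulty here; the only minor care needed is the boundary case $v_{k_\star} = +\infty$, which happens exactly when $k_\star = N+1$ with $v_{N+1} = +\infty$. In that regime $F(v_{k_\star}) = 1$ and minimality forces $p_{N+1} > \alpha$, so the strict inequality $1 < 1 - \alpha + p_{N+1}$ is preserved, matching the case $k_\star \ge 2$ above. This covers every situation and completes the argument.
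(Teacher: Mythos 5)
Your proof is correct and follows essentially the same route as the paper: identify the minimal index $k_\star$ at which the cumulative mass $\sum_{\ell \le k_\star} p_\ell$ first reaches $1-\alpha$, note that the quantile is $v_{k_\star}$, and derive both bounds from $\sum_{\ell < k_\star} p_\ell < 1-\alpha \le \sum_{\ell \le k_\star} p_\ell$. The paper handles the case $k_\star = 1$ and the possibility $v_{N+1} = +\infty$ uniformly via the empty-sum convention $\sum_{\ell=1}^{0} p_\ell = 0$, so your explicit case split and boundary discussion, while correct, are not strictly necessary.
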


\begin{proof}
    Fix $\alpha \in [0, 1)$, and by convention set $\sum_{k=1}^{0} p_k=0$.
    There exists $k\in[\tcount+1]$, such that $1 - \alpha \in (\sum_{l=1}^{k-1}p_l,\sum_{l=1}^{k}p_l]$, hence $\textstyle \q{1 - \alpha}\prt{\sum_{l=1}^{\tcount+1}p_l \delta_{v_l}} = v_{k}$.
    This last identity implies that
    \begin{equation*}
        \textstyle 1 - \alpha \le 
        \prob\pr{V\le\q{1 - \alpha}\pr{\sum_{l = 1}^{N + 1} p_l \delta_{v_l}}}
        = \sum_{l=1}^{k}p_l
        < 1 - \alpha + \max_{k = 1}^{\tcount+1} \ac{p_k}.
    \end{equation*}
\end{proof}

Denote $\acn{(X_k,Y_k)}_{k\in[\tcount+1]}$ a set of pairwise independent random variables and for $k\in[\tcount+1]$, denote $Z_{k}=(X_k,Y_k)$, $V_k=V(X_k,Y_k)$.
Let $\mathfrak{S}_{\tcount+1}$ be the set of all permutations of $[\tcount+1]$ and consider $\mathfrak{S}_{\tcount+1}^{k}=\acn{\sigma\in \mathfrak{S}_{\tcount+1}\colon \sigma(\tcount+1)=k}$.
Moreover, write $f$ the joint density of $\acn{Z_k}_{k\in[\tcount+1]}$, and for all $k\in[\tcount+1]$ define
\begin{equation}\label{eq:def:approxN-probyy}
    \qweightz[k][z_{1:\tcount+1}] =
    \begin{cases}
        \frac{1}{\tcount+1}&\text{if } \sum_{\sigma\in\mathfrak{S}_{\tcount+1}} f(z_{\sigma(1)},\ldots,z_{\sigma(\tcount+1)})=0 \\
        \frac{\sum_{\sigma\in\mathfrak{S}_{\tcount+1}^{k}} f(z_{\sigma(1)},\ldots,z_{\sigma(\tcount+1)})}{{\sum_{\sigma\in\mathfrak{S}_{\tcount+1}} f(z_{\sigma(1)},\ldots,z_{\sigma(\tcount+1)})}} & \text{otherwise}    
    \end{cases}.
\end{equation}

\begin{lemma}\label{lem:int-exactweights}
    For any $\alpha\in [0,1)$, we have
    \begin{multline*}
        \int \1_{v_{\tcount+1}\le \q{1 - \alpha}\pr{\sum_{k=1}^{\tcount+1} \qweightz[k][z_{1:\tcount+1}] \delta_{v_{k}}}} f(z_1,\ldots,z_{\tcount+1}) \rmd z_1\cdots\rmd z_{\tcount+1}
        \\
        = \int \br{\sum_{k=1}^{\tcount+1} \qweightz[k][z_{1:\tcount+1}] \1_{v_{k}\le \q{1 - \alpha}\pr{\sum_{\bar{k}=1}^{\tcount+1} \qweightz[\bar{k}][z_{1:\tcount+1}] \delta_{v_{\bar{k}}}}}} \br{\sum_{\sigma\in\mathfrak{S}_{\tcount+1}} f(z_{\sigma(1)},\ldots,z_{\sigma(\tcount+1)})} \frac{\rmd z_{1}\cdots\rmd z_{\tcount+1}}{(\tcount+1)!}.
    \end{multline*}
\end{lemma}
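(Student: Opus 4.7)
The identity is a symmetrization statement for the weighted conformal coverage, and the natural strategy is to average over the action of $\mathfrak{S}_{\tcount+1}$ on the integration variables. The key ingredient is an equivariance property of the weights defined in~\eqref{eq:def:approxN-probyy}: for every $\sigma \in \mathfrak{S}_{\tcount+1}$ and every $k \in [\tcount+1]$,
\begin{equation*}
\qweightz[k][z_{\sigma(1):\sigma(\tcount+1)}] = \qweightz[\sigma(k)][z_{1:\tcount+1}].
\end{equation*}
I would verify this first: reindexing the numerator of~\eqref{eq:def:approxN-probyy} by $\rho = \sigma \circ \tau$ on $\tau \in \mathfrak{S}_{\tcount+1}^{k}$ gives a sum over $\rho \in \mathfrak{S}_{\tcount+1}^{\sigma(k)}$, while the denominator (a sum over all of $\mathfrak{S}_{\tcount+1}$) is invariant. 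The degenerate case where the symmetrized density vanishes is consistent since both sides equal $1/(\tcount+1)$ by convention.

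Next I would reindex the LHS. For fixed $\sigma$, the change of variables $z_i \mapsto z_{\sigma(i)}$ together with the equivariance above and the obvious identity $\sum_{k} \qweightz[\sigma(k)][z_{1:\tcount+1}] \delta_{v_{\sigma(k)}} = \sum_{j} \qweightz[j][z_{1:\tcount+1}] \delta_{v_j}$ yields
\begin{equation*}
\mathrm{LHS} = \int \1_{v_{\sigma(\tcount+1)} \le Q(z_{1:\tcount+1})} \, f(z_{\sigma(1)}, \ldots, z_{\sigma(\tcount+1)}) \, \rmd z_1 \cdots \rmd z_{\tcount+1},
\end{equation*}
where I abbreviate $Q(z_{1:\tcount+1}) := \q{1-\alpha}\pr{\sum_{j=1}^{\tcount+1} \qweightz[j][z_{1:\tcount+1}] \delta_{v_j}}$.

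I would then average the displayed identity over $\sigma \in \mathfrak{S}_{\tcount+1}$ and group permutations by the value $k = \sigma(\tcount+1)$:
\begin{equation*}
\mathrm{LHS} = \frac{1}{(\tcount+1)!} \sum_{k=1}^{\tcount+1} \int \1_{v_k \le Q(z_{1:\tcount+1})} \br{\sum_{\sigma \in \mathfrak{S}_{\tcount+1}^{k}} f(z_{\sigma(1)},\ldots,z_{\sigma(\tcount+1)})} \rmd z_1 \cdots \rmd z_{\tcount+1}.
\end{equation*}
The inner sum is, by the very definition~\eqref{eq:def:approxN-probyy}, equal to $\qweightz[k][z_{1:\tcount+1}] \cdot \sum_{\sigma \in \mathfrak{S}_{\tcount+1}} f(z_{\sigma(1):\sigma(\tcount+1)})$ on the set where the symmetrized density is positive, and both the target identity and this substitution are trivially consistent on the null set where it vanishes (both integrands are zero there). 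Moving the sum over $k$ inside the integral produces the RHS.

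\textbf{Main obstacle.} The only nontrivial point is the equivariance property of the weights and its compatibility with the conventional value $1/(\tcount+1)$ on the degenerate set; once that is in hand, the rest is a standard symmetrization over $\mathfrak{S}_{\tcount+1}$ and a regrouping by the last coordinate of the permutation.
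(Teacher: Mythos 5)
Your proposal is correct and follows essentially the same route as the paper: establish permutation-equivariance of the weights $\qweightz[k][\cdot]$ (and hence invariance of the weighted empirical measure and its quantile), symmetrize the integral by averaging over $\mathfrak{S}_{\tcount+1}$, group permutations by the value $k = \sigma(\tcount+1)$, and identify the inner sum over $\mathfrak{S}_{\tcount+1}^{k}$ with $\qweightz[k][z_{1:\tcount+1}]$ times the full symmetrized density via the defining formula~\eqref{eq:def:approxN-probyy}.
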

\begin{proof}
    First, let's show the invariance of $\sigma\in\mathfrak{S}_{\tcount+1}\mapsto \q{1 - \alpha}\prn{\sum_{k=1}^{\tcount+1} \qweightz[\sigma(k)][z_{\sigma(1):\sigma(\tcount+1)}] \delta_{v_{\sigma(k)}}}\in\R$. For that, fix $\tilde{\sigma}\in \mathfrak{S}_{\tcount+1}$. The invariance is immediate when $\sum_{\sigma\in\mathfrak{S}_{\tcount+1}} f(z_{\sigma(1)},\ldots,z_{\sigma(\tcount+1)}) = 0$. Therefore, assume that $\sum_{\sigma\in\mathfrak{S}_{\tcount+1}} f(z_{\sigma(1)},\ldots,z_{\sigma(\tcount+1)})\neq 0$. We get
    \begin{align*}
        \sum_{k=1}^{\tcount+1} \qweightz[\tilde{\sigma}(k)][z_{\tilde{\sigma}(1):\tilde{\sigma}(\tcount+1)}] \delta_{v_{\tilde{\sigma}(k)}}
        &= \sum_{k=1}^{\tcount+1} \frac{{\sum_{\sigma\in\mathfrak{S}_{\tcount+1}^{\tilde{\sigma}(k)}} f(z_{\sigma(1)},\ldots,z_{\sigma(\tcount+1)})}}{{\sum_{\sigma\in\mathfrak{S}_{\tcount+1}} f(z_{\sigma(1)},\ldots,z_{\sigma(\tcount+1)})}} \delta_{v_{\tilde{\sigma}(k)}} \\
        &= \sum_{k=1}^{\tcount+1} \frac{{\sum_{\sigma\in\mathfrak{S}_{\tcount+1}^{k}} f(z_{\sigma(1)},\ldots,z_{\sigma(\tcount+1)})}}{{\sum_{\sigma\in\mathfrak{S}_{\tcount+1}} f(z_{\sigma(1)},\ldots,z_{\sigma(\tcount+1)})}} \delta_{v_{k}}
        = \sum_{k=1}^{\tcount+1} \qweightz[k][z_{1:\tcount+1}] \delta_{v_{k}}.
    \end{align*}
    Moreover, we can write
    \begin{align*}
        &\int \1_{v_{\tcount+1}\le \q{1 - \alpha}\pr{\sum_{k=1}^{\tcount+1} \qweightz[k][z_{1:\tcount+1}] \delta_{v_{k}}}} f(z_1,\ldots,z_{\tcount+1}) \rmd z_1\cdots\rmd z_{\tcount+1} \\
        &= \sum_{\sigma\in\mathfrak{S}_{\tcount+1}} \int \1_{v_{\sigma(\tcount+1)}\le \q{1 - \alpha}\pr{\sum_{\bar{k}=1}^{\tcount+1} \qweightz[\sigma(\bar{k})][z_{\sigma(1):\sigma(\tcount+1)}] \delta_{v_{\sigma(\bar{k})}}}} f(z_{\sigma(1)},\ldots,z_{\sigma(\tcount+1)}) \frac{\rmd z_{\sigma(1)}\cdots\rmd z_{\sigma(\tcount+1)}}{(\tcount+1)!} \\
        &= \sum_{k=1}^{\tcount+1} \sum_{\sigma\in\mathfrak{S}_{\tcount+1}^{k}} \int \1_{v_{\sigma(\tcount+1)}\le \q{1 - \alpha}\pr{\sum_{\bar{k}=1}^{\tcount+1} \qweightz[\sigma(\bar{k})][z_{\sigma(1):\sigma(\tcount+1)}] \delta_{v_{\sigma(\bar{k})}}}} f(z_{\sigma(1)},\ldots,z_{\sigma(\tcount+1)}) \frac{\rmd z_{\sigma(1)}\cdots\rmd z_{\sigma(\tcount+1)}}{(\tcount+1)!} \\
        &= \sum_{k=1}^{\tcount+1} \int \1_{v_{k}\le \q{1 - \alpha}\pr{\sum_{\bar{k}=1}^{\tcount+1} \qweightz[\bar{k}][z_{1:\tcount+1}] \delta_{v_{\bar{k}}}}} \br{\sum_{\sigma\in\mathfrak{S}_{\tcount+1}^{k}} f(z_{\sigma(1)},\ldots,z_{\sigma(\tcount+1)})} \frac{\rmd z_{1}\cdots\rmd z_{\tcount+1}}{(\tcount+1)!} \\
        &= \sum_{k=1}^{\tcount+1} \int \1_{v_{k}\le \q{1 - \alpha}\pr{\sum_{\bar{k}=1}^{\tcount+1} \qweightz[\bar{k}][z_{1:\tcount+1}] \delta_{v_{\bar{k}}}}} \qweightz[k][z_{1:\tcount+1}] \br{\sum_{\sigma\in\mathfrak{S}_{\tcount+1}} f(z_{\sigma(1)},\ldots,z_{\sigma(\tcount+1)})} \frac{\rmd z_{1}\cdots\rmd z_{\tcount+1}}{(\tcount+1)!}.
    \end{align*}
\end{proof}

Given $\mathrm{z}=(\xquery,\yquery)\in\XC\times\YC$ define 
\begin{align*}
    D_{\tcount}^{\mathrm{z}}=\pr{z_1,\ldots,z_{\tcount},\mathrm{z}},&
    &\mu_{\yquery}^{\tcount} = \qweightz[\tcount+1][D_{\tcount}^{\mathrm{z}}] \delta_{1} + \sum_{k=1}^{\tcount} \qweightz[k][D_{\tcount}^{\mathrm{z}}] \delta_{V_k},
\end{align*}
and consider the prediction set given by
\begin{align*}\label{eq:def:CalphaN}
    \mathcal{C}_{\alpha,\mu^{\tcount}}(\xquery) = \ac{\yquery\in\YC\colon V(\xquery,\yquery) \le \q{1-\alpha}(\mu_{\yquery}^{\tcount})}.
\end{align*}
\begin{theorem}\label{lem:YinCN}
    Assume there are no ties between $\acn{V_k}_{k \in [\tcount+1]}$ almost surely.
    Then, for any $\alpha \in [0, 1)$, we have
    \begin{equation*}
        1 - \alpha
        \le \prob\pr{Y_{\tcount+1} \in \mathcal{C}_{\alpha,\mu^{\tcount}}(X_{\tcount+1})}
        \le 1 - \alpha + \E\br{\max_{k=1}^{\tcount+1} \acn{\qweightz[k][Z_{1:\tcount+1}]}},
    \end{equation*}
    where $\qweightz[k][Z_{1:\tcount+1}]$ is defined in~\eqref{eq:def:approxN-probyy}.
\end{theorem}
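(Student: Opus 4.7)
My plan is to reduce the coverage event for $\mu^{\tcount}$, whose last atom sits at the conservative value $1$, to the analogous event for the ``oracle'' mixture $\widetilde{\mu}_{\yquery} := \sum_{k=1}^{\tcount+1} \qweightz[k][D_\tcount^{(X_{\tcount+1},\yquery)}] \delta_{V_k}$ that places the last atom at the true test score $V_{\tcount+1}$. Once on $\widetilde{\mu}$, the two existing lemmas do all the work: Lemma~\ref{lem:int-exactweights} turns the probability into an expectation of the weighted CDF evaluated at its own quantile, and Lemma~\ref{lem:quantile-inequality:basis} pins this quantity between $1-\alpha$ and $1-\alpha+\max_k \qweightz[k]$.

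\textbf{Step 1 (reduction to $\widetilde{\mu}$).} Since $V_k\in[0,1]$ and $V_{\tcount+1}\le 1$, the CDF of $\widetilde{\mu}$ dominates that of $\mu^{\tcount}$ pointwise, so $\q{1-\alpha}(\widetilde{\mu}_{Y_{\tcount+1}}) \le \q{1-\alpha}(\mu^{\tcount}_{Y_{\tcount+1}})$. This already gives the inclusion $\{V_{\tcount+1}\le \q{1-\alpha}(\widetilde{\mu})\}\subseteq\{V_{\tcount+1}\le \q{1-\alpha}(\mu^{\tcount})\}$, which takes care of the lower bound once Step~2 is in place. For the matching upper bound I argue that, under no ties, the two events actually coincide: on $\{V_{\tcount+1}\le \q{1-\alpha}(\mu^{\tcount})\}$, for any $t<V_{\tcount+1}$ the two CDFs agree (neither sees the atom at $V_{\tcount+1}$ or at $1$, which are distinct from $V_1,\ldots,V_\tcount$ by no-ties), and $F_{\mu^{\tcount}}(t)<1-\alpha$ because $t<\q{1-\alpha}(\mu^{\tcount})$; hence $F_{\widetilde{\mu}}(t)<1-\alpha$ for all such $t$, i.e. $\q{1-\alpha}(\widetilde{\mu})\ge V_{\tcount+1}$.

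\textbf{Step 2 (symmetrization and quantile inequality).} I rewrite $\prob(V_{\tcount+1}\le \q{1-\alpha}(\widetilde{\mu}_{Y_{\tcount+1}}))$ as $\int \1_{v_{\tcount+1}\le \q{1-\alpha}(\widetilde{\mu})}\, f(z_{1:\tcount+1})\,\rmd z_1\cdots\rmd z_{\tcount+1}$ with $v_k=V(z_k)$, which is precisely the LHS of Lemma~\ref{lem:int-exactweights}. The lemma transforms it into
\[
\int \Big(\textstyle\sum_{k=1}^{\tcount+1} \qweightz[k][z_{1:\tcount+1}]\,\1_{v_k\le \q{1-\alpha}(\widetilde{\mu})}\Big)\, \tilde{f}(z_{1:\tcount+1})\, \frac{\rmd z_1\cdots\rmd z_{\tcount+1}}{1},
\]
where $\tilde{f}$ is the symmetrized density. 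For each realization the integrand equals $F_{\widetilde{\mu}}(\q{1-\alpha}(\widetilde{\mu}))$, and Lemma~\ref{lem:quantile-inequality:basis} (applied to the atomic measure $\widetilde{\mu}$) bounds this quantity in $[\,1-\alpha,\,1-\alpha+\max_k \qweightz[k]\,)$. Integrating both bounds against $\tilde{f}$ and using that $\max_k \qweightz[k][z_{1:\tcount+1}]$ is a symmetric function of the unordered tuple $\{z_1,\ldots,z_{\tcount+1}\}$, so its $\tilde{f}$-expectation equals its $f$-expectation, yields the claimed two-sided bound.

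\textbf{Main obstacle.} The delicate part is the upper-bound direction of Step~1: the no-ties hypothesis must be used to rule out the degenerate case where the mass removed from $V_{\tcount+1}$ and reassigned to $1$ would strictly inflate the quantile and extend the coverage event beyond that of $\widetilde{\mu}$. A secondary point is verifying the permutation-invariance of $\max_k \qweightz[k]$, but this is immediate from the definition of the weights as ratios of sums over $\mathfrak{S}_{\tcount+1}^k$, which is also the invariance exploited inside the proof of Lemma~\ref{lem:int-exactweights}.
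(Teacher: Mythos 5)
Your argument follows the paper's own proof of \Cref{lem:YinCN} essentially verbatim: replace the conservative atom $\delta_1$ by $\delta_{V_{\tcount+1}}$ and observe the two coverage events coincide (the paper's step marked $(\star)$), then apply \Cref{lem:int-exactweights} to pass to the symmetrized density and \Cref{lem:quantile-inequality:basis} pointwise before integrating, using that $\max_k \qweightz[k][z_{1:\tcount+1}]$ is permutation-invariant. The only small imprecision is where you locate the role of the no-ties hypothesis: it is not actually needed for your Step~1 event identity (for any $t<V_{\tcount+1}\le 1$ the two CDFs agree unconditionally, since neither the atom at $V_{\tcount+1}$ nor the one at $1$ contributes), but it is what guarantees that the atomic support fed to \Cref{lem:quantile-inequality:basis} in Step~2 is strictly increasing so that the upper bound $1-\alpha+\max_k p_k$ holds, which is precisely how the paper invokes it by restricting to the full-measure set $E$.
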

\begin{proof}
    Let $\alpha$ be in $[0, 1)$ and for any $(x_k,y_k)\in\XC\times\YC$, denote $z_k=(x_k,y_k)$, $v_k=V(x_k,y_k)$. First, we can write
    \begin{align*}
        \nonumber
        &\prob\pr{Y_{\tcount+1} \in \mathcal{C}_{\alpha,\mu^{\tcount}}(X_{\tcount+1})}
        = \prob\pr{Y_{\tcount+1}\in \ac{\yquery \in \YC\colon V(X_{\tcount+1}, \yquery) \le \q{1 - \alpha}\prn{\mu_{\yquery}^{\tcount}}}} \\
        \nonumber
        &=\prob\pr{V(X_{\tcount+1},Y_{\tcount+1}) \le \q{1 - \alpha}\prn{\mu_{Y_{\tcount+1}}^{\tcount}}} \\
        \nonumber
        &\overset{(\star)}{=}\textstyle\prob\pr{V(X_{\tcount+1}, Y_{\tcount+1}) \le \q{1 - \alpha}\pr{\sum_{k=1}^{\tcount+1} \qweightz[k][Z_{1:\tcount+1}] \delta_{V_{k}}}} \\
        &=\int_{(\XC\times\YC)^{\tcount+1}} \1_{v_{\tcount+1}\le \q{1 - \alpha}\pr{\sum_{k=1}^{\tcount+1} \qweightz[k][z_{1:\tcount+1}] \delta_{V_{k}}}} f(z_1,\ldots,z_{\tcount+1}) \rmd z_1\cdots\rmd z_{\tcount+1}.
    \end{align*}
    Where $(\star)$ holds since 
    \begin{align*}
        &\textstyle\pr{V_{\tcount+1} \le \q{1 - \alpha}\prn{\mu_{Y_{\tcount+1}}^{\tcount}}}
        \\
        &\textstyle\iff \pr{\qweightz[\tcount+1][D_{\tcount}^{(X_{\tcount+1},Y_{\tcount+1})}] \delta_{1\le V_{\tcount+1}} + \sum_{k=1}^{\tcount} \qweightz[k][D_{\tcount}^{(X_{\tcount+1},Y_{\tcount+1})}] \delta_{V_k\le V_{\tcount+1}} \le \alpha} \\
        &\textstyle\iff \pr{\qweightz[\tcount+1][D_{\tcount}^{(X_{\tcount+1},Y_{\tcount+1})}] \delta_{V_{\tcount+1}\le V_{\tcount+1}} + \sum_{k=1}^{\tcount} \qweightz[k][D_{\tcount}^{(X_{\tcount+1},Y_{\tcount+1})}] \delta_{V_k\le V_{\tcount+1}} \le \alpha} \\
        &\textstyle\iff \pr{V(X_{\tcount+1}, Y_{\tcount+1}) \le \q{1 - \alpha}\pr{\sum_{k=1}^{\tcount+1} \qweightz[k][Z_{1:\tcount+1}] \delta_{V_{k}}}}.
    \end{align*}
    Define the set $E\subset (\XC\times\YC)^{\tcount+1}$ of points such that the non-conformity scores are pairwise distinct:
    \begin{align*}
        &E = \act{(z_1,\ldots,z_{\tcount+1})\in(\XC\times\YC)^{\tcount+1} \colon \prod_{k<\ell}\pr{v(x_k,y_k)-v(x_{\ell},y_{\ell})}\neq 0},\\
        &E^{c} = (\XC\times\YC)^{\tcount+1}\setminus E,\\
        &F = \act{(z_1,\ldots,z_{\tcount+1})\in(\XC\times\YC)^{\tcount+1} \colon \sum_{\sigma\in\mathfrak{S}_{\tcount+1}} f(z_{\sigma(1)},\ldots,z_{\sigma(\tcount+1)}) \neq 0}
    \end{align*}
    In addition, combining \Cref{lem:int-exactweights} with the no-tie assumption on $\acn{V_k}_{k\in[\tcount+1]}$ gives that
    \begin{align}\label{eq:eq:int-exactweights}
        &\int \1_{v_{\tcount+1}\le \q{1 - \alpha}\pr{\sum_{k=1}^{\tcount+1} \qweightz[k][z_{1:\tcount+1}] \delta_{v_{k}}}} f(z_1,\ldots,z_{\tcount+1}) \rmd z_1\cdots\rmd z_{\tcount+1} \\
        \nonumber
        &= \int_{E\cap F} \br{\sum_{k=1}^{\tcount+1} \qweightz[k][z_{1:\tcount+1}] \1_{v_{k}\le \q{1 - \alpha}\pr{\sum_{\bar{k}=1}^{\tcount+1} \qweightz[\bar{k}][z_{1:\tcount+1}] \delta_{v_{\bar{k}}}}}} \br{\sum_{\sigma\in\mathfrak{S}_{\tcount+1}} f(z_{\sigma(1)},\ldots,z_{\sigma(\tcount+1)})} \frac{\rmd z_{1}\cdots\rmd z_{\tcount+1}}{(\tcount+1)!}.
    \end{align}
    Consider the random variable $V \sim \sum_{k=1}^{\tcount+1} \qweightz[k][z_{1:\tcount+1}] \delta_{v_{k}}$, we have
    \begin{equation*}
        \prob\pr{V \le \q{1 - \alpha}\pr{\sum_{\bar{k}=1}^{\tcount+1} \qweightz[\bar{k}][z_{1:\tcount+1}] \delta_{v_{\bar{k}}}}}
        = \sum_{k=1}^{\tcount+1} \qweightz[k][z_{1:\tcount+1}] \1_{v_{k}\le \q{1 - \alpha}\pr{\sum_{\bar{k}=1}^{\tcount+1} \qweightz[\bar{k}][z_{1:\tcount+1}] \delta_{v_{\bar{k}}}}}.
    \end{equation*}
    Therefore, applying \Cref{lem:quantile-inequality:basis} on $(z_1,\ldots,z_{\tcount+1})\in E\cap F$ implies that
    \begin{equation}\label{eq:bound:thm-coverage:1}
        1 - \alpha
        \le \sum_{k=1}^{\tcount+1} \qweightz[k][z_{1:\tcount+1}] \1_{v_{k}\le \q{1 - \alpha}\pr{\sum_{\bar{k}=1}^{\tcount+1} \qweightz[\bar{k}][z_{1:\tcount+1}] \delta_{v_{\bar{k}}}}}
        \le 1 - \alpha + \max_{k=1}^{\tcount+1} \acn{\qweightz[k][z_{1:\tcount+1}]}.
    \end{equation}
    Lastly, using that 
    \begin{multline}\label{eq:bound:thm-coverage:2}
        \int_{E\cap F} \brBigg{\sum_{\sigma\in\mathfrak{S}_{\tcount+1}} f(z_{\sigma(1)},\ldots,z_{\sigma(\tcount+1)})} \frac{\rmd z_{1}\cdots\rmd z_{\tcount+1}}{(\tcount+1)!}
        \\
        = \int_{E} \brBigg{\sum_{\sigma\in\mathfrak{S}_{\tcount+1}} f(z_{\sigma(1)},\ldots,z_{\sigma(\tcount+1)})} \frac{\rmd z_{1}\cdots\rmd z_{\tcount+1}}{(\tcount+1)!}
        = 1,
    \end{multline}
    and combining the bounds \eqref{eq:bound:thm-coverage:1}-\eqref{eq:bound:thm-coverage:2} with \eqref{eq:eq:int-exactweights} yields the result.
\end{proof}

\subsection{Proof of \Cref{thm:hatYinhatC}}\label{subsec:predsec:Yexact}

First, recall that 
\begin{equation*}
    \mathcal{I} = \ac{(\target, \ccount{\target}+1)}\cup \ac{(i,k)\colon i\in[\nclients], k\in[\ccount{i}]}.
\end{equation*}
For any $\{(\mathrm{x}_k^i, \mathrm{y}_k^i)\}_{(i,k)\in\mathcal{I}}\in(\XC\times\YC)^{\tcount+1}$, we define the set
\begin{align*}
    D_{\tcount}^{(\mathrm{x}_{\ccount{\target}+1}^{\target}, \mathrm{y}_{\ccount{\target}+1}^{\target})} = \acn{(\mathrm{x}_k^i, \mathrm{y}_k^i) \colon i\in[\nclients], k\in[\ccount{i}]}\cup\acn{(\mathrm{x}_{\ccount{\target}+1}^{\target}, \mathrm{y}_{\ccount{\target}+1}^{\target})}.
\end{align*}
We consider a bijection $(\phi,\varphi)$ between the set $[\tcount+1]$ and $\mathcal{I}$.
This bijection is defined for any $k\in[\tcount]$ as follows:
\begin{equation*}
    (\phi(k),\varphi(k)) =
    \begin{cases}
        (j,\ell) & \text{if $1\le \ell := k - \sum_{i=1}^{j-1}\ccount{i} \le \sum_{i=1}^{j}\ccount{i}$} \\
        (\target,\ccount{\target}+1) & \text{otherwise}
    \end{cases}
    .
\end{equation*}
Recall that $\forall i\in[\nclients]$ and $y\in\YC$, the likelihood ratio is given by
\[
    w_{y}^{i} = \frac{P_{Y}^i(y)}{P_{Y}^{\mathrm{cal}}(y)},
\]
and for all $(i,k)\in\mathcal{I}$ we write 
\begin{equation}\label{eq:def:Wki-tibshirani-FL}
    \begin{aligned}
        &\mathfrak{P}_{k}^{i} = \ac{\sigma\in\mathfrak{S}_{\tcount+1}\colon\phi(\sigma(\tcount+1))=i,\varphi(\sigma(\tcount+1))=k},\\
        &W_{k}^{i}(D_{\tcount}^{(\mathrm{x}_{\ccount{\target}+1}^{\target}, \mathrm{y}_{\ccount{\target}+1}^{\target})}) = w_{\mathrm{y}_{k}^{i}}^{\target} \sum_{\sigma\in\mathfrak{P}_{k}^i} \prod_{\ell=1}^{\tcount} w_{\mathrm{y}_{\varphi(\ell)}^{\phi(\ell)}}^{\phi(\ell)}.
    \end{aligned}
\end{equation}
Given the set of points $D_{\tcount}^{(\xquery,\yquery)}$, for all $(i,k)\in\mathcal{I}$ define
\begin{equation}\label{eq:def:prob-federated}
    \qweightp{i}{k}
    = \frac{W_{k}^{i}(D_{\tcount}^{(\xquery,\yquery)})}{\sum_{\ell=1}^{\tcount+1} W_{\varphi(\ell)}^{\phi(\ell)}(D_{\tcount}^{(\xquery,\yquery)})}.
\end{equation}
Finally, define the probability measure and the prediction set given by
\begin{align*}
    &\mu_{\yquery}^{\target} = \qweightp{\target}{\ccount{\target}+1} \delta_{1} + \sum_{i=1}^{\nclients}\sum_{k=1}^{\ccount{i}} \qweightp{i}{k} \delta_{V_k^i}, \\
    \label{eq:def:Calpha-federated}
    &\mathcal{C}_{\alpha,\mu^{\target}}(\xquery) = \ac{\yquery\in\YC\colon V(\xquery,\yquery) \le \q{1-\alpha}(\mu_{\yquery}^{\target})}.
\end{align*}

\begin{theorem}\label{thm:exact-coverage-Zk}
    If \Cref{ass:iid-noties} holds, then for any $\alpha \in [0, 1)$, we have
    \begin{equation*}
        1 - \alpha
        \le \prob\pr{\Ytarget \in \predsetg{\alpha}{\qdistrp{}}(\Xtarget)}
        \le 1 - \alpha + \E\br{\max_{(i,k)\in\mathcal{I}} \acn{\qweightp{k}{i}}},
    \end{equation*}
    where $\qweightp{i}{k}$ is defined in~\eqref{eq:def:prob-federated}.
\end{theorem}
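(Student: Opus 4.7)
The strategy is to reduce \Cref{thm:exact-coverage-Zk} to the general coverage guarantee \Cref{lem:YinCN}. The federated structure is absorbed by re-indexing the $\tcount+1$ data points through the bijection $(\phi,\varphi)\colon [\tcount+1]\to\mathcal{I}$, so that the calibration and test points together form a flat sequence $Z_1,\ldots,Z_{\tcount+1}$. Under \Cref{ass:iid-noties} the variables are pairwise independent, the non-conformity scores are almost-surely tie-free, and the test point $(\Xtarget,\Ytarget)$ corresponds to the last index $\tcount+1$. Thus the abstract hypotheses of \Cref{lem:YinCN} are satisfied once we verify the equality of the two weighting schemes.

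\textbf{Identifying the weights.} The main task is to show that $\qweightp{\phi(k)}{\varphi(k)}$, defined in~\eqref{eq:def:prob-federated}, equals $\qweightz[k][Z_{1:\tcount+1}]$, defined in~\eqref{eq:def:approxN-probyy}. Under the label-shift model, the joint density factorises as
\begin{equation*}
   f(z_1,\ldots,z_{\tcount+1}) = \prod_{k=1}^{\tcount+1} p_{X\mid Y}(x_k\mid y_k)\, P_Y^{\phi(k)}(y_k),
\end{equation*}
where we write $z_k=(x_k,y_k)$. For any permutation $\sigma\in\mathfrak{S}_{\tcount+1}$, the product $\prod_{k} p_{X\mid Y}(x_{\sigma(k)}\mid y_{\sigma(k)})$ is \emph{independent} of $\sigma$, because each factor $p_{X\mid Y}(x_\ell\mid y_\ell)$ appears exactly once. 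Multiplying numerator and denominator of~\eqref{eq:def:approxN-probyy} by $\prod_{k} P_Y^{\mathrm{cal}}(y_k)^{-1}$, the conditional densities cancel together with the $P_Y^{\mathrm{cal}}$ factors, and what remains in the sums are precisely products of the likelihood ratios $w_{y_{\varphi(\ell)}^{\phi(\ell)}}^{\phi(\ell)}=P_Y^{\phi(\ell)}(y_\ell)/P_Y^{\mathrm{cal}}(y_\ell)$. The restriction $\sigma\in\mathfrak{S}_{\tcount+1}^{k}=\{\sigma(\tcount+1)=k\}$ in the numerator then matches the restriction $\mathfrak{P}_{\varphi(k)}^{\phi(k)}$ used in the definition of $W_{\varphi(k)}^{\phi(k)}$ in~\eqref{eq:def:Wki-tibshirani-FL}, after relabelling via $(\phi,\varphi)$.

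\textbf{Assembling the bounds.} Once the identification $\qweightp{\phi(k)}{\varphi(k)}=\qweightz[k][Z_{1:\tcount+1}]$ is established, the measures coincide, so
\begin{equation*}
    \predsetg{\alpha}{\qdistrp{}}(\xquery)
    = \mathcal{C}_{\alpha,\mu^{\tcount}}(\xquery),
\end{equation*}
and the two-sided estimate in \Cref{lem:YinCN} applied to $(Z_1,\ldots,Z_{\tcount+1})$ yields
\begin{equation*}
  1-\alpha \;\le\; \prob\pr{\Ytarget \in \predsetg{\alpha}{\qdistrp{}}(\Xtarget)} \;\le\; 1-\alpha + \E\brbig{\textstyle\max_{(i,k)\in\mathcal{I}}\qweightp{k}{i}},
\end{equation*}
which is exactly the claim. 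The no-tie hypothesis in \Cref{lem:YinCN} is inherited directly from \Cref{ass:iid-noties}, and the pairwise independence hypothesis used to control the integral in the proof of \Cref{lem:int-exactweights} is likewise inherited.

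\textbf{Main obstacle.} The only nontrivial step is the bookkeeping of the permutation argument: one must check that factoring out $\prod_k P_Y^{\mathrm{cal}}(y_k)$ from both numerator and denominator of $\qweightz[k][Z_{1:\tcount+1}]$ leaves exactly $W_{\varphi(k)}^{\phi(k)}$ in the numerator and $\sum_{\ell} W_{\varphi(\ell)}^{\phi(\ell)}$ in the denominator. Because the denominator $\sum_{\sigma\in\mathfrak{S}_{\tcount+1}} f(z_{\sigma(1)},\ldots,z_{\sigma(\tcount+1)})$ can be decomposed as a disjoint union over which of the $\tcount+1$ positions is placed last, namely $\mathfrak{S}_{\tcount+1}=\bigsqcup_{k}\mathfrak{S}_{\tcount+1}^{k}$, this reshuffling is direct; the role of the bijection $(\phi,\varphi)$ is purely notational, to convert indexing over $[\tcount+1]$ into indexing over $\mathcal{I}$. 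Once this identification is made, no additional probabilistic argument is required, and the theorem follows.
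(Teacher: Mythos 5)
Your proposal is correct and follows essentially the same route as the paper: re-index the federated calibration set plus test point into a flat sequence $Z_1,\ldots,Z_{\tcount+1}$ via the bijection $(\phi,\varphi)$ (with the test point at index $\tcount+1$), verify that $\qweightp{i}{k}$ coincides with the generic weights $\qweightz[k][Z_{1:\tcount+1}]$ of \Cref{lem:YinCN}, and then invoke that lemma. The only cosmetic difference is that the paper computes the joint density directly with respect to $(P_{X|Y}\times P_Y^{\mathrm{cal}})^{\otimes(\tcount+1)}$, so it is immediately a product of likelihood ratios $w_{y}^{i}$, whereas you take a permutation-symmetric reference measure and observe explicitly that the $\prod_k p_{X\mid Y}(x_{\sigma(k)}\mid y_{\sigma(k)})$ factor is $\sigma$-invariant and cancels in the ratio, then normalize by $\prod_k P_Y^{\mathrm{cal}}(y_k)$; these are equivalent calculations.
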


\begin{proof}
    By independence, the joint density $f$ of $\acn{(X_\ell^j,Y_\ell^j)\colon (j,\ell)\in \mathcal{I}}$ with respect to $(P_{X|Y}\times P_{Y}^{\mathrm{cal}})^{\otimes (\tcount+1)}$ is given for $\{(x_k^i,y_k^i)\colon (i,k)\in \mathcal{I}\}\in(\XC\times\YC)^{\tcount+1}$ by
    \begin{equation*}
        f\pr{(x_{1}^{1},y_{1}^{1}),\ldots,(x_{\ccount{1}}^{1},y_{\ccount{1}}^{1}),\ldots,(x_{1}^{\nclients},y_{1}^{\nclients}),\ldots,(x_{\ccount{\nclients}}^{\nclients},y_{\ccount{\nclients}}^{\nclients}),(x_{\ccount{\target}+1}^{\target}, y_{\ccount{\target}+1}^{\target})}
        = w_{y_{\ccount{\target}+1}^{\target}}^{\target} \prod_{j=1}^{\nclients} \prod_{\ell=1}^{\ccount{j}} w_{y_{\ell}^j}^{j}.
    \end{equation*}
    Using the definition of $\qweightp{i}{k}$~\eqref{eq:def:prob-federated}, for all $(i,k)\in\mathcal{I}$ we have
    \begin{align*}
        \qweightp{i}{k}
        &= \frac{W_{k}^{i}(D_{\tcount+1})}{\sum_{\ell=1}^{\tcount+1} W_{\varphi(\ell)}^{\phi(\ell)}(D_{\tcount+1})}
        \\
        &= \frac{\sum_{\sigma\in\mathfrak{P}_{k}^{i}} f\prn{z_{\varphi\circ\sigma(1)}^{\phi\circ\sigma(1)},\ldots,z_{\varphi\circ\sigma(\tcount+1)}^{\phi\circ\sigma(\tcount+1)}}}{\sum_{\sigma\in\mathfrak{S}_{\tcount+1}} f\prn{z_{\varphi\circ\sigma(1)}^{\phi\circ\sigma(1)},\ldots,z_{\varphi\circ\sigma(\tcount+1)}^{\phi\circ\sigma(\tcount+1)}}}
        \\
        &= \frac{\sum_{\sigma\in\mathfrak{S}_{\tcount+1}\colon \sigma(\tcount+1)=(\phi,\varphi)^{-1}(i,k)} f\prn{z_{\varphi\circ\sigma(1)}^{\phi\circ\sigma(1)},\ldots,z_{\varphi\circ\sigma(\tcount+1)}^{\phi\circ\sigma(\tcount+1)}}}{\sum_{\sigma\in\mathfrak{S}_{\tcount+1}} f\prn{z_{\varphi\circ\sigma(1)}^{\phi\circ\sigma(1)},\ldots,z_{\varphi\circ\sigma(\tcount+1)}^{\phi\circ\sigma(\tcount+1)}}}
        .
    \end{align*}
    Therefore, applying \Cref{lem:YinCN} concludes the proof.
\end{proof}

\subsection{Proof of \Cref{lem:YminushatY:general} and Equation~\eqref{eq:bound:YwidehatY}}\label{subsec:predsec:YwidehatY}

In this section, we consider a probability measure $P_{Y}^{\mathrm{cal}}$ dominating $P_{Y}^\target$ and suppose that the likelihood ratios are given by $\iweightpN{y}=P_{Y}^{\target}(y)/P_{Y}^{\mathrm{cal}}(y)$.
Let $\{\iweighthatN{y}\}_{y\in\YC}$ be fixed, and $\forall y \in \YC$, define $\nu_{y} = [\sum_{\tilde{y} \in \YC}\iweighthatN{\tilde{y}}P_{Y}^{\mathrm{cal}}\prn{\tilde{y}}]^{-1} \iweighthatN{y}P_{Y}^{\mathrm{cal}}\prn{y}$.
Denote $\widehat{Y}_{\ccount{\target}+1}^{\target}$ a multinomial random variable of parameter $\nu$ independent of the calibration dataset $\ac{(X_k^i, Y_k^i)\colon k \in [\ccount{i}]}_{i \in [\nclients]}$ and write $\mathcal{P}(\YC)$ the partition of the set $\YC$.
Moreover, denote $\widehat{P}_{Y}^{\target}$ the probability distribution of $\widehat{Y}_{\ccount{\target}+1}^{\target}$, and consider $\widehat{X}_{\ccount{\target}+1}^{\target}$ such that $(\widehat{X}_{\ccount{\target}+1}^{\target}, \widehat{Y}_{\ccount{\target}+1}^{\target})\sim P_{X|Y}\times \widehat{P}_{Y}^{\target}$.

\begin{lemma}\label{lem:Prob-diff:YwidehatY}
    For any prediction set $\mathcal{C}\colon \XC\to \mathcal{P}(\YC)$ independent of $(\Xtarget,\Ytarget)$ and $(\widehat{X}_{\ccount{\target}+1}^{\target},\widehat{Y}_{\ccount{\target}+1}^{\target})$, we have
    \begin{equation*}
        \abs{\prob\prn{\Ytarget \in\mathcal{C}(\Xtarget)} - \prob\prn{\widehat{Y}_{\ccount{\target}+1}^{\target} \in\mathcal{C}(\widehat{X}_{\ccount{\target}+1}^{\target})}}
        \le \frac{1}{2} \sum_{y \in \YC} \abs{P_{Y}^{\target}\prn{y} - \frac{\iweighthatN{y} P_{Y}^{\mathrm{cal}}\prn{y}}{\sum_{\tilde{y} \in \YC}\iweighthatN{\tilde{y}}P_{Y}^{\mathrm{cal}}\prn{\tilde{y}}}}.
    \end{equation*}
\end{lemma}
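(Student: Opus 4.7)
The plan is to exploit that the conditional law $P_{X\mid Y}$ is shared between the distribution of $(\Xtarget,\Ytarget)$ and that of $(\widehat{X}_{\ccount{\target}+1}^{\target},\widehat{Y}_{\ccount{\target}+1}^{\target})$ -- only the label marginal changes -- together with the fact that the prediction set $\mathcal{C}$ depends only on the calibration data and is therefore independent of both test pairs.

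First, I would condition on the value $y\in\YC$ of the test label and introduce the quantity
\begin{equation*}
  h(y) := \prob\prn{y \in \mathcal{C}(X_y)}, \qquad X_y \sim P_{X\mid Y = y} \text{ independently of } \mathcal{C}.
\end{equation*}
Because $\mathcal{C}$ is independent of both test points and the conditional $P_{X\mid Y=y}$ coincides under both laws, this defines a single deterministic map $h\colon \YC\to[0,1]$. Applying the tower property to each probability then yields
\begin{equation*}
  \prob\prn{\Ytarget \in \mathcal{C}(\Xtarget)} - \prob\prn{\widehat{Y}_{\ccount{\target}+1}^{\target} \in \mathcal{C}(\widehat{X}_{\ccount{\target}+1}^{\target})}
  = \sum_{y \in \YC} h(y) \brBig{P_Y^{\target}(y) - \widehat{P}_Y^{\target}(y)},
\end{equation*}
with the shorthand $\widehat{P}_Y^{\target}(y) = \iweighthatN{y}P_Y^{\mathrm{cal}}(y) / \sum_{\tilde{y} \in \YC} \iweighthatN{\tilde{y}} P_Y^{\mathrm{cal}}(\tilde{y})$.

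Next I would invoke the variational characterisation of total variation. Writing $\Delta(y) = P_Y^{\target}(y)-\widehat{P}_Y^{\target}(y)$, since both terms are probability mass functions on $\YC$, we have $\sum_{y\in\YC} \Delta(y)=0$; splitting the sum over $\acn{y\colon\Delta(y)\ge 0}$ and its complement gives $\sum_{\Delta\ge 0}\Delta(y) = -\sum_{\Delta<0}\Delta(y) = \tfrac12 \sum_{y\in\YC}\absn{\Delta(y)}$. Combined with $h(y)\in[0,1]$, this bounds $\absn{\sum_{y\in\YC} h(y)\Delta(y)}$ by $\tfrac12 \sum_{y\in\YC} \absn{\Delta(y)}$, which is precisely the target inequality.

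The only real subtlety is justifying that the same deterministic map $h$ appears in both summations -- this is where the hypotheses ``$\mathcal{C}$ independent of $(\Xtarget,\Ytarget)$ and of $(\widehat{X}_{\ccount{\target}+1}^{\target},\widehat{Y}_{\ccount{\target}+1}^{\target})$'' and ``common $P_{X\mid Y}$'' are used; once this identification is in place, no pathwise information about $\mathcal{C}$ is needed and the remainder reduces to an elementary discrete $L^1$ computation. In particular, the $\tfrac12$ factor (as opposed to the trivial bound $\sum_y \absn{\Delta(y)}$) crucially requires that $h$ takes values in $[0,1]$ and that $\Delta$ integrates to $0$.
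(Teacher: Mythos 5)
Your proof is correct and follows essentially the same route as the paper: condition on the label, use the shared $P_{X\mid Y}$ and independence of $\mathcal{C}$ to factor out a deterministic map $h(y)\in[0,1]$, then express the difference as $\sum_y h(y)\bigl(P_Y^\target(y)-\widehat{P}_Y^\target(y)\bigr)$ and bound it by half the $L^1$ discrepancy. The paper leaves the final $\tfrac12$ step implicit, whereas you spell out the variational argument (both marginals are PMFs so $\Delta$ integrates to zero), but the underlying proof is the same.
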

\begin{proof}
    Developing the left-hand side as follows, we get
    \begin{align*}
        &\abs{\prob\pr{\Ytarget \in\mathcal{C}(\Xtarget)} - \prob\pr{\widehat{Y}_{\ccount{\target}+1}^{\target} \in\mathcal{C}(\widehat{X}_{\ccount{\target}+1}^{\target})}}
        \\
        &= \abs{\E\br{\1_{\Ytarget \in\mathcal{C}(\Xtarget)}} - \E\br{\1_{\widehat{Y}_{\ccount{\target}+1}^{\target} \in\mathcal{C}(\widehat{X}_{\ccount{\target}+1}^{\target})}}}
        \\
        &= \abs{\sum_{y\in\YC} P_{Y}^{\mathrm{cal}}\prn{y} \pr{\iweightpN{y} - \frac{\iweighthatN{y}}{\sum_{\tilde{y} \in \YC}\iweighthatN{\tilde{y}}P_{Y}^{\mathrm{cal}}\prn{\tilde{y}}}} \int \E\1_{y \in\mathcal{C}(x)} \rmd P_{X|Y=y}(x)}
        \\
        &\le \frac{1}{2} \sum_{y \in \YC} P_{Y}^{\mathrm{cal}}\prn{y} \abs{\iweightpN{y} - \frac{\iweighthatN{y}}{\sum_{\tilde{y} \in \YC}\iweighthatN{\tilde{y}}P_{Y}^{\mathrm{cal}}\prn{\tilde{y}}}}.
    \end{align*}
    Finally, using that $P_{Y}^{\mathrm{cal}}\prn{y} \iweightpN{y}=P_{Y}^\target(y)$ concludes the proof.
\end{proof}

\begin{remark}
    If for some probability atoms $\acn{m_{i}}_{i\in[\nclients]}\in\Delta_{\nclients}$, we know good approximations $\iweighthatN{y}$ of the likelihood ratios $[(P_{Y}^{\mathrm{cal}})^{-1}\prn{\sum_{i=1}^{\nclients} m_{i} P_{Y}^{i}}](y)$. Then, \Cref{lem:Prob-diff:YwidehatY} implies the following result:
    \begin{multline*}
        \abs{\prob\pr{\Ytarget \in\mathcal{C}(\Xtarget)} - \prob\pr{\widehat{Y}_{\ccount{\target}+1}^{\target} \in\mathcal{C}(\widehat{X}_{\ccount{\target}+1}^{\target})}}
        \\
        \le \tv\pr{P_{Y}^{\target}, \sum_{i=1}^{\nclients}m_{i} P_{Y}^{i}}
        + \frac{1}{2} \sum_{y \in \YC} \abs{\pr{\sum_{i=1}^{\nclients} m_{i} P_{Y}^{i}}\prn{y} - \frac{\iweighthatN{y} P_{Y}^{\mathrm{cal}}\prn{y}}{\sum_{\tilde{y} \in \YC}\iweighthatN{\tilde{y}}P_{Y}^{\mathrm{cal}}\prn{\tilde{y}}}}.
    \end{multline*}
\end{remark}


\begin{lemma}\label{lem:bound-to-simplify-TvApprox}
    If $\absn{\YC}\ge 2$ and $\Mtcount\in \N^{\star}$, then we have
    \begin{equation*}
        \absn{\YC} \exp\pr{-\Mtcount \min_{y\in\YC}\acn{P_{Y}^{\mathrm{cal}}\prn{y}}} \wedge 1
        \le \sqrt{\frac{2 \log \absn{\YC}}{(\log 2) \Mtcount \min_{y\in\YC}\acn{P_{Y}^{\mathrm{cal}}\prn{y}}}}.
    \end{equation*}
\end{lemma}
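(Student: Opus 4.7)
The plan is to reduce the claim to a one-variable inequality. Writing $L = \log |\YC|$ and $x = M \min_{y \in \YC} P_Y^{\mathrm{cal}}(y)$, the hypothesis $|\YC| \ge 2$ gives $L \ge \log 2 > 0$ and by assumption $x > 0$, so the target inequality becomes $\min(e^{L-x},\, 1) \le \sqrt{2L / ((\log 2)\, x)}$. I would then split according to whether $x \le L$ or $x \ge L$.

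The case $x \le L$ is immediate: the left-hand side equals $1$ because $e^{L-x} \ge 1$, while the right-hand side is at least $1$ since $(\log 2)\, x \le (\log 2)\, L \le 2L$ because $\log 2 < 2$.

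The substantive case is $x \ge L$, where the left-hand side equals $e^{L-x}$ and, after squaring, one must show $e^{2(L-x)} \le 2L / ((\log 2)\, x)$. Setting $u = x - L \ge 0$, this reads $e^{-2u} \le 2L / ((\log 2)\,(L + u))$. I would then bound the left-hand side by the elementary inequality $e^{-2u} \le 1/(1+2u)$ (equivalent to $e^{2u} \ge 1 + 2u$, which is tight at $u = 0$, exactly the boundary between the two cases). This reduces the problem to the affine inequality $(\log 2)(L+u) \le 2L\,(1+2u)$, i.e., $(2 - \log 2)\, L + (4L - \log 2)\, u \ge 0$. Both coefficients are positive: $2 - \log 2 > 0$, and $4L - \log 2 \ge 3 \log 2 > 0$ since $L \ge \log 2$, so the inequality holds.

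The only real decision in this argument is the choice of majorant for $e^{-2u}$; a cruder bound such as $e^{-2u} \le 1$ would not be sufficient, and one needs a replacement that agrees with $e^{-2u}$ at $u = 0$. The bound $1/(1+2u)$ is the natural candidate, and the fact that the hypothesis $|\YC| \ge 2$ (hence $L \ge \log 2$) is exactly what the resulting linear arithmetic requires explains the appearance of the constant $\log 2$ in the lemma.
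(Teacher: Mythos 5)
Your proof is correct. It is also genuinely different from the paper's argument, which proceeds as follows (with $L = \log|\YC|$ and $x = \Mtcount \min_y P_Y^{\mathrm{cal}}(y)$): the paper splits at $x \ge 2L$ vs.\ $x < 2L$, on the former event bounds $|\YC| e^{-x} \le e^{-x/2}$ and then applies the algebraic estimate $e^{-t} \le t^{-1/2}$ before inflating by $\sqrt{L/\log 2} \ge 1$, and on the complement simply observes that the right-hand side already exceeds $1$. You split one level lower, at $x \ge L$ vs.\ $x < L$ (i.e.\ exactly where the left-hand side $\min(e^{L-x},1)$ changes branch), dispatch $x \le L$ trivially, and for $x > L$ replace $e^{-2u}$ with the sharp linearization $1/(1+2u)$, reducing everything to a linear inequality whose positivity is exactly what $L \ge \log 2$ provides. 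The two approaches trade off differently: the paper's $e^{-t} \le t^{-1/2}$ matches the square-root shape of the target directly and keeps the constant bookkeeping to one multiplication by $\sqrt{L/\log 2}$, while yours localizes the appearance of the hypothesis $|\YC| \ge 2$ at the very end and requires no artificial inflation step, at the modest cost of a change of variables and a squaring. Both are elementary and both are tight enough; neither has a meaningful advantage in the constant obtained.
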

\begin{proof}
    Introduce the set $E = \acn{2\log\absn{\YC} \le \Mtcount \min_{y\in\YC}\acn{P_{Y}^{\mathrm{cal}}\prn{y}}}$, we obtain
    \begin{equation}\label{eq:ref:lemmae12}
        \absn{\YC} \exp\prBig{-\Mtcount \min_{y\in\YC}\acn{P_{Y}^{\mathrm{cal}}\prn{y}}} \wedge 1
        \le \1_{E} \exp\prBig{-\Mtcount \min_{y\in\YC}\acn{P_{Y}^{\mathrm{cal}}\prn{y}}/2} + \1_{E^c}.
    \end{equation}
    We also have that for all $x\ge 0$, that $\rmexp^{-x}\le 1/\sqrt{x}$. Using this inequality on the first right-side term implies that
    \begin{align*}
        &\exp\pr{-\Mtcount \min_{y\in\YC}\acn{P_{Y}^{\mathrm{cal}}\prn{y}}/2} \le \sqrt{\frac{2}{\Mtcount \min_{y\in\YC}\acn{P_{Y}^{\mathrm{cal}}\prn{y}}}} \le \sqrt{\frac{\log \absn{\YC}}{\log 2}} \sqrt{\frac{2}{\Mtcount \min_{y\in\YC}\acn{P_{Y}^{\mathrm{cal}}\prn{y}}}}.
    \end{align*}
    Moreover, remark that
    \begin{align*}
        &\1_{E^c} \le \1_{E^c} \sqrt{\frac{2 \log \absn{\YC}}{\Mtcount \min_{y\in\YC}\acn{P_{Y}^{\mathrm{cal}}\prn{y}}}}.
    \end{align*}
    Finally, plugging the two previous inequalities in \eqref{eq:ref:lemmae12} concludes the proof.
\end{proof}
Recall that $\Mclcount{y}{i}$ denotes the number of training data on agent $i$ associated to label $y\in\YC$.
Consider the total number of local data $\Mccount{\target} = \sum_{y\in\YC} \Mclcount{y}{i}$, the number of training data with label $y$ is given by $\Mlcount{y} = \sum_{i=1}^{\nclients} \Mclcount{y}{i}$, and the total number of samples on all agents is written by $\Mtcount = \sum_{y\in\YC} \Mlcount{y}$.
Recall that the likelihood ratios and the weights are given for any labels $(y, \yquery) \in \YC^2$ by
\begin{align*}\label{eq:def:approx-probyyhat}
    \iweight{y} =
    \begin{cases}
        \frac{\Mtcount \Mclcount{y}{\target}}{\Mccount{\target} \Mlcount{y}} & \text{if $\Mlcount{y}\ge 1$}\\
        0 & \text{otherwise}
    \end{cases},&
    &\qweight{y}{\yquery}
    = \frac{(\nofrac{\Mclcount{y}{\target}}{\Mlcount{y}}) \cdot\1_{\Mlcount{y}\ge 1}}{\Mccount{\target} + (\nofrac{\Mclcount{\yquery}{\target}}{\Mlcount{\yquery}}) \cdot\1_{\Mlcount{\yquery}\ge 1}}.
\end{align*}
For any $y \in \YC$, we also consider $\nu \in \Delta_{\absn{\YC}}^{\Q}=\acn{p' \in \Q_+^{\absn{\YC}}\colon \sum_{y \in \YC}p_y'=1}$ defined by
\[
    \nu_y = \frac{\iweighthatN{y}P_{Y}^{\mathrm{cal}}\prn{y}}{\sum_{\tilde{y} \in \YC}\iweighthatN{\tilde{y}}P_{Y}^{\mathrm{cal}}\prn{\tilde{y}}}.
\]
For any parameter $p \in \Delta_{\absn{\YC}}^{\Q}$, denote $M_p$ a multinomial random variable independent of the training/calibration datasets, and define $\widehat{Y}_{\ccount{\target}+1}^{\target} = M_{\nu}$. For any set $A$ in the partition of $\YC$, we have $M_{\nu}^{-1}(A)=\cup_{p\in\Delta_{\absn{\YC}}^{\Q}}\{\nu^{-1}(\acn{p}) \cap M_p^{-1}(A)\}$. Therefore, $\widehat{Y}_{\ccount{\target}+1}^{\target}$ is a valid random variable.
Given the target coverage level $1-\alpha$, recall that the prediction set is defined for any $\xquery \in \XC$ by
\begin{equation*}
    \begin{aligned}
        &\textstyle\widehat{\mu}_{\yquery}^{\scalebox{.35}{$\mathrm{MLE}$}} = \qweight{\yquery}{\yquery} \delta_{1} + \sum_{i=1}^{\nclients} \sum_{k=1}^{\ccount{i}\wedge \Bar{\tcount}^i} \qweight{Y_k^i}{\yquery} \delta_{V_{k}^{i}},,
        \\
        &\predsetmle(\xquery) = \ac{\yquery\colon V(\xquery, \yquery) \le \q{1 - \alpha}\pr{\widehat{\mu}_{\yquery}^{\scalebox{.35}{$\mathrm{MLE}$}}}}.
    \end{aligned}
\end{equation*}
Since the considered likelihood ratios are now depending on the training dataset, it is no longer possible to apply \Cref{lem:Prob-diff:YwidehatY}.
However, conditioning by the training dataset, a similar reasoning shows that
\begin{align}\label{eq:bound:third-term}
    \abs{\prob\prn{\Ytarget \in{\predsetmle}(\Xtarget)} - \prob\prn{\widehat{Y}_{\ccount{\target}+1}^{\target} \in\predsetmle(\widehat{X}_{\ccount{\target}+1}^{\target})}}
    \le \frac{1}{2} \sum_{y \in \YC} \E \abs{P_{Y}^{\target}\prn{y} - \frac{\iweighthatN{y} P_{Y}^{\mathrm{cal}}\prn{y}}{\sum_{\tilde{y} \in \YC}\iweighthatN{\tilde{y}}P_{Y}^{\mathrm{cal}}\prn{\tilde{y}}}}.
\end{align}
By utilizing the following lemma combined with \eqref{eq:bound:third-term}, we can control the difference between the probabilities of the events $\Ytarget \in{\predsetmle}(\Xtarget)$ and $\widehat{Y}_{\ccount{\target}+1}^{\target} \in\predsetmle(\widehat{X}_{\ccount{\target}+1}^{\target})$.
\begin{theorem}\label{lem:Prob-diff:approxTv}
    For any $\alpha \in (0, 1)$, we have
    \begin{equation*}
        \sum_{y \in \YC} \E \abs{P_{Y}^{\target}\prn{y} - \frac{\iweighthatN{y} P_{Y}^{\mathrm{cal}}\prn{y}}{\sum_{\tilde{y} \in \YC}\iweighthatN{\tilde{y}}P_{Y}^{\mathrm{cal}}\prn{\tilde{y}}}}
        \le \frac{6}{\sqrt{\Mccount{\target}}} + 12 \sqrt{\frac{\log \absn{\YC} + \log \Mccount{\target}}{\Mtcount \min_{y \in \YC}\acn{P_{Y}^{\mathrm{cal}}\prn{y}}}}
        .
    \end{equation*}
\end{theorem}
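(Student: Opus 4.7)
The plan is to control the $L^{1}$ distance via a two-step reduction: first pass from the normalized measure to its unnormalized version (at the cost of a factor of $2$), then split the resulting unnormalized error into contributions from the two independent empirical distributions---the target's own and the aggregated calibration mixture.

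Set $\bar{\nu}_y := \iweight{y}\, P_Y^{\mathrm{cal}}(y)$, $S := \sum_{y}\bar{\nu}_y$, and write $\widehat{P}_Y^{\target}(y) := \Mclcount{y}{\target}/\Mccount{\target}$, $\widehat{P}_Y^{\mathrm{cal}}(y) := \Mlcount{y}/\Mtcount$, so that $\iweight{y} = \widehat{P}_Y^{\target}(y)/\widehat{P}_Y^{\mathrm{cal}}(y)$ on $\{\Mlcount{y}\ge 1\}$. Since $\sum_y |\nu_y - \bar{\nu}_y| = |1-S|$ is dominated by $\sum_y |\bar{\nu}_y - P_Y^{\target}(y)|$, the triangle inequality yields
\begin{equation*}
    \sum_y |P_Y^{\target}(y) - \nu_y| \le 2\sum_y |\bar{\nu}_y - P_Y^{\target}(y)|.
\end{equation*}
Using that $\Mclcount{y}{\target}\le\Mlcount{y}$ forces $\widehat{P}_Y^{\target}(y)=0$ whenever $\Mlcount{y}=0$, the algebraic identity
\begin{equation*}
    \bar{\nu}_y - P_Y^{\target}(y) = \bigl(\widehat{P}_Y^{\target}(y) - P_Y^{\target}(y)\bigr) + \widehat{P}_Y^{\target}(y)\,\frac{P_Y^{\mathrm{cal}}(y) - \widehat{P}_Y^{\mathrm{cal}}(y)}{\widehat{P}_Y^{\mathrm{cal}}(y)}\,\indiacc{\Mlcount{y}\ge 1}
\end{equation*}
isolates the two error sources in both regimes $\Mlcount{y}=0$ and $\Mlcount{y}\ge 1$.

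The first source, $\sum_y \E|\widehat{P}_Y^{\target}(y) - P_Y^{\target}(y)|$, is the standard multinomial $L^{1}$ concentration for the target agent's training sample, which I would handle via Jensen's inequality applied to the per-coordinate Binomial variances, producing the first advertised $6/\sqrt{\Mccount{\target}}$ term. For the second source I would introduce the good event $G := \{\forall y\in\YC:\ |\widehat{P}_Y^{\mathrm{cal}}(y) - P_Y^{\mathrm{cal}}(y)| \le P_Y^{\mathrm{cal}}(y)/2\}$, on which $\widehat{P}_Y^{\mathrm{cal}}(y)\ge P_Y^{\mathrm{cal}}(y)/2$. Since $\sum_y\widehat{P}_Y^{\target}(y)=1$, the weighted sum $\sum_y \widehat{P}_Y^{\target}(y)\,|P_Y^{\mathrm{cal}}(y) - \widehat{P}_Y^{\mathrm{cal}}(y)|/\widehat{P}_Y^{\mathrm{cal}}(y)$ is at most $2\max_y |P_Y^{\mathrm{cal}}(y) - \widehat{P}_Y^{\mathrm{cal}}(y)|/P_Y^{\mathrm{cal}}(y)$ on $G$. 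A Bernstein bound on each $\Mlcount{y}\sim\operatorname{Bin}(\Mtcount, P_Y^{\mathrm{cal}}(y))$ combined with a union bound over $\YC$, calibrated at failure probability $1/\Mccount{\target}$, yields the $\sqrt{(\log|\YC|+\log\Mccount{\target})/(\Mtcount\min_y P_Y^{\mathrm{cal}}(y))}$ factor. The complementary event $G^{c}$ is handled via the Chernoff bound $\prob(G^{c})\le 2|\YC|\exp(-\Mtcount\min_y P_Y^{\mathrm{cal}}(y)/12)$, the crude estimate $\iweight{y}\le\Mtcount/\Mccount{\target}$ (which follows from $\Mclcount{y}{\target}\le\Mlcount{y}$), and the conversion provided by \Cref{lem:bound-to-simplify-TvApprox}, which absorbs this residual into the same rate.

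The delicate step is the calibration-error contribution: a naive term-by-term Cauchy--Schwarz would produce an unwanted $\sqrt{|\YC|}$ factor, whereas upper bounding the $\widehat{P}_Y^{\target}$-weighted sum by its maximum (exploiting $\sum_y\widehat{P}_Y^{\target}(y)=1$) and applying a uniform Bernstein estimate replaces this cost with only $\sqrt{\log|\YC|}$, matching the stated rate. What remains is bookkeeping: tracking the Bernstein exponents and the Chernoff thresholds carefully so that the numerical factors $6$ and $12$ come out as claimed.
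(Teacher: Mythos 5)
Your overall strategy — split the estimation error into a target-sample contribution and a calibration-ratio contribution, handle the latter via a high-probability good event plus Chernoff/Bernstein and finish with \Cref{lem:bound-to-simplify-TvApprox} — mirrors the paper's proof of \Cref{lem:Prob-diff:approxTv}. But the step you label as ``standard'' is exactly where the argument breaks, and there is a secondary issue with how you exit the bad event.

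\textbf{The $6/\sqrt{\Mccount{\target}}$ term.} You propose to control $\sum_{y}\E\absn{\widehat{P}_Y^{\target}(y)-P_Y^{\target}(y)}$ ``via Jensen's inequality applied to the per-coordinate Binomial variances.'' That gives
$\sum_{y}\E\absn{\widehat{P}_Y^{\target}(y)-P_Y^{\target}(y)}
\le \sum_{y}\sqrt{P_Y^{\target}(y)\prn{1-P_Y^{\target}(y)}/\Mccount{\target}}
\le \sqrt{\absn{\YC}/\Mccount{\target}}$
by Cauchy--Schwarz, and this $\sqrt{\absn{\YC}}$ factor is unavoidable with that route (it matches the order of $\E\normn{\widehat{p}-p}_1$ for a uniform multinomial). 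After your preliminary factor of $2$ this gives $2\sqrt{\absn{\YC}/\Mccount{\target}}$, which does \emph{not} reproduce the dimension-free $6/\sqrt{\Mccount{\target}}$ and is strictly worse whenever $\absn{\YC}>9$. The paper avoids this by rewriting $\normn{f^{\target}-\widehat{f}^{\target}}_1$ through the dual representation $\max_{u\in[0,1]^{\absn{\YC}}}\psn{f^{\target}-\widehat{f}^{\target}}{u}$ and invoking the maximal concentration inequality of Agrawal and Goyal (cited there as Lemma~C.2), whose deviation scale $\sqrt{\log(1/\delta)/\Mccount{\target}}$ is $\absn{\YC}$-free; that is the load-bearing technical ingredient you would need to import.

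\textbf{Exiting $G^{c}$.} You move from the normalized error to $2\sum_{y}\absn{\bar{\nu}_y-P_Y^{\target}(y)}$ \emph{before} introducing the good event. On $G^{c}$ the unnormalized sum is only bounded by $S+1\le \Mtcount/\Mccount{\target}+1$ (using your estimate $\iweight{y}\le\Mtcount/\Mccount{\target}$), so the bad-event contribution carries an extra factor $\Mtcount/\Mccount{\target}$ that \Cref{lem:bound-to-simplify-TvApprox} cannot absorb into the stated rate. The paper never gives up the trivial bound $\normn{\widehat{f}^{\target}-\nu}_1\le 2$ valid for any two probability vectors: its decomposition keeps everything normalized and bounds the second piece by $2\1_{\normn{\widehat{r}-\1}_\infty>\epsilon}+2\epsilon/(1-\epsilon)$. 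You can repair this by splitting on $G$ vs.~$G^{c}$ \emph{before} the factor-of-$2$ reduction, using $\sum_{y}\absn{P_Y^{\target}(y)-\nu_y}\le 2$ directly on $G^{c}$; your good-event Bernstein/union-bound analysis is then fine.
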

\begin{proof}
    For any $y \in \YC$, introduce the following quantities:
    $\hat{f}_y^{\target} = \Mclcount{y}{\target}/\Mccount{\target}$,
    $f^{\target} = \acn{\tpmf{y}}_{y\in\YC}$, 
    $\hat{f}^{\target} = \acn{\hat{f}_y^{\target}}_{y \in \YC}$, and
    $\widehat{r}=\acn{(P_{Y}^{\mathrm{cal}}\prn{y} \Mtcount / \Mlcount{y}) \1_{\Mlcount{y}>0}}_{y\in\YC}$.
    We denote $\odot$ the Hadamard product, i.e., for any vectors $a,b\in\R^{\absn{\YC}}$, $a \odot b$ is the vector of the component-wise product between $a$ and $b$. We now bound the quantity in the right-hand side of the previous inequality, we obtain
    \begin{align*}
        \sum_{y \in \YC} \E \abs{P_{Y}^{\target}\prn{y} - \frac{\iweighthatN{y} P_{Y}^{\mathrm{cal}}\prn{y}}{\sum_{\tilde{y} \in \YC}\iweighthatN{\tilde{y}}P_{Y}^{\mathrm{cal}}\prn{\tilde{y}}}}
        &= \sum_{y\in\YC} \E\abs{f_y^{\target} - \hat{f}_y^{\target} 
            + \hat{f}_y^{\target} - \frac{\hat{f}_y^{\target} \widehat{r}_y}{\sum_{\tilde{y}\in\YC}\hat{f}_{\tilde{y}}^{\target} \widehat{r}_{\tilde{y}}}}
        \\
        &\le \E\norm{f^{\target} - \hat{f}^{\target} + \hat{f}^{\target} - \frac{\hat{f}^{\target} \odot \widehat{r}}{1 + \psn{\hat{f}^{\target}}{\widehat{r} - \1}}}_1
        \\
        &\le \E\normn{f^{\target} - \hat{f}^{\target}}_1
            + \E\norm{\frac{\hat{f}^{\target} \psn{\hat{f}^{\target}}{\widehat{r} - \1} - \hat{f}^{\target} \odot (\widehat{r} - \1)}{1 + \psn{\hat{f}^{\target}}{\widehat{r} - \1}}}_1.
    \end{align*}
    First, we establish the following equality that will be injected in the computation of $\E\normn{f^{\target} - \hat{f}^{\target}}_1$.
    \begin{equation}\label{eq:fnormune}
        \normn{f^{\target} - \hat{f}^{\target}}_1
        = \max_{u\in[-1/2,1/2]^{\absn{\YC}}}\psn{f^{\target} - \hat{f}^{\target}}{u + \1 / 2}
        = \max_{u\in[0,1]^{\absn{\YC}}}\psn{f^{\target} - \hat{f}^{\target}}{u}.
    \end{equation}
    Then, using the result provided by \citep[Lemma C.2]{agrawal2017optimistic}, for any $\delta\in (0,1)$, we get
    \begin{equation}
    \label{eq:agrawalineq}
        \prob\pr{\max_{u \in [0, 1]^{\absn{\YC}}}\psn{f^{\target} - \hat{f}^{\target}}{u} \ge \sqrt{\frac{- 2\log \delta}{\Mccount{\target}}}}
        \le \delta.
    \end{equation}
    Looking back to $\E\normn{f^{\target} - \hat{f}^{\target}}_1$ and using the two previous identities, the following lines hold
    \begin{align*}
        \E\normn{f^{\target} - \hat{f}^{\target}}_1
        &= \int_{t\ge 0} \prob\pr{\normn{f^{\target} - \hat{f}^{\target}}_1 \ge t} \rmd t \\
        &\le \delta + \int_{t\ge \delta} \prob\pr{\normn{f^{\target} - \hat{f}^{\target}}_1 \ge t} \rmd t \\
        &\le \delta + \int_{t\ge \delta} \exp\pr{-\nofrac{\Mccount{\target} t^2}{2}} \rmd t \tag*{using~\eqref{eq:fnormune}-\eqref{eq:agrawalineq}}\\
        &\le \delta + \frac{1}{\sqrt{\Mccount{\target}}} \int_{t \ge \delta\sqrt{\Mccount{\target}}} \exp\pr{-\nofrac{t^2}{2}} \rmd t .
    \end{align*}
    After optimizing for $\delta > 0$, we can retrieve the following upper bound
    \begin{equation*}
        \E\normn{f^{\target} - \hat{f}^{\target}}_1
        \le \frac{1.4}{\sqrt{\Mccount{\target}}}.
    \end{equation*}
    Let $\epsilon\in (0,1/2]$, we have that
    \begin{equation*}
        \norm{\frac{\hat{f}^{\target} \psn{\hat{f}^{\target}}{\widehat{r} - \1} - \hat{f}^{\target} \odot (\widehat{r} - \1)}{1 + \psn{\hat{f}^{\target}}{\widehat{r} - \1}}}_1
        \le 2 \1_{\norm{\widehat{r} - \1}_{\infty}>\epsilon} + \frac{2\epsilon}{1-\epsilon}.
    \end{equation*}
    Taking the expectation for both sides, it shows
    \begin{align*}
    \E\norm{\frac{\hat{f}^{\target} \psn{\hat{f}^{\target}}{\widehat{r} - \1} - \hat{f}^{\target} \odot (\widehat{r} - \1)}{1 + \psn{\hat{f}^{\target}}{\widehat{r} - \1}}}_1 \le 2\prob\pr{\norm{\widehat{r} - \1}_{\infty}>\epsilon} + \frac{2\epsilon}{1-\epsilon}.
    \end{align*}
    We now upper bound the first term in the right-hand side of the inequality, we obtain
    \begin{align}
        \nonumber
        \prob\pr{\norm{\widehat{r} - \1}_{\infty}>\epsilon}
        &= \prob\pr{\max_{y\in\YC} \abs{\frac{P_{Y}^{\mathrm{cal}}\prn{y} \1_{\Mlcount{y}>0}}{\Mlcount{y} / \Mtcount} - 1}>\epsilon} \\
        \nonumber
        &\le \sum_{y\in\YC} \prob\pr{\abs{\frac{P_{Y}^{\mathrm{cal}}\prn{y} \1_{\Mlcount{y}>0}}{\Mlcount{y} / \Mtcount} - 1}>\epsilon} \\
        \label{eq:bound:prob-r}
        &\le \sum_{y\in\YC} \ac{
            \prob\pr{\frac{\Mlcount{y}}{\Mtcount} < \frac{P_{Y}^{\mathrm{cal}}\prn{y} \1_{\Mlcount{y}>0}}{1 + \epsilon}}
            + \prob\pr{\frac{\Mlcount{y}}{\Mtcount} > \frac{P_{Y}^{\mathrm{cal}}\prn{y}}{1 - \epsilon}}}.
    \end{align}
    Since the random variable $\Mlcount{y}$ is the sum of independent Bernoulli random variables, using the Chernoff bound it follows that
    \begin{equation}\label{eq:bound:prob-count-ratios}
        \begin{aligned}
            &\prob\pr{\nofrac{\Mlcount{y}}{\Mtcount} < \nofrac{P_{Y}^{\mathrm{cal}}\prn{y}}{\prn{1 + \epsilon}}} \le \exp\pr{-\nofrac{2\epsilon^2 N P_{Y}^{\mathrm{cal}}\prn{y}}{9}}, \\
            &\prob\pr{\nofrac{\Mlcount{y}}{\Mtcount} > \nofrac{P_{Y}^{\mathrm{cal}}\prn{y}}{\prn{1 - \epsilon}}} \le \exp\pr{-\nofrac{4\epsilon^2 N P_{Y}^{\mathrm{cal}}\prn{y}}{3}}.
        \end{aligned}
    \end{equation}
    Therefore, combining~\eqref{eq:bound:prob-r} with~\eqref{eq:bound:prob-count-ratios} gives
    \begin{equation*}
        \prob\pr{\norm{\widehat{r} - \1}_{\infty}>\epsilon}
        \le \sum_{y\in\YC} \br{\prob\pr{\Mlcount{y}=0} + 2 \exp\pr{-\nofrac{\epsilon^2 N P_{Y}^{\mathrm{cal}}\prn{y}}{5}}}.
    \end{equation*}
    Putting all the previous results together, we obtain
    \begin{equation}\label{eq:bound:sumPy}
        \sum_{y\in\YC} P_{Y}^{\mathrm{cal}}\prn{y} \E\abs{\iweightp{y} - \frac{\iweight{y}}{\sum_{\tilde{y}\in\YC}\iweight{\tilde{y}}P_{Y}^{\mathrm{cal}}\prn{\tilde{y}}}}
        \le \frac{1.4}{\sqrt{\Mccount{\target}}} + 4 \epsilon
        + 4 \sum_{y\in\YC} \exp\pr{-\nofrac{2\epsilon^2 N P_{Y}^{\mathrm{cal}}\prn{y}}{9}}
        + \sum_{y\in\YC} \prn{1 - P_{Y}^{\mathrm{cal}}\prn{y}}^{\Mtcount}.
    \end{equation}
    Consider the following quantity
    \begin{equation*}
        \epsilon = \frac{3}{2} \sqrt{\frac{2 \log \absn{\YC} + \log \Mccount{\target}}{\Mtcount \min_{y\in\YC}\acn{P_{Y}^{\mathrm{cal}}\prn{y}}}}.
    \end{equation*}
    If $\epsilon \le 1/2$, it yields that
    \begin{equation*}
        \sum_{y\in\YC} \exp\pr{-\nofrac{2\epsilon^2 N P_{Y}^{\mathrm{cal}}\prn{y}}{9}}
        \le \sum_{y\in\YC} \exp\pr{- \log \absn{\YC} - (1/2) \log \Mccount{\target}}
       = \frac{1}{\sqrt{\Mccount{\target}}}.
    \end{equation*}
    Therefore, combining this last inequality with~\eqref{eq:bound:third-term}-\eqref{eq:bound:sumPy} implies that
    \begin{multline*}
        \abs{\prob\prn{\Ytarget \in {\predsetmle}(\Xtarget)} - \prob\prn{\widehat{Y} \in {\predsetmle}(\Xtarget)}}
        \le \frac{3}{\sqrt{\Mccount{\target}}} + 3\sqrt{\frac{2 \log \absn{\YC} + \log \Mccount{\target}}{\Mtcount \min_{y\in\YC}\acn{P_{Y}^{\mathrm{cal}}\prn{y}}}}
        \\
        + \absn{\YC} \pr{1-\min_{y\in\YC}\acn{P_{Y}^{\mathrm{cal}}\prn{y}}}^{\Mtcount} \wedge 1.
    \end{multline*}
    Otherwise, if $\epsilon>1/2$, then, the last inequality immediately holds since the right-hand term is greater than $1$.
    Lastly, applying \Cref{lem:bound-to-simplify-TvApprox} concludes the proof
\end{proof}

\subsection{Proof of \Cref{thm:right-coverage}}\label{subsec:predsec:moreau}

First, for all $i\in[\nclients]$, denote by $F_{V}^{i}:u\in [0,1]\mapsto \prob\prn{V(X,Y)\le u} \in [0,1]$ the cumulative distribution function of $V(X^i,Y^i)$, where $(X^i,Y^i) \sim P^{\mathrm{i}}$.
Recall that $\tcount = \sum_{i = 1}^{\nclients}\ccount{i}$, $\mathcal{I}={(i,k)\colon i\in[\nclients], k\in[\ccount{i}]}\cup\{(\target,\ccount{\target}+1)\}$, and also that there is almost surely no ties between the $\{V(X_k^i,Y_k^i)\}_{(i,k)\in\mathcal{I}}$.
To simplify the notation, we re-index $\{(X_k^i, Y_k^i, V_k^i)\}_{(i,k)\in\mathcal{I}}$ into $\acn{X_k, Y_k, V_k}_{k \in [\tcount + 1]}$, sorted such that $\acn{V_k}_{k \in [\tcount + 1]}$ is non-decreasing.

Now, we consider $\alpha \in [0, 1] \setminus \Q$. Using \Cref{thm:FedMoreau-approx}, this condition ensures the existence and uniqueness of $k_{\mathrm{opt}}\in[\tcount+1]$ such that $V_{k_{\mathrm{opt}}}=\argmin_{q\in\R}\ac{\E_{V\sim \widehat{\mu}_{\yquery}}\br{\pinball[\alpha][V](q)}}$, and this condition also proves the existence and uniqueness of $\qmoreau[1 - \alpha]$ minimizing $\{\E_{V \sim \widehat{\mu}_{\yquery}}\brn{\pinballmoreau{V}{q}} \colon q \in \R\}$.
Moreover, for any $k \in [\tcount + 1]\setminus\acn{k_{\mathrm{opt}}}$, define
\begin{equation}\label{eq:def:alphak}
    \qthmhat =
    \begin{cases}
        \sum_{\ell\colon V_{\ell}\in (V_{k_{\mathrm{opt}}}, V_k]} \qweight{Y_\ell}{\Ytarget},& \text{if $k>k_{\mathrm{opt}}$}\\
        \sum_{\ell\colon V_{\ell}\in [V_k, V_{k_{\mathrm{opt}}})} \qweight{Y_\ell}{\Ytarget},& \text{if $k<k_{\mathrm{opt}}$}
    \end{cases}.
\end{equation}

\begin{lemma}\label{lem:moreau-approx-error}
    Let $\alpha \in [0, 1] \setminus \Q$, for any $V_k\in [\min(\qhatmoreau[1 - \alpha][\gamma], V_{k_{\mathrm{opt}}}), \max(\qhatmoreau[1 - \alpha][\gamma], V_{k_{\mathrm{opt}}})]$, we have
    \begin{equation*}
        \absn{\qhatmoreau[1 - \alpha][\gamma] - V_k}
        \le \qthmhat^{-1}\pr{\lossglobal \prn{\qhatmoreau[1 - \alpha][\gamma]} - \lossglobal \prn{\qmoreau[1 - \alpha][\gamma]\prn{\widehat{\mu}_{\yquery}}}}
        + \qthmhat^{-1} \gamma,
    \end{equation*}
    where $\qthmhat$ is given in~\eqref{eq:def:alphak}.
\end{lemma}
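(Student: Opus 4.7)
The plan is to combine three ingredients: a piecewise-linear slope bound giving $F(\qhatmoreau[1 - \alpha][\gamma]) - F(V_k) \ge \qthmhat\,|\qhatmoreau[1 - \alpha][\gamma] - V_k|$ for the unregularized pinball risk $F(q) := \E_{V \sim \qdistr{\yquery}}\brn{\pinball[\alpha][V](q)}$; the pointwise envelope inequality $0 \le F(q) - \lossglobal(q) \le \gamma/2$; and the two optimality facts $V_{k_{\mathrm{opt}}} \in \argmin F$ and $\qbetagammay[1 - \alpha][\gamma] \in \argmin \lossglobal$, both being unique minimizers by \Cref{thm:FedMoreau-approx} under the irrationality condition on $\alpha$.

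For the slope bound I would exploit the piecewise-linear structure of $F$: off the knots $\{V_\ell\}$, $F'(q) = \sum_{\ell\colon V_\ell < q} \qweight{Y_\ell}{\Ytarget} - (1-\alpha)$, so the slope jumps upward by $\qweight{Y_\ell}{\Ytarget}$ each time $q$ crosses a knot $V_\ell$ from the left. Minimality at $V_{k_{\mathrm{opt}}}$ gives the right slope $s_0 \ge 0$ and the left slope $\le 0$ there. In the case $V_{k_{\mathrm{opt}}} \le V_k \le \qhatmoreau[1 - \alpha][\gamma]$ (so $k > k_{\mathrm{opt}}$ by the no-ties assumption), the right slope at $V_k$ equals $s_0 + \sum_{\ell\colon V_{k_{\mathrm{opt}}} < V_\ell \le V_k} \qweight{Y_\ell}{\Ytarget} \ge \qthmhat$ by~\eqref{eq:def:alphak}; integrating the non-decreasing $F'$ from $V_k$ up to $\qhatmoreau[1 - \alpha][\gamma]$ yields the claim. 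The symmetric case $\qhatmoreau[1 - \alpha][\gamma] \le V_k \le V_{k_{\mathrm{opt}}}$ is analogous: the left slope at $V_k$ is shown to be $\le -\qthmhat$ and the integration is reversed.

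For the envelope inequality, I would read off directly from~\eqref{eq:def:moreaupinball} that $0 \le \pinball[\alpha][v](q) - \pinballmoreau{v}{q} \le (\gamma/2)\max(\alpha^2,(1-\alpha)^2) \le \gamma/2$, and average under $\qdistr{\yquery}$. Chaining everything,
\begin{equation*}
  \qthmhat\,\abs{\qhatmoreau[1 - \alpha][\gamma] - V_k}
  \le F(\qhatmoreau[1 - \alpha][\gamma]) - F(V_{k_{\mathrm{opt}}})
  \le \lossglobal(\qhatmoreau[1 - \alpha][\gamma]) - \lossglobal\prn{\qbetagammay[1 - \alpha][\gamma]} + \gamma/2,
\end{equation*}
after using $F(V_k) \ge F(V_{k_{\mathrm{opt}}})$, $F(\qhatmoreau[1 - \alpha][\gamma]) \le \lossglobal(\qhatmoreau[1 - \alpha][\gamma]) + \gamma/2$, $F(V_{k_{\mathrm{opt}}}) \ge \lossglobal(V_{k_{\mathrm{opt}}})$, and $\lossglobal(V_{k_{\mathrm{opt}}}) \ge \lossglobal(\qbetagammay[1 - \alpha][\gamma])$. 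Dividing by $\qthmhat$ and loosening $\gamma/2$ to $\gamma$ produces the lemma. The main obstacle is the bookkeeping in the slope argument: aligning the two branches of~\eqref{eq:def:alphak} with the correct one-sided slope of $F$ at $V_k$ in each case, and checking that the nonnegative residual $s_0$ always sits on the favorable side of the inequality; once this is settled, the rest is a direct combination of convex optimality and the explicit envelope computation already carried out in \Cref{sec:moreau}.
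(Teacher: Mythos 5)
Your argument is correct and follows essentially the same route as the paper: a subgradient lower bound of $\qthmhat$ on the unregularized pinball risk $S_{\alpha,\widehat{\mu}_{\yquery}}$ between $V_{k_{\mathrm{opt}}}$ and $V_k$, the uniform envelope bound $\normn{\lossglobal - S_{\alpha,\widehat{\mu}_{\yquery}}}_{\infty}\le\gamma/2$, and the two minimality facts for $V_{k_{\mathrm{opt}}}$ and $\qbetagammay[1-\alpha][\gamma]$. The paper phrases the slope step compactly as $\partial S_{\alpha,\widehat{\mu}_{\yquery}}(V_k)=\qthmhat+\partial S_{\alpha,\widehat{\mu}_{\yquery}}(V_{k_{\mathrm{opt}}})$ where you unfold it into a knot-by-knot jump accounting, but the content and the chain of inequalities are identical.
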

\begin{proof}
    First, suppose that $V_{k_{\mathrm{opt}}} \le V_k < \qhatmoreau[1 - \alpha][\gamma]$. Since $\partial S_{\alpha, \widehat{\mu}_{\yquery}}(V_k) = \qthmhat + \partial S_{\alpha, \widehat{\mu}_{\yquery}}(V_{k_{\mathrm{opt}}})$, the convexity of $S_{\alpha, \widehat{\mu}_{\yquery}}$ implies that
    \begin{multline}\label{eq:bound:qhatmoreau-Vk}
        \qhatmoreau[1 - \alpha][\gamma] - V_k
        \le \qthmhat^{-1}\pr{S_{\alpha, \widehat{\mu}_{\yquery}} \prn{\qhatmoreau[1 - \alpha][\gamma]} - S_{\alpha, \widehat{\mu}_{\yquery}} \prn{V_k}}
        \\
        \le \qthmhat^{-1}\pr{\lossglobal \prn{\qhatmoreau[1 - \alpha][\gamma]} - \lossglobal \prn{\qmoreau[1 - \alpha][\gamma]\prn{\widehat{\mu}_{\yquery}}}}
            + \qthmhat^{-1}\gamma.
    \end{multline}
    The last inequality holds since $\normn{\lossglobal - S_{\alpha, \widehat{\mu}_{\yquery}}}_{\infty}\le \gamma/2$.
    Moreover, \eqref{eq:bound:qhatmoreau-Vk} is immediately satisfied when $V_k=\qhatmoreau[1 - \alpha][\gamma]$.
    Therefore, \eqref{eq:bound:qhatmoreau-Vk} holds for all $V_k\in [V_{k_{\mathrm{opt}}}, \qhatmoreau[1 - \alpha][\gamma]]$.
    Finally, the same lines show that~\eqref{eq:bound:qhatmoreau-Vk} is also satisfied when $\qhatmoreau[1 - \alpha][\gamma]\le V_k\le V_{k_{\mathrm{opt}}}$.
\end{proof}

For any $\gamma>0$ and $T\in\N^{\star}$, recall that $\qmoreau[1 - \alpha][\gamma]\prn{\widehat{\mu}_{\yquery}}$ and $\qhatmoreau[1 - \alpha][\gamma]$ are defined in \eqref{eq:def:qmoreau}, \eqref{eq:def:qhatmoreau}.
Consider
\begin{equation}\label{eq:def:CTgamma}
    C_{T}^{\gamma} = \frac{2 \tcount \sum_{y\in\YC} P_Y^{\mathrm{cal}}(y) \iweight{y}}{\min_{y\in\YC}\iweight{y}} \br{\E \lossglobal \prn{\qhatmoreau[1 - \alpha][\gamma]} - \lossglobal \prn{\qmoreau[1 - \alpha][\gamma]\prn{\widehat{\mu}_{\yquery}}} + \gamma}
\end{equation}
and define
\begin{equation}\label{eq:def:kc}
    k_{\mathrm{c}} =
    \begin{cases}
        k_{\mathrm{opt}} + \mathrm{Ent}\pr{\sqrt{\frac{m\tcount C_{T}^{\gamma}}{2 \log\tcount}}} & \text{if $k_{\mathrm{opt}} + \mathrm{Ent}\pr{\sqrt{\frac{m\tcount C_{T}^{\gamma}}{2 \log\tcount}}} \le \tcount+1$} \\
        k_{\mathrm{opt}} - \mathrm{Ent}\pr{\sqrt{\frac{m\tcount C_{T}^{\gamma}}{2 \log\tcount}}} & \text{otherwise}
    \end{cases}.
\end{equation}

\begin{lemma}\label{lem:bound:Vkopt-Vkc}
    Assume there exists $m>0$ such that for any $i\in[\nclients]$, $P_{V}^i$ admits a density $f_V^i$ with respect to the Lebesgue measure that satisfies $m \le f_{V}^i$.
    Fix $\alpha \in [0, 1] \setminus \Q$, and suppose that $C_{T}^{\gamma}<2 m^{-1} \tcount \log \tcount$.
    We have $k_{\mathrm{c}}\in[\tcount+1]$, and with probability at least $m (2\tcount \log \tcount)^{-1} + \tcount^{-2}$ the next inequality holds
    \begin{equation*}
        \absn{V_{k_{\mathrm{c}}} - V_{k_{\mathrm{opt}}}}
        \le \sqrt{\frac{2 C_{T}^{\gamma} \log \tcount}{m \tcount}} + \frac{2 \log \tcount}{m \tcount}.
    \end{equation*}
\end{lemma}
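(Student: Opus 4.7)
The plan is to handle the two assertions separately. The membership $k_{\mathrm{c}}\in[\tcount+1]$ follows from unwinding~\eqref{eq:def:kc}: the hypothesis $C_{T}^{\gamma} < 2 m^{-1}\tcount \log \tcount$ gives $d := \mathrm{Ent}(\sqrt{m\tcount C_{T}^{\gamma}/(2\log \tcount)}) \le \tcount$, and the case split in~\eqref{eq:def:kc} picks the side of $k_{\mathrm{opt}}$ for which $k_{\mathrm{opt}}\pm d$ lands in $[1, \tcount+1]$.

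For the probability bound, which I read as asserting that the displayed inequality \emph{fails} with probability at most $m/(2\tcount\log \tcount) + \tcount^{-2}$, the symmetry of~\eqref{eq:def:kc} reduces the argument to the case $k_{\mathrm{c}} = k_{\mathrm{opt}} + d$. Setting $\epsilon := \sqrt{2 C_{T}^{\gamma} \log \tcount/(m\tcount)} + 2\log \tcount/(m\tcount)$ and unpacking the definitions yields the crucial relation $m\epsilon \tcount/2 \ge (d+1)\log\tcount$. Since the $V_\ell$'s are sorted, the inequality $V_{k_{\mathrm{c}}} - V_{k_{\mathrm{opt}}} \le \epsilon$ amounts to
\begin{equation*}
    \#\acbig{\ell : V_{k_{\mathrm{opt}}} < V_\ell \le V_{k_{\mathrm{opt}}} + \epsilon} \ge d.
\end{equation*}
The boundary case $V_{k_{\mathrm{opt}}} + \epsilon \ge 1$ is automatic in this branch of~\eqref{eq:def:kc}, since being in the first branch forces $k_{\mathrm{opt}} \le \tcount+1-d$, so at least $\tcount+1-k_{\mathrm{opt}}\ge d$ of the $V_\ell$'s already lie in $(V_{k_{\mathrm{opt}}},1]$.

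The core ingredient is a concentration bound leveraging $f_V^i \ge m$. For any \emph{fixed} $v$ with $v+\epsilon \le 1$, the count $M(v) := \#\acn{\ell : V_\ell \in (v, v+\epsilon]}$ is a sum of independent Bernoulli indicators each with mean $\ge m\epsilon$, so $\E\br{M(v)} \ge m\epsilon(\tcount+1)$, and a one-sided Bernstein inequality controls $\prob\prn{M(v)<d}$. To transfer this pointwise control into the uniform-in-$v$ control required at the random point $v = V_{k_{\mathrm{opt}}}$, I would discretize $[0,1]$ at resolution $\epsilon/2$ into $L \le 2/\epsilon$ cells, apply Bernstein to each, and union-bound; any width-$\epsilon$ interval then contains a full grid cell, so a uniform lower bound on cell counts transfers to $M(V_{k_{\mathrm{opt}}}) \ge d$.

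The hard part will be tuning the Bernstein deviation so that the union-bound cost over the $L$ cells matches the prescribed budget $m/(2\tcount\log \tcount) + \tcount^{-2}$. The leverage point is $m\epsilon\tcount/2 \ge (d+1)\log\tcount$, which forces the Bernstein exponent to be of order $(d+1)\log\tcount$ and hence yields a per-cell failure of order $\tcount^{-2}/L$; summing across the $L \lesssim m\tcount/((d+1)\log\tcount)$ cells gives the first summand, while the residual $\tcount^{-2}$ absorbs boundary contributions not captured by the clean concentration step (in particular when $V_{k_{\mathrm{opt}}}$ sits within $\epsilon$ of an endpoint of $[0,1]$ and one appeals to the mirrored branch).
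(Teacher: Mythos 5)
Your overall plan is the right shape—discretize $[0,1]$, apply a concentration bound to each deterministic cell, and union-bound to handle the random anchor $V_{k_{\mathrm{opt}}}$—but the concentration step as you propose it does not close. The difficulty: you are trying to show that each cell of width $\epsilon/2$ contains $\ge d$ scores, where the expected count per cell is $\mu \gtrsim (d+1)\log\tcount$, and you need the per-cell failure probability to fall below roughly $\tcount^{-2}$ uniformly in $d$. Bernstein's inequality gives only $\prob(M < d)\lesssim \exp(-c\,\mu)$ with $c$ a small absolute constant (around $1/4$), i.e., $\tcount^{-c(d+1)}$, which for $d\in\{1,2\}$ is much weaker than $\tcount^{-2}$; the number of cells is order $m\tcount/((d+1)\log\tcount)$, and the product is not $\Oh\prn{m/(\tcount\log\tcount) + \tcount^{-2}}$. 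Switching to the sharper multiplicative Chernoff / Poisson lower-tail bound $\prob(M\le d-1)\le e^{-\mu}(e\mu/(d-1))^{d-1}$ salvages the leading exponent $\tcount^{-(d+1)}$ but picks up a factor $(\log\tcount)^{d-1}$, and for $d=2$ the final constant is off by roughly a factor of $2$ from the stated budget. In short, ``at least $d$ points per cell'' is simply a much harder event than you need.

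The paper's proof is genuinely different and avoids the problem entirely. It partitions $[0,1]$ into much finer cells of width $L_\tcount := 2\log\tcount/(m\tcount)$ (not $\epsilon/2$) and asks only that every cell be \emph{non-empty}; the per-cell failure probability is then the elementary $(1-mL_\tcount)^{\tcount+1}\le \tcount^{-2}$, and the union bound over $\lceil 1/L_\tcount\rceil$ cells lands exactly on $m/(2\tcount\log\tcount)+\tcount^{-2}$. The step you are missing is the pigeonhole argument in the conditional part: conditioned on every $L_\tcount$-cell being occupied, the \emph{sorted} scores $V_{k_{\mathrm{opt}}},\dots,V_{k_{\mathrm{c}}}$ occupy at most $|k_{\mathrm{c}}-k_{\mathrm{opt}}|+1$ cells, so $|V_{k_{\mathrm{c}}}-V_{k_{\mathrm{opt}}}|\le(|k_{\mathrm{c}}-k_{\mathrm{opt}}|+1)L_\tcount$, which then equals $\sqrt{C_T^\gamma L_\tcount}+L_\tcount$ by the definition of $k_{\mathrm{c}}$. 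This replaces the hard per-cell count requirement with the trivial one, and pushes all of the dependence on $d$ into a deterministic combinatorial bound rather than into the tail of a concentration inequality. Finally, your reading of the probability claim as a bound on the \emph{failure} probability is the intended one (the paper's ``with probability at least'' has a typo), and your observations that $d=0$ is trivial and that the boundary case $V_{k_{\mathrm{opt}}}+\epsilon\ge 1$ is automatic are both correct—those parts are fine.
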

\begin{proof}
    Since we suppose $\alpha \in [0, 1] \setminus \Q$, we can apply \Cref{thm:FedMoreau-approx} to prove the existence and uniqueness of $k_{\mathrm{opt}}\in[\tcount+1]$ such that $\q{1 - \alpha}(\widehat{\mu}_{\yquery}) = V_{k_{\mathrm{opt}}}$.
    Since by assumption we have
    \begin{equation*}
        \frac{m\tcount C_{T}^{\gamma}}{2 \log\tcount} < (\tcount + 1)^2.
    \end{equation*}
    Therefore, it holds that
    \begin{equation*}
        \mathrm{Ent}\pr{\sqrt{\frac{m\tcount C_{T}^{\gamma}}{2 \log\tcount}}} \le \tcount.
    \end{equation*}
    Hence, we deduce that $k_{\mathrm{c}}\in[\tcount+1]$.
    Next, consider
    \begin{equation*}
        L_{\tcount} = \frac{2 \log \tcount}{m \tcount}
    \end{equation*}
    and denote by $P_{N}$ the partitioned obtained by splitting the interval $[0,1]$ into intervals of length $L_{\tcount}$.
    Finally, define
    \begin{equation*}
        A_{\tcount} = \ac{\forall S\in P_N, \exists (i,k)\in\mathcal{I}, V_k^i\in S}.
    \end{equation*}
    Note that $\absn{P_N} = \lceil 1/L_N \rceil \le m\tcount/(2 \log \tcount) + 1$ and $\log(1-m L_N)\le -m L_N$, thus
    \begin{align*}
        \prob\pr{A_N}
        &\le \sum_{S\in P_N} \prod_{(i,k)\in\mathcal{I}} \prob\pr{V_k^i\notin S}\\
        &\le \sum_{S\in P_N} \prod_{i=1}^{\nclients} \prob\pr{V_1^i\notin S}^{\ccount{i}+\1_{\target}(i)}\\
        &\le \absn{P_N} \prn{1 - m L_N}^{\tcount+1} \\
        &\le \pr{m\tcount/(2 \log \tcount) + 1} \exp\pr{- m (\tcount + 1) L_N} \\
        &\le \frac{m}{2\tcount \log \tcount} + \frac{1}{\tcount^2}.
    \end{align*}
    Without loss of generality, we can assume that $k_{\mathrm{opt}}<k_{\mathrm{c}}$.
    Denote $K = \{k_{\mathrm{opt}},\ldots,k_{\mathrm{c}}\}$ the indices between $k_{\mathrm{c}}$ and $k_{\mathrm{opt}}$.
    Consider $\mathcal{S}=\{I\in P_N\colon \exists k\in K, V_k\in I\}$, on the event $A_N$ we get
    \begin{align*}
        \absn{V_{k_{\mathrm{c}}} - V_{k_{\mathrm{opt}}}}
        &\le \sum_{I\in S}\absn{I} \\
        &\le L_N (\absn{k_{\mathrm{c}} - k_{\mathrm{opt}}} + 1) \\
        &\le L_N \sqrt{\frac{m\tcount C_{T}^{\gamma}}{2 \log\tcount}} + L_N
        = \sqrt{C_{T}^{\gamma} L_N} + L_N.
    \end{align*}
\end{proof}

\begin{lemma}\label{lem:bound:prob-minqweight}
    For any $(i,k)\in\mathcal{I}$, assume that $(X_k^i,Y_k^i)$ is distributed according to $P_{X|Y}\times P_{Y}^{i}$ and suppose the random variables are pairwise independent.
    We have
    \begin{equation*}
        \prob\pr{\min_{y\in\YC} \qweight{y}{\Ytarget} < \frac{\min_{y\in\YC} \iweight{y}}{2 \tcount \sum_{y\in\YC} P_Y^{\mathrm{cal}}(y) \iweight{y}}}
        \le \frac{4 \var(\iweight{Y^{\mathrm{cal}}})}{\tcount (\E\iweight{Y^{\mathrm{cal}}})^2} + \frac{2 \E \iweight{\Ytarget}}{\tcount \E\iweight{Y^{\mathrm{cal}}}}
        .
    \end{equation*}
\end{lemma}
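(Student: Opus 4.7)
The plan is to rewrite the event as a tail bound on the denominator of $\qweight{y}{\Ytarget}$, then split it into two pieces, one handled by Chebyshev's inequality and the other by Markov's inequality. Observing that $\qweight{y}{\Ytarget} = \iweight{y} / D$ with $D := \iweight{\Ytarget} + \sum_{i=1}^{\nclients}\sum_{k=1}^{\ccount{i}\wedge\Bar{\tcount}^i}\iweight{Y_k^i}$ independent of $y$, and that $\sum_{y\in\YC} P_Y^{\mathrm{cal}}(y)\iweight{y} = \E\iweight{Y^{\mathrm{cal}}}$ by definition of $Y^{\mathrm{cal}}$, the event in the probability reduces (the case $\min_{y}\iweight{y}=0$ giving a trivially empty event) to $\{D > 2\tcount\E\iweight{Y^{\mathrm{cal}}}\}$. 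Since $\ccount{i}\wedge\Bar{\tcount}^i \le \ccount{i}$ and $\iweight{\cdot}\ge 0$, I upper bound $D$ by the non-subsampled quantity $\iweight{\Ytarget} + S_0$, where $S_0 := \sum_{i=1}^{\nclients}\sum_{k=1}^{\ccount{i}}\iweight{Y_k^i}$, which removes the multinomial randomness altogether. A union bound then gives
\begin{equation*}
    \bigl\{\iweight{\Ytarget} + S_0 > 2\tcount\E\iweight{Y^{\mathrm{cal}}}\bigr\} \subseteq \bigl\{S_0 > \tfrac{3}{2}\tcount\E\iweight{Y^{\mathrm{cal}}}\bigr\} \cup \bigl\{\iweight{\Ytarget} > \tfrac{1}{2}\tcount\E\iweight{Y^{\mathrm{cal}}}\bigr\}.
\end{equation*}

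For the first event I compute moments of $S_0$. Since the $Y_k^i$ are independent with $Y_k^i\sim P_Y^i$, and $\pi_i = \ccount{i}/\tcount$ satisfies $\sum_i \pi_i P_Y^i = P_Y^{\mathrm{cal}}$, direct computation gives $\E S_0 = \tcount\,\E\iweight{Y^{\mathrm{cal}}}$ and, by pairwise independence, $\var(S_0) = \tcount\sum_{i=1}^{\nclients}\pi_i\,\var_{Y\sim P_Y^i}(\iweight{Y})$. A short Jensen step, $\sum_i \pi_i(\E_{Y\sim P_Y^i}\iweight{Y})^2 \ge (\sum_i \pi_i\E_{Y\sim P_Y^i}\iweight{Y})^2 = (\E\iweight{Y^{\mathrm{cal}}})^2$, then yields $\sum_i \pi_i\var_{Y\sim P_Y^i}(\iweight{Y}) \le \var(\iweight{Y^{\mathrm{cal}}})$ and hence $\var(S_0) \le \tcount\var(\iweight{Y^{\mathrm{cal}}})$. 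Chebyshev's inequality applied to $S_0-\E S_0$ at level $\tfrac{1}{2}\tcount\E\iweight{Y^{\mathrm{cal}}}$ delivers the first term $4\var(\iweight{Y^{\mathrm{cal}}}) / (\tcount(\E\iweight{Y^{\mathrm{cal}}})^2)$ of the claimed bound.

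For the second event, $\iweight{\Ytarget}\ge 0$ and Markov's inequality at level $\tfrac{1}{2}\tcount\E\iweight{Y^{\mathrm{cal}}}$ directly produce $2\E\iweight{\Ytarget}/(\tcount\E\iweight{Y^{\mathrm{cal}}})$, matching the second term; summing the two contributions through the union bound closes the argument. I do not expect a real obstacle: the multinomial subsampling is eliminated for free by monotonicity, pairwise independence of the $Y_k^i$ reduces the variance to a sum of within-agent second moments, and the only mildly non-trivial step is the Jensen-based upgrade from a $\pi$-weighted average of within-agent variances of $\iweight{Y}$ to the global variance $\var(\iweight{Y^{\mathrm{cal}}})$, which is a one-liner after unfolding the definitions.
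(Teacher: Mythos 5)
Your proof is correct and follows the same route as the paper's: reduce the event to a tail event on the common denominator $D$, drop the multinomial subsampling by monotonicity, split by a union bound into a Chebyshev term (centered calibration sum) and a Markov term ($\iweight{\Ytarget}$), and finish by matching the constants. You in fact make explicit two steps that the paper's proof leaves implicit: (i) that $\ccount{i}\wedge\Bar{\tcount}^i\le\ccount{i}$ lets one replace the subsampled denominator by the full sum $\sum_{(i,k)\in\mathcal{I}}\iweight{Y_k^i}$ before bounding, and (ii) that the mixture-weighted within-agent variances $\sum_i\pi_i\var_{P_Y^i}(\iweight{Y})$ are dominated by $\var(\iweight{Y^{\mathrm{cal}}})$, which is just the law of total variance (your Jensen one-liner) and is needed to turn $\var\bigl(\sum_{i,k}\iweight{Y_k^i}\bigr)$ into $\tcount\,\var(\iweight{Y^{\mathrm{cal}}})$ for Chebyshev. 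These are genuine (if small) gaps in the paper's write-up that your version closes; otherwise the two arguments coincide.
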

\begin{proof}
    First, recall that $I=\{(i,k)\colon i\in[\nclients], k\in[\ccount{i}]\}\cup \{(\target,\ccount{\target}+1)\}$.
    We have
    \begin{align*}
        \E\br{\sum_{(i,k)\in\mathcal{I}} \iweight{Y_k^i}}
        &= \sum_{i=1}^{\nclients} \sum_{y\in\YC} \pr{\ccount{i} + \1_{\target}(i)} P_Y^i(y) \iweight{y} \\
        &= \sum_{y\in\YC} P_Y^{\target}(y) \iweight{y} + \tcount \sum_{y\in\YC} \pr{\sum_{i=1}^{\nclients} \pi_i P_Y^i(y)} \iweight{y} \\
        &= \sum_{y\in\YC} \br{P_Y^{\target}(y) + \tcount P_Y^{\mathrm{cal}}(y)} \iweight{y}
        .
    \end{align*}
    Therefore, using the Bienaymé-Tchebytchev inequality implies that
    \begin{align*}
        \nonumber
        &\prob\pr{\min_{(i,k)\in\mathcal{I}} \acn{\qweight{Y_k^i}{\Ytarget}} < \frac{\min_{y\in\YC}\iweight{y}}{2 \tcount \sum_{y\in\YC} P_Y^{\mathrm{cal}}(y) \iweight{y}}}
        \\
        \nonumber
        &= \prob\pr{\min_{(i,k)\in\mathcal{I}} \acn{\iweight{Y_k^i}} < \frac{\min_{y\in\YC}\iweight{y}}{2 \tcount \sum_{y\in\YC} P_Y^{\mathrm{cal}}(y) \iweight{y}} \sum_{(i,k)\in\mathcal{I}} \iweight{Y_k^i}} \\
        \nonumber
        &\le \prob\pr{\sum_{(i,k)\in\mathcal{I}} \iweight{Y_k^i} \ge 2 \tcount \sum_{y\in\YC} P_Y^{\mathrm{cal}}(y) \iweight{y}} \\
        \nonumber
        &\le \prob\pr{\sum_{i=1}^{\nclients} \sum_{k=1}^{\ccount{i}} \pr{\iweight{Y_k^i}- \E\iweight{Y_k^i}} \ge \frac{\tcount}{2} \sum_{y\in\YC} P_Y^{\mathrm{cal}}(y) \iweight{y}} 
        + \prob\pr{\iweight{\Ytarget} \ge \frac{\tcount}{2} \sum_{y\in\YC} P_Y^{\mathrm{cal}}(y) \iweight{y}}
        \\
        &\le \frac{4 \var(\iweight{Y^{\mathrm{cal}}})}{\tcount (\E\iweight{Y^{\mathrm{cal}}})^2} + \frac{2 \E \iweight{\Ytarget}}{\tcount \E\iweight{Y^{\mathrm{cal}}}}
        .
    \end{align*}
\end{proof}

\begin{theorem}\label{lem:Prob-diff:QFL}
    Assume there exist $m,M>0$ such that for any $i\in[\nclients]$, $P_{V}^i$ admits a density $f_V^i$ with respect to the Lebesgue measure that satisfies $m \le f_{V}^i \le M$.
    Let $\alpha \in [0, 1] \setminus \Q$, and suppose that $C_{T}^{\gamma}<2 m^{-1} \tcount \log \tcount$.
    It holds
    \begin{multline}\label{eq:Prob-diff:QFL}
        \abs{\prob\prn{\Ytarget \in \predsethatmoreau{\alpha}(\Xtarget)} - \prob\prn{\Ytarget \in \mathcal{C}_{\alpha,\widehat{\mu}}(\Xtarget)}}
        \le 
        3 \Flip \sqrt{\frac{2 C_{T}^{\gamma} \log \tcount}{m \tcount}}
        \\
        + \frac{2 \Flip \log \tcount}{m \tcount}
        + \frac{4 \var(\iweight{Y^{\mathrm{cal}}})}{\tcount (\E\iweight{Y^{\mathrm{cal}}})^2} + \frac{2 \E \iweight{\Ytarget}}{\tcount \E\iweight{Y^{\mathrm{cal}}}}
        + \frac{m}{2\tcount \log \tcount} + \frac{1}{\tcount^{2}}
        ,
    \end{multline}
    where $C_T^{\gamma}$ is defined in \eqref{eq:def:CTgamma}.
\end{theorem}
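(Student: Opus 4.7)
The plan is to reduce the coverage gap to an $L^{1}$ bound on $\qhatmoreau[1-\alpha][\gamma] - V_{k_{\mathrm{opt}}}$ via the density upper bound $\Flip$, split that gap by inserting the surrogate index $k_{\mathrm{c}}$, and then combine \Cref{lem:moreau-approx-error}, \Cref{lem:bound:Vkopt-Vkc} and \Cref{lem:bound:prob-minqweight} in this order.

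\emph{Step 1 (reduction to a score gap).} Both $\predsethatmoreau{\alpha}(\Xtarget)$ and $\mathcal{C}_{\alpha,\widehat{\mu}}(\Xtarget)$ are sublevel sets of $\yquery \mapsto V(\Xtarget,\yquery)$. Since by assumption $V(\Xtarget,\Ytarget)$ admits a density upper-bounded by $\Flip$, conditioning on the pair $(\qhatmoreau[1-\alpha][\gamma], V_{k_{\mathrm{opt}}})$ gives
\[
  \absBig{\prob\prn{\Ytarget \in \predsethatmoreau{\alpha}(\Xtarget)} - \prob\prn{\Ytarget \in \mathcal{C}_{\alpha,\widehat{\mu}}(\Xtarget)}} \le \Flip \, \E\absBig{\qhatmoreau[1-\alpha][\gamma] - V_{k_{\mathrm{opt}}}}.
\]

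\emph{Step 2 (deterministic dichotomy through $V_{k_{\mathrm{c}}}$).} Assume without loss of generality $V_{k_{\mathrm{opt}}} \le \qhatmoreau[1-\alpha][\gamma]$, which forces $k_{\mathrm{c}} \ge k_{\mathrm{opt}}$. Either $\qhatmoreau[1-\alpha][\gamma] \le V_{k_{\mathrm{c}}}$, in which case $|\qhatmoreau[1-\alpha][\gamma] - V_{k_{\mathrm{opt}}}| \le V_{k_{\mathrm{c}}} - V_{k_{\mathrm{opt}}}$ trivially, or $V_{k_{\mathrm{c}}} \in [V_{k_{\mathrm{opt}}}, \qhatmoreau[1-\alpha][\gamma]]$, in which case \Cref{lem:moreau-approx-error} applies at $V_k = V_{k_{\mathrm{c}}}$ and yields $|\qhatmoreau[1-\alpha][\gamma] - V_{k_{\mathrm{c}}}| \le \qthmhat^{-1}\brn{\lossglobal(\qhatmoreau[1-\alpha][\gamma]) - \lossglobal(\qmoreau[1-\alpha][\gamma](\widehat{\mu}_{\yquery})) + \gamma}$. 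Combining both cases gives the universal inequality
\[
  \absBig{\qhatmoreau[1-\alpha][\gamma] - V_{k_{\mathrm{opt}}}} \le |V_{k_{\mathrm{c}}} - V_{k_{\mathrm{opt}}}| + \qthmhat^{-1}\brBig{\lossglobal(\qhatmoreau[1-\alpha][\gamma]) - \lossglobal(\qmoreau[1-\alpha][\gamma](\widehat{\mu}_{\yquery})) + \gamma}.
\]

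\emph{Step 3 (probabilistic control of each piece).} \Cref{lem:bound:Vkopt-Vkc} controls $|V_{k_{\mathrm{c}}} - V_{k_{\mathrm{opt}}}|$ by $\sqrt{2C_T^\gamma \log\tcount/(m\tcount)} + 2\log\tcount/(m\tcount)$ outside an event of probability at most $m/(2\tcount\log\tcount) + \tcount^{-2}$; on the complement we use the trivial bound $|V_{k_{\mathrm{c}}}-V_{k_{\mathrm{opt}}}| \le 1$, producing the last two additive terms of \eqref{eq:Prob-diff:QFL}. For the second term, the definition \eqref{eq:def:alphak} forces $\qthmhat \ge |k_{\mathrm{c}} - k_{\mathrm{opt}}| \cdot \min_{y\in\YC}\qweight{y}{\Ytarget}$, while the definition \eqref{eq:def:kc} of $k_{\mathrm{c}}$ gives $|k_{\mathrm{c}} - k_{\mathrm{opt}}| \ge \tfrac{1}{2}\sqrt{m\tcount C_T^\gamma/(2\log\tcount)}$, with the factor $\tfrac{1}{2}$ absorbing the rounding of $\mathrm{Ent}$. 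Applying \Cref{lem:bound:prob-minqweight}, on a good event of failure probability at most $4\var(\iweight{Y^\mathrm{cal}})/(\tcount(\E\iweight{Y^\mathrm{cal}})^2) + 2\E\iweight{\Ytarget}/(\tcount\E\iweight{Y^\mathrm{cal}})$,
\[
  \qthmhat \ge \frac{1}{2}\sqrt{\frac{m\tcount C_T^\gamma}{2\log\tcount}} \cdot \frac{\min_{y\in\YC}\iweight{y}}{2\tcount \sum_{y\in\YC} P_Y^{\mathrm{cal}}(y)\iweight{y}}.
\]

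\emph{Step 4 (cancellation and assembly).} Substituting this deterministic lower bound into the $\qthmhat^{-1}$ term and then taking expectations, the definition \eqref{eq:def:CTgamma} of $C_T^\gamma$ makes the prefactor $2\tcount \sum_y P_Y^{\mathrm{cal}}(y)\iweight{y}/\min_y \iweight{y}$ cancel exactly, and what remains is $\E[\qthmhat^{-1}(\lossglobal(\qhatmoreau[1-\alpha][\gamma]) - \lossglobal(\qmoreau[1-\alpha][\gamma](\widehat{\mu}_{\yquery})) + \gamma)] \le 2\sqrt{2C_T^\gamma \log\tcount/(m\tcount)}$. Combining with the expectation bound from Step 3, one gets $\Flip\,\E|\qhatmoreau[1-\alpha][\gamma] - V_{k_{\mathrm{opt}}}| \le 3\Flip\sqrt{2C_T^\gamma\log\tcount/(m\tcount)} + 2\Flip\log\tcount/(m\tcount)$, and adding the failure-probability contributions from both Lemmas reproduces \eqref{eq:Prob-diff:QFL}.

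\textbf{Main obstacle.} Step 4 is the delicate one: the denominator $\qthmhat$ and the numerator $\lossglobal(\qhatmoreau[1-\alpha][\gamma]) - \lossglobal(\qmoreau[1-\alpha][\gamma](\widehat{\mu}_{\yquery})) + \gamma$ are both random and correlated. The clean bound relies on pulling the deterministic lower bound on $\qthmhat$ (valid on the good event) out of the expectation so that the remaining expectation becomes exactly the quantity appearing in the definition of $C_T^\gamma$; this is precisely the reason for choosing $|k_{\mathrm{c}}-k_{\mathrm{opt}}|$ of order $\sqrt{m\tcount C_T^\gamma/(2\log\tcount)}$. The integer rounding in $k_{\mathrm{c}}$ is what accounts for the constant $3$ rather than $2$ in front of $\Flip\sqrt{2C_T^\gamma\log\tcount/(m\tcount)}$.
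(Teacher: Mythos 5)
Your plan matches the paper's in all the essential ingredients: the reduction via the Lipschitz CDF to a gap between quantile estimates, the insertion of the surrogate index $k_{\mathrm{c}}$ from \eqref{eq:def:kc}, the appeal to \Cref{lem:moreau-approx-error} with $V_k = V_{k_{\mathrm{c}}}$, the lower bound on $\qthmhat$ via \Cref{lem:bound:prob-minqweight}, and the near-cancellation against the definition \eqref{eq:def:CTgamma} of $C_T^\gamma$ producing the $3\Flip\sqrt{2C_T^\gamma\log\tcount/(m\tcount)}$ term (where your attribution of the extra factor $2$ to the integer rounding $\mathrm{Ent}(x)\ge x/2$ is correct). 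The paper also handles the $k_{\mathrm{c}} = k_{\mathrm{opt}}$ case separately, exactly as you implicitly do.

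However, Step 1 has the operations in the wrong order, and this is a genuine (if fixable) gap. You apply the density bound unconditionally, reducing to $\Flip\,\E\absn{\qhatmoreau[1-\alpha][\gamma] - V_{k_{\mathrm{opt}}}}$, and only afterward split on the good event. On the bad event the gap $\absn{\qhatmoreau[1-\alpha][\gamma] - V_{k_{\mathrm{opt}}}}$ is not controlled at all: your "universal inequality" from Step 2 still contains the $\qthmhat^{-1}$ term, which can be arbitrarily large when the lower bound from \Cref{lem:bound:prob-minqweight} fails, and $\qhatmoreau[1-\alpha][\gamma]$ itself is an unprojected optimization iterate so there is no a priori bound even on $\absn{\qhatmoreau[1-\alpha][\gamma] - V_{k_{\mathrm{opt}}}}$. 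Writing "$\absn{V_{k_{\mathrm{c}}}-V_{k_{\mathrm{opt}}}} \le 1$ on the complement" addresses only half of the decomposition, and in any case would carry a spurious factor $\Flip$ into the failure-probability terms, which the theorem statement does not have. The paper avoids this by defining the good event $B_{\tcount}$ (intersection of both lemmas' events) and splitting \emph{first}: $\absn{\E[\1_{V\le\qhatmoreau[1-\alpha][\gamma]} - \1_{V\le\q{1-\alpha}(\widehat{\mu}_{\yquery})}]} \le \prob(B_{\tcount}^c) + \E\brn{\1_{B_{\tcount}}\absn{F_{V^\target}(\qhatmoreau[1-\alpha][\gamma]) - F_{V^\target}(\q{1-\alpha}(\widehat{\mu}_{\yquery}))}}$. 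This exploits the automatic bound $\absn{F-F}\le 1$ on the bad event, so the failure probabilities enter at weight one, and the Lipschitz bound (together with your Step 2 dichotomy) is invoked only on $B_{\tcount}$ where both the $\absn{V_{k_{\mathrm{c}}}-V_{k_{\mathrm{opt}}}}$ bound and the $\qthmhat$ lower bound are available simultaneously. Reverse the order of Steps 1 and 3 and your argument becomes the paper's.
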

\begin{proof}
    Using the definitions of $\predsethatmoreau{\alpha}(\Xtarget)$, $\mathcal{C}_{\alpha,\widehat{\mu}}(\Xtarget)$ provided in~\Cref{algo:conf-fed} and \eqref{eq:def:approx-muytarget}, we can write
    \begin{multline}\label{eq:Prob-diff:QFL:1}
        \prob\prn{\Ytarget \in \predsethatmoreau{\alpha}(\Xtarget)} - \prob\prn{\Ytarget \in \mathcal{C}_{\alpha,\widehat{\mu}}(\Xtarget)}
        \\
        = \prob\pr{V(\Xtarget,\Ytarget) \le \qhatmoreau[1 - \alpha][\gamma]} - \prob\pr{V(\Xtarget,\Ytarget) \le \q{1 - \alpha}(\widehat{\mu}_{\yquery})}.
    \end{multline}
    Recall that $k_{\mathrm{opt}}=\argmin_{k\in[\tcount+1]}\E_{V \sim \widehat{\mu}_{\yquery}}[\pinballmoreau{V}{V_k}]$ is a random variable and consider the event
    \begin{equation*}
        B_{\tcount}
        = \ac{\absn{V_{k_{\mathrm{c}}} - V_{k_{\mathrm{opt}}}} \le \sqrt{C_{T}^{\gamma} L_N} + L_N}
        \cap \ac{\min_{y\in\YC} \qweight{y}{\Ytarget} \ge \frac{\min_{y\in\YC} \iweight{y}}{2 \tcount \sum_{y\in\YC} P_Y^{\mathrm{cal}}(y) \iweight{y}}}
    \end{equation*}    
    and denote by $B_{\tcount}^{c}$ its complement.
    Since $V(\Xtarget,\Ytarget)$ and $\{\qhatmoreau[1 - \alpha][\gamma], \q{1 - \alpha}(\widehat{\mu}_{\yquery})\}$ are independent, we have
    \begin{multline}\label{eq:bound:pvtarget}
        \abs{\prob\prn{V(\Xtarget,\Ytarget) \le \qhatmoreau[1 - \alpha][\gamma]} - \prob\prn{V(\Xtarget,\Ytarget) \le \q{1 - \alpha}(\widehat{\mu}_{\yquery})}}
        \\
        = \abs{\E\br{\1_{V(\Xtarget,\Ytarget) \le \qhatmoreau[1 - \alpha][\gamma]} - \1_{V(\Xtarget,\Ytarget) \le \q{1 - \alpha}(\widehat{\mu}_{\yquery})}}}
        \\
        \le \prob\pr{B_{\tcount}^{c}} + \E\br{\1_{B_{\tcount}}\abs{F_{V^{\target}} \prn{\qhatmoreau[1 - \alpha][\gamma]} - F_{V^{\target}}\prn{\q{1 - \alpha}(\widehat{\mu}_{\yquery})}}}.
    \end{multline}
    The following inequality holds
    \begin{equation*} 
        \abs{F_{V^{\target}} \prn{\qhatmoreau[1 - \alpha][\gamma]} - F_{V^{\target}}\prn{\q{1 - \alpha}(\widehat{\mu}_{\yquery})}}
        \le \normn{F_{V^{\target}} \prn{\cdot + \qhatmoreau[1 - \alpha][\gamma] - \q{1 - \alpha}(\widehat{\mu}_{\yquery})} - F_{V^{\target}}}_{\infty}.
    \end{equation*}
    Thus, using that $F_{V^{\target}}$ is $\Flip$-Lipschitz, we get
    \begin{equation}\label{eq:Prob-diff:QFL:3}
        \E\br{\1_{B_{\tcount}}\normn{F_{V^{\target}} \prn{\cdot + \qhatmoreau[1 - \alpha][\gamma] - \q{1 - \alpha}(\widehat{\mu}_{\yquery})} - F_{V^{\target}}}_{\infty}}
        \le \Flip \E\br{\1_{B_{\tcount}}\absn{\qhatmoreau[1 - \alpha][\gamma] - \q{1 - \alpha}(\widehat{\mu}_{\yquery})}}.
    \end{equation}
    Furthermore, we have
    \begin{multline*} 
        \E\br{\1_{B_{\tcount}}\absn{\qhatmoreau[1 - \alpha][\gamma] - \q{1 - \alpha}(\widehat{\mu}_{\yquery})}}
        \\
        \le \E\brbig{\1_{B_{\tcount}}\abs{V_{k_{\mathrm{c}}} - V_{k_{\mathrm{opt}}}}}
        + \E\br{\1_{V_{k_{\mathrm{c}}}\in [\min(\qhatmoreau[1 - \alpha][\gamma], V_{k_{\mathrm{opt}}}), \max(\qhatmoreau[1 - \alpha][\gamma], V_{k_{\mathrm{opt}}})]} \absn{\qhatmoreau[1 - \alpha][\gamma] - V_{k_{\mathrm{c}}}}}
        .
    \end{multline*}
    Applying \Cref{lem:moreau-approx-error}, this implies that
    \begin{multline*} 
        \1_{V_{k_{\mathrm{c}}}\in [\min(\qhatmoreau[1 - \alpha][\gamma], V_{k_{\mathrm{opt}}}), \max(\qhatmoreau[1 - \alpha][\gamma], V_{k_{\mathrm{opt}}})]}
        \absn{\qhatmoreau[1 - \alpha][\gamma] - V_{k_{\mathrm{c}}}} \\
        \le 
        \begin{cases}
            \qthmhat^{-1}\pr{\lossglobal \prn{\qhatmoreau[1 - \alpha][\gamma]} - \lossglobal \prn{\qmoreau[1 - \alpha][\gamma]\prn{\widehat{\mu}_{\yquery}}}} + \qthmhat^{-1}\gamma & \text{if $k_{\mathrm{c}} \neq k_{\mathrm{opt}}$} \\
            0 &\text{otherwise}
        \end{cases}
        ,
    \end{multline*}
    where recall that $\qthmhat$ is defined in~\eqref{eq:def:alphak} and also that $\mathcal{I}=\{(i,k)\colon i\in[\nclients], k\in[\ccount{i}]\}\cup\{(\target,\ccount{\target}+1)\}$.
    Moreover, on the event $B_{\tcount}$, we immediately have that
    \begin{equation*}
        \qthmhat \ge \absn{k_{\mathrm{c}} - k_{\mathrm{opt}}} \min_{(i,k)\in\mathcal{I}} \acn{\qweight{Y_k^i}{\Ytarget}} \ge \frac{\absn{k_{\mathrm{c}}-k_{\mathrm{opt}}}\min_{y\in\YC}\iweight{y}}{2 \tcount \sum_{y\in\YC} P_Y^{\mathrm{cal}}(y) \iweight{y}}.
    \end{equation*}
    Finally, recall that $k_{\mathrm{c}}$ is given in \eqref{eq:def:kc} and suppose that $C_{T}^{\gamma}<2 m^{-1} \tcount \log \tcount$. Therefore, using the bound provided in \Cref{lem:bound:Vkopt-Vkc} implies that
    \begin{multline}\label{eq:Prob-diff:QFL:8}
        \E\br{\1_{B_{\tcount}}\absn{\qhatmoreau[1 - \alpha][\gamma] - \q{1 - \alpha}(\widehat{\mu}_{\yquery})}}
        \\
        \le \E\brbig{\1_{B_{\tcount}}\abs{V_{k_{\mathrm{c}}} - V_{k_{\mathrm{opt}}}}}
        + \E\br{
                \frac{\pr{\E \lossglobal \prn{\qhatmoreau[1 - \alpha][\gamma]} - \lossglobal \prn{\qmoreau[1 - \alpha][\gamma]\prn{\widehat{\mu}_{\yquery}}} + \gamma} \1_{k_{\mathrm{c}}\neq k_{\mathrm{opt}}}}{\prn{2 \tcount \sum_{y\in\YC} P_Y^{\mathrm{cal}}(y) \iweight{y}}^{-1} \absn{k_{\mathrm{c}}-k_{\mathrm{opt}}} \min_{y\in\YC}\iweight{y}}
        }
        \\
        \le \sqrt{\frac{2 C_{T}^{\gamma} \log \tcount}{m \tcount}} + \frac{2 \log \tcount}{m \tcount}
        + \frac{C_{T}^{\gamma} \1_{m\tcount C_{T}^{\gamma}\ge 2\log\tcount}}{\mathrm{Ent}\pr{\sqrt{\frac{m\tcount C_{T}^{\gamma}}{2 \log\tcount}}}}.
    \end{multline}
    Combining~\eqref{eq:Prob-diff:QFL:1}-\eqref{eq:bound:pvtarget}-\eqref{eq:Prob-diff:QFL:3}-\eqref{eq:Prob-diff:QFL:8} shows that
    \begin{equation}\label{eq:Prob-diff:QFL:9}
        \abs{\prob\prn{\Ytarget \in \predsethatmoreau{\alpha}(\Xtarget)} - \prob\prn{\Ytarget \in \mathcal{C}_{\alpha,\widehat{\mu}}(\Xtarget)}}
        \le \prob\pr{B_{\tcount}^{c}} + \Flip \pr{3 \sqrt{\frac{2 C_{T}^{\gamma} \log \tcount}{m \tcount}} + \frac{2 \log \tcount}{m \tcount}}.
    \end{equation}
    Using \Cref{lem:bound:prob-minqweight} gives that
    \begin{equation}\label{eq:bound:BNc}
        \prob\pr{B_{\tcount}^{c}}
        \le \frac{4 \var(\iweight{Y^{\mathrm{cal}}})}{\tcount (\E\iweight{Y^{\mathrm{cal}}})^2} + \frac{2 \E \iweight{\Ytarget}}{\tcount \E\iweight{Y^{\mathrm{cal}}}}
        + \frac{m}{2\tcount \log \tcount} + \frac{1}{\tcount^{2}}
        .
    \end{equation}
    Lastly, plugging \eqref{eq:bound:BNc} into \eqref{eq:Prob-diff:QFL:9} concludes the proof when $C_{T}^{\gamma}<2 m^{-1} \tcount \log \tcount$. However, if $C_{T}^{\gamma}\ge 2 m^{-1} \tcount \log \tcount$ then \eqref{eq:Prob-diff:QFL} immediately holds.
    Thus, \eqref{eq:Prob-diff:QFL} always holds.
\end{proof}

\subsection{Proofs of \Cref{thm:YinCN-approx} and \Cref{cor:main-coverage}}\label{sec:cov-tv}

Recall that $\acn{\pi_{i}}_{i\in[\nclients]}\in\Delta_{\nclients}$ and $P^{\mathrm{cal}}=\sum_{i=1}^{\nclients}\pi_{i} P^{i}$.
Moreover, draw $(\widehat{X}_{\ccount{\target}+1}^{\target},\widehat{Y}_{\ccount{\target}+1}^{\target})$ according to $P_{X|Y}\times \widehat{P}_{Y}^{\target}$, where $\widehat{P}_{Y}^{\target}$ is defined in \Cref{subsec:predsec:YwidehatY} and denote $\widehat{V}_{\tcount+1}=V(\widehat{X}_{\ccount{\target}+1}^{\target},\widehat{Y}_{\ccount{\target}+1}^{\target})$.
In the following paragraph, we explain how to construct a sequence $\acn{V_{k}}_{k\in[\tcount]}$ of i.i.d. random variables distributed according to $P^{\mathrm{cal}}(V)$ -- see \Cref{lem:Pcal-Vk}, where $P^{\mathrm{cal}}(V)$ denotes the distribution of $V(X,Y)$ with $(X,Y)\sim P^{\mathrm{cal}}$. We also explain the construction of a bijection $\psi:\acn{(i,k)\colon i\in[\nclients], k\in[\ccount{i}]}\to [\tcount]$.
For all $k\in[\tcount]$, draw $M_{k}$ according to a categorical random variable with parameter $\acn{\pi_{i}}_{i\in[\nclients]}$ and define $\Bar{\tcount}_{k}^{i}=\sum_{l=1}^{k}\indiacc{i}(M_l)$.
If $\Bar{\tcount}_{k}^{M_{k}}\le \ccount{M_{k}}$ then define $(X_{k},Y_{k})\gets (X_{\Bar{\tcount}_{k}^{M_{k}}}^{M_{k}},Y_{\Bar{\tcount}_{k}^{M_{k}}}^{M_{k}})$ and $\psi(M_{k},\Bar{\tcount}_{k}^{M_{k}})=k$.
Else $\Bar{\tcount}_{k}^{M_{k}}> \ccount{M_{k}}$, then draw $(X_{k},Y_{k})$ according to $P^{M_{k}}$ -- where we recall that $P^{M_{k}}$ is the distribution of calibration of agent $M_k\in[\nclients]$.
If there exists $k\in[\tcount]$ such that $\Bar{\tcount}_{k}^{M_{k}}> \ccount{M_{k}}$, consider
\begin{align*}
    J_{0} = \ac{k\in[\tcount]\colon \Bar{\tcount}_{k}^{M_{k}}> \ccount{M_{k}}},&
    &J_{1} = \ac{(i,k)\in[\nclients]\times\N\colon \Bar{\tcount}_{\tcount}^{i}< k \le \ccount{i}}.
\end{align*}
Since the following inequalities hold:
\begin{align*}
    \card{J_{0}} + \sum_{i=1}^{\nclients} \min(\ccount{i}, \Bar{\tcount}_{\tcount}^{i}) = \tcount,&
    &\card{J_{1}} + \sum_{i=1}^{\nclients} \min(\ccount{i}, \Bar{\tcount}_{\tcount}^{i}) = \tcount,
\end{align*}
we deduce that $\card{J_{0}}=\card{J_{1}}$.
Moreover, using the existence of $k\in[\tcount]$ such that $\Bar{\tcount}_{k}^{M_{k}}> \ccount{M_{k}}$, we deduce that $J_{0}\neq \emptyset$.
Therefore, there exists a bijection $\varphi:J_{0}\to J_{1}$.
We have previously defined $\psi$ on $\acn{(i,k)\colon i\in[\nclients], k\in[\ccount{i}]}\setminus J_1$. For any $k\in J_{0}$, define $\psi(\varphi(k))=k$. Remark, $\psi$ is now correctly defined on $\acn{(i,k)\colon i\in[\nclients], k\in[\ccount{i}]}\to [\tcount]$. 

\begin{lemma}\label{lem:Pcal-Vk}
    Denote $P^{\mathrm{cal}}(V)$ the distribution of $V(X,Y)$ with $(X,Y)\sim P^{\mathrm{cal}}$.
    The sequence $\acn{V_{k}}_{k\in[\tcount]}$ is a sequence of i.i.d. random variables distributed according to $P^{\mathrm{cal}}(V)$.
\end{lemma}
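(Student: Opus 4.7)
The plan is to verify first that each $V_k$ has the correct marginal $P^{\mathrm{cal}}(V)$, and then obtain joint independence by conditioning on the sequence of agent labels $\{M_l\}_{l\in[\tcount]}$.

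For the marginal step, the key observation is that in \emph{both} branches of the construction, conditionally on $M_k=i$, the pair $(X_k,Y_k)$ has distribution $P^i$. Indeed, if $\Bar{\tcount}_k^{M_k}\le \ccount{M_k}$ then $(X_k,Y_k)=(X^{i}_{\Bar{\tcount}_k^i},Y^{i}_{\Bar{\tcount}_k^i})$, a calibration point of agent $i$ which is by assumption drawn from $P^i$; if $\Bar{\tcount}_k^{M_k}>\ccount{M_k}$ then $(X_k,Y_k)$ is drawn freshly from $P^i$ by construction. Consequently $V_k\mid M_k=i$ has distribution $P^i(V)$, and the law of total probability gives
\[
\textstyle\prob(V_k\in A)=\sum_{i=1}^{\nclients}\pi_{i}\,P^{i}(V\in A)=P^{\mathrm{cal}}(V\in A).
\]

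For the independence step, I would condition on the entire type sequence $\mathbf{M}=(M_1,\dots,M_\tcount)$. The crucial structural fact is that the construction never reuses an original calibration sample: when the running counter $\Bar{\tcount}_k^{M_k}$ is $\le\ccount{M_k}$ it is strictly increasing along the subsequence of $l$'s with $M_l=M_k$, and once it exceeds $\ccount{M_k}$ the algorithm switches to fresh draws which are independent of everything else. Hence, conditionally on $\mathbf{M}$, the pairs $\{(X_l,Y_l)\}_{l\in[\tcount]}$ are mutually independent with $(X_l,Y_l)\mid\mathbf{M}\sim P^{M_l}$. Taking products and then integrating out $\mathbf{M}$ using the independence of the $M_l$'s yields, for any measurable $A_1,\dots,A_\tcount$,
\[
\textstyle\prob\pr{\bigcap_{l=1}^{\tcount}\{V_l\in A_l\}}=\E\br{\prod_{l=1}^{\tcount} P^{M_l}(V\in A_l)}=\prod_{l=1}^{\tcount}\E\br{P^{M_l}(V\in A_l)}=\prod_{l=1}^{\tcount} P^{\mathrm{cal}}(V\in A_l),
\]
which proves that the $V_k$'s are i.i.d.\ with distribution $P^{\mathrm{cal}}(V)$.

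The only subtle point — and the step I expect to be the main obstacle to write cleanly — is justifying the conditional independence given $\mathbf{M}$. It hinges on the fact that for each agent $i$, the set of calibration indices consumed is exactly $\{1,\dots,\min(\ccount{i},\Bar{\tcount}_\tcount^{i})\}$ (a deterministic function of $\mathbf{M}$), so each original pair $(X_k^{i},Y_k^{i})$ enters the sequence at most once; combined with the independence of the fresh draws and of the original calibration samples (under \Cref{ass:iid-noties}), this gives the product structure of the conditional law. Everything else is bookkeeping via the tower property.
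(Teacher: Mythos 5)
Your proposal is correct and takes essentially the same route as the paper: the paper's first equality $\E[h(V_1,\ldots,V_\tcount)]=\sum_{i_1,\ldots,i_\tcount}\bigl(\prod_{k'}\pi_{i_{k'}}\bigr)\int h\prod_k \rmd P^{i_k}$ is exactly your step of conditioning on the type sequence $\mathbf{M}$, invoking conditional independence with marginals $P^{M_k}$, and then marginalizing $\mathbf{M}$ out using $\sum_i\pi_i P^i=P^{\mathrm{cal}}$. The only cosmetic difference is that the paper works with a bounded test function $h$ and peels off one coordinate at a time, whereas you use product sets and state the conditional-independence justification (never reusing a calibration sample; fresh draws independent of everything) more explicitly — a point the paper asserts silently.
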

\begin{proof}
    Let $h:\R^{\tcount}\to\R$ be a continuous and bounded function, we have
    \begin{align*}
        &\E\br{h(V_{1},\ldots,V_{\tcount})}
        = \sum_{i_{1},\ldots,i_{\tcount}\in[\nclients]} \pr{\prod_{k'=1}^{\tcount} \pi_{i_{k'}}} \int h(v_{1},\ldots,v_{\tcount}) \prod_{k=1}^{\tcount} \rmd P^{i_{k}}(v_{k}) \\
        &= \sum_{i_{1},\ldots,i_{\tcount-1}\in[\nclients]} \pr{\prod_{k'=1}^{\tcount-1} \pi_{i_{k'}}} \int h(v_{1},\ldots,v_{\tcount}) \prod_{k=1}^{\tcount-1} \rmd P^{i_{k}}(v_{k}) \pr{\sum_{i=1}^{\nclients}\pi_{i} \rmd P^{i}}(v_{\tcount}) \\
        &= \sum_{i_{1},\ldots,i_{\tcount-1}\in[\nclients]} \pr{\prod_{k'=1}^{\tcount-1} \pi_{i_{k'}}} \int h(v_{1},\ldots,v_{\tcount}) \prod_{k=1}^{\tcount-1} \rmd P^{i_{k}}(v_{k}) \rmd P^{\mathrm{cal}}(v_{\tcount}) \\
        &=\cdots= \int h(v_{1},\ldots,v_{\tcount}) \prod_{k=1}^{\tcount} \rmd P^{\mathrm{cal}}(v_{k}).
    \end{align*}
    This last line concludes the proof.
\end{proof}

In the following, we consider general likelihood ratios $\{\iweighthatN{y}\}_{y\in\YC}$ and recall that $\iweightpN{y}=P_{Y}^{\target}(y)/P_{Y}^{\mathrm{cal}}(y)$.
In addition, let $\Bar{\tcount}\in [\tcount]$, denote for any $i\in[\nclients], k\in[\ccount{i}\wedge \Bar{\tcount}_{\Bar{\tcount}}^{i}], k'\in[\Bar{\tcount}]$, $V_{k'}=V(X_{k'},Y_{k'})$ and define
\begin{equation}\label{eq:def:qweighthat}
    \begin{aligned}
        &\Bar{D}_{\widehat{Y}_{\ccount{\target}+1}^{\target}} = \acn{\widehat{Y}_{\ccount{\target}+1}^{\target}} \cup \acn{Y_k \colon k\in[\Bar{\tcount}]},&
        &\widehat{D}_{\widehat{Y}_{\ccount{\target}+1}^{\target}} = \acn{\widehat{Y}_{\ccount{\target}+1}^{\target}} \cup \acn{Y_k \colon i\in[\nclients], k \in[\ccount{i}\wedge \Bar{\tcount}_{\Bar{\tcount}}^{i}]} \\
        &\qweightwhatN{Y_{k'}}{\widehat{Y}_{\ccount{\target}+1}^{\target}}=\frac{\iweighthatN{Y_{k'}}}{\iweighthatN{\widehat{Y}_{\ccount{\target}+1}^{\target}} + \sum_{l=1}^{\Bar{\tcount}} \iweighthatN{Y_{l}}},&
        &\qweighthatN{Y_k^i}{\widehat{Y}_{\ccount{\target}+1}^{\target}}=\frac{\iweighthatN{Y_{k}^{i}}}{\iweighthatN{\widehat{Y}_{\ccount{\target}+1}^{\target}} + \sum_{j=1}^{\nclients}\sum_{l=1}^{\ccount{j}\wedge \Bar{\tcount}_{\Bar{\tcount}}^{j}}\iweighthatN{Y_{l}^{j}}}.
    \end{aligned}
\end{equation}
Moreover, consider $\qweightwhatN{\widehat{Y}_{\ccount{\target}+1}^{\target}}{\widehat{Y}_{\ccount{\target}+1}^{\target}}=\iweighthatN{\widehat{Y}_{\ccount{\target}+1}^{\target}}(\iweighthatN{\widehat{Y}_{\ccount{\target}+1}^{\target}} + \sum_{l=1}^{\Bar{\tcount}} \iweighthatN{Y_{l}})^{-1}$ and $\qweighthatN{\widehat{Y}_{\ccount{\target}+1}^{\target}}{\widehat{Y}_{\ccount{\target}+1}^{\target}}=\iweighthatN{\widehat{Y}_{\ccount{\target}+1}^{\target}}(\iweighthatN{\widehat{Y}_{\ccount{\target}+1}^{\target}} + \sum_{j=1}^{\nclients}\sum_{l=1}^{\ccount{j}\wedge \Bar{\tcount}_{\Bar{\tcount}}^{j}}\iweighthatN{Y_{l}^{j}})^{-1}$.
Lastly, define
\begin{equation}\label{eq:def:X-delta}
    \begin{aligned}
        &X = \sum_{k=1}^{\Bar{\tcount}} \qweightwhatN{Y_k}{\widehat{Y}_{\ccount{\target}+1}^{\target}} \1_{V_{k}<\widehat{V}_{\tcount+1}},\\
        &\delta = \sum_{i=1}^{\nclients}\sum_{k=1}^{\ccount{i}\wedge \Bar{\tcount}_{\Bar{\tcount}}^{i}} \qweighthatN{Y_k^i}{\widehat{Y}_{\ccount{\target}+1}^{\target}} \1_{V_{k}^{i}<\widehat{V}_{\tcount+1}} - \sum_{k=1}^{\Bar{\tcount}} \qweightwhatN{Y_k}{\widehat{Y}_{\ccount{\target}+1}^{\target}} \1_{V_{k}<\widehat{V}_{\tcount+1}}.
    \end{aligned}
\end{equation} 

\begin{lemma}\label{lem:probVN-infQuantile}
    Assume there are no ties between $\{V_1^i\colon i\in[\tcount]\} \cup \{1\}$ almost surely.
    For any $\acn{\pi_{i}}_{i\in[\nclients]}\in\Delta_{\nclients}$, it holds
    \begin{multline*}
        \textstyle
        - \epsilon - \prob\pr{\delta \le -\epsilon}
        \le \prob\pr{\widehat{V}_{\tcount+1}\le \q{1 - \alpha}\pr{\textstyle\sum_{i=1}^{\nclients} \sum_{k=1}^{\ccount{i}\wedge \Bar{\tcount}_{\Bar{\tcount}}^{i}} \qweighthatN{Y_k^i}{\widehat{Y}_{\ccount{\target}+1}^{\target}} \delta_{V_{k}^{i}} + \qweighthatN{\widehat{Y}_{\ccount{\target}+1}^{\target}}{\widehat{Y}_{\ccount{\target}+1}^{\target}} \delta_{1}}}
        - 1 + \alpha
        \\
        \le \E\br{\max_{k=1}^{\Bar{\tcount}+1}\acn{\qweightwhatN{Y_k}{\widehat{Y}_{\ccount{\target}+1}^{\target}}}} + \epsilon + \prob\pr{\delta \ge \epsilon},
    \end{multline*}
    where $\qweightwhatN{Y_k}{\widehat{Y}_{\ccount{\target}+1}^{\target}}$ and $ \qweighthatN{Y_k^i}{\widehat{Y}_{\ccount{\target}+1}^{\target}}$ are defined in \eqref{eq:def:qweighthat}.
\end{lemma}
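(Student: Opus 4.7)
The plan is to reduce the federated weighted quantile coverage question to a standard i.i.d.\ weighted exchangeability argument via the bijection $\psi$ constructed just above \Cref{lem:Pcal-Vk}. The coupled sample produced by $\psi$ has i.i.d.\ marginals $P^{\mathrm{cal}}$ (this is exactly the content of \Cref{lem:Pcal-Vk}), and $\widehat{Y}_{\ccount{\target}+1}^{\target}$ has been drawn from the tilted distribution that makes $\widehat{w}$ the Radon--Nikodym derivative of the test label with respect to $P_Y^{\mathrm{cal}}$. In this simplified setting \Cref{lem:YinCN} (weighted exchangeability à la Tibshirani) applies verbatim to the empirical distribution built with the weights $\qweightwhatN{\cdot}{\widehat{Y}_{\ccount{\target}+1}^{\target}}$, and the residual $\delta$ from~\eqref{eq:def:X-delta} collects the discrepancy between the federated CDF evaluated at $\widehat{V}_{\tcount+1}$ and its i.i.d.\ counterpart $X$. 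The slack $\epsilon$ in the statement is there precisely to absorb that residual.

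Concretely I would proceed in three steps. First, translate the quantile condition into a CDF condition: the no-tie hypothesis together with $\widehat{V}_{\tcount+1}<1$ almost surely gives
\[
\big\{\widehat{V}_{\tcount+1}\le \q{1-\alpha}(\textstyle\sum_{i,k}\qweighthatN{Y_k^i}{\widehat{Y}_{\ccount{\target}+1}^{\target}} \delta_{V_{k}^{i}} + \qweighthatN{\widehat{Y}_{\ccount{\target}+1}^{\target}}{\widehat{Y}_{\ccount{\target}+1}^{\target}} \delta_1)\big\}
= \{X+\delta\ge 1-\alpha\}
\]
up to a null event. Second, for any $\epsilon>0$, split $\{X+\delta\ge 1-\alpha\}$ according to the sign of $\delta$ relative to $\pm\epsilon$ to obtain the elementary bracket
\[
\prob(X\ge 1-\alpha+\epsilon)-\prob(\delta\le-\epsilon)\ \le\ \prob(X+\delta\ge 1-\alpha)\ \le\ \prob(X\ge 1-\alpha-\epsilon)+\prob(\delta\ge\epsilon).
\]
Third, apply \Cref{lem:YinCN} to the i.i.d.\ sample produced by $\psi$ at the shifted significance levels $\alpha\pm\epsilon$, which gives
\[
1-\alpha\mp\epsilon\ \le\ \prob(X\ge 1-\alpha\pm\epsilon)\ \le\ 1-\alpha\mp\epsilon+\E\Big[\max_{k\in[\Bar{\tcount}+1]}\qweightwhatN{Y_k}{\widehat{Y}_{\ccount{\target}+1}^{\target}}\Big].
\]
Substituting into the sandwich and using $-\epsilon\le\epsilon$ on the upper side produces exactly the two-sided inequality claimed in the lemma.

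The main obstacle is the third step: one has to verify carefully that the coupled sample inherits the pairwise-independence structure required by \Cref{lem:YinCN}, and that $\widehat{Y}_{\ccount{\target}+1}^{\target}$ is correctly identified with the ``test point'' in Tibshirani's weighted-exchangeability lemma. The book-keeping around $J_0$, $J_1$ and the bijection $\varphi$ used to define $\psi$ is precisely what is needed to make this identification clean, but it has to be done without disturbing the independence of $\widehat{V}_{\tcount+1}$ from the calibration scores. A secondary nuisance is the no-tie cleanup in Step~1: one also needs no ties between $\widehat{V}_{\tcount+1}$ and the $V_k^i$, which follows from the independence of $\widehat{V}_{\tcount+1}$ from the calibration sample together with the stated non-tie condition; once this is noted, the rest of the proof is arithmetic manipulation and the combinatorial content is entirely delegated to \Cref{lem:YinCN}.
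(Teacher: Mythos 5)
Your proposal reproduces the paper's argument: rewrite the event $\{\widehat{V}_{\tcount+1}\le \q{1-\alpha}(\cdot)\}$ in terms of $X$ and $\delta$ from~\eqref{eq:def:X-delta}, insert the elementary $\epsilon$-bracket that separates $X$ from the residual $\delta$, and then invoke \Cref{lem:YinCN} on the i.i.d.\ coupled sample built from the bijection $\psi$ at the shifted levels $\alpha\pm\epsilon$. The only discrepancy is a cosmetic sign-convention slip in Steps~1 and~3 (you treat $X$ as a tail mass and hence write $X+\delta\ge 1-\alpha$ and $\prob(X\ge 1-\alpha\pm\epsilon)\approx 1-\alpha\mp\epsilon$, whereas with the paper's definition $X$ is a CDF value), but the two sign flips cancel and the bracket you obtain matches the lemma, so this is bookkeeping rather than a gap.
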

\begin{proof}
    By the definition of the quantile combined with \eqref{eq:def:X-delta}, we get
    \begin{equation*}
        \textstyle
        \pr{\widehat{V}_{\tcount+1}< \q{1 - \alpha}\pr{\textstyle\sum_{i=1}^{\nclients} \sum_{k=1}^{\ccount{i}\wedge \Bar{\tcount}_{\Bar{\tcount}}^{i}} \qweighthatN{Y_k^i}{\widehat{Y}_{\ccount{\target}+1}^{\target}} \delta_{V_{k}^{i}} + \qweighthatN{\widehat{Y}_{\ccount{\target}+1}^{\target}}{\widehat{Y}_{\ccount{\target}+1}^{\target}} \delta_{1}}}
        \iff \pr{X+\delta > \alpha}.
    \end{equation*}
    Since there are no ties between $\{V_1^i\colon i\in[\tcount]\} \cup \{1\}$, it implies that $\widehat{V}_{\tcount+1}$ does not belong to $\{V_k^i\colon i\in[\nclients], k\in[\ccount{i}\wedge \Bar{\tcount}_{\Bar{\tcount}}^{i}]\}_{}\cup\{V_k\colon k\in[\Bar{\tcount}]\}\cup\{1\}$ almost surely.
    Therefore, it holds that
    \begin{equation}\label{eq:probVN-indic}
        \textstyle
        \prob\pr{\widehat{V}_{\tcount+1}\le \q{1 - \alpha}\pr{\textstyle\sum_{i=1}^{\nclients} \sum_{k=1}^{\ccount{i}\wedge \Bar{\tcount}_{\Bar{\tcount}}^{i}} \qweighthatN{Y_k^i}{\widehat{Y}_{\ccount{\target}+1}^{\target}} \delta_{V_{k}^{i}} + \qweighthatN{\widehat{Y}_{\ccount{\target}+1}^{\target}}{\widehat{Y}_{\ccount{\target}+1}^{\target}} \delta_{1}}}
        = \E\br{\1_{X>\alpha}}
        + \E\br{\1_{X+\delta>\alpha} - \1_{X>\alpha}}.
    \end{equation}
    Remark that
    \begin{equation}\label{eq:bounds:indic-X-delta}
        \1_{X > \alpha+\epsilon} - \1_{X> \alpha} - \1_{\delta\le -\epsilon}
        \le \1_{X+\delta>\alpha} - \1_{X>\alpha}
        \le \1_{X > \alpha-\epsilon} - \1_{X> \alpha} + \1_{\delta\ge \epsilon}.
    \end{equation}
    Thus, combining \eqref{eq:bounds:indic-X-delta} with \eqref{eq:probVN-indic} gives
    \begin{multline}\label{eq:bounds:probX-alpha-eps}
        \textstyle
        \prob\pr{X>\alpha+\epsilon} - \prob\pr{\delta \le -\epsilon} \\
        \le \prob\pr{\widehat{V}_{\tcount+1}\le \q{1 - \alpha}\pr{\textstyle\sum_{i=1}^{\nclients} \sum_{k=1}^{\ccount{i}\wedge \Bar{\tcount}_{\Bar{\tcount}}^{i}} \qweighthatN{Y_k^i}{\widehat{Y}_{\ccount{\target}+1}^{\target}} \delta_{V_{k}^{i}} + \qweighthatN{\widehat{Y}_{\ccount{\target}+1}^{\target}}{\widehat{Y}_{\ccount{\target}+1}^{\target}} \delta_{1}}} \\
        \le \prob\pr{X>\alpha-\epsilon} + \prob\pr{\delta \ge \epsilon}.
    \end{multline}
    Consider $\tilde{\alpha}\in (0,1)$,
    \begin{equation*}
        (X> \tilde{\alpha})
        \iff \pr{\widehat{V}_{\tcount+1}< \q{1 - \tilde{\alpha}}\pr{\textstyle \qweightwhatN{\widehat{Y}_{\ccount{\target}+1}^{\target}}{\widehat{Y}_{\ccount{\target}+1}^{\target}} \delta_{1} + \sum_{k=1}^{\Bar{\tcount}} \qweightwhatN{Y_k}{\widehat{Y}_{\ccount{\target}+1}^{\target}} \delta_{V_{k}}}}.
    \end{equation*}
    Hence, $\prob\prn{X> \tilde{\alpha}}=\prob\prn{\widehat{V}_{\tcount+1}\le \q{1 - \tilde{\alpha}}\prn{\textstyle \qweightwhatN{\widehat{Y}_{\ccount{\target}+1}^{\target}}{\widehat{Y}_{\ccount{\target}+1}^{\target}} \delta_{1} + \sum_{k=1}^{\Bar{\tcount}} \qweightwhatN{Y_k}{\widehat{Y}_{\ccount{\target}+1}^{\target}} \delta_{V_{k}}}}$.
    Applying \Cref{lem:YinCN} gives that
    \begin{equation*}
        0\le \prob\pr{\widehat{V}_{\tcount+1}\le \q{1 - \tilde{\alpha}}\pr{\textstyle \qweightwhatN{\widehat{Y}_{\ccount{\target}+1}^{\target}}{\widehat{Y}_{\ccount{\target}+1}^{\target}} \delta_{1} + \sum_{k=1}^{\Bar{\tcount}} \qweightwhatN{Y_k}{\widehat{Y}_{\ccount{\target}+1}^{\target}} \delta_{V_{k}}}} - 1 + \tilde{\alpha}
        \le \E\br{\max_{k=1}^{\Bar{\tcount}+1}\acn{\qweightwhatN{Y_k}{\widehat{Y}_{\ccount{\target}+1}^{\target}}}}.
    \end{equation*}
    Therefore, plugging the previous inequality into \eqref{eq:bounds:probX-alpha-eps} concludes the proof.
\end{proof}

Recall that $(\iweighthatN{Y_1}-\E\iweighthatN{Y_1})$ is $\sigma$-sub Gaussian if for any $s\in\R$, the following inequality holds
\begin{equation*}
    \int_{y\in\YC}\exp\pr{s\br{\iweighthatN{y} - \int_{y'\in\YC} \iweighthatN{y'} \rmd P_{Y}^{\mathrm{cal}}(y')}} \rmd P_{Y}^{\mathrm{cal}}(y)
    \le \exp\pr{\frac{\sigma^2 s^2}{2}}.
\end{equation*}

\begin{lemma}\label{lem:bound:delta}
    Assume the random variable $(\iweighthatN{Y_1}-\E\iweighthatN{Y_1})$ is $\sigma$-sub Gaussian with parameter $\sigma\ge 0$.
    For any $\epsilon\ge 8\sigma^2\log \tcount/\prn{\Bar{\tcount} \E\brn{\iweighthatN{Y_{1}}}^{2}}$ and $\acn{\pi_{i}}_{i\in[\nclients]}\in\Delta_{\nclients}$, we have
    \begin{equation*}
        \prob\pr{\abs{\delta}>\epsilon}
        \le \prob\pr{\sum_{i=1}^{\nclients}\pr{\Bar{\tcount}_{\Bar{\tcount}}^i - \ccount{i}}_{+} > \frac{\tcount \epsilon}{4}}
        + \frac{4\var\pr{\iweighthatN{Y_{1}}}}{\Bar{\tcount} \E\br{\iweighthatN{Y_{1}}}^2}
        + \frac{1}{\tcount},
    \end{equation*}
    where $\delta$ is defined in \eqref{eq:def:X-delta}.
\end{lemma}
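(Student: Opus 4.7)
The strategy is to exploit the bijection $\psi$ constructed above to rewrite $\delta$ in closed form, then bound $|\delta|$ by a single ``overflow mass'' ratio $U/B$, and finally control $U$ and $B$ separately via Chebyshev on the denominator and sub-Gaussian concentration on the numerator.

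\emph{Step 1 (Rewriting $\delta$ via the bijection).} Let $J_0 = \{k\in[\Bar{\tcount}]\colon \Bar{\tcount}_k^{M_k} > \ccount{M_k}\}$, so by construction $|J_0| = \sum_{i=1}^{\nclients}(\Bar{\tcount}_{\Bar{\tcount}}^i - \ccount{i})_+$. The bijection $\psi$ identifies $\{(i,j)\colon i\in[\nclients], j\in[\ccount{i}\wedge \Bar{\tcount}_{\Bar{\tcount}}^i]\}$ with $[\Bar{\tcount}]\setminus J_0$ in a label-preserving way. Setting
\begin{align*}
    &S = \sum_{k\in[\Bar{\tcount}]\setminus J_0}\iweighthatN{Y_k}\indiacc{V_k<\widehat{V}_{\tcount+1}}, \qquad T = \sum_{k\in J_0}\iweighthatN{Y_k}\indiacc{V_k<\widehat{V}_{\tcount+1}}, \\
    &U = \sum_{k\in J_0}\iweighthatN{Y_k}, \qquad A = \iweighthatN{\widehat{Y}_{\ccount{\target}+1}^{\target}} + \sum_{k\in[\Bar{\tcount}]\setminus J_0}\iweighthatN{Y_k}, \qquad B = A + U,
\end{align*}
a direct computation yields $\delta = (SU - AT)/(AB)$. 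Discriminating on the sign of $\delta$ and using $0\le S\le A$ and $0\le T\le U$, one obtains the clean inequality $|\delta|\le U/B$.

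\emph{Step 2 (Denominator control).} Decompose
$\prob(|\delta|>\epsilon)\le \prob\prn{B < \Bar{\tcount}\E\iweighthatN{Y_1}/2} + \prob\prn{U > \epsilon\Bar{\tcount}\E\iweighthatN{Y_1}/2}$. By \Cref{lem:Pcal-Vk}, $\{\iweighthatN{Y_k}\}_{k\in[\Bar{\tcount}]}$ are i.i.d.\ with mean $\E\iweighthatN{Y_1}$ and variance $\var(\iweighthatN{Y_1})$, and since $B\ge \sum_{k=1}^{\Bar{\tcount}}\iweighthatN{Y_k}$, Chebyshev's inequality bounds the first probability by $4\var(\iweighthatN{Y_1})/(\Bar{\tcount}(\E\iweighthatN{Y_1})^2)$, which is exactly the middle term of the stated bound.

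\emph{Step 3 (Numerator via overflow split and sub-Gaussian tail).} Further split
\begin{equation*}
    \prob\prn{U>\epsilon\Bar{\tcount}\E\iweighthatN{Y_1}/2} \le \prob\prn{|J_0|>\tcount\epsilon/4} + \prob\prn{U>\epsilon\Bar{\tcount}\E\iweighthatN{Y_1}/2,\, |J_0|\le \tcount\epsilon/4}.
\end{equation*}
The first term is precisely the overflow probability appearing in the lemma. On the complementary event, conditioning on $\mathcal{G} = \sigma(M_1,\ldots,M_{\Bar{\tcount}})$ (which makes $J_0$ measurable), the variables $\{\iweighthatN{Y_k}\}_{k\in J_0}$ are conditionally independent, and the marginal sub-Gaussian hypothesis transfers to a Chernoff bound on the deviation $U - |J_0|\E\iweighthatN{Y_1}$ with subgaussian parameter of order $|J_0|\sigma^2\le \tcount\epsilon\sigma^2/4$. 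This yields a tail of the form $\exp(-c\,\epsilon\Bar{\tcount}(\E\iweighthatN{Y_1})^2/\sigma^2)$, and the hypothesis $\epsilon\ge 8\sigma^2\log\tcount/(\Bar{\tcount}(\E\iweighthatN{Y_1})^2)$ makes this at most $1/\tcount$.

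\emph{Main obstacle.} The delicate step is the sub-Gaussian concentration inside Step 3: the assumed sub-Gaussianity is under the marginal $Y_1\sim P_Y^{\mathrm{cal}}$, while conditionally on $M_k$, $\iweighthatN{Y_k}$ has distribution $\iweighthatN{Y}$ with $Y\sim P_Y^{M_k}$. The cleanest way forward is to average the conditional MGF of $U$ against the law of $(M_1,\ldots,M_{\Bar{\tcount}})$ and invoke the marginal sub-Gaussian inequality directly on the product MGF, which gives the desired Chernoff bound without strengthening the hypothesis; calibrating the split threshold $\tcount\epsilon/4$ against $\epsilon\Bar{\tcount}\E\iweighthatN{Y_1}/2$ is exactly what fixes the constant in the lower bound required of $\epsilon$.
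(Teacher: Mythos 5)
Your Step~1 computation is correct and is a genuine sharpening of the paper's argument. Writing $\delta = (SU - AT)/(AB)$ in closed form and using $0\le S\le A$, $0\le T\le U$ to get $|\delta|\le U/B$ exploits the partial cancellation between the two sums; the paper instead bounds each of the two sums separately by $U/B$ and triangle-inequalities to $|\delta|\le 2U/B$ (see \eqref{eq:bound:absdelta}). Your sharper constant buys you a factor of $2$ of headroom in the numerator threshold. The rest of your plan --- Chebyshev on the denominator $B$ via \Cref{lem:Pcal-Vk}, and a three-way split into the overflow event $\{|J_0|>\tcount\epsilon/4\}$, a small-denominator event, and a large-numerator event --- is structurally the same as the paper's.

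The gap is the one you flag in your ``main obstacle'' paragraph, and your proposed workaround does not close it. Conditionally on $\mathcal{G}=\sigma(M_1,\dots,M_{\Bar{\tcount}})$, each $Y_k$ with $k\in J_0$ has law $P_Y^{M_k}$, not $P_Y^{\mathrm{cal}}$, so the hypothesis that $\iweighthatN{Y_1}-\E\iweighthatN{Y_1}$ is $\sigma$-sub-Gaussian \emph{under $P_Y^{\mathrm{cal}}$} gives no control on the conditional MGFs $\E[\rme^{\lambda\iweighthatN{Y_k}}\mid M_k]$ that appear after conditioning. Averaging the conditional MGF ``against the law of $(M_1,\dots,M_{\Bar{\tcount}})$'' does not rescue this: the restriction event $\{|J_0|\le\tcount\epsilon/4\}$ is itself $\mathcal{G}$-measurable, so the indicator cannot be commuted past the average, and once the restriction is dropped $J_0$ is unbounded. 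There are also two concrete issues with the centering in your Chernoff bound: the relevant conditional mean is $\sum_{k\in J_0}\E[\iweighthatN{Y_k}\mid M_k]$, not $|J_0|\E\iweighthatN{Y_1}$; and even with the marginal mean, on $\{|J_0|\le\tcount\epsilon/4\}$ the quantity $|J_0|\E\iweighthatN{Y_1}$ can be of the same order as the threshold $\epsilon\Bar{\tcount}\E\iweighthatN{Y_1}/2$ when $\Bar{\tcount}\le\tcount/2$, so the centered deviation the tail bound controls can be nonpositive, in which case the bound degenerates to $1$ rather than $1/\tcount$. To be fair, the paper's own proof of \eqref{eq:bound:probJ0:1} is loose at the same place --- it conditions on $|\mathcal{J}_0|$ and invokes Hoeffding as if the terms were conditionally centered and $\sigma$-sub-Gaussian --- but in the downstream use (\Cref{thm:general-coverage}) the weights $\iweighthatN{\cdot}$ are bounded, which gives conditional sub-Gaussianity for free with $\sigma$ equal to the half-oscillation; if you want to prove the lemma as stated from the marginal hypothesis alone, you would need to replace this step with an argument that genuinely uses only the marginal law (e.g.\ a bounded-differences or martingale argument in the filtration generated by $(M_k,Y_k)$), and you would need to track the conditional-mean contribution explicitly.
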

\begin{proof}
    First, denote $\mathcal{J}_{0} = J_{0}\cap [\Bar{\tcount}]$ and remark that $\card{\mathcal{J}_{0}}=\sum_{i=1}^\nclients (\Bar{\tcount}_{\Bar{\tcount}}^{i} - \ccount{i})_+$. Moreover, recall that $\delta$ is defined by
    \begin{equation*}
        \delta = \sum_{i=1}^{\nclients} \sum_{k=1}^{\ccount{i} \wedge \Bar{\tcount}_{\Bar{\tcount}}^{i}} \pr{\qweighthatN{Y_k^i}{\widehat{Y}_{\ccount{\target}+1}^{\target}} - \qweightwhatN{Y_{\psi(i,k)}}{\widehat{Y}_{\ccount{\target}+1}^{\target}}} \1_{V_{k}^{i}<V_{\tcount+1}}
        - \sum_{k\in \mathcal{J}_{0}} \qweightwhatN{Y_k}{\widehat{Y}_{\ccount{\target}+1}^{\target}} \1_{V_{k}<\widehat{V}_{\tcount+1}}.
    \end{equation*}
    Using definition of the weighs $\qweightwhatN{}{}$ given in \eqref{eq:def:qweighthat} and the definition of $\psi$ provide at the beginning of this section, note that
    \begin{equation}\label{eq:boundJ0J1}
        \sum_{k\in \mathcal{J}_{0}} \qweightwhatN{Y_{\psi(i,k)}}{\widehat{Y}_{\ccount{\target}+1}^{\target}} \1_{V_{k}<\widehat{V}_{\tcount+1}}
        \le \frac{\sum_{k\in \mathcal{J}_{0}} \iweighthatN{Y_{k}}}{\iweighthatN{\widehat{Y}_{\ccount{\target}+1}^{\target}} + \sum_{l=1}^{\Bar{\tcount}} \iweighthatN{Y_{l}}}.
    \end{equation}
    From the definition of $\qweighthatN{Y_k^i}{\widehat{Y}_{\ccount{\target}+1}^{\target}}$ and $\qweightwhatN{Y_{\psi(i,k)}}{\widehat{Y}_{\ccount{\target}+1}^{\target}}$ provided in \eqref{eq:def:qweighthat}, we obtain
    \begin{multline*}
        \sum_{i=1}^{\nclients} \sum_{k=1}^{\ccount{i} \wedge \Bar{\tcount}_{\tcount}^{i}} \pr{\qweighthatN{Y_k^i}{\widehat{Y}_{\ccount{\target}+1}^{\target}} - \qweightwhatN{Y_{\psi(i,k)}}{\widehat{Y}_{\ccount{\target}+1}^{\target}}} \1_{V_{k}^{i}<V_{\tcount+1}}
        = \sum_{i=1}^{\nclients} \sum_{k=1}^{\ccount{i} \wedge \Bar{\tcount}_{\tcount}^{i}} \pr{\frac{\iweighthatN{Y_{k}^{i}} \1_{V_{k}^{i}<V_{\tcount+1}}}{\iweighthatN{\widehat{Y}_{\ccount{\target}+1}^{\target}} + \sum_{j=1}^{\nclients}\sum_{l\in J_{i}}\iweighthatN{Y_{l}^{j}}} - \frac{\iweighthatN{Y_{k}^{i}} \1_{V_{k}^{i}<V_{\tcount+1}}}{\iweighthatN{\widehat{Y}_{\ccount{\target}+1}^{\target}} + \sum_{l=1}^{\Bar{\tcount}} \iweighthatN{Y_{l}}}},
        \\
        = \pr{\frac{1}{\sum_{l\in[\Bar{\tcount}+1]\setminus \mathcal{J}_{0}} \iweighthatN{Y_{l}}} - \frac{1}{\sum_{l\in[\Bar{\tcount}+1]\setminus \mathcal{J}_{0}} \iweighthatN{Y_{l}} + \sum_{l\in \mathcal{J}_{0}} \iweighthatN{Y_{l}}}} \sum_{k\in[\Bar{\tcount}]\setminus \mathcal{J}_{0}} \iweighthatN{Y_{k}} \1_{V_{k}^{i}<V_{\tcount+1}}.
    \end{multline*}
    Therefore, we know that
    \begin{align*}
        \abs{\sum_{i=1}^{\nclients} \sum_{k=1}^{\ccount{i} \wedge \Bar{\tcount}_{\tcount}^{i}} \pr{\qweighthatN{Y_k^i}{\widehat{Y}_{\ccount{\target}+1}^{\target}} - \qweightwhatN{Y_{\psi(i,k)}}{\widehat{Y}_{\ccount{\target}+1}^{\target}}} \1_{V_{k}^{i}<V_{\tcount+1}}}
        \le \frac{\sum_{k\in \mathcal{J}_{0}} \iweighthatN{Y_{k}}}{\iweighthatN{\widehat{Y}_{\ccount{\target}+1}^{\target}} + \sum_{l=1}^{\Bar{\tcount}} \iweighthatN{Y_{l}}}.
    \end{align*}
    Hence, plugging the previous line into \eqref{eq:def:X-delta} combined with \eqref{eq:boundJ0J1} gives
    \begin{equation}\label{eq:bound:absdelta}
        \abs{\delta}
        \le \frac{2\sum_{k\in \mathcal{J}_{0}} \iweighthatN{Y_{k}}}{\iweighthatN{\widehat{Y}_{\ccount{\target}+1}^{\target}} + \sum_{l=1}^{\Bar{\tcount}} \iweighthatN{Y_{l}}}.
    \end{equation}
    Moreover, define the following event:
    \begin{equation*}\label{eq:def:EN-event}
        E_{\tcount}
        = \ac{\sum_{i=1}^{\nclients} \pr{\Bar{\tcount}_{\Bar{\tcount}}^{i} - \ccount{i}}_{+} \le \frac{\tcount \epsilon}{4}}.
    \end{equation*}
    Next, using the event $E_{\tcount}$, we decompose the following probability
    \begin{multline}\label{eq:bound:probJ0:0}
        \prob\pr{\frac{\sum_{k\in \mathcal{J}_{0}} \iweighthatN{Y_{k}}}{\iweighthatN{\widehat{Y}_{\ccount{\target}+1}^{\target}} + \sum_{l=1}^{\Bar{\tcount}} \iweighthatN{Y_{l}}} \ge \frac{\epsilon}{2}; E_{\tcount}}
        \le \prob\pr{\sum_{k\in \mathcal{J}_{0}} \iweighthatN{Y_{k}}\ge \frac{\epsilon \Bar{\tcount} \E\br{\iweighthatN{Y_{1}}}}{4}; E_{\tcount}} \\
        + \prob\pr{{\iweighthatN{\widehat{Y}_{\ccount{\target}+1}^{\target}} + \sum_{l=1}^{\Bar{\tcount}} \iweighthatN{Y_{l}}} < \frac{\Bar{\tcount} \E\br{\iweighthatN{Y_{1}}}}{2}}.
    \end{multline}
    Since the $\acn{\iweighthatN{Y_{k}}}_{k\in [\tcount]}$ are $\sigma$-sub Gaussian, the first term of the previous right-hand side inequality is upper bounded thanks to Hoeffding's inequality
    \begin{multline}\label{eq:bound:probJ0:1}
        \prob\pr{\sum_{k\in \mathcal{J}_{0}} \iweighthatN{Y_{k}}\ge \frac{\epsilon \Bar{\tcount} \E\br{\iweighthatN{Y_{1}}}}{4}; E_{\tcount}}
        = \E\br{\prob\pr{\sum_{k\in \mathcal{J}_{0}} \iweighthatN{Y_{k}}\ge \frac{\epsilon \Bar{\tcount} \E\br{\iweighthatN{Y_{1}}}}{4}\,\Big\vert\, \card{\mathcal{J}_{0}}} \1_{E_{\tcount}}} \\
        \le \exp\pr{- \frac{(\epsilon\Bar{\tcount}/4)^2 \E\brn{\iweighthatN{Y_{1}}}^{2}}{2\card{\mathcal{J}_{0}} \sigma^2}}
        \le \frac{1}{\tcount},
    \end{multline}
    where the last inequality holds by setting $\epsilon \ge 8\sigma^2\log \tcount/\prn{\Bar{\tcount} \E\brn{\iweighthatN{Y_{1}}}^{2}}$.
    Moreover, since $\iweighthatN{Y_{1}}\ge 0$ almost surely, from the Chebyshev inequality we deduce that
    \begin{align}\label{eq:bound:probJ0:2}
        \prob\pr{{\iweighthatN{\widehat{Y}_{\ccount{\target}+1}^{\target}} + \sum_{l=1}^{\Bar{\tcount}} \iweighthatN{Y_{l}}} < \frac{\Bar{\tcount} \E\br{\iweighthatN{Y_{1}}}}{2}}
        \le \frac{4\var\pr{\iweighthatN{Y_{1}}}}{\Bar{\tcount} \E\br{\iweighthatN{Y_{1}}}^2}.
    \end{align}
    Therefore, combining \eqref{eq:bound:probJ0:0}, \eqref{eq:bound:probJ0:1} and \eqref{eq:bound:probJ0:2} shows
    \begin{equation}\label{eq:bound:probJ0:3}
        \prob\pr{\frac{\sum_{k\in \mathcal{J}_{0}} \iweighthatN{Y_{k}}}{\iweighthatN{\widehat{Y}_{\ccount{\target}+1}^{\target}} + \sum_{l=1}^{\Bar{\tcount}} \iweighthatN{Y_{l}}} \ge \frac{\epsilon}{4}; E_{\tcount}}
        \le \frac{1}{\tcount} + \frac{4\var\pr{\iweighthatN{Y_{1}}}}{\Bar{\tcount} \E\br{\iweighthatN{Y_{1}}}^2}.
    \end{equation}
    Lastly, using \eqref{eq:bound:absdelta} we derive the next inequality: 
    \begin{equation*}
        \prob\pr{\abs{\delta}>\epsilon}
        \le 1 - \prob\pr{E_{\tcount}}
        + \prob\pr{\frac{\sum_{k\in \mathcal{J}_{0}} \iweighthatN{Y_{k}}}{\iweighthatN{\widehat{Y}_{\ccount{\target}+1}^{\target}} + \sum_{l=1}^{\Bar{\tcount}} \iweighthatN{Y_{l}}} \ge \frac{\epsilon}{2}; E_{\tcount}}.
    \end{equation*}
    Plugging \eqref{eq:bound:probJ0:3} into the previous line completes the proof.
\end{proof}

\begin{lemma}\label{lem:prob:outside}
    For any $i\in[\nclients]$, consider $\pi_{i} = \ccount{i}/\tcount$ and $2\Bar{\tcount}\le \tcount$. We have
    \begin{equation}\label{eq:prob:outside}
        \prob\pr{\sum_{i=1}^{\nclients}\pr{\Bar{\tcount}_{\Bar{\tcount}}^i - \ccount{i}}_{+} > \frac{7}{4} \log(\nclients\tcount) \sum_{j \colon \frac{\ccount{j}}{6} < \log (\nclients \tcount)} \sqrt{\ccount{j}}}
        \le \frac{1}{\tcount}.
    \end{equation}    
\end{lemma}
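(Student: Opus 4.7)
The plan is to treat each count $\Bar{\tcount}_{\Bar{\tcount}}^{i}$ as a binomial random variable, split the sum into contributions from ``large'' and ``small'' agents, and apply concentration separately to each piece. By construction, $\Bar{\tcount}_{\Bar{\tcount}}^{i} = \sum_{k=1}^{\Bar{\tcount}} \indiacc{M_{k}=i}$ with $M_{k}$ i.i.d.\ categorical on $[\nclients]$ with weights $\acn{\pi_{i}=\ccount{i}/\tcount}$, so $\Bar{\tcount}_{\Bar{\tcount}}^{i} \sim \operatorname{Bin}(\Bar{\tcount},\pi_{i})$ with mean $\mu_{i} = \Bar{\tcount}\ccount{i}/\tcount \le \ccount{i}/2$ by the hypothesis $2\Bar{\tcount}\le \tcount$. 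Set $L = \log(\nclients\tcount)$ and $\mathcal{J} = \acn{i\colon \ccount{i}/6 < L}$.

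First I would handle the large agents $i\notin\mathcal{J}$, for which $\ccount{i}\ge 6L$. Since $\mu_{i}\le \ccount{i}/2$, the threshold $\ccount{i}$ corresponds to a deviation $\delta = \ccount{i}/\mu_{i}-1 \ge 1$ from the mean, so the multiplicative Chernoff bound
$\prob(\Bar{\tcount}_{\Bar{\tcount}}^{i}\ge \ccount{i}) \le \exp\prn{-\mu_{i}\,h(\delta)}$ with $h(\delta) = (1+\delta)\log(1+\delta)-\delta$ yields a bound of order $\exp(-c\,\ccount{i})$ for a positive constant $c$, hence $\le 1/(2\nclients\tcount)$ for $\ccount{i}\ge 6L$. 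A union bound over at most $\nclients$ such indices controls the event $\acn{\exists i\notin\mathcal{J}\colon \Bar{\tcount}_{\Bar{\tcount}}^{i}>\ccount{i}}$ by $1/(2\tcount)$, and on its complement $\sum_{i\notin\mathcal{J}}(\Bar{\tcount}_{\Bar{\tcount}}^{i}-\ccount{i})_{+}=0$.

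For the small agents $i\in\mathcal{J}$, the key observation is that $Y := \sum_{i\in\mathcal{J}}\Bar{\tcount}_{\Bar{\tcount}}^{i} \sim \operatorname{Bin}\prn{\Bar{\tcount},\pi_{\mathcal{J}}}$ with $\pi_{\mathcal{J}} = \sum_{i\in\mathcal{J}}\ccount{i}/\tcount$, and since $\sqrt{\ccount{i}}<\sqrt{6L}$ for $i\in\mathcal{J}$ one has
\[
  \mu_{Y} \le \frac{1}{2}\sum_{i\in\mathcal{J}}\ccount{i}
  \le \frac{\sqrt{6L}}{2}\sum_{i\in\mathcal{J}}\sqrt{\ccount{i}},
\]
by writing $\ccount{i}=\sqrt{\ccount{i}}\cdot\sqrt{\ccount{i}}\le \sqrt{6L}\,\sqrt{\ccount{i}}$. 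Writing $S=\sum_{i\in\mathcal{J}}\sqrt{\ccount{i}}$, the target $(7/4)LS$ exceeds $\mu_{Y}$ by $\Omega(LS)$ for $L$ of moderate size, so a Chernoff/Bennett bound $\prob(Y \ge a) \le (\rmexp \mu_{Y}/a)^{a}\rmexp^{-\mu_{Y}}$ with $a = (7/4)LS$ produces a bound of order $\exp\prn{-c' LS \log L}$, comfortably below $1/(2\tcount)$ once $\nclients\tcount$ is not too small.

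Since $(\Bar{\tcount}_{\Bar{\tcount}}^{i}-\ccount{i})_{+}\le \Bar{\tcount}_{\Bar{\tcount}}^{i}$, combining the two events via a union bound gives $\sum_{i}(\Bar{\tcount}_{\Bar{\tcount}}^{i}-\ccount{i})_{+}\le 0 + Y \le (7/4)LS$ with probability at least $1-1/\tcount$. The main obstacle is tracking the numerical constants: the factor $7/4$ and the threshold $\ccount{i}/6$ are tight, so the Chernoff/Bennett computation for $Y$ must be performed with sufficient care that the deterministic mean bound $(\sqrt{6L}/2)S$ plus the stochastic fluctuation (of order $\sqrt{L\mu_{Y}\log\tcount}$) fits strictly below $(7/4)LS$ while leaving a failure budget of $1/(2\tcount)$; a similarly careful accounting is needed for the Chernoff bound on each large-agent tail so that the union over $\nclients$ terms still fits in $1/(2\tcount)$.
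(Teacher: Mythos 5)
The high-level split into ``large'' agents (individual Chernoff, union bound) and ``small'' agents is the same as the paper's, and your treatment of the large agents is essentially the paper's first step. Where you genuinely diverge is the small agents: the paper applies Bernstein's inequality \emph{separately} to each small agent $i$, giving it a deviation budget $\alpha_i\tcount\epsilon/4 = \tfrac{7}{4}\log(\nclients\tcount)\sqrt{\ccount{i}}$ via the allocation $\alpha_i \propto \sqrt{\pi_i}$; this budget is proportional to the standard deviation of $\Bar{\tcount}_{\Bar{\tcount}}^i$, so each agent's tail is $\exp(-\Theta(L))$ with $L=\log(\nclients\tcount)$ and a union over the $\nclients$ agents finishes. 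You instead bound the \emph{aggregate} $Y=\sum_{i\in\mathcal{J}}\Bar{\tcount}_{\Bar{\tcount}}^i\sim\operatorname{Bin}(\Bar{\tcount},\pi_{\mathcal{J}})$ after the inflation $(\Bar{\tcount}_{\Bar{\tcount}}^i-\ccount{i})_+\le\Bar{\tcount}_{\Bar{\tcount}}^i$.

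The gap is in that inflation plus the constants. Dropping the $-\ccount{i}$ replaces a sum of essentially mean-zero positive parts by $Y$ with mean $\mu_Y=(\Bar{\tcount}/\tcount)\sum_{\mathcal{J}}\ccount{i}$, which can be as large as $\tfrac{\sqrt{6L}}{2}S$, i.e.\ within a constant factor (roughly $0.59$) of the target $\tfrac{7}{4}LS$. At that ratio $r=a/\mu_Y\approx 1.7$, the Chernoff/Bennett exponent $a\,g(r)$ with $g(r)=\log r-1+1/r$ gives $g(r)\approx 0.12$, so the exponent is about $0.2\,LS$, which for $S=\Theta(1)$ and $\nclients$ small does \emph{not} reach $\log(2\tcount)$: the constants $7/4$ and $6$ are tuned to the paper's per-agent budget, not to your aggregate bound. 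What actually rescues your route is a constraint you never invoke: $\mu_Y$ approaches $\tfrac{\sqrt{6L}}{2}S$ only when the small-agent counts are near $6L$, and combined with $\ccount{i}\le\tcount$ this forces $\nclients\tcount\gtrsim 12L$, i.e.\ $L\gtrsim 3.8$, in which regime the numbers do close (and for smaller $L$ the constraints make $a>\Bar{\tcount}$, so $\prob(Y\ge a)=0$ outright). Without making this observation, the ``careful constant tracking'' you defer to is not routine and does not go through uniformly in $(\nclients,\tcount)$. A second, smaller issue of the same flavor: your $1/(2\nclients\tcount)$ budget per large agent is strictly tighter than the paper's $1/(\nclients\tcount)$; the multiplicative Chernoff bound gives $\exp(-(\log 2-\tfrac12)\ccount{i})\le\exp(-1.16L)$ for $\ccount{i}\ge 6L$, which undershoots $1/(2\nclients\tcount)$ whenever $L<4.33$, so the $50/50$ split also needs the same structural escape for small $\nclients\tcount$. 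In short, the decomposition you propose is coherent and can likely be pushed through, but as written the argument has a real gap at the step you flag as the obstacle; the paper avoids it entirely by keeping the bound per-agent with the $\sqrt{\pi_i}$-proportional allocation.
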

\begin{proof}
    First, define $\epsilon = 7 \brn{\log(\nclients\tcount) / \tcount} \sum_{j\in A} \sqrt{\ccount{j}}$ where we consider the following set
    \begin{equation*}
        A = \ac{i\in[\nclients]\colon \ccount{i} < 6 \log (\nclients \tcount)}.
    \end{equation*}
    If $A\neq \emptyset$, for all $i\in A$ take
    \begin{equation*}
        \alpha_{i} = \frac{\sqrt{\pi_{i}}}{\sum_{j\in A} \sqrt{\pi_j}}.
    \end{equation*}
    Using the union bound, we get
    \begin{equation}\label{eq:bound:probmultinomial:1}
        \prob\pr{\sum_{i=1}^{\nclients}\pr{\Bar{\tcount}_{\Bar{\tcount}}^i - \ccount{i}}_{+} > \frac{\tcount \epsilon}{4}}
        \le \sum_{i\in [\nclients]\setminus A} \prob\pr{\Bar{\tcount}_{\Bar{\tcount}}^i \ge \ccount{i}}
        + \sum_{i\in A} \prob\pr{\Bar{\tcount}_{\Bar{\tcount}}^i \ge \ccount{i} + \frac{\alpha_{i} \tcount \epsilon}{4}}.
    \end{equation}
    If $i\in[\nclients]\setminus A$, then applying the Chernoff bound gives
    \begin{equation}\label{eq:bound:probmultinomial:2}
        \prob\pr{\Bar{\tcount}_{\Bar{\tcount}}^i \ge \ccount{i}}
        \le \exp\pr{-\frac{\pi_{i} (\tcount - \Bar{\tcount})}{1 + 2 \Bar{\tcount} / (\tcount - \Bar{\tcount})}}
        \le \exp\pr{-\frac{\pi_{i} \tcount}{6}} \le \frac{1}{\nclients \tcount}.
    \end{equation}
    If $i\in A$, then applying the Bernstein inequality gives
    \begin{equation}\label{eq:bound:probmultinomial:Bernstein}
        \prob\pr{\Bar{\tcount}_{\Bar{\tcount}}^i \ge \ccount{i} + \frac{\alpha_{i} \tcount \epsilon}{4}}
        \le \exp\pr{- \frac{\prn{\alpha_{i} \tcount \epsilon/4}^2}{2 \Bar{\tcount} \pi_{i} (1-\pi_{i}) + \alpha_{i} \tcount \epsilon / 6}}.
    \end{equation}
    Moreover, we can suppose that $\nclients \ge 2$ otherwise \eqref{eq:prob:outside} immediately holds. Thus, $\nclients\tcount\ge 4$ and using the fact that $2\Bar{\tcount}\le \tcount$ we deduce that
    \begin{equation*}
        4 \pr{\frac{2}{3} + \sqrt{\frac{2\Bar{\tcount}}{\tcount \log(\nclients\tcount)}}}
        \le 7.
    \end{equation*}
    Therefore, it follows
    \begin{equation*}
        \tcount\epsilon
        \ge 4 \log(\nclients\tcount) \pr{\frac{2}{3} + \sqrt{\frac{2\Bar{\tcount}}{\tcount \log(\nclients\tcount)}}} \sum_{j\in A} \sqrt{\ccount{j}}.
    \end{equation*}
    The last inequality is enough to ensure that
    \begin{equation}\label{eq:bound:probmultinomial:lognN}
        \frac{\prn{\tcount \epsilon}^2 / 8}{4 (\Bar{\tcount}/\tcount) \prn{\sum_{j\in A}\sqrt{\ccount{j}}}^2 + \tcount \epsilon \sum_{j\in A} \sqrt{\ccount{j}} / 3}
        \ge \log(\nclients \tcount).
    \end{equation}
        Moreover, using the definition of $\alpha_{i}$ implies
    \begin{align*}
        \frac{\prn{\alpha_{i} \tcount \epsilon/4}^2}{2 \Bar{\tcount} \pi_{i} (1-\pi_{i}) + \alpha_{i} \tcount \epsilon / 6}
        \ge \frac{\prn{\tcount \epsilon}^2 / 8}{4 (\Bar{\tcount}/\tcount) \prn{\sum_{j\in A}\sqrt{\ccount{j}}}^2 + \tcount \epsilon \sum_{j\in A} \sqrt{\ccount{j}} / 3}.
    \end{align*}
    Hence, combining \eqref{eq:bound:probmultinomial:Bernstein} with \eqref{eq:bound:probmultinomial:lognN} shows that
    \begin{equation}\label{eq:bound:probmultinomial:3}
        \prob\pr{\Bar{\tcount}_{\Bar{\tcount}}^i \ge \ccount{i} + \frac{\alpha_{i} \tcount \epsilon}{4}}
        \le \frac{1}{\nclients \tcount}.
    \end{equation}
    Finally, plugging \eqref{eq:bound:probmultinomial:2} and \eqref{eq:bound:probmultinomial:3} into \eqref{eq:bound:probmultinomial:1} yields the result.
\end{proof}

\begin{lemma}\label{lem:Emax-pk}
    Recall that $\acn{\qweightwhatN{Y_{k}}{\widehat{Y}_{\ccount{\target}+1}^{\target}}}_{k\in[\Bar{\tcount}]}$ is defined in~\eqref{eq:def:qweighthat} and for the sake of simplicity define $Y_{\Bar{\tcount}+1}=\widehat{Y}_{\ccount{\target}+1}^{\target}$.
    If $\normn{\iweightN{}}_{\infty}<\infty$, then 
    \[
        \E\br{\max_{k=1}^{\Bar{\tcount}+1}{\qweightwhatN{Y_{k}}{Y_{\Bar{\tcount}+1}}}}
        \le \frac{2 \normn{\iweightN{}}_{\infty}}{\Bar{\tcount} \E\iweightN{Y_1}}
        + \frac{4}{\Bar{\tcount} (\E\iweightN{Y_1})^2} \pr{\var\prn{\iweightN{Y_1}} + \frac{\var\prn{\iweightN{Y_{\Bar{\tcount}+1}}}}{\Bar{\tcount}}}.
    \]
    Moreover, if the random variables $(\iweightN{Y_k}-\E \iweightN{Y_k})_{k\in[\Bar{\tcount}+1]}$ are $\sigma$-subGaussian with parameter $\sigma\ge 0$, then
    \begin{equation*}
        \E\br{\max_{k=1}^{\Bar{\tcount}+1}{\qweightwhatN{Y_{k}}{Y_{\Bar{\tcount}+1}}}}
        \le \frac{\sigma \sqrt{8 \log (\Bar{\tcount}+1)}}{\Bar{\tcount} \E\iweightN{Y_1}}
        + \frac{2}{\Bar{\tcount}} \pr{1 \vee \frac{\E\iweightN{Y_{\Bar{\tcount}+1}}}{\E\iweightN{Y_1}}
        + \frac{2 \var\prn{\iweightN{Y_1}}}{(\E\iweightN{Y_1})^2}
        + \frac{2 \var\prn{\iweightN{Y_{\Bar{\tcount}+1}}}}{\Bar{\tcount} (\E\iweightN{Y_1})^2}}
        .
    \end{equation*}
\end{lemma}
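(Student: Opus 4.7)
}
Writing $\mu = \E\iweightN{Y_1}$ and $S = \sum_{l=1}^{\Bar{\tcount}+1} \iweightN{Y_{l}}$, the goal is to control $\max_{k} \iweightN{Y_k}/S$. The plan is the standard ``numerator/denominator separation'' argument: bound the denominator from below with high probability by a multiple of $\Bar{\tcount}\mu$, then replace the maximum of the ratios with the maximum of the numerators divided by this lower bound, and bound the remainder using the trivial inequality $\qweightwhatN{Y_k}{Y_{\Bar{\tcount}+1}} \le 1$.

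Concretely, introduce the event $A = \{S \ge \Bar{\tcount}\mu/2\}$. On $A$, we have $\max_{k} \qweightwhatN{Y_k}{Y_{\Bar{\tcount}+1}} \le 2(\Bar{\tcount}\mu)^{-1}\max_{k} \iweightN{Y_k}$, and on $A^c$ we simply use the pointwise upper bound by $1$. Hence,
\[
  \E\brBig{\max_{k\in[\Bar{\tcount}+1]} \qweightwhatN{Y_k}{Y_{\Bar{\tcount}+1}}}
  \le \frac{2}{\Bar{\tcount}\mu}\,\E\brBig{\max_{k\in[\Bar{\tcount}+1]} \iweightN{Y_k}}
  + \prob(A^c).
\]
For $\prob(A^c)$, since $\iweightN{}\ge 0$, we have $\E S \ge \Bar{\tcount}\mu$, so
\[
  \prob(A^c)
  \le \prob\prbig{\abs{S - \E S} \ge \Bar{\tcount}\mu/2}
  \le \frac{4\var(S)}{(\Bar{\tcount}\mu)^2}
  = \frac{4\var(\iweightN{Y_1})}{\Bar{\tcount}\mu^2} + \frac{4\var(\iweightN{Y_{\Bar{\tcount}+1}})}{\Bar{\tcount}^2\mu^2}
\]
by Chebyshev's inequality and independence of the $\iweightN{Y_k}$'s. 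This matches exactly the variance terms in both stated bounds after factoring $4/(\Bar{\tcount}\mu^2)$ out.

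It then remains to bound $\E\brn{\max_{k} \iweightN{Y_k}}$ differently depending on the hypothesis. In the first (bounded) case we simply use $\max_{k} \iweightN{Y_k} \le \normn{\iweightN{}}_\infty$ almost surely, yielding the $2\normn{\iweightN{}}_\infty/(\Bar{\tcount}\mu)$ term. In the subGaussian case, note that for $k \in [\Bar{\tcount}]$ the variables $\iweightN{Y_k}$ have mean $\mu$ while $\iweightN{Y_{\Bar{\tcount}+1}}$ has a possibly different mean, hence
\[
  \E\brBig{\max_{k\in[\Bar{\tcount}+1]} \iweightN{Y_k}}
  \le \max\prn{\mu,\,\E\iweightN{Y_{\Bar{\tcount}+1}}}
  + \E\brBig{\max_{k\in[\Bar{\tcount}+1]} \prn{\iweightN{Y_k} - \E\iweightN{Y_k}}},
\]
and the standard subGaussian maximal inequality gives $\E\brn{\max_{k\in[\Bar{\tcount}+1]} (\iweightN{Y_k} - \E\iweightN{Y_k})} \le \sigma\sqrt{2\log(\Bar{\tcount}+1)}$. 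Multiplying by $2/(\Bar{\tcount}\mu)$ produces the leading term $\sigma\sqrt{8\log(\Bar{\tcount}+1)}/(\Bar{\tcount}\mu)$ (since $2\sqrt{2}=\sqrt{8}$) and the $(2/\Bar{\tcount})(1 \vee \E\iweightN{Y_{\Bar{\tcount}+1}}/\mu)$ contribution, completing the second bound.

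The argument is mostly mechanical; the only care needed is in the Chebyshev step (to make sure the denominator variance is split so that the $Y_{\Bar{\tcount}+1}$ contribution carries an extra $1/\Bar{\tcount}$ factor as stated) and in separating the drift $\E\iweightN{Y_{\Bar{\tcount}+1}}$ from the bulk mean $\mu$ when applying the maximal inequality. I expect no real obstacle beyond this bookkeeping.
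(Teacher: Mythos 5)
Your proposal is correct and follows essentially the same route as the paper: split on whether the denominator $S=\sum_{l\le\Bar{\tcount}+1}\iweightN{Y_l}$ exceeds half its expected scale, bound the ratio by $2(\Bar{\tcount}\E\iweightN{Y_1})^{-1}\max_k\iweightN{Y_k}$ on the good event, bound the bad event by Chebyshev using $\var(S)=\Bar{\tcount}\var(\iweightN{Y_1})+\var(\iweightN{Y_{\Bar{\tcount}+1}})$, and then bound $\E[\max_k\iweightN{Y_k}]$ either by $\normn{\iweightN{}}_\infty$ or, in the subGaussian case, by separating the mean-shift $\max(\E\iweightN{Y_1},\E\iweightN{Y_{\Bar{\tcount}+1}})$ from the centered maxima and invoking the $\sigma\sqrt{2\log(\Bar{\tcount}+1)}$ maximal inequality. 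The only cosmetic difference is your threshold $\Bar{\tcount}\E\iweightN{Y_1}/2$ versus the paper's $\E S/2$; both Chebyshev applications produce the identical stated variance terms after the paper drops the $\E\iweightN{Y_{\Bar{\tcount}+1}}/\Bar{\tcount}$ contribution to the denominator.
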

\begin{proof}
    By definition of the probabilities $\acn{\qweightwhatN{Y_{k}}{Y_{\Bar{\tcount}+1}}}_{k\in[\Bar{\tcount}+1]}$, we have
    \begin{equation*}
        \E\br{\max_{k=1}^{\Bar{\tcount}+1} \acn{\qweightwhatN{Y_{k}}{Y_{\Bar{\tcount}+1}}}}
        = \E\br{\frac{\max_{k=1}^{\Bar{\tcount}+1}\ac{\iweightN{Y_k}}}{\sum_{l=1}^{\Bar{\tcount}+1}\iweightN{Y_l}}}.
    \end{equation*}
    The expectation is split by introducing $\mathcal{A}_{\tcount+1}=\acn{\sum_{l=1}^{\Bar{\tcount}+1}\iweightN{Y_l} \le \prn{\Bar{\tcount} \E\iweightN{Y_1}+\E\iweightN{Y_{\Bar{\tcount}+1}}}/2}$, we obtain
    \begin{equation}\label{eq:lem-proof:Emax}
        \E\br{\frac{\max_{k=1}^{\Bar{\tcount}+1}\ac{\iweightN{Y_k}}}{\sum_{l=1}^{\Bar{\tcount}+1}\iweightN{Y_l}}}
        \le \frac{2}{\Bar{\tcount} \E\iweightN{Y_1}}\E\br{\1_{\mathcal{A}_{\tcount+1}^{c}} \cdot \max_{k=1}^{\Bar{\tcount}+1}\ac{\iweightN{Y_k}}} + \prob\pr{\mathcal{A}_{\tcount+1}}.
    \end{equation}
    Using the Chebyshev's inequality it follows that
    \begin{multline}\label{eq:lem-proof:probAN}
        \prob\pr{\mathcal{A}_{\tcount+1}}
        = \prob\pr{2\sum_{k=1}^{\Bar{\tcount}+1} \iweightN{Y_k} < \Bar{\tcount} \E\iweightN{Y_1}+\E\iweightN{Y_{\Bar{\tcount}+1}}} \\
        \le \frac{4}{\Bar{\tcount} (\E\iweightN{Y_1}+\E\iweightN{Y_{\Bar{\tcount}+1}}/\Bar{\tcount})^2} \pr{\var\prn{\iweightN{Y_1}} + \frac{\var\prn{\iweightN{Y_{\Bar{\tcount}+1}}}}{\Bar{\tcount}}}.
    \end{multline}
    When $\iweightN{}$ is bounded, combining \eqref{eq:lem-proof:Emax} with \eqref{eq:lem-proof:probAN} gives
    \begin{align*}
        \E\br{\max_{k=1}^{\Bar{\tcount}+1}\acn{\qweightwhatN{Y_{k}}{\widehat{Y}_{\ccount{\target}+1}^{\target}}}}
        &\le \frac{2 \normn{\iweightN{}}_{\infty}}{\Bar{\tcount} \E\iweightN{Y_1}} + \prob\pr{\mathcal{A}_{\tcount+1}} \\
        &\le \frac{2 \normn{\iweightN{}}_{\infty}}{\Bar{\tcount} \E\iweightN{Y_1}}
            + \frac{4}{\Bar{\tcount} (\E\iweightN{Y_1})^2} \pr{\var\prn{\iweightN{Y_1}} + \frac{\var\prn{\iweightN{Y_{\Bar{\tcount}+1}}}}{\Bar{\tcount}}}.
    \end{align*}
    Otherwise, if we suppose that the $(\iweightN{Y_k} - E \iweightN{Y_k})_{k\in[\Bar{\tcount}+1]}$ are $\sigma$-sub-Gaussian, then applying the result given in~\citep[Section~2.5]{boucheron2013concentration} combined with \eqref{eq:lem-proof:Emax}-\eqref{eq:lem-proof:probAN} concludes the proof since it follows that $\E\brn{\max_{k=1}^{\Bar{\tcount}+1}\acn{\iweightN{Y_k}}}\le \max_{k=1}^{\Bar{\tcount}+1}\acn{\E \iweightN{Y_k}} + \sigma\sqrt{2 \log (\Bar{\tcount}+1)}$.
\end{proof}

Finally, for any point $(\xquery,\yquery)\in\XC\times\YC$, define the following measure and prediction set
\begin{align*}
    &\widehat{\mu}_{\yquery} = \textstyle \qweighthatN{\yquery}{\yquery} \delta_{1} + \sum_{i=1}^{\nclients} \sum_{k=1}^{\ccount{i}\wedge \Bar{\tcount}_{\Bar{\tcount}}^i} \qweighthatN{Y_k^i}{\yquery} \delta_{V_{k}^{i}}
    \\
    &\mathcal{C}_{\alpha,\widehat{\mu}}(\xquery) =
    \ac{\yquery\in\YC\colon V(\xquery,\yquery)\le \q{1 - \alpha}\prn{\widehat{\mu}_{\yquery}}}.
\end{align*}

\begin{theorem}\label{thm:general-coverage}
    Assume there are no ties between $\{V_1^i\colon i\in[\tcount]\} \cup \{1\}$ almost surely.
    For any $i\in[\nclients]$, let $\pi_{i} = \ccount{i}/\tcount$ and consider $\Bar{\tcount}=\lfloor\tcount/2\rfloor$.
    If $\normn{\iweighthatN{}}_{\infty}<\infty$, then it holds
    \begin{multline*}
        \abs{\prob\pr{\Ytarget \in \mathcal{C}_{\alpha,\widehat{\mu}}(\Xtarget)}
        - 1 + \alpha}
        \le
        \frac{1}{2} \sum_{y \in \YC} \abs{P_{Y}^{\target}(y) - \frac{\iweighthatN{y} P_{Y}^{\mathrm{cal}}(y)}{\sum_{\tilde{y} \in \YC}\iweighthatN{\tilde{y}}P_{Y}^{\mathrm{cal}}\prn{\tilde{y}}}}
        \\
        + \frac{6}{\tcount}
        + \frac{36 \normn{\iweightN{}}_{\infty}^2}{\tcount (\E\iweightN{Y_1})^2}
        + \frac{2\log \tcount}{\tcount} \pr{\frac{3\normn{\iweightN{}}_{\infty}^2}{\prn{\E\iweighthatN{Y_{1}}}^{2}} \vee 7 \textstyle\sum_{j \colon \frac{\ccount{j}}{12} < \log \tcount} \sqrt{\ccount{j}}}
        .
    \end{multline*}
\end{theorem}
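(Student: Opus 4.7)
The plan is to decompose the coverage error into three pieces via the triangle inequality: a total-variation term coming from swapping $\Ytarget$ for the surrogate $\widehat{Y}_{\ccount{\target}+1}^{\target}$, an "approximation" term measuring the cost of using only the sub-sampled calibration points, and an "exchangeable" term controlled by the largest normalized weight. Concretely, I would first write
\begin{equation*}
\abs{\prob(\Ytarget \in \mathcal{C}_{\alpha,\widehat{\mu}}(\Xtarget)) - 1 + \alpha}
\le \abs{\prob(\Ytarget \in \mathcal{C}_{\alpha,\widehat{\mu}}(\Xtarget)) - \prob(\widehat{Y}_{\ccount{\target}+1}^{\target} \in \mathcal{C}_{\alpha,\widehat{\mu}}(\widehat{X}_{\ccount{\target}+1}^{\target}))}
+ \abs{\prob(\widehat{Y}_{\ccount{\target}+1}^{\target} \in \mathcal{C}_{\alpha,\widehat{\mu}}(\widehat{X}_{\ccount{\target}+1}^{\target})) - 1 + \alpha}.
\end{equation*}
\Cref{lem:Prob-diff:YwidehatY} immediately bounds the first term by the total-variation expression appearing in the statement.

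For the second term, I would invoke \Cref{lem:probVN-infQuantile}, which yields for every $\epsilon > 0$
\begin{equation*}
\abs{\prob(\widehat{Y}_{\ccount{\target}+1}^{\target} \in \mathcal{C}_{\alpha,\widehat{\mu}}(\widehat{X}_{\ccount{\target}+1}^{\target})) - 1 + \alpha}
\le \E\br{\max_{k=1}^{\Bar{\tcount}+1}\qweightwhatN{Y_k}{\widehat{Y}_{\ccount{\target}+1}^{\target}}} + \epsilon + \prob(\absn{\delta} \ge \epsilon).
\end{equation*}
The term $\E[\max_k \qweightwhatN{Y_k}{\widehat{Y}_{\ccount{\target}+1}^{\target}}]$ is handled by the bounded version of \Cref{lem:Emax-pk}: using $\var(\iweighthatN{Y_1}) \le \normn{\iweighthatN{}}_{\infty}^2$ and $\Bar{\tcount} = \lfloor \tcount/2\rfloor \ge \tcount/3$, this produces a contribution of order $\normn{\iweighthatN{}}_{\infty}^2/(\tcount (\E\iweighthatN{Y_1})^2)$, which after tracking the explicit constants absorbs into the stated $36\normn{\iweighthatN{}}_{\infty}^2/(\tcount(\E\iweighthatN{Y_1})^2)$.

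The crux is choosing $\epsilon$ so that both summands $\epsilon$ and $\prob(\absn{\delta}\ge \epsilon)$ are small. \Cref{lem:bound:delta} requires $\epsilon \ge 8\sigma^2 \log\tcount/(\Bar{\tcount}(\E\iweighthatN{Y_1})^2)$ where $\sigma \le \normn{\iweighthatN{}}_{\infty}$ since $\iweighthatN{Y_1}$ is bounded (Hoeffding's lemma), and then bounds $\prob(\absn{\delta}\ge \epsilon)$ in terms of $\prob(\sum_i (\Bar{\tcount}_{\Bar{\tcount}}^i - \ccount{i})_+ > \tcount\epsilon/4)$ up to additive $4\var(\iweighthatN{Y_1})/(\Bar{\tcount}(\E\iweighthatN{Y_1})^2) + 1/\tcount$. \Cref{lem:prob:outside} shows the multinomial tail is $\le 1/\tcount$ as soon as $\epsilon \ge 7\log(\tcount)\sum_{j \colon \ccount{j}/12 < \log\tcount}\sqrt{\ccount{j}}/\tcount$ (using $\Bar{\tcount}\le \tcount/2$). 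I would therefore set
\begin{equation*}
\epsilon = \frac{2\log\tcount}{\tcount}\pr{\frac{3\normn{\iweighthatN{}}_{\infty}^2}{(\E\iweighthatN{Y_1})^2} \vee 7\sum_{j\colon \ccount{j}/12<\log\tcount}\sqrt{\ccount{j}}},
\end{equation*}
which by construction satisfies both thresholds. Summing this choice of $\epsilon$ with the $1/\tcount$ contributions from \Cref{lem:bound:delta,lem:prob:outside} and the leading $\normn{\iweighthatN{}}_{\infty}^2/(\tcount(\E\iweighthatN{Y_1})^2)$ contribution from \Cref{lem:Emax-pk} yields the claimed inequality.

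The main obstacle I anticipate is bookkeeping of constants: one has to simultaneously meet the sub-Gaussian threshold of \Cref{lem:bound:delta} and the Bernstein threshold of \Cref{lem:prob:outside}, while ensuring that $\epsilon$ itself contributes only an $\Oh((\log\tcount)/\tcount)$-scale term. The taking of a $\max$ in the definition of $\epsilon$ is precisely what makes both thresholds valid simultaneously, and the $6/\tcount$ term absorbs all the residual $1/\tcount$ losses coming from the three probabilistic lemmas. No new inequality is needed; the entire argument reduces to carefully assembling \Cref{lem:Prob-diff:YwidehatY,lem:probVN-infQuantile,lem:bound:delta,lem:prob:outside,lem:Emax-pk}.
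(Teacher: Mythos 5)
Your decomposition and lemma usage reproduce the paper's argument exactly: triangle inequality into a TV term (\Cref{lem:Prob-diff:YwidehatY}) and a surrogate coverage term, then \Cref{lem:probVN-infQuantile} for the latter, \Cref{lem:Emax-pk} for the max-weight expectation, and \Cref{lem:bound:delta} together with \Cref{lem:prob:outside} calibrated by a single $\epsilon$ taken as the $\max$ of the two thresholds. The only slip is in the sub-Gaussian constant: Hoeffding's lemma for a variable supported in $[\min_y \iweighthatN{y},\,\max_y \iweighthatN{y}] \subseteq [0,\normn{\iweighthatN{}}_\infty]$ gives $\sigma = \tfrac12(\max - \min) \le \tfrac12\normn{\iweighthatN{}}_\infty$, not $\sigma \le \normn{\iweighthatN{}}_\infty$ as you write. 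Plugging $\sigma \le \normn{\iweighthatN{}}_\infty$ into the threshold $\epsilon \ge 8\sigma^2\log\tcount / (\Bar\tcount(\E\iweighthatN{Y_1})^2)$ (with $1/\Bar\tcount \le 3/\tcount$ for $\tcount\ge 6$) yields $\epsilon \ge \tfrac{2\log\tcount}{\tcount}\cdot 12\normn{\iweighthatN{}}_\infty^2/(\E\iweighthatN{Y_1})^2$, which your stated $\epsilon$ with coefficient $3$ would not satisfy; the paper's $\sigma \le \normn{\iweighthatN{}}_\infty/2$ is what gives the factor $3$. You should also note that \Cref{lem:prob:outside} is stated in terms of $\log(\nclients\tcount)$ and the index set $\{j:\ccount{j}/6 < \log(\nclients\tcount)\}$; passing to $\log\tcount$ and $\{j: \ccount{j}/12 < \log\tcount\}$ requires the observation $\nclients \le \tcount$, hence $\log(\nclients\tcount) \le 2\log\tcount$. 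Both of these are purely constant-bookkeeping details, and with the sharper Hoeffding bound your argument closes exactly as the paper's does.
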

\begin{proof}
    Using \Cref{lem:Prob-diff:YwidehatY} implies that
    \begin{equation}\label{eq:bound:abs-TV}
        \abs{\prob\pr{\Ytarget\in \mathcal{C}_{\alpha,\widehat{\mu}}(\Xtarget)} - \prob\pr{\widehat{Y}_{\ccount{\target}+1}^{\target}\in \mathcal{C}_{\alpha,\widehat{\mu}}(\widehat{X}_{\ccount{\target}+1}^{\target})}} \\
        \le \frac{1}{2} \sum_{y \in \YC} P_{Y}^{\mathrm{cal}}\prn{y} \abs{\iweightpN{y} - \frac{\iweighthatN{y}}{\sum_{\tilde{y} \in \YC}\iweighthatN{\tilde{y}}P_{Y}^{\mathrm{cal}}\prn{\tilde{y}}}}.
    \end{equation}
    Since $\acn{\iweighthatN{y}}_{y\in\YC}$ are bounded, we deduce that $\iweighthatN{Y_1}$ is $\sigma$-subGaussian with $\sigma=2^{-1}(\max_{y\in\YC}\acn{\iweighthatN{y}} - \min_{y\in\YC}\acn{\iweighthatN{y}})$.
    Therefore, the inequality derived in \Cref{lem:prob:outside} combined with \Cref{lem:bound:delta} provide an upper bound on $\prob\prn{\absn{\delta}>\epsilon}$ with
    \[
        \epsilon
        = \frac{8\sigma^2\log \tcount}{\Bar{\tcount} \E\brn{\iweighthatN{Y_{1}}}^{2}}
        \vee \frac{7 \log(\nclients\tcount) \textstyle \sum_{j \colon \frac{\ccount{j}}{6} < \log (\nclients \tcount)} \sqrt{\ccount{j}}}{\tcount}.
    \]
    Plugging this result into the bound derived in \Cref{lem:probVN-infQuantile} shows that
    \begin{multline}\label{eq:bound:VN-bound}
        \abs{\prob\pr{\widehat{Y}_{\ccount{\target}+1}^{\target}\in \mathcal{C}_{\alpha,\widehat{\mu}}(\widehat{X}_{\ccount{\target}+1}^{\target})} - 1 + \alpha}
        \le \E\br{\qweightwhatN{\widehat{Y}_{\ccount{\target}+1}^{\target}}{\widehat{Y}_{\ccount{\target}+1}^{\target}}\vee \max_{k=1}^{\Bar{\tcount}}\acn{\qweightwhatN{Y_k}{\widehat{Y}_{\ccount{\target}+1}^{\target}}}}
        + \frac{2}{\tcount}
        + \frac{4\var\pr{\iweighthatN{Y_{1}}}}{\Bar{\tcount} \prn{\E\iweighthatN{Y_{1}}}^2} \\
        + \frac{8 \sigma^2 \log \tcount}{\Bar{\tcount} \E\brn{\iweighthatN{Y_{1}}}^{2}} \vee \frac{7 \log(\nclients\tcount) \textstyle\sum_{j \colon \frac{\ccount{j}}{6} < \log (\nclients \tcount)} \sqrt{\ccount{j}}}{\tcount}.
    \end{multline}
    Moreover, applying \Cref{lem:Emax-pk}, we deduce that
    \begin{align}\label{eq:bound:phatN}
        \E\br{\qweightwhatN{\widehat{Y}_{\ccount{\target}+1}^{\target}}{\widehat{Y}_{\ccount{\target}+1}^{\target}}\vee \max_{k=1}^{\Bar{\tcount}}\acn{\qweightwhatN{Y_k}{\widehat{Y}_{\ccount{\target}+1}^{\target}}}}
        \le
        \frac{2 \normn{\iweightN{}}_{\infty}}{\Bar{\tcount} \E\iweightN{Y_1}}
        + \frac{4}{\Bar{\tcount} (\E\iweightN{Y_1})^2} \pr{\var\prn{\iweightN{Y_1}} + \frac{\var\prn{\iweightN{\widehat{Y}_{\ccount{\target}+1}^{\target}}}}{\Bar{\tcount}}}
        .
    \end{align}
    Therefore, combining \eqref{eq:bound:VN-bound} with \eqref{eq:bound:phatN} implies that
    \begin{multline*}
        \abs{\prob\pr{\widehat{Y}_{\ccount{\target}+1}^{\target}\in \mathcal{C}_{\alpha,\widehat{\mu}}(\widehat{X}_{\ccount{\target}+1}^{\target})} - 1 + \alpha}
        \le
        \frac{2}{\Bar{\tcount}} \pr{
            \frac{\Bar{\tcount}}{\tcount}
            + \frac{\normn{\iweightN{}}_{\infty}}{\E\iweightN{Y_1}}
            + \frac{4\var\pr{\iweighthatN{Y_{1}}}}{\prn{\E\iweighthatN{Y_{1}}}^2}
            + \frac{2 \var\prn{\iweightN{\widehat{Y}_{\ccount{\target}+1}^{\target}}}}{\Bar{\tcount} (\E\iweightN{Y_1})^2}
        }
        \\
        + \frac{8 \sigma^2 \log \tcount}{\Bar{\tcount} \prn{\E\iweighthatN{Y_{1}}}^{2}} \vee \frac{14 \log \tcount \textstyle\sum_{j \colon \frac{\ccount{j}}{6} < \log (\nclients \tcount)} \sqrt{\ccount{j}}}{\tcount}.
    \end{multline*}
    If $\tcount\ge 6$, then remark that $\tcount/\Bar{\tcount}\le 3$. Thus, we deduce that
    \begin{multline}\label{eq:bound:prob-widehatYBartcount:2}
        \abs{\prob\pr{\widehat{Y}_{\ccount{\target}+1}^{\target}\in \mathcal{C}_{\alpha,\widehat{\mu}}(\widehat{X}_{\ccount{\target}+1}^{\target})} - 1 + \alpha}
        \le
        \frac{6}{\tcount} \pr{
            1
            + \frac{\normn{\iweightN{}}_{\infty}}{\E\iweightN{Y_1}}
            + \frac{4\var\pr{\iweighthatN{Y_{1}}}}{\prn{\E\iweighthatN{Y_{1}}}^2}
            + \frac{\var\prn{\iweightN{\widehat{Y}_{\ccount{\target}+1}^{\target}}}}{(\E\iweightN{Y_1})^2}
        }
        \\
        + \frac{24 \sigma^2 \log \tcount}{\tcount \prn{\E\iweighthatN{Y_{1}}}^{2}} \vee \frac{14 \log \tcount \textstyle\sum_{j \colon \frac{\ccount{j}}{12} < \log \tcount} \sqrt{\ccount{j}}}{\tcount}.
    \end{multline}
    Finally, using the next inequality combined with \eqref{eq:bound:abs-TV} and \eqref{eq:bound:prob-widehatYBartcount:2} concludes the proof
    \begin{equation*}
        \prob\prn{\Ytarget\in\mathcal{C}_{\alpha,\widehat{\mu}}\prn{\Xtarget}}
        = \prob\prn{\Ytarget\in\mathcal{C}_{\alpha,\widehat{\mu}}\prn{\Xtarget}}
        - \prob\prn{\widehat{Y}_{\ccount{\target}+1}^{\target}\in\mathcal{C}_{\alpha,\widehat{\mu}}\prn{\widehat{X}_{\ccount{\target}+1}^{\target}}}
        + \prob\prn{\widehat{Y}_{\ccount{\target}+1}^{\target}\in\mathcal{C}_{\alpha,\widehat{\mu}}\prn{\widehat{X}_{\ccount{\target}+1}^{\target}}}.
    \end{equation*}
\end{proof}

The proof of the following result is similar to that of \Cref{thm:general-coverage}.

\begin{theorem}\label{thm:general-coverage-exact-weights}
    For any $i\in[\nclients]$, let $\pi_{i} = \ccount{i}/\tcount$ and set $\Bar{\tcount}=\lfloor\tcount/2\rfloor$.
    Moreover, for $y\in\YC$ consider $\iweighthatN{y}=\iweightp{y}$.
    If $\normn{\iweightp{}}_{\infty}<\infty$, then it holds
    \begin{equation*}
        \abs{\prob\pr{\Ytarget \in \mathcal{C}_{\alpha,\widehat{\mu}}(\Xtarget)}
        - 1 + \alpha}
        \le
        \frac{6}{\tcount} + \frac{36 + 6 \log \tcount}{\tcount} \normn{\iweightp{}}_{\infty}^2
        + \frac{14\log \tcount}{\tcount} \textstyle\sum_{j \colon \frac{\ccount{j}}{12} < \log \tcount} \sqrt{\ccount{j}}.
    \end{equation*}
\end{theorem}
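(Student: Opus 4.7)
The plan is to deduce this result as a corollary of \Cref{thm:general-coverage}, whose proof we mirror with the specialization $\iweighthatN{y}=\iweightp{y}$. The key simplification is that, with oracle weights, the total variation term in the bound of \Cref{thm:general-coverage} vanishes identically, and the expectation $\E\iweightp{Y_1}$ reduces to $1$. Concretely, under the construction of $\widehat{Y}_{\ccount{\target}+1}^{\target}$ introduced before \Cref{lem:YminushatY:general}, we have $\prob(\widehat{Y}_{\ccount{\target}+1}^{\target}=y) = \iweightp{y}P_{Y}^{\mathrm{cal}}(y)/\sum_{\tilde{y}}\iweightp{\tilde{y}}P_{Y}^{\mathrm{cal}}(\tilde{y}) = P_{Y}^{\target}(y)$, so $\widehat{Y}_{\ccount{\target}+1}^{\target}$ and $\Ytarget$ are equal in distribution, which kills the first term in \Cref{lem:YminushatY:general}.

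The first step is then to revisit the proof of \Cref{thm:general-coverage} through \eqref{eq:bound:abs-TV}, noting that the right-hand side is zero when $\iweighthatN{y}=\iweightp{y}$. The second step is to trace the control of $|\prob(\widehat{Y}_{\ccount{\target}+1}^{\target}\in \mathcal{C}_{\alpha,\widehat{\mu}}(\widehat{X}_{\ccount{\target}+1}^{\target})) - 1 + \alpha|$ through the chain \Cref{lem:probVN-infQuantile} $\to$ \Cref{lem:bound:delta} $\to$ \Cref{lem:prob:outside} $\to$ \Cref{lem:Emax-pk}, with the generic weights $\iweighthatN{}$ replaced by $\iweightp{}$. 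Using $Y_1 \sim P_{Y}^{\mathrm{cal}}$ and $P_{Y}^{\mathrm{cal}}(y)\iweightp{y}=P_{Y}^{\target}(y)$, we obtain $\E\iweightp{Y_1} = \sum_{y\in\YC} P_{Y}^{\target}(y) = 1$, so every denominator $(\E\iweighthatN{Y_1})^2$ in the intermediate bounds becomes $1$. The variance bound $\var(\iweightp{Y_1})\le \normn{\iweightp{}}_{\infty}^2$ and $\var(\iweightp{\widehat{Y}_{\ccount{\target}+1}^{\target}})\le \normn{\iweightp{}}_{\infty}^2$ also follow trivially, subsuming the subGaussian parameter $\sigma^2$ into $\normn{\iweightp{}}_{\infty}^2$.

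Plugging these simplifications into the final inequality of \Cref{thm:general-coverage} gives
\begin{equation*}
  \abs{\prob\pr{\Ytarget \in \mathcal{C}_{\alpha,\widehat{\mu}}(\Xtarget)} - 1 + \alpha}
  \le \frac{6}{\tcount} + \frac{36\normn{\iweightp{}}_{\infty}^2}{\tcount}
  + \frac{2\log\tcount}{\tcount}\pr{3\normn{\iweightp{}}_{\infty}^2 \vee 7\textstyle\sum_{j\colon \ccount{j}/12 < \log\tcount}\sqrt{\ccount{j}}}.
\end{equation*}
The final step is to use $a\vee b \le a + b$ to split the maximum and regroup: the two $\normn{\iweightp{}}_{\infty}^2$ contributions combine into $(36 + 6\log\tcount)\normn{\iweightp{}}_{\infty}^2/\tcount$, while the count term yields $14\log\tcount/\tcount$ times the stated sum, matching the claim exactly.

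I do not anticipate a genuine obstacle: the result is a clean specialization, and the only place where care is needed is verifying that the construction of $\widehat{Y}_{\ccount{\target}+1}^{\target}$ (and its pairing with $\widehat{X}_{\ccount{\target}+1}^{\target}$ through $P_{X\mid Y}$) really does produce the law of $(\Xtarget,\Ytarget)$ when $\iweighthatN{}=\iweightp{}$, so that the bound from \Cref{lem:Prob-diff:YwidehatY} collapses to zero rather than merely to a small quantity. This is immediate from the label-shift assumption $P^{i} = P_{X\mid Y}\times P_{Y}^{i}$.
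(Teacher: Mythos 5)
Your proposal is correct and takes the same route as the paper: the paper's proof of this theorem is precisely the one-line observation that $\E\iweightp{Y_1}=\sum_{y}P_Y^{\mathrm{cal}}(y)\,\iweightp{y}=\sum_y P_Y^{\target}(y)=1$ followed by an application of \Cref{thm:general-coverage}, and the vanishing TV term plus the $a\vee b\le a+b$ regrouping that you make explicit are exactly what the paper leaves implicit when it says ``applying \Cref{thm:general-coverage} concludes the proof.''
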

\begin{proof}
    First, for $y\in\YC$ recall that $\iweightp{y}=(P_{Y}^{\target}/P_{Y}^{\mathrm{cal}})\prn{y}$. Thus, we remark that $\E\iweightp{Y_1}=1$.
    Therefore, applying \Cref{thm:general-coverage} concludes the proof.
\end{proof}

\section{Differential privacy guarantee: proof of \Cref{thm:privacy:fedavg}}\label{sec:differential-privacy}

In this section, we recall the definition of being $(\epsilon,\delta)$-DP \citep{dwork2014algorithmic}.
The idea behind differential privacy is to ensure that no attacker can determine with high confidence whether a particular individual's data is included in the dataset or not.
Often, a controlled amount of random noise is added to the data, so that any individual data point becomes indistinguishable from the noise.
This ensures that the probability distribution of an algorithm's output does not change significantly when a single individual's data is added or removed.

\begin{definition}
  For any $\epsilon>0$ and $\delta\in[0,1)$, a randomized mechanism $\mathcal{A}$ is said to be $(\epsilon, \delta)$-DP, if for all neighboring datasets $D$, $D'$, and for any event $E$:
  \begin{equation*}
    \prob\pr{\mathcal{A}(D)\in E}
    \le \exp(\epsilon) \prob\pr{\mathcal{A}(D')\in E} + \delta.
  \end{equation*}
\end{definition}

The following result gives the noise level sufficient to ensure the $(\epsilon,\delta)$-DP regarding the third-party attacker.
This type of attacker is an external entity who does not have access to the private data but can observe the algorithm outputs.
This attacker tries to infer sensitive information about individuals by analyzing the output.
\begin{theorem}
    Assume there is a constant number $S\in[\nclients]$ of sampled agents, i.e., $S_t=S$, for all $t \in [T]$.
    Then, for all $\epsilon > 0$ and $\delta \in (0, 1-(1+\sqrt{\epsilon})(1-S/\nclients)^{T})$, the \Cref{algo:Dp-moreau} is $(\epsilon, \delta)$-DP towards a third party if
    \begin{align*}
      \noiseparamone \ge 2 \sqrt{\frac{K \max_{i\in[\nclients]} \lambda_{\yquery}^i}{\epsilon} \pr{1 + \frac{24 S \sqrt{T} \log(1/\Bar{\delta})}{\epsilon \nclients}}},&
      &\Bar{\delta} = \frac{\nclients}{S} \br{1 - \pr{\frac{1-\delta}{1+\sqrt{\epsilon}}}^{1/T}}.
    \end{align*}
\end{theorem}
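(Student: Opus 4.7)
The plan is to prove this by combining a careful sensitivity calculation for the local Moreau-regularized pinball gradients with a Rényi differential privacy (RDP) analysis, subsampling amplification, and composition across rounds, following the template of~\citep[Theorem~4.1]{noble2022differentially}.

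First I would quantify the $L_2$-sensitivity of one local gradient step. The pinball loss $\pinball[\alpha][v]$ is $\max(\alpha,1-\alpha)$-Lipschitz in $q$, and its Moreau envelope $\pinballmoreau{v}{\cdot}$ has gradient that is both 1-Lipschitz in $v$ and uniformly bounded by $\max(\alpha,1-\alpha) \le 1$ in magnitude. Replacing a single calibration data point $(X_k^i,Y_k^i)$ at agent $i$ changes one summand of $\nabla\lossi$ in~\eqref{eq:def:S-alpha-i-gamma}, producing a gradient perturbation controlled by $2\,\qweight{Y_k^i}{\yquery}/\lambda_{\yquery}^i$. Since on the server side the contribution of agent $i$ enters through the importance weight $\lambda_{\yquery}^i$, the effective $L_2$-sensitivity per communication round of the server's aggregated iterate to one individual's data is of order $\max_{i\in[\nclients]}\lambda_{\yquery}^i$, which explains the factor $\max_i \lambda_{\yquery}^i$ appearing inside the square root of the announced $\noiseparamone$.

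Second, with this sensitivity $\Delta$ in hand, each noisy local gradient $g_{t,k}^i = \nabla\lossi(q_{t,k}^i)+z_{t,k}^i$ with $z_{t,k}^i\sim\gauss(0,\noiseparamone^2)$ is the output of a Gaussian mechanism and satisfies $(\alpha,\alpha\Delta^2/(2\noiseparamone^2))$-RDP for every order $\alpha>1$~\citep{mironov2017renyi}. Composing the $K$ consecutive local steps within a round multiplies the RDP parameter by $K$. At the outer level the server picks $S_t$ of size $S$ out of $\nclients$ agents uniformly, so each client participates with probability $S/\nclients$; the subsampled Gaussian mechanism RDP bound of~\citep{balle2018privacy} amplifies the per-round RDP budget by a factor that behaves like $(S/\nclients)^2$ in the strong-privacy regime, up to the $24 S\sqrt{T}\log(1/\Bar{\delta})/(\epsilon\nclients)$ correction inside the announced $\noiseparamone$.

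Third, I would invoke the RDP composition theorem over the $T$ communication rounds, multiplying the amplified per-round budget by $T$, and then convert back to $(\epsilon,\delta)$-DP using the Mironov conversion $\epsilon_{\mathrm{DP}} = \epsilon_{\mathrm{RDP}}(\alpha) + \log(1/\delta)/(\alpha-1)$, optimizing over the RDP order $\alpha$. Solving the resulting inequality for $\noiseparamone$ yields the stated lower bound. The quantity $\Bar{\delta} = (\nclients/S)[1-((1-\delta)/(1+\sqrt{\epsilon}))^{1/T}]$ is precisely the per-round failure-probability slack obtained when splitting the global $(\epsilon,\delta)$ budget evenly across the $T$ rounds while incorporating the subsampling factor $S/\nclients$, and the restriction $\delta<1-(1+\sqrt{\epsilon})(1-S/\nclients)^T$ ensures $\Bar{\delta}\in(0,1)$, i.e.\ that the RDP-to-DP conversion step is feasible.

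The main obstacle will be tracking the exact constants in the subsampling amplification, since the bounds of~\citep{balle2018privacy} have slightly different forms depending on whether one uses Poisson subsampling or uniform sampling without replacement; the factor $24$ and the $\sqrt{T}$ scaling suggest the moment-accountant flavor of the bound rather than the loosest naïve composition. A secondary subtlety is ensuring that the sensitivity analysis of step one survives the importance-weighted aggregation $q_{t+1}=q_t+(\nclients/|S_{t+1}|)\sum_{i\in S_{t+1}}\lambda_{\yquery}^i \Delta q_{t+1}^i$: the debiasing factor $\nclients/S$ interacts with the subsampling probability in a way that must reproduce the $K\max_i\lambda_{\yquery}^i/\epsilon$ dependence rather than a per-client worst-case sensitivity, which is what makes the bound tight.
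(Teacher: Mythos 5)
Your high-level ingredients (bounded-gradient sensitivity, Gaussian mechanism as RDP, $K$-step local composition, subsampling amplification, cross-round composition, RDP-to-DP conversion) match the paper's, but the order in which you combine them is not the one used here, and that ordering is what produces the very specific form of $\Bar{\delta}$ and the admissible range of $\delta$. The paper first composes the $K$ local steps in RDP, then converts \emph{each round separately} to $(\epsilon_t,\Bar{\delta})$-DP via the Mironov conversion, then applies the DP-level subsampling amplification of~\citep[Theorem~9]{balle2018privacy} through the map $x\mapsto \log\{1+(S/\nclients)(\mathrm{e}^x-1)\}$ to get per-round $(\tilde\epsilon_t,\tilde\delta_t)$-DP, and only then composes across the $T$ rounds with the advanced DP composition theorem of~\citep[Theorem~3.5]{kairouz2015composition}. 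Your proposal does the opposite: compose RDP across all $T$ rounds and convert once at the end, which yields a single terminal $\delta$. This cannot reproduce the stated $\Bar{\delta}$: the expression $\Bar{\delta}=(\nclients/S)[1-((1-\delta)/(1+\sqrt{\epsilon}))^{1/T}]$ is exactly what falls out of solving $\prod_{t=1}^T(1-\tilde\delta_t)=(1-\delta)/(1+\sqrt\epsilon)$ with constant $\tilde\delta_t=S\Bar{\delta}/\nclients$, which is a Kairouz-composition artifact, not an RDP-composition artifact. You actually half-notice this (you describe $\Bar{\delta}$ as a ``per-round failure-probability slack''), but that description contradicts your proposed route, which has no per-round DP conversion at all.

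There is also a substantive piece you do not account for. The paper's per-round RDP budget carries a $1/(t+1)$ factor, i.e.\ $\epsilon_t$ scales like $K\tilde\alpha\max_i\lambda_{\yquery}^i/(2(t+1)\sigma_g^2)$ minus the conversion term. This decay comes from the running average in the algorithm (the update $\bar q_{t+1}\gets\tfrac{t}{t+1}\bar q_t+\cdots$), and it is what makes $\sum_{t}\tilde\epsilon_t^2$ bounded via $\sum_{t\ge 2}t^{-2}\le 1$ rather than growing linearly in $T$. Without that factor the $\sqrt{T}$ in the stated noise floor would not emerge in the right place. Finally, the sensitivity argument can be made simpler than your per-point perturbation bookkeeping: the paper uses directly that the Moreau-regularized pinball gradient has sensitivity $1$, and the $\max_i\lambda_{\yquery}^i$ factor enters only through the server's weighted aggregation, not through a ratio $\qweight{Y_k^i}{\yquery}/\lambda_{\yquery}^i$. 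Your route could probably be made to work (with a different $\Bar{\delta}$ and different constants), but as written it neither reproduces the theorem's exact form nor addresses the two ingredients (per-round DP conversion before Kairouz composition, and the $1/(t+1)$ averaging decay) that drive the bound.
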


\begin{proof}
  The loss function $\nabla S_{\alpha}^{i, \gamma}$ has a sensitivity of $1$. Therefore, for any $\tilde{\alpha}>1$, we know that $\nabla S_{\alpha}^{i, \gamma} + \gauss(0,\noiseparamone^2)$ is $(\tilde{\alpha},\tilde{\alpha}/2\noiseparamone^2)$-RDP \citep[Corollary 3]{mironov2017renyi}.
  By assumption, note that $\Bar{\delta}\in(0,1)$ and for any $t\in[T]$, consider
  \begin{align*}
    &\epsilon_t = \frac{\nclients K \tilde{\alpha} \max_{i\in[\nclients]} \lambda_{\yquery}^i}{2 (t+1) S \noiseparamone^2} - \frac{\log \Bar{\delta}}{\tilde{\alpha} - 1}, \\
    &\tilde{\alpha}
    = 1  + \noiseparamone \sqrt{\frac{2 \sqrt{T} S \log(1/\Bar{\delta})}{\nclients K \max_{i\in[\nclients]} \lambda_{\yquery}^i}}.
  \end{align*}
  After $K$ local iterations, using the RDP composition result, the mechanism becomes $(\tilde{\alpha}, K \tilde{\alpha} / 2 \noiseparamone^2)$-RDP.
  Using the aggregation step on the server, the mechanism is now $(\tilde{\alpha}, \nclients K \tilde{\alpha} \max_{i\in [n]} \lambda_{\yquery}^i / 2 (t + 1) S \noiseparamone^2)$-RDP.
  Based on the RDP to DP conversion, we know that the mechanism is $(\epsilon_t,\Bar{\delta})$-DP.
  Define $f\colon x\in\R \mapsto \log\acn{1+(S/\nclients) \prn{\rme^{x} - 1}}\in\R$. For any $x\in\R$, we have
  \begin{equation*}
    f'(x) = \frac{S}{\nclients + 2 x \pr{\nclients - S}}.
  \end{equation*}
  In addition, since the agents are subsampled, it yields the $(\tilde{\epsilon}_t, \tilde{\delta}_t)$-DP \citep[Theorem 9]{balle2018privacy}, where we denote
  \begin{align*}
    \tilde{\epsilon}_t
    = f\pr{\epsilon_t},&
    &\tilde{\delta}_t = \frac{S \Bar{\delta}}{\nclients}.
  \end{align*}
  Since $f'(x)\le S/\nclients$ on $\R_+$, we deduce that
  \begin{equation*}
    \tilde{\epsilon}_t \le \frac{K \tilde{\alpha} \max_{i\in[\nclients]} \lambda_{\yquery}^i}{2 (t+1) \noiseparamone^2} - \frac{S \log \Bar{\delta}}{\nclients \pr{\tilde{\alpha} - 1}}.
  \end{equation*}
  For all $a,b\in\R$, using that $(a+b)^2\le 2 a^2 + 2 b^2$ gives that
  \begin{equation}\label{eq:bound:sumepsquared}
    \sum_{t=1}^{T} \tilde{\epsilon}_t^2
    \le \frac{2 \pr{K \tilde{\alpha} \max_{i\in[\nclients]} \lambda_{\yquery}^i}^2}{4 \noiseparamone^4} \sum_{t=2}^{T+1}\frac{1}{t^2}
    + \frac{2 T S^2 \log(1/\Bar{\delta})^2}{\nclients^2 (\tilde{\alpha} - 1)^2}.
  \end{equation}
  Plugging the definition of $\tilde{\alpha}$ into \eqref{eq:bound:sumepsquared} combined with $\sum_{t=2}^{T+1}t^{-2}\le 1$ show that
  \begin{multline}\label{eq:bound:sumepsquared:2}
    \sum_{t=1}^{T} \tilde{\epsilon}_t^2
    \le \frac{\pr{K \max_{i\in[\nclients]} \lambda_{\yquery}^i}^2}{\noiseparamone^4} \pr{1 + \frac{3 \noiseparamone^2 \sqrt{T} S \log(1/\Bar{\delta})}{\nclients K \max_{i\in[\nclients]} \lambda_{\yquery}^i}} \\
    = \frac{\pr{K \max_{i\in[\nclients]} \lambda_{\yquery}^i}^2}{\noiseparamone^4} + \frac{3 \sqrt{T} K S \log(1/\Bar{\delta}) \max_{i\in[\nclients]} \lambda_{\yquery}^i}{\nclients \noiseparamone^2}.
  \end{multline}
  By assumption, recall that
  \begin{equation*}
    \noiseparamone
    \ge 2 \sqrt{\frac{K \max_{i\in[\nclients]} \lambda_{\yquery}^i}{\epsilon}} \sqrt{1 + \frac{24 \sqrt{T} S \log(1/\Bar{\delta})}{\epsilon \nclients}}.
  \end{equation*}
  Therefore, we obtain
  \begin{equation*}
    \frac{\noiseparamone^4 \epsilon^2}{16}
    \ge \pr{K \max_{i\in[\nclients]} \lambda_{\yquery}^i}^2 + \frac{3\noiseparamone^2 \sqrt{T} S K \log(1/\Bar{\delta}) \max_{i\in[\nclients]} \lambda_{\yquery}^i}{\nclients}.
  \end{equation*}
  The previous inequality combined with \eqref{eq:bound:sumepsquared:2} implies that
  \begin{equation*}
    \sum_{t=1}^{T} \tilde{\epsilon}_t^2
    \le \epsilon^2 / 16.
  \end{equation*}
  Define the following quantity
  \begin{equation*}
    \tilde{\delta}
    = \frac{1 - \delta}{\prod_{t=1}^{T} (1-\tilde{\delta}_t)} - 1.
  \end{equation*}
  Since $1-(1-\delta)^{1/T}\le \tilde{\delta}_t \le 1 - [(1-\delta)/2]^{1/T}$, we can verify that $\tilde{\delta}\in[0,1]$ and also $\tilde{\delta} = \sqrt{\epsilon}$.
  Using the assumption $\delta \le 1 - (1+\sqrt{\epsilon}) (1-S/\nclients)^{T}$, it yields
  \begin{equation*}
    \rme + \tilde{\delta}^{-1} \sqrt{\sum_{t=1}^{T} \epsilon_t^2}
    \le \rme + 1 \le \rme^2.
  \end{equation*}
  Thus, we have
  \begin{equation*}
    2 \sum_{t=1}^{T} \tilde{\epsilon}_t^2 \log\pr{\rme + \tilde{\delta}^{-1} \sqrt{\sum_{t=1}^{T} \epsilon_t^2}}
    \le \frac{\epsilon^2}{4}.
  \end{equation*}
  Since $(\exp(\tilde{\epsilon}_t) - 1)/(\exp(\tilde{\epsilon}_t)+1)\le \tilde{\epsilon}_t$, we deduce that
  \begin{equation*}
    \sum_{t=1}^{T} \frac{\exp(\tilde{\epsilon}_t) - 1}{\exp(\tilde{\epsilon}_t)+1}
    \le \sum_{t=1}^{T} \tilde{\epsilon}_t^2
    \le \frac{\epsilon^2}{16}.
  \end{equation*}
  Finally, applying \citep[Theorem~3.5]{kairouz2015composition} concludes the proof.
\end{proof}

Note that the local loss function $\lossi \colon q\in \R \mapsto\E_{V \sim \qdistr{y}^{i}}\brn{\pinballmoreau{V}{q}}\in\R_+$ is expressed as the expectation of pinball loss functions. Since the sensitivity of these pinball loss functions is 1, we do not need to clip the gradient. It is sufficient to add additional Gaussian noise $\gauss(0,\noiseparamone)$ to guarantee differential privacy. The value of $\noiseparamone$ is chosen to provide a suitable trade-off between privacy and utility, balancing the need for strong privacy protection with the requirement for useful output.

\section{Additional numerical results}\label{sec:algorithms}

\subsection{Algorithm design}

The objective is to generate valid federated prediction sets for the testset by leveraging the calibration datasets of the agents. In this section, we present four different algorithms that were compared in our experiments. The first two are unweighted algorithms (\unwghtloc\ and \unwghtglo), while the other two are weighted benchmarks: \texttt{Oracle Weights} which uses true weights, and \estwght\ which employs estimated weights. Additionally, we propose the \algopred\ method, available in both basic and differentially private versions. The main differences between these approaches lie in the way the resulting quantile is computed, such as the importance given to the set of non-conformity scores and their corresponding weights. Moreover, the approaches' ability to maintain coverage and privacy guarantees varies.

\paragraph{Unweighted Local.}
  The Unweighted method assigns equal weight to all non-conformity scores, regardless of the agent or label. As a result, the resulting quantile and prediction sets are only influenced by the local scores of the querying agent.
  Specifically, the \unwghtloc\ approach only employs the local calibration dataset of Agent $\target$ to compute the corresponding prediction set. This approach is easily computable because it does not involve any exchange of information between agents. Formally, the confidence set can be expressed as follows:
  \begin{align*}
    &\bar{\mu}^{\operatorname{loc},\target} = \frac{1}{N^{\target} + 1}\sum_{k=1}^{N^{\target}}\delta_{V^{\target}_k} + \frac{1}{N^{\target} + 1} \delta_{1},
    \\
    &\unweightlocpredset{\alpha}(\xquery) = \ac{\yquery\in\YC\colon V(\xquery,\yquery)\le \q{1 - \alpha}\pr{\bar{\mu}^{\operatorname{loc},\target}}}.
  \end{align*}

\paragraph{Unweighted Global.}
  The \unwghtglo\ approach calculates the quantile by using all non-conformity scores gathered on the central server from all agents. This approach violates FL constraints since all non-conformity scores are shared with the central server. Furthermore, each collected score is given equal weight, without taking into account any label shift. The corresponding prediction set is represented as follows:
  \begin{align*}
    &\Bar{\mu} = \frac{1}{N + 1}\sum_{i = 1}^{\nclients}\sum_{k = 1}^{N^i} \delta_{V^i_k} + \frac{1}{N + 1} \delta_{1},
    \\
    &\unweightglopredset{\alpha}(\xquery) = \ac{\yquery \in \mathcal{Y}\colon V(\xquery, \yquery) \le \q{1 - \alpha}\pr{\Bar{\mu}}}.
  \end{align*}

\paragraph{Oracle Weights.}
  This approach utilize importance weights to compute the prediction set.
  The \texttt{Oracle Weights} method has access to the true distribution of all the agents, enabling it to calculate the exact likelihood ratios.
  For this method, a number of $\Bar{\tcount}=\lfloor \tcount/2 \rfloor$ data points of the calibration dataset is randomly selected --- denoted as ${(X_k,Y_k)}_{k\in[\Bar{\tcount}]}$. This sumbsampling is based on a multinomial random variable with parameter $(\Bar{\tcount},\acn{\ccount{i}/\tcount}_{i\in[\nclients]})$; more details are provided in \Cref{sec:approach} (see for example \Cref{thm:YinCN-approx}).
  For any label $y\in\YC$, the prediction set is determined by:
  \begin{equation}\label{eq:def:oracle-weights}
    \begin{aligned}      
      &\Bar{\mu}_{\yquery}^{\target} = \qweightb{\yquery}{\yquery} \delta_{1} + \sum_{k=1}^{\Bar{\tcount}} \qweightb{Y_k}{\yquery} \delta_{V_k},
      \\
      &\mathcal{C}_{\alpha,\Bar{\mu}^{\target}}(\xquery)
      = \ac{\yquery\in\YC\colon V(\xquery,\yquery) \le \q{1 - \alpha}\prn{\Bar{\mu}_{\yquery}^{\target}}}
      .
  \end{aligned}
\end{equation}

\paragraph{Estimated Weights.}
  The empirical equivalent of \texttt{Oracle Weights} based on client label counts is \estwght{}.
  However, two sources of error are introduced: (1) the calibration subsampling and (2) the likelihood ratio estimations (refer to \eqref{eq:def:approx-probyy}).
  Similar to \texttt{Oracle Weights}, we draw a multinomial distribution $\{\Bar{\tcount}^i\}_{i\in[\nclients]}\sim \multi(\Bar{\tcount},\acn{\ccount{i}/\tcount}_{i\in[\nclients]})$ and subsample $\ccount{i}\wedge \Bar{\tcount}^i$ calibration data from each client $i$.
  The resulting prediction set is represented by:
  \begin{align*}
    &\qweight{y}{\yquery} = \frac{(\nofrac{\Mclcount{y}{\target}}{\Mlcount{y}}) \cdot\1_{\Mlcount{y}\ge 1}}{(\nofrac{\Mclcount{\yquery}{\target}}{\Mlcount{\yquery}}) \cdot\1_{\Mlcount{\yquery}\ge 1} + \sum_{\tilde{y}\in\YC} \card{\acn{(i,k)\in [\nclients]\times[\ccount{i}]\colon k\le \Bar{\tcount}^i, Y_k^i=y}} \times (\nofrac{\Mclcount{\tilde{y}}{\target}}{\Mlcount{\tilde{y}}}) \cdot\1_{\Mlcount{\tilde{y}}\ge 1} },
    \\
    &\widehat{\mu}_{\yquery}^{\scalebox{.35}{$\mathrm{MLE}$}} = \qweight{\yquery}{\yquery} \delta_{1} + \sum_{i=1}^{\nclients} \sum_{k=1}^{\ccount{i}\wedge \Bar{\tcount}^i} \qweight{Y_k^i}{\yquery} \delta_{V_{k}^{i}},
    \\
    &\predsetmle(\xquery) = \ac{\yquery\colon V(\xquery, \yquery) \le \q{1 - \alpha}\pr{\widehat{\mu}_{\yquery}^{\scalebox{.35}{$\mathrm{MLE}$}}}}.
  \end{align*}

\subsection{Additional Details on Numerical Experiments}\label{sec:suppl_experiments}

To provide a comprehensive overview of our experimental methodology, we present additional details about the federated optimization parameters, along with supplementary figures to complement those presented in \Cref{sec:experiments}.

\format{Optimization parameters.}
  For all experiments, we split the initial dataset $\mathcal{D}$ across the clients $(\mathcal{D}_{i})_{i \in [\nclients]}$ and the test dataset $\mathcal{D}_{\text{test}}$ as detailed in \Cref{sec:experiments}. 
  Label shift is then simulated by resampling using the clients' local distributions $\ac{\Pcal{y}}_{y \in \YC}$. For ImageNet experiments, we also assume that we have labels sampled from the target client's distribution for weights' approximation. The optimization parameters taken for \algopred\ experiments are $T=200$ iterations, $\gamma=1e^{-6}$ regularization parameter, $\eta=1e^{-3}$ stepsize, and $K=20$ local iteration rounds. We sample all clients during each communication round with the server.

\format{Simulated Data Experiments.}
  The generated data consists of $3$ Gaussians, two of which significantly overlap each other, see \Cref{fig:gaussians_data}.
  This data design is chosen such that we obtain different distributions of non-conformity scores for different classes, which is directly related to the different degrees of model confidence for different data samples. 
  \begin{figure}
    \centering
    \includegraphics[width=0.4\textwidth]{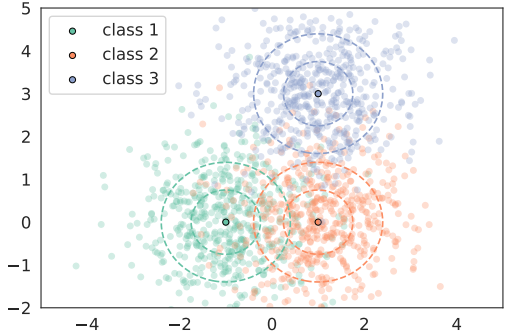}
    \caption{Simulated data distribution: $3$ two-dimensional Gaussians with means $\thetav_1 = [-1, 0], \thetav_2 = [1, 0], \thetav_3 = [1, 3]$ and identity covariance matrices.} \label{fig:toy_data}
  \label{fig:gaussians_data}
  \end{figure}

\format{CIFAR-10 Experiments.}
Using the CIFAR-$10$ data, we demonstrate a comparison of the empirical coverage for all considered methods: 
\unwghtloc, \unwghtglo, \texttt{Oracle Weights}, \estwght\ and the proposed \algopred\ method  version with $(\sigma_g=0)$, see \Cref{fig:cifar10_five_methods}.
\algopred\, along with weighted baselines, shows valid coverage results, unlike unweighted baselines.
At the same time, both weighted algorithms are extremely similar in performance to \algopred.
Unlike weighted baselines, \algopred\ is federated and privacy preserving. 

\begin{figure}
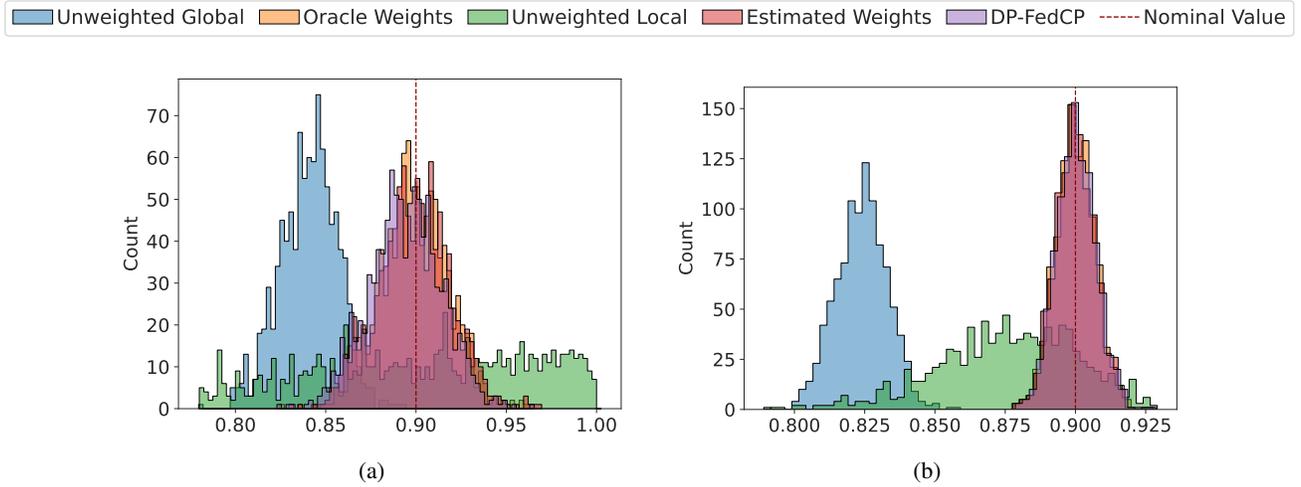

  \centering
    \includegraphics[width=\textwidth]{five_methods.pdf}
    \label{fig:five_methods}\\
  \begin{subfigure}{0.4\textwidth}
    \includegraphics[width=\textwidth]{cifar10_emp_cov_five_methods.pdf}
    \caption{} \label{fig:cifar10_five_methods}
  \end{subfigure}
  ~~~~
  \begin{subfigure}{0.4\textwidth}
    \includegraphics[width=\textwidth]{imagenet_emp_cov_five_methods.pdf}
    \caption{} \label{fig:imagenet_five_methods}
  \end{subfigure}
  \caption{Experimental results with all benchmarks. (a) CIFAR-10 empirical coverage. (b) ImageNet empirical coverage.}
\end{figure}

\format{ImageNet Experiments.}
  The ImageNet experiment is designed to have very different score distributions across agents.
  The grouping scheme of clients into a \texttt{low-score} group (\Cref{fig:imagenet_lowscore}) and a \texttt{high-score} group that consists of the querying agent (\Cref{fig:imagenet_highscore}) creates adversial heterogeneity, possible in real-life scenarios, under which unweighted methods are more prone to perform very poorly.
  Comparing the \algopred\ method with weighted and unweighted baselines on ImageNet data, we note the same behavior as on CIFAR-$10$ dataset, see \Cref{fig:imagenet_five_methods}. 
  Only algorithms that account for shifts between agents' data achieve the desired empirical coverage.
  There is an additional violin plot for the ImageNet differential privacy study that demonstrates the effect of the DP parameter $\sigma_g$ on the resulting empirical coverage; see~\Cref{fig:imagenet_DP_violin}.

  \begin{figure}
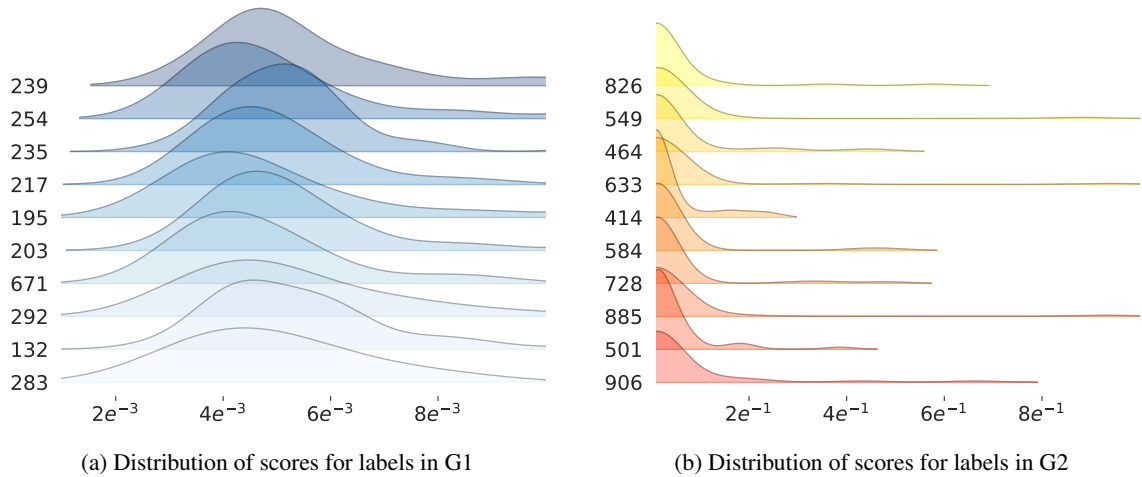
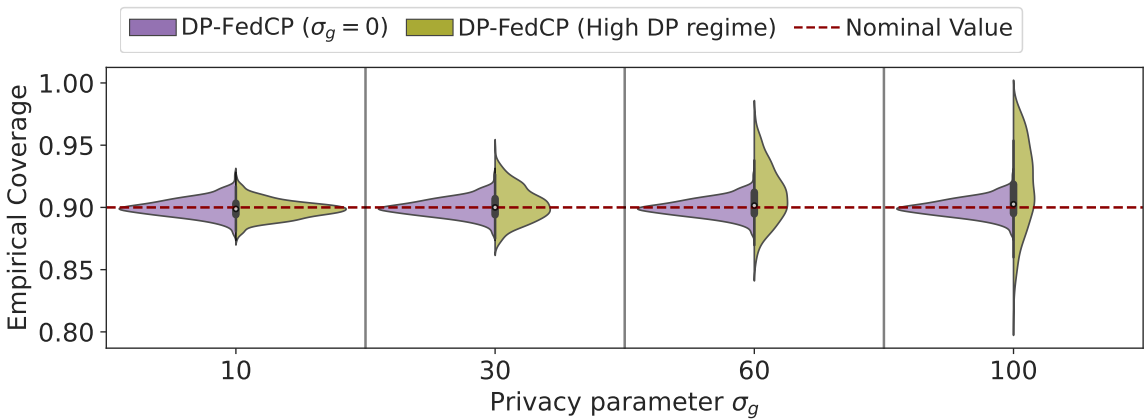

    \centering
    \begin{subfigure}{0.43\textwidth}
      \includegraphics[width=\linewidth]{low_scores_imagenet.pdf}
      \caption{Distribution of scores for labels in G1} \label{fig:imagenet_lowscore}
    \end{subfigure}
    ~~~~
    \begin{subfigure}{0.43\textwidth}
      \includegraphics[width=\linewidth]{high_scores_imagenet.pdf}
      \caption{Distribution of scores for labels in G2} \label{fig:imagenet_highscore}
    \end{subfigure}\\
    \vspace{2em}
    \centering
    \includegraphics[width=0.7\linewidth]{DP-methods.pdf}
    \label{fig:DP_methods}
    \begin{subfigure}{.9\textwidth}
      \includegraphics[width=\linewidth]{violin_DP_imagenet_sig.pdf}
      \caption{Privacy-Utility tradeoff} \label{fig:imagenet_DP_violin}
    \end{subfigure}
    \caption{ImageNet complementary results. (a) Score distributions of the classes with the lowest non-conformity scores. (b) Score distributions of the classes with the highest non-conformity scores. (c) Violin plot of the empirical coverage distribution for \algopred with different DP regimes.}
  \end{figure}

\end{document}